\documentclass[twoside,11pt]{article}

\usepackage{amsmath, amssymb, amsfonts, amsthm, mathtools, mathrsfs, bbm} 
\usepackage{graphicx} 
\usepackage{hyperref}
\usepackage{geometry} 
\usepackage{setspace} 
\usepackage{caption} 
\usepackage[numbers, sort]{natbib}
\usepackage{lineno} 

\usepackage{lipsum}
\usepackage{titlesec}
\usepackage{fancyhdr} 
\usepackage{array}
\usepackage{xcolor}
\usepackage{enumitem}
\usepackage{bbm}
\usepackage{tikz} 
\usetikzlibrary{calc, arrows.meta, intersections, patterns, positioning, shapes.misc, fadings, through,decorations.pathreplacing, trees, shapes.geometric,math,arrows}
\usepackage{tikz-cd}
\usepackage{amscd}
\usepackage{pgfplots}
\pgfplotsset{compat=1.18}
\usepgfplotslibrary{fillbetween}
\usepackage{url}
\usepackage{aliascnt}
\usepackage{subcaption}

\newcommand{\supp}[1]{\mathrm{supp}\!\left(#1\right)}
\newcommand{\suppinline}[1]{\mathrm{supp}(#1)}

\newcommand{\spanR}[1]{\mathrm{span}_{\mathbb{R}}\!\left(#1\right)}
\newcommand{\spanRinline}[1]{\mathrm{span}_{\mathbb{R}}(#1)}

\newcommand{\dimR}[1]{\mathrm{dim}_{\mathbb{R}}\!\left(#1\right)}
\newcommand{\dimRinline}[1]{\mathrm{dim}_{\mathbb{R}}(#1)}
\newcommand{\sepcap}[1]{\mathcal{SC}\left(#1\right)}
\newcommand{\spark}[1]{\mathrm{spark}(#1)}
\newcommand{\lenin}[1]{\lvert #1 \rvert}

\newcommand{\kr}[1]{\mathrm{kr}(#1)}

\newcommand{\rank}[1]{\mathrm{rank}(#1)}
\newcommand{\sepcaps}[2]{\mathcal{SC}^{#1}\!\left(#2\right)}
\newcommand{\sepcapsinline}[2]{\mathcal{SC}^{#1}(#2)}
\newcommand{\innerprod}[2]{\left\langle #1, #2 \right\rangle}
\newcommand{\innerprodinline}[2]{\langle #1, #2 \rangle}
\newcommand{\Mod}[1]{\ (\mathrm{mod}\ #1)}
\newcommand\restr[2]{{
  \left.\kern-\nulldelimiterspace 
  #1 
  \vphantom{\big|} 
  \right|_{#2} 
  }}
\newcommand\restrinline[2]{{
  \,\kern-\nulldelimiterspace 
  #1 
  |_{#2} 
  }}

\newcommand{\tallphantom}{\vphantom{\dimR{\displaystyle\sum_{j}}}}
\newcommand{\longphantom}{\hphantom{\sum_{\substack{t=\bar r_{/1}+1 \\ (t-\bar r_{/1}) \text{ is odd}}}^{n-1} \lvert \mathcal E_{t}(\{H_1\cap H_k\}_{k=2}^n) \rvert \leq}}
\newtheorem{theorem}{Theorem}[section]

\newaliascnt{lemma}{theorem}
\newtheorem{lemma}[lemma]{Lemma}
\aliascntresetthe{lemma}

\newaliascnt{proposition}{theorem}
\newtheorem{proposition}[proposition]{Proposition}
\aliascntresetthe{proposition}

\newaliascnt{assumption}{theorem}
\newtheorem{assumption}[assumption]{Assumption}
\aliascntresetthe{assumption}

\newaliascnt{definition}{theorem}
\theoremstyle{definition}
\newtheorem{definition}[definition]{Definition}
\aliascntresetthe{definition}

\newaliascnt{remark}{theorem}
\theoremstyle{remark}
\newtheorem{remark}[remark]{Remark}
\aliascntresetthe{remark}

\numberwithin{equation}{section}
\counterwithin{figure}{section}

\titleformat{\section}[block]{\large\bfseries}{\thesection}{1em}{}
\titleformat{\subsection}[block]{\normalsize\bfseries}{\thesubsection}{1em}{}
\titleformat{\subsubsection}[block]{\normalsize\bfseries}{\thesubsubsection}{1em}{}

 \renewcommand{\paragraph}[1]{
     \textbf{#1.} 
 }

\fancypagestyle{plain}{
    \fancyhf{}

}

\newcommand{\mytitle}{Function-Counting Theory for Low-Dimensional Data Structures}
\title{\mytitle}
\author{
    Konstantin H\"aberle\\
    ETH Zurich\\
    \texttt{haeberlk@ethz.ch}
    \and
    Helmut B\"olcskei\\
    ETH Zurich\\
    \texttt{hboelcskei@ethz.ch}
}
\date{}

\usepackage[capitalize,nameinlink,compress]{cleveref}
\crefname{equation}{}{}
\crefname{subsection}{Subsection}{Subsections}
\crefname{remark}{Remark}{Remarks}
\crefname{assumption}{Assumption}{Assumptions}

\begin{document}

\maketitle

\begin{abstract}
The success of deep learning models in classification and regression is widely attributed to the low-dimensional structure that real-world data tend to exhibit, despite their high-dimensional representation.
This work attempts to provide a mathematical framework for binary classification on low-dimensional data, building on Cover's (1965) function-counting theory. With our framework, we aim to address the question of how the low-dimensional structure of the data affects the classification capabilities of learning models.
Cover's theory relies on a general position assumption that blinds it to the underlying data structure. We refine this assumption to account for the low-dimensionality of the data and derive dichotomy counts that reflect the data structure. We further extend Cover's separation capacity and problem of generalization to the low-dimensional setting, enabling the impact of the underlying data structure on both to be analyzed.
\\[1em]
\noindent\textbf{Keywords:} Learning theory, pattern classification, geometric measure theory.
\end{abstract}

\section{Introduction}
Function-counting theory, as initiated by Cover \cite{cover1965geometrical}, stands as a pivotal cornerstone in learning theory, providing a framework for the analysis of the classification capabilities of learning models such as neural networks. A core result in this theory is the so-called \emph{function-counting theorem} \cite{schlafli1950theorie,winder1961single,wendel1962problem,cover1965geometrical}. It quantifies the number of binary classification functions, or \emph{dichotomies}, that can be realized by a learning model when data points are assumed to be in general position (with respect to the learning model). 

A key motivation for extending this theory is that real-world data typically exhibits low-dimensional structure \cite{fefferman2016testing}. Existing function-counting results \cite{cover1965geometrical,kowalczyk1994separating,haberle2026scattering}, however, discard this structure entirely: Under the general position assumption, the number of realizable dichotomies depends only on the number of data points and the ambient feature dimension, leaving the underlying geometry of the dataset unconsidered.
Furthermore, the general position assumption becomes increasingly restrictive, and indeed usually fails, when data concentrates on low-dimensional structures.   

In this paper, we present an extension of the existing function-counting framework sensitive to the low-dimensional structure of datasets. The central question we aim to address with our framework is: How does the low-dimensional structure of the dataset affect the classification capabilities of learning models? We report three main contributions. 

\begin{enumerate}[label=(\roman*)]
    \item By refining the general position assumption to the given data structure, we derive function-counting results for a broad class of $s$-dimensional sets that includes sparse signals and rectifiable sets, with $s$ a positive integer much smaller than the ambient dimension. More precisely, instead of imposing general position on the dataset as a whole, we decompose it into components on which the linear spanning dimension is constant across all subsets of positive $s$-dimensional measure, and impose general position within each separately.
    Our dichotomy counts depend on the following quantities invisible to the classical function-counting theorem: the intrinsic dimension $s$, the geometry of the individual components, and their relative geometric configuration. For sparse signals, the constituent components are $s$-dimensional linear subspaces, and we show that the dichotomy count is governed by $s$ and independent of the ambient dimension.  
    For rectifiable sets, the components are in general nonlinear, and we establish that the number of realizable dichotomies reflects the geometric richness of the components arising from this nonlinearity: The dichotomy count is higher for sets spread across more directions.

    \item Based on the function-counting theorem, Cover introduced the notion of separation capacity \cite{kowalczyk1994separating,haberle2026scattering}, a quantity closely related to the Vapnik--Chervonenkis (VC) dimension \cite{vapnik1971uniform}. While VC dimension is an existential threshold on the cardinality of the dataset, separation capacity is a universal one, indicating where the majority of dichotomies becomes unrealizable for almost all datasets. Under the general position assumption, Cover established that the separation capacity is twice the ambient feature dimension.   
    We extend Cover's separation capacity to $s$-dimensional sets, and derive explicit characterizations thereof. Notably, by leveraging our dichotomy counts, we show that the separation capacity equals twice the smallest linear spanning dimension across the constituent components of the above decomposition. Our extension thus allows identifying the properties of the low-dimensional structure of the dataset that govern the separation capacity, sharpening Cover's theory, which captures only the ambient feature dimension. 

    \item Finally, we investigate Cover's problem of generalization for low-dimensional sets. While Cover's treatment relies on the general position assumption, and is thus blind to the underlying data structure, our framework allows us to analyze how the low-dimensional structure of the dataset influences a learning model's ability to generalize ambiguously. Here and throughout, generalization refers to whether a dichotomy realized by a learning model uniquely determines the label of a new point, or leaves both assignments compatible with the realized dichotomy. We derive an exact expression for the probability of ambiguous generalization and establish a connection to our extended notion of separation capacity.
\end{enumerate}

The paper is organized as follows. In \Cref{sec:fct-counting}, we derive function-counting bounds which hold true for arbitrary datasets and are fundamental for our extension of separation capacity. \Cref{sec:sep-struct-data} addresses the separation of points on low-dimensional datasets. In \Cref{sec:sep-cap}, we generalize the notion of separation capacity to encompass low-dimensional data structures. \Cref{sec:generalization-learning} is devoted to the problem of generalization. The notation used throughout this paper is summarized in \cref{app:notation}. \Cref{app:Cover,app:Hausdorff} review key results from function-counting theory and basic properties of the Hausdorff measure, respectively.    

\section{Function-counting bounds}\label{sec:fct-counting}
Let $E \subseteq \mathbb R^M$ be an arbitrary subset of the pattern space $(\mathbb R^M, \innerprod{\cdot}{\cdot})$ with the standard inner product $\innerprod{f}{g} = g^\mathsf{T}f$, $f,g \in \mathbb R^M$, $M\in \mathbb N$. In this setting, patterns correspond to raw input data, such as, e.g., images, audio signals, or videos, represented as elements of $\mathbb{R}^M$. The set $E$ may serve as a formal model for real-world data. It is often assumed that data lie on sets of low-dimensional structure. Intuitively, this assumption may be motivated by the following observations. High-dimensional real-world data typically exhibit some redundancy and are often correlated. Furthermore, especially in the context of classification, data of the same class often show invariance or equivariance with respect to certain transformations or deformations. For example, in the MNIST dataset \cite{lecun1998mnist}, data belonging to the same class are invariant under translations and small deformations, see \cref{fig:mnist}. A popular hypothesis for the low-dimensional structure of data is that of manifolds; concretely, in our notation, the assumption that $E$ is a submanifold of $\mathbb R^M$. This hypothesis is also referred to as the \emph{manifold hypothesis}. While a detailed technical treatment of testing model choices $E$ lies beyond the scope of this paper, for comprehensive discussions on the manifold hypothesis and related work, we refer the reader to \cite{narayanan2010sample,bengio2013representation,fefferman2016testing,carlsson2009topology}.

\begin{figure}
\centering
\begin{resizebox}{\columnwidth}{!}{
    \begin{tikzpicture}
        \node[anchor=south west, inner sep=0] at (0,0) {\includegraphics[scale=3]{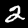}};
        \node[anchor=south west, inner sep=0] at (8,0) {\includegraphics[scale=3]{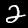}};
        \node[anchor=south west, inner sep=0] at (4,0) {\includegraphics[scale=3]{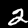}};
    \end{tikzpicture}}
\end{resizebox}
    \vspace{-5mm}
    \caption{The class `2' of the MNIST dataset \cite{lecun1998mnist} is invariant with respect to translations (middle) and small deformations (right).}
    \label{fig:mnist}
\end{figure}

Building upon Cover's framework \cite{cover1965geometrical} for quantifying the classification capabilities of a map $\Phi \colon E \to \mathbb R^{M'}$, we analyze the number of $\Phi$-separable dichotomies, $C_F$, of an arbitrary $N$-point set $F = \{f_1,\ldots,f_N\} \subseteq E$, where $N,M' \in \mathbb N$. 
A review of Cover's framework together with some key results of function-counting theory is provided in \cref{app:Cover}. 
   
This section develops novel lower and upper bounds on $C_F$ that generalize beyond the standard assumption of $\Phi$-general position. Such a generalization is essential for our setting, where the arbitrary nature of $E$, $F$, and $\Phi$ likely introduces degeneracies. These bounds are expressed in terms of the counting function $C(\cdot,\cdot)$, defined in \cref{eq:counting-fct-cover}, as well as the quantities\footnote{Recall that $\kr{\{\Phi(f_k)\}_{k=1}^N}$ denotes the Kruskal rank, i.e.,  the largest integer $s$ such that every subset of $s$ elements of $\{\Phi(f_k)\}_{k=1}^N$ is linearly independent.} $r \coloneqq \rank{\{\Phi(f_k)\}_{k=1}^N}$ and $s \coloneqq \kr{\{\Phi(f_k)\}_{k=1}^N}$, which measure the degeneracy of $E$, $F$, and $\Phi$.  

\begin{theorem}\label{thm:lower-upper-bound-C_F}
    The following holds: 
    \begin{enumerate}[label=(\alph*)]
        \item\label{it:C_F-a} If $s=0$, then $C_F=0$.
        \item\label{it:C_F-b} If $0<s=r$, then $C_F = C(N,s)$. 
        \item\label{it:C_F-c} If $0<s<r$, then 
                \begin{align}\label{eq:lower-upper-bound-C_F}
                    C(N,s) < C_F < C(N,r).
                \end{align} 
    \end{enumerate}
    
\end{theorem}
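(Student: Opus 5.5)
We recall the setup from \cref{app:Cover}. A dichotomy $(F^+,F^-)$ is $\Phi$-separable if there is $w\in\mathbb R^{M'}$ with $\innerprod{\Phi(f)}{w}>0$ on $F^+$ and $<0$ on $F^-$. Set $x_k\coloneqq\Phi(f_k)$. Then $C_F$ is the number of sign vectors $(\operatorname{sign}\innerprod{x_k}{w})_{k}\in\{\pm1\}^N$ attained by some $w$. Equivalently, $C_F$ counts the regions of the central hyperplane arrangement $\{x_k^\perp\}$ that avoid all hyperplanes, after restricting $w$ to $V\coloneqq\spanRinline{\{x_k\}}$, which has dimension $r$.

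\textbf{Case (a).} If $s=0$, some single vector $x_k$ is linearly dependent, i.e.\ $x_k=0$. Then $\innerprod{x_k}{w}=0$ for every $w$, so no dichotomy is separable and $C_F=0$.

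\textbf{Case (b).} If $0<s=r$, every $r$-subset of $\{x_k\}$ is linearly independent inside the $r$-dimensional space $V$. So after identifying $V\cong\mathbb R^r$, the points $x_k$ are in general position in $\mathbb R^r$. Cover's function-counting theorem (recalled in \cref{app:Cover}) then gives $C_F=C(N,r)=C(N,s)$.

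\textbf{Case (c), strategy.} Here $0<s<r$, and I would use two ingredients.
\begin{itemize}
\item The recursion $C_{F\cup\{x\}}=C_F+C_{F}^{x}$. Here $C_F^x$ counts the dichotomies of $F$ separable by some $w$ with $\innerprod{x}{w}=0$, i.e.\ the dichotomies of the projections of $F$ onto $x^\perp$. Cover's proof of the counting theorem rests on this identity, and it holds without any general-position assumption.
\item Genericity monotonicity: perturbing the points toward general position never decreases the number of regions. This is Zaslavsky's theorem, $\#\text{regions}=\sum_{j}|\mu|$-type sums for the intersection lattice, which are maximized when the matroid is uniform.
\end{itemize}

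\textbf{Upper bound.} For a generic small perturbation $x_k'$ within $V$, every region of the original arrangement persists, so $C_F\le C(N,r)$. For strictness, since $s<r$ there exist $s+1\le r$ linearly dependent vectors $x_{k_1},\dots,x_{k_{s+1}}$. Their hyperplanes meet in a subspace of codimension at most $s<s+1$ within $V$, whereas after a generic perturbation that codimension becomes $s+1$. Perturbing therefore splits some cell and creates at least one new region, so $C_F<C(N,r)$. To make this rigorous I would induct on $N$ via the recursion, comparing $C_F^x$ with $C(N-1,r-1)$ and locating a strict drop at a step where the dependent configuration appears.

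\textbf{Lower bound.} Choose a linear map $P\colon V\to\mathbb R^s$ generic enough that the $Px_k$ are in general position in $\mathbb R^s$. This is possible because $\kr{\{x_k\}}=s$ guarantees every $s$-subset stays independent under a generic projection. Any $u\in\mathbb R^s$ separating a dichotomy of $\{Px_k\}$ gives $w=P^\mathsf{T}u$ separating the same dichotomy of $\{x_k\}$, so $C_F\ge C(N,s)$ by case (b). For strictness, since $r>s$ the map $P$ has a nontrivial kernel. Pick a dichotomy attained by $\{x_k\}$ but not by $\{Px_k\}$. To find one, take a point $x_j$ and a dichotomy realized with $w$ in general position relative to $\ker P$. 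Alternatively, use the recursion: since $r>s$, some sub-collection forces $C_F^x>C(N-1,s-1)$.

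\textbf{Main obstacle.} The delicate part is the two strict inequalities, especially producing an explicit extra dichotomy for the lower bound. I would handle both by induction on $N$ with the recursion $C(N,d)=C(N-1,d)+C(N-1,d-1)$. The base case is $N=r$ or the first index at which the span exceeds $s$, where a direct count shows the strict gap. In the inductive step the inequalities propagate, with at least one term strict.
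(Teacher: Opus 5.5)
\textbf{Gap: the strict inequalities in (c) are not proved.} Your parts (a) and (b) are correct, and so are the non-strict inequalities in (c). Your route also differs genuinely from the paper's. The paper writes $C_F=2^N-2\lvert\mathcal O\rvert$ via Winder's theorem and proves both bounds by a deletion--contraction induction on the counts of even/odd-degenerate subfamilies. You instead get $C_F\le C(N,r)$ because strict sign patterns persist under a small generic perturbation inside $V$. You get $C_F\ge C(N,s)$ by pulling separators back along a generic $P\colon V\to\mathbb R^s$ via $w=P^{\mathsf T}u$. The strictness of both inequalities, however, is the real content of (c), and you do not establish it. For the upper bound, ``the codimension changes, so some cell splits'' is asserted, not shown. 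For the lower bound, you never exhibit a dichotomy realized by $\{x_k\}$ but not by $\{Px_k\}$; choosing $w$ ``in general position relative to $\ker P$'' does not produce one. Your fallback, an unspecified induction on $N$ through $C_{F\cup\{x\}}=C_F+C_F^{x}$, does not close this either. Deleting a point can raise the Kruskal rank, and projecting onto $x^\perp$ can create parallel vectors. Controlling exactly these effects (parity cases, duplicate cancellation) is where the paper's proof spends nearly all of its effort.

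\textbf{The missing idea: a circuit forbids one sign pattern.} Suppose $\sum_{i=1}^{m}c_i x_{k_i}=0$ with all $c_i\neq 0$. Then $\sum_i c_i\langle x_{k_i},w\rangle=0$ for every $w$. So no dichotomy that puts each $f_{k_i}$ on the side $\operatorname{sign}(c_i)$ is $\Phi$-separable. If instead $x_{k_1},\dots,x_{k_m}$ are linearly independent, every sign pattern on them is attained on a nonempty open set of $w$. Such a $w$ can be chosen off all hyperplanes $x_k^\perp$, since $x_k\neq 0$ when $s\ge 1$. This yields a separable dichotomy with that pattern.

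\textbf{Strict upper bound.} A dependent $(s+1)$-subfamily exists because $s<r$. It is a circuit (all $c_i\neq 0$) because every $s$ of its members are independent. Since $s+1\le r$, it becomes independent after your generic perturbation in $V$. Hence the perturbed family realizes a dichotomy that $F$ does not, and $C_F\le C(N,r)-1$.

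\textbf{Strict lower bound.} Pick $s+1$ linearly independent vectors $x_{k_i}$, which is possible since $r\ge s+1$. Their images $Px_{k_i}\in\mathbb R^s$ are dependent, while every $s$ of them are independent. So they satisfy a relation with all coefficients nonzero. The corresponding pattern is then realized by $\{x_k\}$ but not by $\{Px_k\}$, giving $C_F\ge C(N,s)+1$.

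With this step inserted, your argument is a complete proof of (c) and is considerably shorter than the paper's. It also covers $s=1$ directly, without the paper's separate reduction to $s\ge 2$.
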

\begin{remark}
     If $s<r$, then $F$ is in particular not in $\Phi$-general position, and the upper bound reads
     $C_F < C(N,r) \leq C(N,M')$. That is, fewer than $C(N,M')$ dichotomies of $F$ are $\Phi$-separable, as expected.
\end{remark}
The proof of \cref{thm:lower-upper-bound-C_F} utilizes the next two lemmata. To this end, let us introduce the following notation. For each $k \in \{1,\ldots,N\}$, let $H_k \coloneqq\{\varphi_k\}^\perp$, where $\varphi_k \coloneqq \Phi(f_k)$. For $t \in \{0,\ldots,N\}$, denote by $\mathcal{E}_t$ and $\mathcal O_t$ the sets of all even- and odd-degenerate hyperplane arrangements consisting of $t$ hyperplanes, respectively. That is,
    \begin{align*}
        \mathcal{E}_t &\coloneqq \left\{\{H_k\}_{k \in \mathcal{K}} \colon \mathcal{K} \subseteq \{1,\ldots,N\} \text{, $\lvert \mathcal{K} \rvert = t$, and } \{H_k\}_{k \in \mathcal{K}} \text{ is even-degenerate}\right\}
        \intertext{and}
        \mathcal{O}_t &\coloneqq \left\{\{H_k\}_{k \in \mathcal{K}} \colon \mathcal{K} \subseteq \{1,\ldots,N\} \text{, $\lvert \mathcal{K} \rvert = t$, and } \{H_k\}_{k \in \mathcal{K}} \text{ is odd-degenerate}\right\}.
    \end{align*}
\begin{lemma}\label{L:C_F-lb}
    If $2\leq s<r$, then
    \begin{align}\label{eq:lower-bound-C_F-a}
        \sum_{\substack{t=s+2 \\ (t-s) \text{ is even}}}^N \lvert \mathcal O_t \rvert &< \sum_{\substack{t=s+1 \\ (t-s) \text{ is odd}}}^N \lvert \mathcal E_t \rvert. 
    \end{align}
\end{lemma}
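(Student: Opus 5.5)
The plan is to regroup the subsets appearing in \cref{eq:lower-bound-C_F-a} according to the linear span of their normals, and then to invoke the sign alternation of the Möbius function of the intersection lattice (Rota's theorem). I read the definitions in \cref{app:Cover} as follows, and the argument depends on this reading. For $\mathcal K\subseteq\{1,\ldots,N\}$ with $\lvert\mathcal K\rvert=t$, put $\rho(\mathcal K)\coloneqq\rank{\{\varphi_k\}_{k\in\mathcal K}}$ and call $t-\rho(\mathcal K)$ the \emph{deficiency} of $\{H_k\}_{k\in\mathcal K}$. The arrangement is even- or odd-degenerate according to the parity of its deficiency, with deficiency $0$ counting as even. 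If the paper instead requires deficiency at least $1$ for ``even-degenerate'', the counting in the third paragraph changes and this plan would have to be adapted.

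\textbf{Step 1: only flats of rank $\rho$ with $\rho-s$ odd matter.} Write $t-s=(t-\rho)+(\rho-s)$. For an element of $\mathcal O_t$ with $t-s$ even, the deficiency $t-\rho$ is odd, so $\rho-s$ must be odd. For an element of $\mathcal E_t$ with $t-s$ odd, the deficiency is even, so again $\rho-s$ is odd. Hence every $\mathcal K$ counted on either side of \cref{eq:lower-bound-C_F-a} has $\rho(\mathcal K)-s$ odd, and in particular $\rho(\mathcal K)\geq s+1$.

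Conversely, fix a flat $X=\spanRinline{\{\varphi_k\}_{k\in\mathcal K}}$ with $\dimRinline{X}=\rho$ and $\rho-s$ odd. Every $\mathcal K$ whose normals span exactly $X$ has $\rho\geq s+1$, hence $t\geq s+1$; the ranges of summation in \cref{eq:lower-bound-C_F-a} therefore impose no further restriction. Each such $\mathcal K$ is counted on the left if its deficiency is odd and on the right if it is even. So it suffices to show, for each such flat $X$,
\begin{align*}
    \#\{\mathcal K\colon \text{span}=X,\ t-\rho \text{ even}\}-\#\{\mathcal K\colon \text{span}=X,\ t-\rho \text{ odd}\}
    =(-1)^{\rho}\!\!\sum_{\mathcal K\colon\, \text{span}=X}\!\!(-1)^{\lvert\mathcal K\rvert} \geq 0,
\end{align*}
with strict inequality for at least one such $X$.

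\textbf{Step 2: identify the sum as a Möbius value.} By the standard cross-cut (Whitney) identity for the geometric lattice $L$ of flats spanned by subsets of $\{\varphi_k\}_{k=1}^N$,
\begin{align*}
    \sum_{\mathcal K\colon\,\text{span}=X}(-1)^{\lvert\mathcal K\rvert}=\mu_L(\hat 0,X).
\end{align*}
To verify it, sum over all $\mathcal K$ whose span lies in $Y$, obtaining $\sum_{\mathcal K\subseteq\{k\colon\varphi_k\in Y\}}(-1)^{\lvert\mathcal K\rvert}=\delta_{Y,\hat 0}$ (the zero vector does not occur since $s\geq1$), and then apply Möbius inversion. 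Rota's theorem for geometric lattices gives $(-1)^{\rho}\mu_L(\hat0,X)>0$ for every flat $X$. Thus each flat contributes a strictly positive amount $\lvert\mu_L(\hat0,X)\rvert\geq1$ to the right-hand side minus the left-hand side, and the non-strict inequality follows.

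\textbf{Step 3: strictness.} I need one flat of rank $\rho$ with $\rho-s$ odd. Since $r>s$, the vectors $\varphi_k$ contain an independent set of size $s+1$, and its span is a flat of rank $s+1$. That flat contributes at least $1$, which gives the strict inequality. I do not expect $s\geq2$ to be needed here; it is presumably used later in the proof of \cref{thm:lower-upper-bound-C_F}.

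\textbf{Main obstacle.} The main risk is the first step: matching the paper's exact definitions of even- and odd-degenerate to the deficiency parity, in particular how deficiency $0$ is treated. The Möbius identity and Rota's sign theorem are classical and would be cited. If the definitions exclude independent sets, I would instead subtract the independent spanning sets (bases) of each flat and try to control the difference via the characteristic polynomial of the restriction to $X$. That variant looks substantially harder.
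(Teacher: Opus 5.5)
Your proof is correct, and it takes a genuinely different route from the paper's. Your reading of the definitions is the right one. Since $\dimRinline{\bigcap_{k\in\mathcal K}H_k}=M'-\rho(\mathcal K)$, \cref{def:even-odd-deg} makes $\{H_k\}_{k\in\mathcal K}$ even-degenerate exactly when $\lvert\mathcal K\rvert-\rho(\mathcal K)$ is even, deficiency $0$ included, so the obstacle you flag does not arise.

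The paper proves \cref{eq:lower-bound-C_F-a} by a deletion--contraction induction on $N$. To close that induction it has to do several things:
\begin{itemize}
\item carry the companion inequality \cref{eq:lower-bound-C_F-b} along;
\item split into cases by the parity of $s_{\setminus1}-s$;
\item fall back on the equality case $s=r$ whenever a deletion or contraction becomes generic;
\item run a separate cancellation argument for repeated hyperplanes when the contraction has Kruskal rank $1$.
\end{itemize}
It also needs $s\ge2$ so that the contraction $\{H_1\cap H_k\}_{k=2}^N$ consists of genuine hyperplanes of $H_1$.

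You instead obtain, from Whitney's formula and Rota's sign theorem, the exact identity
\[
\sum_{\substack{t=s+1\\ (t-s)\text{ is odd}}}^N\lvert\mathcal E_t\rvert-\sum_{\substack{t=s+2\\ (t-s)\text{ is even}}}^N\lvert\mathcal O_t\rvert=\sum_{\substack{X\in L,\ \dim X\ge s+1\\ \dim X-s\text{ odd}}}\bigl\lvert\mu_L(\hat0,X)\bigr\rvert .
\]
This buys you several things:
\begin{itemize}
\item it is far shorter;
\item it quantifies the gap, which is at least the number of rank-$(s+1)$ flats;
\item it recovers the equality case $s=r$ as an empty sum;
\item it holds for every $s\ge1$, so for the lower bound the reduction in Step (c.1) of the proof of \cref{thm:lower-upper-bound-C_F} becomes unnecessary.
\end{itemize}
In exchange, the paper's route is self-contained. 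It uses only linear algebra and \cref{thm:winder}, in effect re-deriving the special case of Rota's theorem it needs through the same deletion--contraction mechanism.

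Two details to tighten in Step 1:
\begin{itemize}
\item Restrict from the start to flats with $\dim X\ge s+1$. Flats with $\dim X\le s-1$ also have $\dim X-s$ odd, but they are spanned only by independent sets of size $\dim X$ and so appear on neither side.
\item State explicitly that $\rho(\mathcal K)\ge s$ whenever $\lvert\mathcal K\rvert\ge s$, by the definition of the Kruskal rank. This, together with the parity, is what gives $\rho(\mathcal K)\ge s+1$.
\end{itemize}
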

\begin{proof}
    See \Cref{subsec:C_F-lb}.
\end{proof}

\begin{lemma}\label{L:C_F-ub}
    If $2\leq s<r$, then 
    \begin{align}\label{eq:ub-2}
        \sum_{\substack{t=r+1 \\ (t-r) \text{ is odd}}}^N \lvert \mathcal E_t \rvert < \sum_{t=s+1}^r \lvert \mathcal O_t \rvert + \sum_{\substack{t=r+2 \\ (t-r) \text{ is even}}}^N \lvert \mathcal O_t \rvert.
    \end{align}
\end{lemma}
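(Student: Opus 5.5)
The plan is to prove \eqref{eq:ub-2} combinatorially, by a parity-switching injection between degenerate subarrangements, mirroring the proof of \cref{L:C_F-lb}. I will use the even/odd-degeneracy notion of \cref{app:Cover}, which I take to mean the following: a subarrangement $\{H_k\}_{k\in\mathcal K}$ is degenerate when its normals $\{\varphi_k\}_{k\in\mathcal K}$ are linearly dependent in a way that affects the region count, and the parity is that of the codimension defect $\lvert\mathcal K\rvert-\dimRinline{\spanRinline{\{\varphi_k\}_{k\in\mathcal K}}}$. I have not checked this reading against the appendix, and the steps below should be adapted to the exact definition there.

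\textbf{Step 1 (fix an order and define the map).} Fix the total order $1<\dots<N$ on indices. Let $\{H_k\}_{k\in\mathcal K}\in\mathcal E_t$ with $t\geq r+1$ and $t-r$ odd. Since $t>r$, the normals are dependent and span at most $r$ dimensions. I will define $\Psi(\mathcal K)$ by toggling one index: remove the largest $k\in\mathcal K$ whose removal leaves the span unchanged, or, if no such $k$ exists, add the smallest $k\notin\mathcal K$ lying in the span. Removing such a $k$, or adding such a $k$, changes $\lvert\mathcal K\rvert$ by one and leaves the span unchanged. Hence it flips the parity of the defect and sends $\mathcal E_t$ to $\mathcal O_{t\pm1}$. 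The sizes line up with the indices in \eqref{eq:ub-2}. A set of size $t=r+1$ goes to size $r$, which lands in $\sum_{t=s+1}^r\lvert\mathcal O_t\rvert$. A set of size $t\geq r+3$ with $t-r$ odd goes to size $t\pm1$, where $(t\pm1)-r$ is even, which lands in the second sum on the right.

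\textbf{Step 2 (injectivity).} Toggling a fixed, canonically chosen element is an involution-type operation. From $\Psi(\mathcal K)$ and its span one can recover which element was toggled, because it is the extremal element in the fixed order among the span-preserving candidates. Hence $\Psi$ is injective. This already gives the non-strict version of \eqref{eq:ub-2}.

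\textbf{Step 3 (strictness).} I will exhibit an odd-degenerate arrangement outside the image of $\Psi$, using both $s<r$ and $s\geq2$. Since $\kr{\{\varphi_k\}_{k=1}^N}=s<r$, there exist $s+1$ normals that are linearly dependent but have every $s$ of them independent. These form a circuit $\mathcal K_0$ with $\lvert\mathcal K_0\rvert=s+1\leq r$ and span of dimension $s$, so its defect is $1$ and it is odd-degenerate, i.e.\ it lies in $\mathcal O_{s+1}$. The image of $\Psi$ within $\bigcup_{t=s+1}^r\mathcal O_t$ consists only of sets of size exactly $r$. So if $s+1<r$, then $\mathcal K_0$ is not in the image. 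If $s+1=r$, I will instead choose the circuit, or the canonical toggling rule, so that $\mathcal K_0$ is not hit. For example, make the rule prefer removal whenever possible and check that any preimage of $\mathcal K_0$, which would have size $r+1$ and contain $\mathcal K_0$, would have been mapped elsewhere.

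\textbf{Main obstacle.} The hard part is making Step 1 well defined and Step 2 genuinely injective for the actual definition of even/odd degeneracy. If the definition involves a minimality condition (for example, circuits or flats of the intersection lattice) rather than just the defect, the toggle may leave the class of degenerate arrangements. In that case I would fall back on Zaslavsky's region count via the characteristic polynomial, as presumably used in \cref{app:Cover}. There, the two sides of \eqref{eq:ub-2} appear as the positive and negative parts of the Möbius sum over subsets, and the strict inequality is equivalent to $C_F<C(N,r)$ for the rank-$r$ arrangement. That inequality could then be proved by deletion--restriction, $C_F = C_{F\setminus\{f_N\}} + C_{\text{restriction}}$, by induction on $N$. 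Degeneracy (the assumption $s<r$) makes at least one restriction lose a region relative to the general-position count, and this gives strictness.
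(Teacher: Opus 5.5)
Your reading of the parity is right. Since $\dim_{\mathbb R}\bigcap_{k\in\mathcal K}H_k=M'-\dim_{\mathbb R}\mathrm{span}_{\mathbb R}\{\varphi_k\}_{k\in\mathcal K}$, a $t$-subset lies in $\mathcal E_t$ or $\mathcal O_t$ according to the parity of $t-\dim_{\mathbb R}\mathrm{span}_{\mathbb R}\{\varphi_k\}_{k\in\mathcal K}$, with no minimality condition. The gap is Step~2.

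Every $\mathcal K$ with $\lvert\mathcal K\rvert=t>r$ is dependent, so your rule always removes an element. Hence $\Psi$ maps $\mathcal E_t$ into $\mathcal O_{t-1}$, and in particular $\mathcal E_{r+1}$ into $\mathcal O_r$. From $\mathcal L=\Psi(\mathcal K)$ you only learn that the removed index $j\notin\mathcal L$ satisfies $\varphi_j\in\mathrm{span}_{\mathbb R}\{\varphi_k\}_{k\in\mathcal L}$. Several such $j$ can each be the largest removable element of $\mathcal L\cup\{j\}$. Nor is $\Psi$ an involution, since applied to $\mathcal L$ it removes again.

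For instance, take $\varphi_1=e_1$, $\varphi_2=e_2$, $\varphi_3=e_1+e_2$, $\varphi_4=e_1-e_2$, $\varphi_5=e_1+2e_2$, $\varphi_6=e_3$ in $\mathbb R^3$, so $s=2$ and $r=3$. Both $\{1,2,3,4\}$ and $\{1,2,3,5\}$ lie in $\mathcal E_4$, and both are sent to $\{1,2,3\}$.

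No better tie-breaking rule can fix this, because an injection $\mathcal E_{r+1}\to\mathcal O_r$ need not exist. Take $n$ pairwise non-parallel normals in a plane $W\subset\mathbb R^3$ plus one normal outside $W$ (again $s=2$, $r=3$). Then $\lvert\mathcal E_4\rvert=\binom{n}{4}$ (four planar normals) and $\lvert\mathcal O_3\rvert=\binom{n}{3}$ (three planar normals), and $\binom{n}{4}>\binom{n}{3}$ for $n\geq 8$. The lemma holds here only after cancellation across levels. The left-hand side of \eqref{eq:ub-2} is $\sum_{t\geq 4,\,t\text{ even}}\binom{n}{t}$, the right-hand side is $\sum_{t\geq 3,\,t\text{ odd}}\binom{n}{t}$, and their difference is $\binom{n-1}{2}$. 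So \eqref{eq:ub-2} is genuinely an alternating-sum statement, which a one-element toggle from level $t$ to level $t-1$ cannot capture. Step~3 is then moot.

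Your fallback is the viable route and is, in substance, the paper's proof: induction on $N$ by deletion--contraction. The paper runs it directly on the counts via $\lvert\mathcal E_t(\{H_k\}_{k=1}^N)\rvert=\lvert\mathcal E_t(\{H_k\}_{k=2}^N)\rvert+\lvert\mathcal E_{t-1}(\{H_1\cap H_k\}_{k=2}^N)\rvert$, and likewise for $\mathcal O_t$. It takes $H_1$ from a dependent $(s+1)$-subset. Then the deletion keeps rank $r$ and Kruskal rank $\geq s$, while the contraction has rank $r-1$ and Kruskal rank $s-1$.

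In your proposal this is a single sentence, and the cases that actually carry the argument are missing:
\begin{itemize}
\item The deletion may be in general position. There the exact count replaces the induction hypothesis.
\item When $s=2$, the contraction contains coinciding hyperplanes that must be merged. Strictness then comes from having fewer than $N-1$ distinct hyperplanes in rank $r-1\geq 2$.
\end{itemize}

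Carrying it out in the region language would be cleaner than the paper's $\mathcal E/\mathcal O$ bookkeeping. With $R(\mathcal A)$ the number of regions, use $R(\mathcal A)=R(\mathcal A\setminus H_1)+R(\mathcal A^{H_1})$, the bound $R(\mathcal A)\leq C(\lvert\mathcal A\rvert,\operatorname{rank}\mathcal A)$, and $C(N,r)=C(N-1,r)+C(N-1,r-1)$. As it stands, though, the proposal does not prove the lemma.
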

\begin{proof}
    See \Cref{subsec:C_F-ub}.
\end{proof}

\begin{proof}[Proof of \cref{thm:lower-upper-bound-C_F}]
    \begin{enumerate}[label=(\alph*)]
        \item The claim is immediate by \cref{def:phi-sep}. 
        \item Let $\pi\colon \spanRinline{\Phi(F)} \to \mathbb R^s$ be the linear map to the space of expansion coefficients with respect to $\{\Phi(f_k)\}_{k=1}^s$. Then, $F$ is in $(\pi \circ \Phi)$-general position. Moreover, every dichotomy of $F$ is $\Phi$-separable if and only if it is $(\pi \circ \Phi)$-separable. Indeed, for every $w \in \mathbb R^{M'}$ and $k\in \{1,\ldots,N\}$, we have\footnote{Here, we write inner products with subscripts to indicate the space in which they are defined.}
    \begin{align*}
        \innerprod{w}{\Phi(f_k)}_{\mathbb R^{M'}} &= \innerprod{w'+w''}{\Phi(f_k)}_{\mathbb R^{M'}} \\
        &=\innerprod{w'}{\Phi(f_k)}_{\mathbb R^{M'}} \\
        &= \innerprod{\pi(w')}{\pi(\Phi(f_k))}_{\mathbb R^{s}}, 
    \end{align*}
    where $w'$ and $w''$ denote the orthogonal projections of $w$ onto $\spanRinline{\Phi(F)}$ and $\Phi(F)^\perp$, respectively. 
    Application of \cref{thm:fct-counting} thus yields 
    \begin{align}\label{eq:rs-equality}
        C_F = C(N,s), 
    \end{align}
    as desired.
    \item The proof proceeds in two steps. We first show that if \cref{eq:lower-upper-bound-C_F} holds for $2\leq s<r$, then \cref{eq:lower-upper-bound-C_F} necessarily extends to $s=1$. We then provide the proof for the case $2\leq s<r$.
    \begin{description}
        \item[\underline{Step (c.1): Extension to $s=1$.}] Suppose that \cref{eq:lower-upper-bound-C_F} holds for $2\leq s<r$. 
    Assume without loss of generality that $N' \in \{2,\ldots,N-1\}$ is such that $\{H_k\}_{k=1}^{N'}$ contains no duplicates (i.e., $s'\coloneqq\kr{\{\varphi_k\}_{k=1}^{N'}} \geq 2$) and such that $r'\coloneqq\rank{\{\varphi_k\}_{k=1}^{N'}} =r$. Denoting by $F'$ the corresponding $N'$-point set, we then have $C_F=C_{F'}$. As $2\leq s' \leq r'$, it holds by assumption that 
    \begin{align*}
        C(N',s') \leq C_{F'} \leq C(N',r'),
    \end{align*}
    but $C(N',r') < C(N,r)$ and $C(N',s') > C(N,s)=2$. This establishes the extension of \cref{eq:lower-upper-bound-C_F} from $2\leq s<r$ to $0<s<r$.
    \item[\underline{Step (c.2): The case $2\leq s<r$.}]
    We first show the lower bound in \cref{eq:lower-upper-bound-C_F} and then the upper bound in \cref{eq:lower-upper-bound-C_F}. 
    \begin{description}
        \item[\textit{Step (c.2.1): Lower bound.}] Note that $\varphi_k\neq 0$, $k\in \{1,\ldots,N\}$, as $s>0$ by assumption, so that application of \cref{thm:winder}, together with \cref{rem:number-even-odd-deg}, yields
    \begin{align}\label{eq:lu-b-C_F1}
        C_F = 2^N - 2 \lvert \mathcal O\rvert.
    \end{align}
    Using \cref{eq:fct-count-hyperplanes} and the identity $\sum_{t=0}^N \binom{N}{t} = 2^N$, we can write $C(N,s)$ as\footnote{We use the convention $\sum_{t \in \emptyset} \xi_t =0$ for any $\{\xi_t\}_t$.}
    \begin{align}\label{eq:lu-b-C_F2}
        C(N,s) = 2^N - 2\sum_{\substack{t=s+1 \\ (t-s) \text{ is odd}}}^N \binom{N}{t}.
    \end{align}
    Thus, it follows from \cref{eq:lu-b-C_F1,eq:lu-b-C_F2} that the lower bound in \cref{eq:lower-upper-bound-C_F} is equivalent to
    \begin{align}\label{eq:lu-b-C_F3}
        \lvert \mathcal O \rvert < \sum_{\substack{t=s+1 \\ (t-s) \text{ is odd}}}^N \binom{N}{t}.
    \end{align}
    We have $\mathcal E = \bigcup_{t=0}^N \mathcal E_t$ and $\mathcal O = \bigcup_{t=0}^N \mathcal O_t$. As every subset of $\{\Phi(f_k)\}_{k=1}^N$ of cardinality $s$ is linearly independent, we have
    \begin{align*}
        \dimR{\cap_{k \in \mathcal{K}} H_k} = M'-\dimR{\spanR{\{\varphi_k\}_{k \in \mathcal K}}} = M' - \lvert \mathcal K \rvert,
    \end{align*}
    whenever $\mathcal{K} \subseteq \{1,\ldots,N\}$ with $\lvert \mathcal K \rvert \leq t$.
    In particular, this implies $\lvert \mathcal{O}_t \rvert = 0$ for $t \leq s$, and hence $\lvert \mathcal O\rvert = \sum_{t=s+1}^N \lvert \mathcal O_t \rvert$.
    Using the identities $\lvert \mathcal{E}_t \rvert + \lvert \mathcal O_t \rvert = \binom{N}{t}$ and $\lvert \mathcal O\rvert = \sum_{t=s+1}^N \lvert \mathcal O_t \rvert$, one can deduce that \cref{eq:lu-b-C_F3} holds if and only if
    \begin{align*}
        \sum_{\substack{t=s+2 \\ (t-s) \text{ is even}}}^N \lvert \mathcal O_t \rvert &< \sum_{\substack{t=s+1 \\ (t-s) \text{ is odd}}}^N \lvert \mathcal E_t \rvert. 
    \end{align*}
    Application of \cref{L:C_F-lb} then completes the proof of Step (c.2.1). 
    \item[\textit{Step (c.2.2): Upper bound.}]
    As
    \begin{align*}
        C(N,r) = 2^N - 2\sum_{\substack{t=r+1 \\ (t-r) \text{ is odd}}}^N \binom{N}{t},
    \end{align*}
    and $C_F = 2^N-2\lvert \mathcal{O}\rvert$, the upper bound in \cref{eq:lower-upper-bound-C_F} is equivalent to
    \begin{align}\label{eq:ub-1}
         \sum_{\substack{t=r+1 \\ (t-r) \text{ is odd}}}^N \binom{N}{t} < \lvert \mathcal O \rvert.
    \end{align}
    With the identities $\lvert \mathcal E_t \rvert + \lvert \mathcal O_t \rvert = \binom{N}{t}$ and $\lvert \mathcal O\rvert = \sum_{t=s+1}^N \lvert \mathcal O_t \rvert$, \cref{eq:ub-1} reads
    \begin{align*}
        \sum_{\substack{t=r+1 \\ (t-r) \text{ is odd}}}^N \lvert \mathcal E_t \rvert < \sum_{t=s+1}^r \lvert \mathcal O_t \rvert + \sum_{\substack{t=r+2 \\ (t-r) \text{ is even}}}^N \lvert \mathcal O_t \rvert,
    \end{align*}
    \sloppy
    and the proof of Step (c.2.2) is complete upon application of \cref{L:C_F-ub}.    
    \end{description}
    \end{description}
    \end{enumerate}

\end{proof}

\subsection{Proof of \texorpdfstring{\cref{L:C_F-lb}}{Lemma}}\label{subsec:C_F-lb}
\begin{proof}
    To show \eqref{eq:lower-bound-C_F-a}, it will be convenient to establish at the same time the complementary inequality
    \begin{align}\label{eq:lower-bound-C_F-b}
        \sum_{\substack{t=s+1 \\ (t-s) \text{ is odd}}}^N \lvert \mathcal O_t \rvert &< \sum_{\substack{t=s \\ (t-s) \text{ is even}}}^N \lvert \mathcal E_t \rvert.
    \end{align}
    
    We shall prove \cref{eq:lower-bound-C_F-a,eq:lower-bound-C_F-b} simultaneously by induction on the number of hyperplanes $N$. The induction step employs a deletion--contraction argument, a Tutte--Grothendieck method standard in the study of hyperplane arrangements, see, e.g., \cite{zaslavsky1975facing}. 
    
    As $2\leq s<r$, we consider, for the base case, $N=4$, the hyperplanes $\{H_k\}_{k=1}^4$ with $H_k = \{\varphi_k\}^\perp$, where $\{H_k\}_{k=1}^3$ are in general position and $H_4$ is such that, e.g., $\varphi_4 =\varphi_1+\varphi_2$. In this setting, we have $s=2$, $r=3$, and
    \begin{gather*}
    \begin{array}{cccccc}
        \lvert \mathcal{E}_0 \rvert = 1, & 
        \lvert \mathcal{E}_1 \rvert = 4, &
        \lvert \mathcal{E}_2 \rvert = 6, &
        \lvert \mathcal{E}_3 \rvert = 3, &
        \lvert \mathcal{E}_4 \rvert = 0, \\[4pt]
        \lvert \mathcal{O}_0 \rvert = 0, &
        \lvert \mathcal{O}_1 \rvert = 0, &
        \lvert \mathcal{O}_2 \rvert = 0, &
        \lvert \mathcal{O}_3 \rvert = 1, &
        \lvert \mathcal{O}_4 \rvert = 1.
    \end{array}
    \end{gather*}
    Thus, \cref{eq:lower-bound-C_F-a} is realized as $\lvert \mathcal{O}_4 \rvert <\lvert \mathcal{E}_3 \rvert$, while \cref{eq:lower-bound-C_F-b} holds in the form $\lvert \mathcal{O}_3 \rvert \leq \lvert \mathcal{E}_2 \rvert + \lvert \mathcal{E}_4 \rvert$, as required. 

    Now suppose that \cref{eq:lower-bound-C_F-a,eq:lower-bound-C_F-b} is true for an arbitrary set of $N-1$ hyperplanes, denoted by $\{\tilde H_k\}_{k=1}^{N-1}$, i.e.,\footnote{To highlight the dependency of $\lvert \mathcal{E}_t \rvert$ and $\lvert \mathcal{O}_t \rvert$ on the set of hyperplanes under consideration, namely, $\{\tilde H_k\}_{k=1}^{N-1}$, we included it in parenthesis.}
    \begin{align}\label{eq:ind-hyp-lower-bound-a}
        \sum_{\substack{t=\tilde s+2 \\ (t-\tilde s) \text{ is even}}}^{N-1} \lvert \mathcal O_t(\{\tilde H_k\}_{k=1}^{N-1}) \rvert &< \sum_{\substack{t=\tilde s+1 \\ (t-\tilde s) \text{ is odd}}}^{N-1} \lvert \mathcal E_t(\{\tilde H_k\}_{k=1}^{N-1}) \rvert
    \end{align}
    and
    \begin{align}\label{eq:ind-hyp-lower-bound-b}
        \sum_{\substack{t=\tilde s+1 \\ (t-\tilde s) \text{ is odd}}}^{N-1} \lvert \mathcal O_t(\{\tilde H_k\}_{k=1}^{N-1}) \rvert &< \sum_{\substack{t=\tilde s \\ (t-\tilde s) \text{ is even}}}^{N-1} \lvert \mathcal E_t(\{\tilde H_k\}_{k=1}^{N-1}) \rvert,
    \end{align}
    where $\tilde s \coloneqq \kr{\{\tilde \varphi_k\}_{k=1}^{N-1}}$ and $\tilde r \coloneqq \rank{\{\tilde \varphi_k\}_{k=1}^{N-1}}$ satisfying $2\leq \tilde s <\tilde r$. Here, $\{\tilde \varphi_k\}_{k=1}^{N-1}$ are such that $\tilde H_k = \{\tilde\varphi_k\}^\perp$, for all $k \in \{1,\ldots,N-1\}$.
    To establish the induction step, we require the identities
    \begin{align}
        \lvert \mathcal E_t(\{H_k\}_{k=1}^N)\rvert = \lvert \mathcal E_t(\{H_k\}_{k=2}^N)\rvert + \lvert \mathcal E_{t-1}(\{H_1 \cap H_k\}_{k=2}^N)\rvert \label{eq:del-contr-even}
        \intertext{and}
        \lvert \mathcal O_t(\{H_k\}_{k=1}^N)\rvert = \lvert \mathcal O_t(\{H_k\}_{k=2}^N)\rvert + \lvert \mathcal O_{t-1}(\{H_1 \cap H_k\}_{k=2}^N)\rvert. \label{eq:del-contr-odd}
    \end{align}
    The \emph{deletion} $\{H_k\}_{k=2}^N$ describes the arrangement of all hyperplanes except for $H_1$ in the same ambient space $\mathbb R^{M'}$. The \emph{contraction} $\{H_1 \cap H_k\}_{k=2}^N$ refers to the hyperplane arrangement in the new $(M'-1)$-dimensional ambient space $H_1$. Note that, as $s\geq 2$ (i.e., $\{H_k\}_{k=1}^N$ are distinct), $\{H_1 \cap H_k\}_{k=2}^N$ are indeed $(M'-2)$-dimensional hyperplanes in $H_1$. Denoting by $P_{H_1}$ the orthogonal projection onto $H_1$, we can write\footnote{Here, $\perp$ denotes the orthogonal complement in the ambient space $H_1$.} $H_1 \cap H_k = \{P_{H_1}\varphi_k\}^\perp$. To see that \cref{eq:del-contr-even} holds, let $\{H_k\}_{k \in \mathcal K} \in \mathcal E_t(\{H_k\}_{k=1}^N)$ be even-degenerate, where $\mathcal{K} \subseteq \{1,\ldots,N\}$ with $\lvert \mathcal{K} \rvert =t$. If $1 \notin \mathcal{K}$, then $\{H_k\}_{k \in \mathcal K} \in \mathcal E_t(\{H_k\}_{k=2}^N)$. If $1 \in \mathcal{K}$, then $\{H_1 \cap H_k\}_{k \in \mathcal{K}\setminus\{1\}}$ is even-degenerate in the space $H_1$. Indeed,
    \begin{align*}
        \dimR{\bigcap_{k \in \mathcal{K}\setminus\{1\}}(H_1\cap H_k)} &= \dimR{\bigcap_{k \in \mathcal{K}}H_k} \\
        &\equiv M'-t &&\Mod{2} \\
        &\equiv (M'-1)-(t-1) &&\Mod{2},
    \end{align*}
    where we used in the second line that $\{H_k\}_{k \in \mathcal{K}}$ is even-degenerate in $\mathbb R^{M'}$.
    Thus, \cref{eq:del-contr-even} holds with ``$\leq$''. Conversely, if $\{H_k\}_{k \in \mathcal K} \in  \mathcal E_t(\{H_k\}_{k=2}^N)$, where $\mathcal{K} \subseteq \{2,\ldots,N\}$ with $\lvert \mathcal{K} \rvert =t$, then $\{H_k\}_{k \in \mathcal K} \in  \mathcal E_t(\{H_k\}_{k=1}^N)$. Furthermore, if $\{H_1\cap H_k\}_{k \in \mathcal K} \in  \mathcal E_{t-1}(\{H_1 \cap H_k\}_{k=2}^N)$, where $\mathcal{K} \subseteq \{2,\ldots,N\}$ with $\lvert \mathcal{K} \rvert =t-1$, then $\{H_k\}_{k \in \mathcal{K} \cup \{1\}} \in \mathcal E_t(\{H_k\}_{k=1}^N)$ because
    \begin{align*}
        \dimR{\bigcap_{k \in \mathcal{K}\cup\{1\}}H_k} &= \dimR{\bigcap_{k \in \mathcal{K}}(H_1\cap H_k)} \\
        &\equiv (M'-1)-(t-1) &&\Mod{2} \\
        &\equiv M'-t &&\Mod{2},
    \end{align*}
    where the second line holds because $\{H_1\cap H_k\}_{k \in \mathcal K}$ is even-degenerate in the $(M'-1)$-dimensional space $H_1$. 
    Consequently, \cref{eq:del-contr-even} remains valid with ``$\geq$'', and the identity follows. 
    Likewise, one can show that \cref{eq:del-contr-odd} is true. 
    
    By definition of the Kruskal rank, there exists a $\mathcal{K}\subseteq \{1,\ldots,N\}$ with $\lvert \mathcal{K} \rvert = s+1$ such that $\{\varphi_k\}_{k \in \mathcal{K}}$ is linearly dependent. Without loss of generality assume that $1 \in \mathcal{K}$. Note that, upon deleting $H_1$ from $\{H_k\}_{k=1}^N$ and contracting $\{H_k\}_{k=1}^N$ onto $H_1$, in the resulting hyperplane arrangements $\{H_k\}_{k=2}^N$ and $\{H_1 \cap H_k\}_{k=2}^N$, the corresponding quantities\footnote{The subscripts $\setminus 1$ and $/1$ refer to deletion and contraction of $H_1$, respectively.} $r_{\setminus1} \coloneqq \rank{\{\varphi_k\}_{k =2}^N}$, $s_{\setminus 1} \coloneqq\kr{\{\varphi_k\}_{k =2}^N}$ and $r_{/1} \coloneqq \rank{\{P_{H_1}\varphi_k\}_{k =2}^N}$, $s_{/1} \coloneqq\kr{\{P_{H_1}\varphi_k\}_{k =2}^N}$, respectively, may, in general, change. Specifically, as $\{\varphi_k\}_{k \in \mathcal{K}}$ is linearly dependent and $1 \in \mathcal{K}$, it is immediate that $r_{\setminus1} =r $ and moreover, since $r<s$, we have $s_{\setminus1}\geq s$. Now by the rank--nullity theorem, we have for every $\mathcal{K}^{/1} \subseteq \{2,\ldots,N\}$,
    \begin{align*}
        &\!\!\dimR{\spanR{\{P_{H_1}\varphi_k\}_{k \in \mathcal{K}^{/1}}}} \\ &= \dimR{\spanR{\{\varphi_k\}_{k \in \mathcal{K}^{/1}}}} - \dimR{H_1^{\perp} \cap \spanR{\{\varphi_k\}_{k \in \mathcal{K}^{/1}}}} \\
        &= \dimR{\spanR{\{\varphi_k\}_{k \in \mathcal{K}^{/1}}}} - \dimR{\spanR{\{\varphi_1\}} \cap \spanR{\{\varphi_k\}_{k \in \mathcal{K}^{/1}}}} \\ 
        &= \dimR{\spanR{\{\varphi_k\}_{k \in \mathcal{K}^{/1} \cup \{1\}}}} -1. 
    \end{align*}
    Setting $\mathcal{K}^{/1} = \{2,\ldots,N\}$ and $\mathcal{K}^{/1} = \mathcal{K} \setminus \{1\}$, we can deduce that $r_{/1} = r-1$ and $s_{/1} =s-1$, respectively.   
    
    To complete the induction step, we consider two cases $s \equiv s_{\setminus1} \Mod{2}$ and $s \not\equiv s_{\setminus1} \Mod{2}$. 

    \underline{Case 1: $s \equiv s_{\setminus1} \Mod{2}$.} We compute
    \allowdisplaybreaks
    \begin{align}
        \sum_{\substack{t=s+2 \\ (t-s) \text{ is even}}}^N \lvert \mathcal O_t(\{H_k\}_{k=1}^N) \rvert
        &= \sum_{\substack{t=s+2 \\ (t-s) \text{ is even}}}^N \lvert \mathcal O_t(\{H_k\}_{k=2}^N) \rvert + \sum_{\substack{t=s+2 \\ (t-s) \text{ is even}}}^N \lvert \mathcal O_{t-1}(\{H_1 \cap H_k\}_{k=2}^N) \rvert \label{eq:lower-bound-induction-step1}\\
        &= \sum_{\substack{t=s+2 \\ (t-s) \text{ is even}}}^{N-1} \lvert \mathcal O_t(\{H_k\}_{k=2}^N) \rvert + \sum_{\substack{t=(s-1)+2 \\ (t-(s-1)) \text{ is even}}}^{N-1} \lvert \mathcal O_{t}(\{H_1 \cap H_k\}_{k=2}^N) \rvert \nonumber\\
        &= \sum_{\substack{t=s_{\setminus 1}+2 \\ (t-s_{\setminus 1}) \text{ is even}}}^{N-1} \lvert \mathcal O_t(\{H_k\}_{k=2}^N) \rvert + \sum_{\substack{t=s_{/1}+2 \\ (t-s_{/1}) \text{ is even}}}^{N-1} \lvert \mathcal O_{t}(\{H_1 \cap H_k\}_{k=2}^N) \rvert \label{eq:lower-bound-induction-step1.1}\\
        &\leq \sum_{\substack{t=s_{\setminus 1}+1 \\ (t-s_{\setminus 1}) \text{ is odd}}}^{N-1} \lvert \mathcal E_t(\{H_k\}_{k=2}^N) \rvert + \sum_{\substack{t=s_{/1}+1 \\ (t-s_{/1}) \text{ is odd}}}^{N-1} \lvert \mathcal E_t(\{H_1 \cap H_k\}_{k=2}^N) \rvert \label{eq:lower-bound-induction-step2}\\
        &= \sum_{\substack{t=s_{\setminus 1}+1 \\ (t-s_{\setminus 1}) \text{ is odd}}}^{N-1} \lvert \mathcal E_t(\{H_k\}_{k=2}^N) \rvert + \sum_{\substack{t=s+1 \\ (t-s) \text{ is odd}}}^{N-1} \lvert \mathcal E_{t-1}(\{H_1 \cap H_k\}_{k=2}^N) \rvert \nonumber\\
        &\leq \sum_{\substack{t=s+1 \\ (t-s) \text{ is odd}}}^{N} \lvert \mathcal E_t(\{H_k\}_{k=2}^N) \rvert + \sum_{\substack{t=s+1 \\ (t-s) \text{ is odd}}}^{N} \lvert \mathcal E_{t-1}(\{H_1 \cap H_k\}_{k=2}^N) \rvert \label{eq:lower-bound-induction-step2.1}\\
        &= \sum_{\substack{t=s+1 \\ (t-s) \text{ is odd}}}^N \lvert \mathcal E_t (\{H_k\}_{k=1}^N) \rvert, \label{eq:lower-bound-induction-step3}
    \end{align}
    where \cref{eq:lower-bound-induction-step1} is by \cref{eq:del-contr-odd}. In \cref{eq:lower-bound-induction-step1.1}, we used that $\lvert \mathcal O_t(\{H_k\}_{k=2}^N) \rvert = 0$ for $t\leq s_{\setminus1}$ and $s \equiv s_{\setminus 1} \Mod{2}$. The first term in \cref{eq:lower-bound-induction-step2} follows from the induction hypothesis \cref{eq:ind-hyp-lower-bound-a} applied with $\{\tilde H_k\}_{k=1}^{N-1} = \{H_k\}_{k=2}^{N-1}$, $\tilde s = s_{\setminus 1}$, and $\tilde r = r_{\setminus 1}$ whenever $s_{\setminus 1}<r_{\setminus 1}$. If $s_{\setminus 1}=r_{\setminus 1}$, we know from \cref{eq:rs-equality} that  
    \begin{align}\label{eq:rs-equality-a}
        \sum_{\substack{t=s_{\setminus 1}+2 \\ (t-s_{\setminus 1}) \text{ is even}}}^{N-1} \lvert \mathcal O_t(\{H_k\}_{k=2}^N) \rvert = \sum_{\substack{t=s_{\setminus 1}+1 \\ (t-s_{\setminus 1}) \text{ is odd}}}^{N-1} \lvert \mathcal E_t(\{H_k\}_{k=2}^N) \rvert.
    \end{align}
    Now, for the second term in \cref{eq:lower-bound-induction-step2} the induction hypothesis \cref{eq:ind-hyp-lower-bound-a} is employed with $\{\tilde H_k\}_{k=1}^{N-1} = \{H_1 \cap H_k\}_{k=2}^{N-1}$, $\tilde s = s_{/ 1} = s-1$, and $\tilde r = r_{/1}=r-1$ whenever $s_{/1} \geq 2$. For the case $s_{/1} = 1$, i.e., $\{H_1 \cap H_k\}_{k=2}^N$ are not distinct, we argue as follows. Suppose that $\{H_1 \cap H_k\}_{k=2}^N$ are ordered such that $\{H_1 \cap H_k\}_{k=2}^n$ are distinct and each of $\{H_1 \cap H_k\}_{k=n+1}^N$ is identical to one of the first $(n-1)$, for some $n \in \{2,\ldots,N-1\}$. Then, by assumption, for $H_1 \cap H_{n+1}$, there is an $\ell \in \{2,\ldots,n\}$ such that $H_1 \cap H_{n+1} = H_1 \cap H_{\ell}$. Considering subsets of $\{H_1 \cap H_k\}_{k=2}^N$ containing $H_1 \cap H_{n+1}$ but none of $\{H_1\cap H_{n+2}\}_{k=n+2}^N$, for each such subset containing $H_1 \cap H_\ell$ there is one identical lacking $H_1 \cap H_\ell$. Then, if one subset of such a pair belongs to $\bigcup^{N-1}_{t=s_{/1}+2, \, (t-s_{/1}) \text{ even}} \mathcal{O}_{t}(\{H_1 \cap H_k\}_{k=2}^N)$, the other one of this pair is in $\bigcup^{N-1}_{t=s_{/1}+1, \, (t-s_{/1}) \text{ odd}} \mathcal{E}_{t}(\{H_1 \cap H_k\}_{k=2}^N)$. Using this cancellation and repeating this argument with each of $\{H_1 \cap H_{k}\}_{k=n+2}^N$, we arrive at the equivalence
    \begin{align}\label{eq:removing-duplicates}
    \begin{split}
        &\left(\sum_{\substack{t=s_{/1}+2 \\ (t-s_{/1}) \text{ is even}}}^{N-1} \lvert \mathcal O_{t}(\{H_1 \cap H_k\}_{k=2}^N) \leq \sum_{\substack{t=s_{/1}+1 \\ (t-s_{/1}) \text{ is odd}}}^{N-1} \lvert \mathcal E_{t-1}(\{H_1 \cap H_k\}_{k=2}^N) \rvert\right) \\\iff &\left(\sum_{\substack{t=\bar s_{/1}+2 \\ (t-\bar s_{/1}) \text{ is even}}}^{n-1} \lvert \mathcal O_{t}(\{H_1 \cap H_k\}_{k=2}^n) \leq \sum_{\substack{t=\bar s_{/1}+1 \\ (t-\bar s_{/1}) \text{ is odd}}}^{n-1} \lvert \mathcal E_{t-1}(\{H_1 \cap H_k\}_{k=2}^n) \rvert\right), 
    \end{split}
    \end{align}
    where $\bar s_{/1} \coloneqq \kr{\{P_{H_1}\varphi_k\}_{k=2}^n} \geq 2$. Note that $\bar r_{/1} \coloneqq \rank{\{P_{H_1}\varphi_k\}_{k=2}^n} = r_{/1}$. Now to establish the second term in \cref{eq:lower-bound-induction-step2}, we apply the induction hypothesis \cref{eq:ind-hyp-lower-bound-a} with $\{\tilde H_k\}_{k=1}^{N-1} = \{H_1 \cap H_k\}_{k=2}^{n}$, $\tilde s = \bar s_{/ 1}$, and $\tilde r = \bar r_{/1}$ if $\bar s_{/1} <\bar r_{/1}$; and otherwise, if $\bar s_{/1} =\bar r_{/1}$, we apply \cref{eq:rs-equality-a}.      
    Finally, \cref{eq:lower-bound-induction-step2.1} holds as $s_{\setminus 1} \geq s$ and \cref{eq:lower-bound-induction-step3} follows from \cref{eq:del-contr-even}. Upon noting that the inequality in \cref{eq:lower-bound-induction-step2} is strict if $s_{\setminus 1} =s$, and that the inequality in \cref{eq:lower-bound-induction-step2.1} is strict if $s_{\setminus 1} >s$, we obtain
    \begin{align*}
        \sum_{\substack{t=s+2 \\ (t-s) \text{ is even}}}^N \lvert \mathcal O_t(\{H_k\}_{k=1}^N) \rvert < \sum_{\substack{t=s+1 \\ (t-s) \text{ is odd}}}^N \lvert \mathcal E_t(\{H_k\}_{k=1}^N) \rvert.
    \end{align*}
    
    Similarly, we have
    \begin{align}
        \sum_{\substack{t=s+1 \\ (t-s) \text{ is odd}}}^N \lvert \mathcal O_t(\{H_k\}_{k=1}^N) \rvert 
        &= \sum_{\substack{t=s+1 \\ (t-s) \text{ is odd}}}^N \lvert \mathcal O_t(\{H_k\}_{k=2}^N) \rvert + \sum_{\substack{t=s+1 \\ (t-s) \text{ is odd}}}^N \lvert \mathcal O_{t-1}(\{H_1 \cap H_k\}_{k=2}^N) \rvert \label{eq:lower-bound-induction-b-step1}\\
        &= \sum_{\substack{t=s+1 \\ (t-s) \text{ is odd}}}^{N-1} \lvert \mathcal O_t(\{H_k\}_{k=2}^N) \rvert + \sum_{\substack{t=(s-1)+1 \\ (t-(s-1)) \text{ is odd}}}^{N-1} \lvert \mathcal O_{t}(\{H_1 \cap H_k\}_{k=2}^N) \rvert \nonumber\\
        &= \sum_{\substack{t=s_{\setminus 1}+1 \\ (t-s_{\setminus 1}) \text{ is odd}}}^{N-1} \lvert \mathcal O_t(\{H_k\}_{k=2}^N) \rvert + \sum_{\substack{t=s_{/1}+1 \\ (t-s_{/1}) \text{ is odd}}}^{N-1} \lvert \mathcal O_{t}(\{H_1 \cap H_k\}_{k=2}^N) \rvert \label{eq:lower-bound-induction-b-step1.1}\\
        &< \sum_{\substack{t=s_{\setminus 1} \\ (t-s_{\setminus 1}) \text{ is even}}}^{N-1} \lvert \mathcal E_t(\{H_k\}_{k=2}^N) \rvert + \sum_{\substack{t=s_{/1} \\ (t-s_{/1}) \text{ is even}}}^{N-1} \lvert \mathcal E_t(\{H_1 \cap H_k\}_{k=2}^N) \rvert \label{eq:lower-bound-induction-b-step2}\\
        &= \sum_{\substack{t=s_{\setminus 1} \\ (t-s_{\setminus 1}) \text{ is even}}}^{N-1} \lvert \mathcal E_t(\{H_k\}_{k=2}^N) \rvert + \sum_{\substack{t=s \\ (t-s) \text{ is even}}}^{N} \lvert \mathcal E_{t-1}(\{H_1 \cap H_k\}_{k=2}^N) \rvert \nonumber\\
        &\leq \sum_{\substack{t=s \\ (t-s) \text{ is even}}}^{N} \lvert \mathcal E_t(\{H_k\}_{k=2}^N) \rvert + \sum_{\substack{t=s \\ (t-s) \text{ is even}}}^{N} \lvert \mathcal E_{t-1}(\{H_1 \cap H_k\}_{k=2}^N) \rvert \label{eq:lower-bound-induction-b-step2.1}\\
        &= \sum_{\substack{t=s \\ (t-s) \text{ is even}}}^N \lvert \mathcal E_t (\{H_k\}_{k=1}^N) \rvert, \label{eq:lower-bound-induction-b-step3}
    \end{align}
    \sloppy where, as in the preceding derivation, in \cref{eq:lower-bound-induction-b-step1} we used \cref{eq:del-contr-odd}, and \cref{eq:lower-bound-induction-b-step1.1} holds as $\lvert \mathcal O_t(\{H_k\}_{k=2}^N) \rvert = 0$ for $t\leq s_{\setminus1}$, upon noting that $s \equiv s_{\setminus 1} \Mod{2}$. Whenever $s_{\setminus1}<r_{\setminus 1}$, the first term in \cref{eq:lower-bound-induction-b-step2} follows from the induction hypothesis \cref{eq:ind-hyp-lower-bound-b} with $\{\tilde H_k\}_{k=1}^{N-1} = \{H_k\}_{k=2}^{N-1}$, $\tilde s = s_{\setminus 1}$, and $\tilde r =r_{\setminus1}$. If $s_{\setminus1}=r_{\setminus 1}$, we have, by \cref{eq:rs-equality} and the fact that $C(N-1,s_{\setminus 1}-1) < C(N-1,s_{\setminus 1})$ whenever $N-1 \geq s_{\setminus1}$,
    \begin{align}
        \left(\sum_{\substack{t=s_{\setminus 1}+1 \\ \phantom{ph_1}}}^{N-1} \lvert \mathcal O_t(\{H_k\}_{k=2}^N) \rvert\right. &< \left.\sum_{\substack{t=s_{\setminus 1} \\ (t-(s_{\setminus 1}-1)) \text{ is odd}}}^{N-1} \binom{N-1}{t} \right) \nonumber\\
        \iff \left(\sum_{\substack{t=s_{\setminus 1}+1 \\ (t-s_{\setminus 1}) \text{ is odd}}}^{N-1} \lvert \mathcal O_t(\{H_k\}_{k=2}^N) \rvert \right.&<\left. \sum_{\substack{t=s_{\setminus 1} \\ (t-s_{\setminus 1}) \text{ is even}}}^{N-1} \lvert \mathcal E_t(\{H_k\}_{k=2}^N) \rvert \right). \label{eq:rs-equality-b}
    \end{align}
    Note that $N-1 \geq s_{\setminus1}$ is satisfied whenever $s_{\setminus1}=r_{\setminus 1} = r$ as $s<r$.  
    The second term in \cref{eq:lower-bound-induction-b-step2} is obtained by applying the induction hypothesis \cref{eq:ind-hyp-lower-bound-b} with $\{\tilde H_k\}_{k=1}^{N-1} = \{H_1 \cap H_k\}_{k=2}^{N-1}$, $\tilde s = s_{/ 1}$, and $\tilde r = r_{/1}$ whenever $s_{/1} \geq 2$. If $s_{/1} = 1$, we employ the analogous cancellation argument as in \cref{eq:removing-duplicates} with \cref{eq:rs-equality-b} in case $\bar s_{/1} = \bar{r}_{/1}$. Finally, \cref{eq:lower-bound-induction-b-step2.1} follows as $s_{\setminus 1} \geq s$, and \cref{eq:lower-bound-induction-b-step3} is a consequence of \cref{eq:del-contr-even}.
    
    \underline{Case 2: $s \not\equiv s_{\setminus1} \Mod{2}$.} It holds that
    \begin{align}
        \sum_{\substack{t=s+2 \\ (t-s) \text{ is even}}}^N \lvert \mathcal O_t(\{H_k\}_{k=1}^N) \rvert 
        &= \sum_{\substack{t=s+2 \\ (t-s) \text{ is even}}}^N \lvert \mathcal O_t(\{H_k\}_{k=2}^N) \rvert + \sum_{\substack{t=s+2 \\ (t-s) \text{ is even}}}^N \lvert \mathcal O_{t-1}(\{H_1 \cap H_k\}_{k=2}^N) \rvert \nonumber\\
        &= \sum_{\substack{t=s+2 \\ (t-s) \text{ is even}}}^{N-1} \lvert \mathcal O_t(\{H_k\}_{k=2}^N) \rvert + \sum_{\substack{t=(s-1)+2 \\ (t-(s-1)) \text{ is even}}}^{N-1} \lvert \mathcal O_{t}(\{H_1 \cap H_k\}_{k=2}^N) \rvert \nonumber\\
        &= \sum_{\substack{t=s_{\setminus 1}+1 \\ (t-s_{\setminus 1}) \text{ is odd}}}^{N-1} \lvert \mathcal O_t(\{H_k\}_{k=2}^N) \rvert + \sum_{\substack{t=s_{/1}+2 \\ (t-s_{/1}) \text{ is even}}}^{N-1} \lvert \mathcal O_{t}(\{H_1 \cap H_k\}_{k=2}^N) \rvert \label{eq:lower-bound-induction-step1.1-diff-parity}\\
        &< \sum_{\substack{t=s_{\setminus 1} \\ (t-s_{\setminus 1}) \text{ is even}}}^{N-1} \lvert \mathcal E_t(\{H_k\}_{k=2}^N) \rvert + \sum_{\substack{t=s_{/1}+1 \\ (t-s_{/1}) \text{ is odd}}}^{N-1} \lvert \mathcal E_t(\{H_1 \cap H_k\}_{k=2}^N) \rvert \label{eq:lower-bound-induction-step2-diff-parity}\\
        &= \sum_{\substack{t=s_{\setminus 1} \\ (t-s_{\setminus 1}) \text{ is even}}}^{N-1} \lvert \mathcal E_t(\{H_k\}_{k=2}^N) \rvert + \sum_{\substack{t=s+1 \\ (t-s) \text{ is odd}}}^{N-1} \lvert \mathcal E_{t-1}(\{H_1 \cap H_k\}_{k=2}^N) \rvert \nonumber\\
        &\leq \sum_{\substack{t=s+1 \\ (t-s) \text{ is odd}}}^{N} \lvert \mathcal E_t(\{H_k\}_{k=2}^N) \rvert + \sum_{\substack{t=s+1 \\ (t-s) \text{ is odd}}}^{N} \lvert \mathcal E_{t-1}(\{H_1 \cap H_k\}_{k=2}^N) \rvert \label{eq:lower-bound-induction-step2.1-diff-parity}\\
        &= \sum_{\substack{t=s+1 \\ (t-s) \text{ is odd}}}^N \lvert \mathcal E_t(\{H_k\}_{k=1}^N) \rvert, \nonumber
    \end{align}
    where \cref{eq:lower-bound-induction-step1.1-diff-parity} follows as $\lvert \mathcal O_t(\{H_k\}_{k=2}^N) \rvert = 0$ for $t\leq s_{\setminus1}$ and $s \not\equiv s_{\setminus 1} \Mod{2}$. For the first term in \cref{eq:lower-bound-induction-step2-diff-parity}, we apply the induction hypothesis \cref{eq:ind-hyp-lower-bound-b} with $\{\tilde H_k\}_{k=1}^{N-1} = \{H_k\}_{k=2}^{N-1}$, $\tilde s = s_{\setminus 1}$, and $\tilde r = r_{\setminus 1}$ whenever $s_{\setminus 1}<r_{\setminus 1}$, and use \cref{eq:rs-equality-b} otherwise. The second term in \cref{eq:lower-bound-induction-step2-diff-parity} is by the induction hypothesis \cref{eq:ind-hyp-lower-bound-a} with $\{\tilde H_k\}_{k=1}^{N-1} = \{H_1 \cap H_k\}_{k=2}^{N-1}$, $\tilde s = s_{/ 1}$, and $\tilde r = r_{/1}$ if $s_{/1}\geq 2$. For $s_{/1}=1$, the same cancellation argument as in \cref{eq:removing-duplicates} is employed. In \cref{eq:lower-bound-induction-step2.1-diff-parity} we used that $s_{\setminus 1} \geq s$ and $s \not\equiv s_{\setminus 1} \Mod{2}$.

    Finally, we compute
    \begin{align}
        \sum_{\substack{t=s+1 \\ (t-s) \text{ is odd}}}^N \lvert \mathcal O_t(\{H_k\}_{k=1}^N) \rvert
        &= \sum_{\substack{t=s+1 \\ (t-s) \text{ is odd}}}^N \lvert \mathcal O_t(\{H_k\}_{k=2}^N) \rvert + \sum_{\substack{t=s+1 \\ (t-s) \text{ is odd}}}^N \lvert \mathcal O_{t-1}(\{H_1 \cap H_k\}_{k=2}^N) \rvert \nonumber\\
        &= \sum_{\substack{t=s+1 \\ (t-s) \text{ is odd}}}^{N-1} \lvert \mathcal O_t(\{H_k\}_{k=2}^N) \rvert + \sum_{\substack{t=(s-1)+1 \\ (t-(s-1)) \text{ is odd}}}^{N-1} \lvert \mathcal O_{t}(\{H_1 \cap H_k\}_{k=2}^N) \rvert \nonumber\\
        &= \sum_{\substack{t=s_{\setminus 1}+2 \\ (t-s_{\setminus 1}) \text{ is even}}}^{N-1} \lvert \mathcal O_t(\{H_k\}_{k=2}^N) \rvert + \sum_{\substack{t=s_{/1}+1 \\ (t-s_{/1}) \text{ is odd}}}^{N-1} \lvert \mathcal O_{t}(\{H_1 \cap H_k\}_{k=2}^N) \rvert \label{eq:lower-bound-induction-b-step1.1-diff-parity}\\
        &\leq \sum_{\substack{t=s_{\setminus 1}+1 \\ (t-s_{\setminus 1}) \text{ is odd}}}^{N-1} \lvert \mathcal E_t(\{H_k\}_{k=2}^N) \rvert + \sum_{\substack{t=s_{/1} \\ (t-s_{/1}) \text{ is even}}}^{N-1} \lvert \mathcal E_t(\{H_1 \cap H_k\}_{k=2}^N) \rvert \label{eq:lower-bound-induction-b-step2-diff-parity}\\
        &= \sum_{\substack{t=s_{\setminus 1}+1 \\ (t-s_{\setminus 1}) \text{ is odd}}}^{N-1} \lvert \mathcal E_t(\{H_k\}_{k=2}^N) \rvert + \sum_{\substack{t=s \\ (t-s) \text{ is even}}}^{N} \lvert \mathcal E_{t-1}(\{H_1 \cap H_k\}_{k=2}^N) \rvert \nonumber\\
        &< \sum_{\substack{t=s \\ (t-s) \text{ is even}}}^{N} \lvert \mathcal E_t(\{H_k\}_{k=2}^N) \rvert + \sum_{\substack{t=s \\ (t-s) \text{ is even}}}^{N} \lvert \mathcal E_{t-1}(\{H_1 \cap H_k\}_{k=2}^N) \rvert \label{eq:lower-bound-induction-b-step2.1-diff-parity}\\
        &= \sum_{\substack{t=s \\ (t-s) \text{ is even}}}^N \lvert \mathcal E_t(\{H_k\}_{k=1}^N) \rvert, \nonumber
    \end{align}
    where in \cref{eq:lower-bound-induction-b-step1.1-diff-parity}, we used that  $\lvert \mathcal O_t(\{H_k\}_{k=2}^N) \rvert = 0$ for $t\leq s_{\setminus1}$ and $s \not\equiv s_{\setminus 1} \Mod{2}$. The first term in \cref{eq:lower-bound-induction-b-step2-diff-parity} is by the induction hypothesis \cref{eq:ind-hyp-lower-bound-a} with $\{\tilde H_k\}_{k=1}^{N-1} = \{H_k\}_{k=2}^{N-1}$, $\tilde s = s_{\setminus 1}$, $\tilde r = r_{\setminus 1}$ if $s_{\setminus 1} <r_{\setminus 1}$ and by \cref{eq:rs-equality-a} if $s_{\setminus 1}  = r_{\setminus 1}$. The second term in \cref{eq:lower-bound-induction-b-step2-diff-parity} follows from the induction hypothesis \cref{eq:ind-hyp-lower-bound-b} with $\{\tilde H_k\}_{k=1}^{N-1} = \{H_1 \cap H_k\}_{k=2}^{N-1}$, $\tilde s = s_{/ 1}$, and $\tilde r = r_{/1}$ if $s_{/1} \geq 2$. Whenever $s_{/1} = 1$, we use the analogous cancellation argument as in \cref{eq:removing-duplicates} with \cref{eq:rs-equality-b} in case $\bar s_{/1} = \bar{r}_{/1}$. Finally, \cref{eq:lower-bound-induction-b-step2.1-diff-parity} holds as $s_{\setminus 1} \geq s$ and $s \not\equiv s_{\setminus 1} \Mod{2}$.

    This completes the induction step in the proof of \cref{eq:lower-bound-C_F-a,eq:lower-bound-C_F-b}.
\end{proof}
\subsection{Proof of \texorpdfstring{\cref{L:C_F-ub}}{Lemma}}\label{subsec:C_F-ub}
\begin{proof}
    The proof of \cref{eq:ub-2} is again by induction on the number of hyperplanes $N$, employing the deletion--contraction argument in the induction step. In the regime $2\leq s<r$, we consider for the base case $N=4$, as above, the hyperplanes $\{H_k\}_{k=1}^4$ with $H_k = \{\varphi_k\}^\perp$, where $\{H_k\}_{k=1}^3$ are in general position and $H_4$ is such that, e.g., $\varphi_4 =\varphi_1+\varphi_2$. Then, $s=2$, $r=3$, and
    \begin{gather*}
    \begin{array}{cccccc}
        \lvert \mathcal{E}_0 \rvert = 1, & 
        \lvert \mathcal{E}_1 \rvert = 4, &
        \lvert \mathcal{E}_2 \rvert = 6, &
        \lvert \mathcal{E}_3 \rvert = 3, &
        \lvert \mathcal{E}_4 \rvert = 0, \\[4pt]
        \lvert \mathcal{O}_0 \rvert = 0, &
        \lvert \mathcal{O}_1 \rvert = 0, &
        \lvert \mathcal{O}_2 \rvert = 0, &
        \lvert \mathcal{O}_3 \rvert = 1, &
        \lvert \mathcal{O}_4 \rvert = 1.
    \end{array}
    \end{gather*}
    Thus, \cref{eq:ub-2} holds in the form $\lvert \mathcal{E}_4 \rvert <\lvert \mathcal{O}_3 \rvert$, verifying the base case.
    Next, suppose that \cref{eq:ub-2} is true for an arbitrary set of $N-1$ hyperplanes, $\{\tilde H_k\}_{k=1}^{N-1}$, i.e.,
    \begin{align}\label{eq:ub-ind-hyp}
        \sum_{\substack{t=\tilde r +1 \\ (t-\tilde r) \text{ is odd}}}^{N-1} \lvert \mathcal E_t (\{\tilde H_k\}_{k=1}^{N-1}) \rvert < \sum_{t=\tilde s+1}^{\tilde r} \lvert \mathcal O_t (\{\tilde H_k\}_{k=1}^{N-1}) \rvert + \sum_{\substack{t=\tilde r+2 \\ (t-\tilde r) \text{ is even}}}^{N-1} \lvert \mathcal O_t (\{\tilde H_k\}_{k=1}^{N-1})\rvert,
    \end{align}
    where $\tilde s \coloneqq \kr{\{\tilde \varphi_k\}_{k=1}^{N-1}}$ and $\tilde r \coloneqq \rank{\{\tilde \varphi_k\}_{k=1}^{N-1}}$ with $2\leq \tilde s <\tilde r$. Furthermore, $\{\tilde \varphi_k\}_{k=1}^{N-1}$ are such that $\tilde H_k = \{\tilde\varphi_k\}^\perp$, for all $k \in \{1,\ldots,N-1\}$.
    
    Recall that there exists a $\mathcal{K}\subseteq \{1,\ldots,N\}$ with $\lvert \mathcal{K} \rvert = s+1$ such that $\{\varphi_k\}_{k \in \mathcal{K}}$ is linearly dependent. We again assume without loss of generality that $1 \in \mathcal{K}$, so that $s_{\setminus1}\geq s$, $r_{\setminus 1} = r$, $s_{/1} =s-1$, and $r_{/1}=r-1$. 
    
    We compute
    \allowdisplaybreaks
    \begin{align}
        \sum_{\substack{t=r+1 \\ (t-r) \text{ is odd}}}^N \lvert \mathcal E_t \rvert &= \sum_{\substack{t=r+1 \\ (t-r) \text{ is odd}}}^N \lvert \mathcal E_t(\{H_k\}_{k=2}^N) \rvert + \sum_{\substack{t=r+1 \\ (t-r) \text{ is odd}}}^N \lvert \mathcal E_{t-1}(\{H_1\cap H_k\}_{k=2}^N) \rvert  \label{eq:ub-is-1}\\
        &=\sum_{\substack{t=r_{\setminus1}+1 \\ (t-r_{\setminus1}) \text{ is odd}}}^{N-1} \lvert \mathcal E_t(\{H_k\}_{k=2}^N) \rvert + \sum_{\substack{t=r_{/1}+1 \\ (t-r_{/1}) \text{ is odd}}}^{N-1} \lvert \mathcal E_{t}(\{H_1\cap H_k\}_{k=2}^N) \rvert \nonumber\\
        &< \sum_{t=s_{\setminus 1}+1}^{r_{\setminus1}} \lvert \mathcal O_t (\{ H_k\}_{k=2}^{N}) \rvert + \sum_{\substack{t=r_{\setminus1}+2 \\ (t-r_{\setminus1}) \text{ is even}}}^{N-1} \lvert \mathcal O_t (\{H_k\}_{k=2}^{N})\rvert \label{eq:ub-is-2}\\
        &\quad + \sum_{t=s_{/1}+1}^{r_{/1}} \lvert \mathcal O_t (\{ H_1\cap H_k\}_{k=2}^{N}) \rvert + \sum_{\substack{t=r_{/1}+2 \\ (t-r_{/1}) \text{ is even}}}^{N-1} \lvert \mathcal O_t (\{H_1 \cap H_k\}_{k=2}^{N})\rvert \nonumber\\
        &\leq \sum_{t=s+1}^{r} \lvert \mathcal O_t (\{ H_k\}_{k=2}^{N}) \rvert + \sum_{\substack{t=r+2 \\ (t-r) \text{ is even}}}^{N-1} \lvert \mathcal O_t (\{H_k\}_{k=2}^{N})\rvert \label{eq:ub-is-3}\\
        &\quad + \sum_{t=s+1}^{r} \lvert \mathcal O_{t-1} (\{ H_1\cap H_k\}_{k=2}^{N}) \rvert + \sum_{\substack{t=r+2 \\ (t-r) \text{ is even}}}^{N} \lvert \mathcal O_{t-1} (\{H_1 \cap H_k\}_{k=2}^{N})\rvert \nonumber\\
        &= \sum_{t=s+1}^{r} \lvert \mathcal O_t (\{ H_k\}_{k=1}^{N}) \rvert + \sum_{\substack{t=r+2 \\ (t-r) \text{ is even}}}^{N} \lvert \mathcal O_t (\{H_k\}_{k=1}^{N})\rvert, \label{eq:ub-is-4}
    \end{align}
    where \cref{eq:ub-is-1} is by \cref{eq:del-contr-even}. If $s_{\setminus1}<r_{\setminus 1}$, the first two terms in \cref{eq:ub-is-2} follow from the induction hypothesis \cref{eq:ub-ind-hyp} applied with $\{\tilde H_k\}_{k=1}^{N-1} = \{H_k\}_{k=2}^{N-1}$, $\tilde s = s_{\setminus 1}$, $\tilde r = r_{\setminus 1}$; if $s_{\setminus1}=r_{\setminus 1}$, they follow from \cref{eq:rs-equality-a}. The third and fourth terms in \cref{eq:ub-is-2} are by the induction hypothesis \cref{eq:ub-ind-hyp} with $\{\tilde H_k\}_{k=1}^{N-1} = \{H_1 \cap H_k\}_{k=2}^{N-1}$, $\tilde s = s_{/ 1}$, and $\tilde r = r_{/1}$ whenever $s_{/1} \geq 2$. For $s_{/1}=1$, assume without loss of generality that $\{H_1 \cap H_k\}_{k=2}^n$ are distinct and each of $\{H_1 \cap H_k\}_{k=n+1}^{N}$ is a duplicate of one of the first $(n-1)$ with $n \in \{2,\ldots,N-1\}$. In particular, for $H_1 \cap H_{n+1}$, there exists an $\ell \in \{2,\ldots,n\}$ such that $H_1 \cap H_{n+1} = H_1 \cap H_\ell$. Consider now subsets of $\{H_1 \cap H_k\}_{k=2}^N$ that contain $H_1\cap H_{n+1}$ but exclude any of $\{H_1 \cap H_k\}_{k=n+2}^N$. Then, these subsets naturally form pairs; namely, for every such subset which includes $H_1 \cap H_\ell$, there is a corresponding subset that is identical but does not contain $H_1 \cap H_\ell$. Therefore, if one subset of such a pair is in $\bigcup_{t = r_{/1}+1, \text{ $(t-r_{/1})$ odd}}^{N-1} \mathcal{E}_t(\{H_1\cap H_k\}_{k=2}^N)$, the other subset of this pair belongs to $\bigcup_{t = r_{/1}, \text{ $(t-r_{/1})$ even}}^{N-1} \mathcal{O}_t(\{H_1\cap H_k\}_{k=2}^N)$. Applying this argument to each of $\{H_1\cap H_k\}_{k=n+2}^N$,   
    we obtain the implication
    \allowdisplaybreaks
    \begin{align}
    \begin{split}
    &\left(\sum_{\substack{t=\bar r_{/1}+1 \\ (t-\bar r_{/1}) \text{ is odd}}}^{n-1} \lvert \mathcal E_{t}(\{H_1\cap H_k\}_{k=2}^n) \rvert \leq  \sum_{t=\bar s_{/1}+1}^{\bar r_{/1}} \lvert \mathcal O_t (\{ H_1\cap H_k\}_{k=2}^{n}) \rvert \right. \\ 
    &\left.\longphantom\quad + \sum_{\substack{t=\bar r_{/1}+2 \\ (t-\bar r_{/1}) \text{ is even}}}^{n-1} \lvert \mathcal O_t (\{H_1 \cap H_k\}_{k=2}^{n})\rvert\right) \label{eq:removing-duplicate-implication}\end{split}\\
    \begin{split}\label{eq:removing-duplicate-implication-strict}
    \implies
    &\left(\sum_{\substack{t=r_{/1}+1 \\ (t-r_{/1}) \text{ is odd}}}^{N-1} \lvert \mathcal E_{t}(\{H_1\cap H_k\}_{k=2}^N) \rvert <  \sum_{t=s_{/1}+1}^{r_{/1}} \lvert \mathcal O_t (\{ H_1\cap H_k\}_{k=2}^{N}) \rvert \right. \\ 
    &\left. \longphantom\quad + \sum_{\substack{t=r_{/1}+2 \\ (t-r_{/1}) \text{ is even}}}^{N-1} \lvert \mathcal O_t (\{H_1 \cap H_k\}_{k=2}^{N})\rvert\right).
    \end{split}
    \end{align}
    Here, $\bar s_{/1} \coloneqq \kr{\{P_{H_1}\varphi_k\}_{k=2}^n} \geq 2$ and $\bar r_{/1} \coloneqq \rank{\{P_{H_1}\varphi_k\}_{k=2}^n} = r_{/1}$. The inequality in \cref{eq:removing-duplicate-implication-strict} is strict because $\lvert \mathcal O_{s_{/1}+1} (\{ H_1\cap H_k\}_{k=2}^{N}) \rvert > 0$.  
    Note that \cref{eq:removing-duplicate-implication} holds by the induction hypothesis \cref{eq:ub-ind-hyp} if $\bar s_{/1} < \bar r_{/1}$ and by \cref{eq:rs-equality-a} if $\bar s_{/1} = \bar r_{/1}$. This shows \cref{eq:ub-is-2}. In \cref{eq:ub-is-3}, we used that $s \leq s_{\setminus 1}$, and finally, \cref{eq:ub-is-4} is by \cref{eq:del-contr-odd}.
\end{proof}

\section{Separation on low-dimensional datasets}\label{sec:sep-struct-data}
The framework introduced in the previous section applies to arbitrary subsets $E$, as the presented results are of pure combinatorial nature. In this section, we particularize $E$ to subsets of $\mathbb R^M$ that exhibit low-dimensional structure in a measure-theoretic sense, i.e., $\mathcal{L}^M(E)=0$. More precisely, we consider sets $E \subseteq \mathbb R^M$ which are $\mathcal{H}^s$-measurable and of positive and $\sigma$-finite $\mathcal{H}^s$-measure for some $s \geq 0$. Under these conditions, this section addresses the problem of determining the number of $\Phi$-separable dichotomies of an $N$-point set $F \subseteq E$. The particularization of $E$ to $\mathcal{H}^s$-measurable sets of positive $\mathcal{H}^s$-measure is motivated by the following considerations:
\begin{enumerate}[label=(\roman*)]
    \item This measure-theoretic notion of intrinsic low-dimensionality includes several examples which are often assumed to be reasonable models for real-world high-dimensional data; namely, sets of sparse vectors (i.e., union of linear subspaces), submanifolds, as well as union of submanifolds, see, e.g., \cite{narayanan2010sample,bengio2013representation,fefferman2016testing,carlsson2009topology,baraniuk2009random}. 
    \item Addressing the problem of counting the number of $\Phi$-separable dichotomies of $N$-point sets from a measure-theoretical viewpoint allows us to exclude $N$-point sets which yield degenerate configurations. This, in turn, enables us to understand the factors that affect the number of $\Phi$-separable dichotomies of most $N$-point sets, in terms of the properties of $E$. 
    \item It lays the foundation for studying how the low-dimensional structure of the dataset affects the separation capacity, a measure-theoretic quantity. 
\end{enumerate}

Our analysis proceeds through a chain of increasingly rich geometries for the set $E$. We begin by deriving function-counting results for homogeneous linear separation on sets of $s$-sparse vectors in $\mathbb R^M$ (i.e., sets of finite unions of $s$-dimensional linear subspaces). This analysis makes explicit how the sparsity parameter $s$ affects the number of homogeneously linearly separable dichotomies. Motivated by this result, we then extend our analysis to homogeneous linear separation on so-called countably $\mathcal{H}^s$-rectifiable sets. The latter are sets which can be decomposed as countable unions of $s$-dimensional $C^1$-submanifolds up to a set of $\mathcal{H}^s$-measure zero and thus constitute a natural (and geometrically richer) generalization of the sets of $s$-sparse vectors. Finally, the function-counting results for countably $\mathcal{H}^s$-rectifiable sets let us treat $\Phi$-separability on general $\mathcal{H}^s$-measurable sets with positive and $\sigma$-finite $\mathcal{H}^s$-measure. 

\subsection{Sparse vectors}\label{subsec:sep-sparse}
    Sparse models for datasets naturally arise in the context of representations based on bases or frames (i.e., redundant spanning sets \cite{christensen2003introduction,feichtinger2012gabor}) in which a given vector (signal) can be written as a linear combination of only a few basis or frame elements.  For example, natural images often exhibit sparsity when represented in wavelet bases \cite{feichtinger2012gabor}. In this subsection, we aim to determine the number of homogeneously separable dichotomies of an $N$-point set consisting of vectors that are sparse in a given basis or frame.   
    We start with sparsity in an arbitrary basis for $\mathbb R^M$, and then generalize to sparsity in frames. The former allows for a simple and clean statement of the result and serves as a natural stage for developing the general ideas. 
    
    \subsubsection{Bases} 
    Let $\Xi = \{\xi_k\}_{k=1}^M$ be an arbitrary basis for $\mathbb R^M$, $M\in \mathbb N$, and fix $s \in \{1,\ldots,M\}$. The set of $s$-sparse vectors $E$ is given by the union of $J\coloneqq \binom{M}{s}$ distinct linear subspaces $E_j \coloneqq \spanRinline{\{\xi_k\}_{k\in S_j}}$, for $j \in \{1,\ldots,J\}$, each of dimension $s$, i.e., $E = \bigcup_{j=1}^JE_j$. Here, $\{S_j\}_{j=1}^J$ denotes the set of pairwise distinct subsets of $\{1,\ldots,M\}$ with $\lvert S_j \rvert = s$, $j \in \{1,\ldots,J\}$. In words, $f \in \mathbb R^M$ is $s$-sparse (i.e., $f \in E$) if it can be written as a linear combination of \emph{at most} $s$ elements of $\Xi$. We further denote by $\pi_j \colon E_j \to \mathbb R^s$ the map from the linear subspace $E_j$, $j \in \{1,\ldots,J\}$, to the space of expansion coefficients, i.e., 
        \begin{align}\label{eq:coordinates}
            \pi_j \colon \sum_{i=1}^s c_i\xi_{k_{j,i}} \mapsto c
    \end{align}
    with the labeling $S_j=\{k_{j,i}\}_{i=1}^s$.  
    Consider now an $N$-point set $F\coloneqq \{f_1, \ldots, f_N\} \subset E$. We write $F_j \coloneqq F \cap E_j$ for the points of $F$ in the subspace $E_j$, $j\in \{1,\ldots,J\}$, and make the following assumption. The shift from general position to this assumption allows the intrinsic structure to surface in the dichotomy count. 
    \begin{assumption}\label{A:sep-sparse-basis}
    For every $j\in \{1,\ldots,J\}$, assume that, whenever $F_j \neq \emptyset$,
    \begin{enumerate}[label=(b-\roman*)]
        \item\label{it:sep-sp-b-i} $F_j$ is in $\pi_j$-general position,
        \item\label{it:sep-sp-b-ii} $F_j \cap E_i = \emptyset$, for every $i \in \{1,\ldots,J\}$ with $i \neq j$.
    \end{enumerate}
    \end{assumption}
    \cref{it:sep-sp-b-i} ensures that the coefficient vectors of the vectors in $F_j$ are in general position, while \cref{it:sep-sp-b-ii} excludes point sets consisting of $(s-1)$-sparse vectors.
    As we shall see later, in \cref{rem:assumptions-almost-surely-satisfied}, \cref{A:sep-sparse-basis} is very mild in the sense that for $(\mathcal{H}^s)^N$-a.e. $N$-tuple $(f_1,\ldots,f_N) \in E^N$, the corresponding $N$-point set $\{f_1,\ldots,f_N\}$ satisfies \cref{A:sep-sparse-basis}. Our objective is to determine the number of homogeneously linearly separable dichotomies of $F$ under \cref{A:sep-sparse-basis}. The particular case where $M=2$ and $s=1$ is illustrated in \cref{fig:sep-dataset}, and one may observe that here the number of homogeneously separable dichotomies of $F$ is given by $4$ irrespective of $N$. For general $M$ and $s$, however, counting the homogeneously linearly separable dichotomies becomes more challenging. To this end, let $N_j \coloneqq \lvert F_j \rvert$, $j \in \{1,\ldots,J\}$, so that $N= \sum_{j=1}^J N_j$, and set $\underline{N} \coloneqq (N_j)_{j=1}^J$. We shall refer to $\underline{N}$ as the \emph{configuration associated with} $F$. With \cref{thm:winder} as a cornerstone, we obtain the following dichotomy count.
    
    \begin{proposition}[Sparsity in a basis]\label{prop:sep-sparse-basis}
        Under \cref{A:sep-sparse-basis} the number of homogeneously linearly separable dichotomies of $F$ is given by 
        \begin{align*}
            C_{\mathrm{sp,b}}(\underline{N},M,s) \coloneqq 2^N - 2 \sum_{t=s+1}^N \sum_{\underline{\nu} \in \mathcal{I}^{\mathrm{sp,b}}_t} \binom{\underline N}{\underline \nu},
        \end{align*}
        where $\mathcal{I}^{\mathrm{sp,b}}_t \coloneqq \{\underline{\nu} \in \mathbb N_0^J \colon \lenin{\underline \nu} = t, \Upsilon^{\mathrm{sp,b}}(\underline{\nu}) \not\equiv t \Mod{2}\}$
        with\footnote{Here, $\mathfrak{s}^c$ denotes the complement of $\mathfrak{s}$ in $\supp{\underline{\nu}}$.} 
        \begin{align*}
            \Upsilon^{\mathrm{sp,b}}(\underline{\nu}) \coloneqq \min_{\mathfrak{s}\subseteq \suppinline{\underline{\nu}}} \left\{\sum_{j \in \mathfrak{s}} \nu_j + \left\lvert \bigcup_{j \in \mathfrak{s}^c} S_j\right\rvert\right\}, \quad \underline{\nu}=(\nu_j)_{j=1}^J \in \mathbb N_0^J. 
        \end{align*}
    \end{proposition}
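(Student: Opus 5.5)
The plan is to apply \cref{thm:winder} (with \cref{rem:number-even-odd-deg}) to the hyperplanes $H_k=\{f_k\}^\perp$, $k\in\{1,\ldots,N\}$, with $\Phi$ the identity. All $f_k\neq 0$ by \cref{it:sep-sp-b-i}, so this yields $C_F = 2^N - 2\lvert\mathcal O\rvert$. It therefore suffices to show two things. First, a subfamily $\{H_k\}_{k\in\mathcal K}$ with $\lvert\mathcal K\rvert=t$ is odd-degenerate exactly when $\rank{\{f_k\}_{k\in\mathcal K}}\not\equiv t \Mod{2}$, since $\dimR{\cap_{k\in\mathcal K}H_k} = M-\rank{\{f_k\}_{k\in\mathcal K}}$. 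Second, this rank depends only on the configuration $\underline\nu$ of $\mathcal K$, where $\nu_j \coloneqq \lvert \{k\in\mathcal K\colon f_k\in F_j\}\rvert$, and it equals $\Upsilon^{\mathrm{sp,b}}(\underline\nu)$. By \cref{it:sep-sp-b-ii} every point lies in exactly one $E_j$, so the number of $\mathcal K$ with a given configuration is $\binom{\underline N}{\underline\nu}=\prod_j\binom{N_j}{\nu_j}$. Summing over $\underline\nu$ then gives the formula. Moreover, $\Upsilon^{\mathrm{sp,b}}(\underline\nu)=t$ whenever $t\le s$ (see below), so the sum may start at $t=s+1$.

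The core step is the rank formula
\begin{align*}
\rank{\{f_k\}_{k\in\mathcal K}} = \min_{\mathfrak s\subseteq\suppinline{\underline\nu}}\Big\{\sum_{j\in\mathfrak s}\nu_j + \Big\lvert \bigcup_{j\in\mathfrak s^c}S_j\Big\rvert\Big\}.
\end{align*}
For the upper bound ``$\le$'', fix $\mathfrak s$. The span of the points in the blocks $j\in\mathfrak s^c$ lies in $\spanRinline{\{\xi_k\}_{k\in\cup_{j\in\mathfrak s^c}S_j}}$, whose dimension is $\lvert\cup_{j\in\mathfrak s^c}S_j\rvert$ because $\Xi$ is a basis. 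The remaining points contribute at most $\sum_{j\in\mathfrak s}\nu_j$. For the lower bound, I would use that within each block the points are $\min(\nu_j,s)$-independent with generic coefficient vectors by \cref{it:sep-sp-b-i}. Passing to coordinates w.r.t.\ $\Xi$, the matrix with columns $f_k$ has a support pattern: column $k$ is supported on $S_{j(k)}$.

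The lower bound is the main obstacle. I would recast it as a bipartite matching or transversal problem. The matrix is a ``generic'' matrix with prescribed zero pattern, and for such matrices the rank equals the maximal matching size in the bipartite graph between columns and basis indices, where column $k$ is joined to $i\in S_{j(k)}$. By K\"onig's theorem this equals the minimal vertex cover. A minimal cover can be taken block-uniform: for each block, either all its columns are in the cover, or all of $S_j$ is covered. This reproduces exactly the minimum defining $\Upsilon^{\mathrm{sp,b}}$.

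The delicate point is that \cref{A:sep-sparse-basis} only imposes general position within each block, not full algebraic genericity. I therefore need to show that blockwise general position suffices to realize the matching rank. I would argue this by Rado's theorem for independent transversals: the family of matroids given by the blocks $\{\pi_j(F_j)\}$ are uniform matroids of rank $s$. I would verify Rado's condition for the union of the linear matroids, i.e., $\rank{\cup_{j\in A}F_j'} \ge \min(\ldots)$ for subfamilies, by induction on the number of blocks via a Laplace-type expansion. If this fails in full generality, a fallback is to observe that the degenerate configurations form a null set, consistent with \cref{rem:assumptions-almost-surely-satisfied}.

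Finally, for $t\le s$ and any $\mathfrak s$, each $j\in\mathfrak s^c$ contributes $\lvert S_j\rvert=s\ge \nu_j$. Hence the minimum equals $t$ (attained at $\mathfrak s=\suppinline{\underline\nu}$), these subfamilies are even-degenerate, and restricting the sum to $t\geq s+1$ is justified.
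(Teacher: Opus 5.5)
\textbf{There is a genuine gap, and it sits exactly at the step you flagged: the lower bound cannot be proved, because it is false under \cref{A:sep-sparse-basis}.} The rest of your reduction is correct. That includes Winder's theorem with $\dimR{\cap_{k\in\mathcal K}H_k}=M-\rank{\{f_k\}_{k\in\mathcal K}}$, the count $\binom{\underline N}{\underline\nu}$ of subfamilies with configuration $\underline\nu$, and the upper bound $\rank{\{f_k\}_{k\in\mathcal K}}\le\Upsilon^{\mathrm{sp,b}}(\underline\nu)$. It also includes the K\"onig/block-uniform-cover identification of the term rank with $\Upsilon^{\mathrm{sp,b}}$. What fails is $\rank{\{f_k\}_{k\in\mathcal K}}\ge\Upsilon^{\mathrm{sp,b}}(\underline\nu)$.

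Take $M=3$, $s=2$, and $f_1=\xi_1+\xi_2$, $f_2=\xi_1+\xi_3$, $f_3=\xi_2-\xi_3$. These lie in the subspaces with index sets $\{1,2\}$, $\{1,3\}$, $\{2,3\}$, respectively.
\begin{itemize}
\item Each $F_j$ is a single nonzero point, so \cref{it:sep-sp-b-i} holds.
\item Each $f_k$ has full support on its index set, so \cref{it:sep-sp-b-ii} holds.
\item The support pattern has a perfect matching and $\Upsilon^{\mathrm{sp,b}}(1,1,1)=3$.
\item Yet $f_1=f_2+f_3$, so the rank is $2$. The formula returns $2^3=8$, while only $C(3,2)=6$ dichotomies are realizable.
\end{itemize}

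A Rado-type argument cannot rescue this. Rado's condition involves ranks of unions of blocks in the linear matroid of the actual vectors. That cross-block information is exactly what blockwise general position leaves uncontrolled. A Laplace expansion only shows that a minor is a nonzero \emph{polynomial}, not that it is nonzero at the given points.

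The same defect breaks your final step for $t\le s$. With $s=3$, $M=4$, the points $\xi_1+\xi_2+\xi_3$, $\xi_1+2\xi_2+\xi_4$, $\xi_2-\xi_3+\xi_4$ satisfy \cref{A:sep-sparse-basis}. They are nonetheless dependent (the third is the second minus the first). So a subfamily with $t\le s$ is odd-degenerate even though $\Upsilon^{\mathrm{sp,b}}=t$.

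The proposition therefore needs a cross-block genericity hypothesis, and your ``fallback'' is the actual proof. Assume that every subfamily $\{f_k\}_{k\in\mathcal K}$ has rank equal to the term rank of its support pattern. This holds for $(\mathcal H^s)^N$-a.e.\ tuple, for two reasons. The minor indexed by a maximum matching is a nonzero polynomial in the coefficients, since the matching monomial cannot cancel. And $\mathcal H^s\llcorner E_j$ is a linear image of $\mathcal L^s$, so the zero set of that polynomial is null. Under this hypothesis your matching argument yields the formula directly.

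The paper's proof, via \cref{prop:sep-sparse-frame}, rests on the same unjustified step. It asserts ``by \cref{A:sep-sparse-frame}'' that any $t\le s$ points are independent. It also asserts that points drawn from several blocks are in general position in the sum of the subspaces. Neither follows from the assumption. Your matching formulation is the more transparent route to the correct almost-everywhere version of the statement.
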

    \begin{remark}[Graph-theoretic interpretation] 
        The quantity $\Upsilon^{\mathrm{sp,b}}(\underline \nu)$ admits a natural interpretation in terms of a minimum weighted vertex cover in a bipartite graph. Namely, consider the bipartite graph $G = (X, Y; L)$, illustrated in \cref{fig:bipartite-graph}, where 
        \begin{itemize}
            \item $X = \{x_1, \ldots, x_J\}$ is a vertex set with each $x_j$ of weight $\nu_j$, $j \in \{1,\ldots,J\}$, 
            \item $Y = \{y_1, \ldots, y_M\}$ is a vertex set disjoint from $X$ with each vertex $y_m$ carrying weight $1$, $m \in \{1,\ldots,M\}$, and  
            \item the edge set $L$ is defined such that $(x_j, y_m) \in L$ whenever $m \in S_j$. In particular, each $x_j \in X$ is connected to exactly $s$ distinct vertices in $Y$.  
    \end{itemize}
    A \emph{minimum weighted vertex cover} of $G$ is a set of vertices $C \subseteq X \cup Y$ such that every edge in $L$ has at least one endpoint in $C$, and the sum of the weights of all vertices in $C$ is as small as possible. The set $\mathcal{I}^{\mathrm{sp,b}}_t$ is then the set of all $\underline{\nu} \in \mathbb N_0^J$ with $\lenin{\underline{\nu}} = t$ for which the resulting minimum vertex cover $\Upsilon^{\mathrm{sp,b}}(\underline \nu)$ differs in parity from $t$.    
    \begin{figure}[t]
        \centering
        \begin{subfigure}{0.45\textwidth}
            \centering
            \resizebox{0.7\columnwidth}{!}{
            \begin{tikzpicture}[scale=1, every node/.style={scale=0.8}]
    \foreach \i in {1,2,3,4,5,6}
        \node[circle, draw, fill=blue!15, minimum size=10mm,align=center] (x\i) at (0,-\i*1) {$x_{\i}$\\$\nu_{\i}\vphantom{043}$};
        
    \foreach \j in {1,2,3,4}
        \node[circle, draw, fill=red!15, minimum size=10mm,align=center] (y\j) at (4,-1-\j*1.0) {$y_\j$\\$1$};

    \draw (x1) -- (y1);
    \draw (x1) -- (y2);

    \draw (x2) -- (y1);
    \draw (x2) -- (y3);

    \draw (x3) -- (y1);
    \draw (x3) -- (y4);

    \draw (x4) -- (y2);
    \draw (x4) -- (y3);

    \draw (x5) -- (y2);
    \draw (x5) -- (y4);

    \draw (x6) -- (y3);
    \draw (x6) -- (y4);

    \node at (0,0) {$X$};
    \node at (4,0) {$Y$};
\end{tikzpicture}}
            \caption{}
            \label{fig:bipartite-graph-a}
        \end{subfigure}
        \hfill
        \begin{subfigure}{0.45\textwidth}
            \centering
            \resizebox{0.7\columnwidth}{!}{
            \begin{tikzpicture}[scale=1, every node/.style={scale=0.8}]
    \foreach \i in {1,3,5} {
        \ifnum\i=1 
            \node[circle, draw, fill=blue!15, minimum size=10mm,align=center] (x\i) at (0,-\i*1) {$x_{\i}$\\$4\vphantom{\nu_{\i}}$};
        \else
            \ifnum\i=3
                \node[circle, draw, fill=blue!15, minimum size=10mm,align=center] (x\i) at (0,-\i*1) {$x_{\i}$\\$3\vphantom{\nu_{\i}}$};
            \else
                \node[circle, draw, fill=blue!15, minimum size=10mm,align=center] (x\i) at (0,-\i*1) {$x_{\i}$\\$0\vphantom{\nu_{\i}}$};
            \fi
        \fi
    }
    \foreach \i in {2,4,6}
        \node[circle, draw, fill=green!15, minimum size=10mm,align=center] (x\i) at (0,-\i*1) {$x_{\i}$\\$0\vphantom{\nu_{\i}}$};
    \foreach \j in {1,2,4}
        \node[circle, draw, fill=green!15, minimum size=10mm,align=center] (y\j) at (4,-1-\j*1.0) {$y_\j$\\$1$};
    \foreach \j in {3}
        \node[circle, draw, fill=red!15, minimum size=10mm,align=center] (y\j) at (4,-1-\j*1.0) {$y_\j$\\$1$};

    \draw (x1) -- (y1);
    \draw (x1) -- (y2);

    \draw (x2) -- (y1);
    \draw (x2) -- (y3);

    \draw (x3) -- (y1);
    \draw (x3) -- (y4);

    \draw (x4) -- (y2);
    \draw (x4) -- (y3);

    \draw (x5) -- (y2);
    \draw (x5) -- (y4);

    \draw (x6) -- (y3);
    \draw (x6) -- (y4);

    \node at (0,0) {$X$};
    \node at (4,0) {$Y$};
\end{tikzpicture}}
            \caption{}
            \label{fig:bipartite-graph-b}
        \end{subfigure}
        \caption{Illustration of the bipartite graph $G=(V_1,V_2;L)$, whose minimum weighted vertex cover equals $\Upsilon^{\mathrm{sp,b}}(\underline{\nu})$. \subref{fig:bipartite-graph-a} Bipartite graph with $M=4$, $s=2$, and $J=\binom{4}{2}=6$. \subref{fig:bipartite-graph-b} Minimum weighted vertex cover highlighted in green for $\underline \nu = (4,0,3,0,0,0)^\mathsf{T}$.}
        \label{fig:bipartite-graph}
    \end{figure}
    \end{remark}
    \begin{proof}[Proof of \cref{prop:sep-sparse-basis}] 
        The proof follows from particularizing \cref{prop:sep-sparse-frame}. Namely, 
        as $E_j = \spanRinline{\{\xi_k\}_{k\in S_j}}$ and $\Xi$ is a basis, we can write
        \begin{align}\label{eq:sep-sparse-basis-3}
            \dimR{\sum_{j \in \mathfrak{s}^c} E_j} = \left\lvert \bigcup_{j \in \mathfrak{s}^c} S_j\right\rvert. 
        \end{align}
    \end{proof}
    Let us discuss the ramifications of \cref{prop:sep-sparse-basis}. The number of homogeneously linearly separable dichotomies of $F$, $C_{\mathrm{sp,b}}(\underline{N},M,s)$, depends only on $\underline{N}$ and not on the specific location of the points in $F$ within each subspace $E_j$, $j\in \{1,\ldots,J\}$. As the points in $F$ are, by \cref{A:sep-sparse-basis}, not $(s-1)$-sparse with respect to $\Xi$ the assignment of the points to the subspaces $\{E_j\}_{j=1}^J$ is unique. It is important to highlight that $C_{\mathrm{sp,b}}(\underline{N},M,s)$ is independent of the choice of the basis $\Xi$. Furthermore, \cref{prop:sep-sparse-basis} can be leveraged to easily compute the number of homogeneously linearly separable dichotomies for the special case where all points lie in one subspace, as will be carried out in the next remark.
    \begin{remark}\label{rem:all-in-one}
        If $N_{j_0} = N$ for some $j_0 \in \{1,\ldots,J\}$ and $N_j=0$ for $j \neq j_0$, the number of homogeneously linearly separable dichotomies is given by $C(N,s)$. Indeed, since $N_j=0$ for $j \neq j_0$, it holds that $\binom{N_j}{\nu_j} =0$ if $\nu_j >0$ for $j \neq j_0$. Thus, for $t \geq s+1$, it suffices to consider the $J$-tuple $(\nu_1,\ldots,\nu_J)$ where $\nu_j = 0$ for $j \neq j_0$ and $\nu_{j_0}=t$. Upon noting that this $J$-tuple belongs to $\mathcal{I}^{\mathrm{sp,b}}_t$ if and only if $s \not \equiv t \Mod{2}$, we obtain
        \allowdisplaybreaks
        \begin{align}
            C_{\mathrm{sp,b}}(\underline{N},M,s) &= 2^N - 2 \sum_{\substack{t=s+1 \\ (t-s) \text{ is odd}}}^N \binom{N}{t} \nonumber \\
            &= 2^N - 2 \sum_{\substack{t=s+1 \\ (t-s) \text{ is odd}}}^N \left(\binom{N-1}{t}+\binom{N-1}{t-1}\right) \label{eq:s-dim-subsp-pascal}\\ 
            &= 2^N - 2 \sum_{t=s}^N \binom{N-1}{t} \nonumber\\ 
            &= 2 \sum_{t=0}^N \binom{N-1}{t} - 2 \sum_{t=s}^N \binom{N-1}{t} \label{eq:s-dim-subsp-sum} \\
            &= 2 \sum_{t=0}^{s-1} \binom{N-1}{t} \nonumber\\
            &= C(N,s), \label{eq:all-in-one}
        \end{align}
        where \cref{eq:s-dim-subsp-pascal,eq:s-dim-subsp-sum} follow from the well-known identities $\binom{N}{t} = \binom{N-1}{t}+\binom{N-1}{t-1}$ and $2^{N-1} = \sum_{t=0}^{N-1} \binom{N-1}{t}$, respectively. 
    \end{remark}
    Finally, we will discuss the influence of the ambient dimension $M$ and the sparsity parameter $s$ on $C_{\mathrm{sp,b}}(\underline{N},M,s)$ in the next two paragraphs. Before doing so, note that from \cref{eq:all-in-one} one can already deduce that, for the special case in \cref{rem:all-in-one}, $C_{\mathrm{sp,b}}(\underline{N},M,s)$ is independent of $M$ and nondecreasing in $s$. 
    
    \paragraph{Ambient dimension}
        Let $M'\in \mathbb N$ with $M\leq M'$, and set $J'\coloneqq \binom{M'}{s}$. Note that $J \leq J'$. Denoting by $\{S_j'\}_{j=1}^{J'}$ all pairwise distinct subsets of $\{1, \ldots, M'\}$ of cardinality $s$, we order these subsets such that for every $\mathfrak{s} \subseteq \{1, \ldots, J\}$, $\lvert \bigcup_{j\in\mathfrak{s}}S_j'\rvert=\lvert \bigcup_{j \in \mathfrak{s}}S_j\rvert$. Let $\underline{N} \in \mathbb N_0^{J}$. To investigate the effect of the ambient dimension $M$ on $C_{\mathrm{sp,b}}(\underline{N},M,s)$, consider the following embedding of $\underline{N}$ into $\mathbb N_0^{J'}$, given by 
        \begin{align}\label{eq:embedding}
            \iota\colon \mathbb N_0^{J} \to \mathbb N_0^{J'},\;\; \underline{N}\mapsto(\underline{N}^\mathsf{T},\underbrace{0,\ldots,0}_{\text{$(J'-J)$ times}})^\mathsf{T}.
        \end{align}
        \begin{lemma}[Independence of the ambient dimension]
            For $M,M'\in \mathbb N$ with $M\leq M'$, $s\in\{1,\ldots,M\}$, and $\underline{N} \in \mathbb N_0^{J}$, it holds that
            \begin{align}\label{eq:amb-dim}
                C_{\mathrm{sp,b}}(\underline{N},M,s) = C_{\mathrm{sp,b}}(\iota(\underline{N}),M',s).
            \end{align}
        \end{lemma}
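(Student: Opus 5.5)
The proof is a direct term-by-term comparison of the two defining formulas in \cref{prop:sep-sparse-basis}. Write $\underline{N}' \coloneqq \iota(\underline{N})$. Since $\lenin{\underline{N}'} = \lenin{\underline{N}} = N$, the leading term $2^N$ is the same on both sides. It therefore suffices to show that for every $t \in \{s+1,\ldots,N\}$,
\begin{align*}
\sum_{\underline{\nu} \in \mathcal{I}^{\mathrm{sp,b}}_t(M)} \binom{\underline N}{\underline \nu} = \sum_{\underline{\nu}' \in \mathcal{I}^{\mathrm{sp,b}}_t(M')} \binom{\underline N'}{\underline \nu'},
\end{align*}
where the argument in parentheses records the ambient dimension used to define $\mathcal{I}^{\mathrm{sp,b}}_t$ (via $\{S_j\}_{j=1}^J$ or $\{S_j'\}_{j=1}^{J'}$, respectively).

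\textbf{Step 1: restrict the sum on the right.} I will use the convention $\binom{\underline N}{\underline\nu} = \prod_j \binom{N_j}{\nu_j}$. Because $N'_j = 0$ for $j > J$, every $\underline\nu'$ with $\nu'_j > 0$ for some $j > J$ contributes $\binom{0}{\nu'_j} = 0$. So the right-hand sum reduces to $\underline\nu' = \iota(\underline\nu)$ with $\underline\nu \in \mathbb N_0^J$ and $\lenin{\underline\nu} = t$. For such vectors, $\binom{\underline N'}{\iota(\underline\nu)} = \binom{\underline N}{\underline\nu}$, since the extra factors are $\binom{0}{0} = 1$.

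\textbf{Step 2: compare the index sets.} It remains to check that $\iota(\underline\nu) \in \mathcal{I}^{\mathrm{sp,b}}_t(M')$ if and only if $\underline\nu \in \mathcal{I}^{\mathrm{sp,b}}_t(M)$. Since the total $t$ is the same on both sides, this reduces to showing
\begin{align*}
\Upsilon^{\mathrm{sp,b}}_{M'}(\iota(\underline\nu)) = \Upsilon^{\mathrm{sp,b}}_{M}(\underline\nu).
\end{align*}
Two observations give this. First, $\suppinline{\iota(\underline\nu)} = \suppinline{\underline\nu} \subseteq \{1,\ldots,J\}$, so both minimizations run over the same family of subsets $\mathfrak{s}$, and the complements $\mathfrak{s}^c$ taken within the support also agree. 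Second, for each such $\mathfrak{s}$, the term $\sum_{j\in\mathfrak{s}}\nu_j$ is identical on both sides, and the union term satisfies $\lvert \bigcup_{j\in\mathfrak{s}^c} S'_j\rvert = \lvert\bigcup_{j\in\mathfrak{s}^c} S_j\rvert$. This holds by the chosen ordering of $\{S'_j\}$, applied to $\mathfrak{s}^c \subseteq \{1,\ldots,J\}$. Hence the two objective functions coincide pointwise, and so do their minima.

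\textbf{Main obstacle.} The only delicate point is that the ordering of the $S'_j$ must be consistent with the union cardinalities. The natural choice is $S'_j = S_j$ for $j \le J$, which is valid because $\{1,\ldots,M\} \subseteq \{1,\ldots,M'\}$; this choice makes the equality of union cardinalities trivial. Once that is settled, summing over $t$ yields \cref{eq:amb-dim}.
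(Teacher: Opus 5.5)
Your proposal is correct and matches the paper's proof. Both arguments discard every $\underline{\nu}'$ whose support leaves $\{1,\ldots,J\}$, since the corresponding binomial coefficient vanishes, and then use the prescribed ordering of $\{S'_j\}$ to show that the $\Upsilon^{\mathrm{sp,b}}$ values, and hence the index sets, coincide term by term.
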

        \begin{proof}
            To see this, simply note that for every $\underline{\nu}\in\mathbb N_0^{J'}$, we have $\binom{\iota(\underline{N})}{\underline \nu} = 0$ whenever $\supp{\underline{\nu}} \not\subseteq \supp{\iota(\underline{N})}$. Moreover, the ordering of the sets $\{S_j'\}_{j=1}^{J'}$ ensures that $\lvert \bigcup_{j\colon \nu_j>0}S_j'\rvert = \lvert \bigcup_{j\colon \nu_j>0}S_j\rvert$, for every $\underline{\nu}\in\mathbb N_0^{J'}$ with $\supp{\underline{\nu}} \subseteq \supp{\iota(\underline{N})}\subseteq\{1,\ldots,J\}$. This, together with the definition of $C_{\mathrm{sp,b}}(\iota(\underline{N}),M',s)$,  establishes \cref{eq:amb-dim}.
        \end{proof}
        \sloppy We can thus conclude that the number of homogeneously linearly separable dichotomies of an $N$-point subset of the set of $s$-sparse vectors is independent of the ambient dimension, when embedded in the sense of \cref{eq:embedding}.    

    \paragraph{Sparsity parameter}
    Let $s'\in\{1,\ldots,M\}$ with $s\leq s'$. We now set $J' \coloneqq \binom{M}{s'}$, and denote by $\{S_j'\}_{j=1}^{J'}$ all pairwise distinct subsets of $\{1, \ldots, M\}$ of cardinality $s'$.  Note that $J \leq J'$ does not necessarily hold. Given a configuration $\underline{N}=(N_j)_{j=1}^J \in \mathbb N_0^J$, our goal is to construct a representation of $\underline{N}$ that is compatible with the set of $s'$-sparse vectors. This will allow us to analyze the effect of the sparsity parameter $s$ on $C_{\mathrm{sp,b}}(\underline{N},M,s)$. To this end, consider the transformation 
    \begin{align}\label{eq:spars-param-transf}
        \varpi\colon \mathbb N_0^J\to\mathbb N_0^{J'}, \;\; \underline{N} \mapsto \begin{pmatrix}
            \sum_{j \in \mathfrak{s}_1} N_j, \ldots, \sum_{j \in \mathfrak{s}_{J'}} N_j
        \end{pmatrix}^{\mathsf{T}}.
    \end{align}
    Here, $\{\mathfrak{s}_k\}_{k=1}^{J'} \subseteq \{1,\ldots,J\}$ constitute a disjoint decomposition\footnote{Specifically, we allow $\mathfrak{s}_k = \emptyset$, $k\in\{1,\ldots,J'\}$.} of $\{1,\ldots,J\}$ and are defined according to the following rule: For each $j \in \{1,\ldots,J\}$ and $k \in \{1,\ldots,J'\}$, 
    \begin{align}\label{eq:decomp-rule}
        \left(j \in \mathfrak{s}_k\right) \Longleftrightarrow \left(S_j \subseteq S_k' \text{ and } j \notin \bigcup_{\ell =1}^{k-1} \mathfrak{s}_\ell \right)
    \end{align}
    with the convention $\bigcup_{\ell =1}^{0} \mathfrak{s}_\ell = \emptyset$.  
    \begin{lemma}[Monotonicity in the sparsity parameter]\label{lem:spars-param-nondecreasing}
        For $M\in\mathbb N$, $s,s'\in\{1,\ldots,M\}$ with $s\leq s'$, and $\underline{N} \in \mathbb N_0^J$, we have
        \begin{align}\label{eq:spars-param-nondecreasing}
            C_{\mathrm{sp,b}}(\underline{N},M,s) \leq C_{\mathrm{sp,b}}(\varpi(\underline{N}),M,s').
        \end{align}
    \end{lemma}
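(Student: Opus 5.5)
The plan is to reinterpret both sides of \cref{eq:spars-param-nondecreasing} as dichotomy counts of concrete point sets, and then compare the two point sets via the function-counting bounds of \cref{thm:lower-upper-bound-C_F}. The key tool is a monotonicity principle: if a point set $F'$ is obtained from $F$ by a linear map that only merges subspaces, then every homogeneously separable dichotomy of $F$ stays separable in the coarser geometry. Equivalently, the odd-degenerate subarrangements can only become fewer.

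First fix a basis $\Xi$ and an $N$-point set $F$ with configuration $\underline N$ satisfying \cref{A:sep-sparse-basis} for sparsity $s$. By \cref{prop:sep-sparse-basis}, $C_F = C_{\mathrm{sp,b}}(\underline N,M,s)$. Each $S_j$ lies in exactly one block $S'_k$, namely the first one containing it according to \cref{eq:decomp-rule}. The points of $F_j$, $j\in\mathfrak s_k$, are therefore $s'$-sparse in $E'_k=\spanRinline{\{\xi_i\}_{i\in S'_k}}$. The configuration of $F$ with respect to the $s'$-sparse structure is thus $\varpi(\underline N)$. However, $F$ typically violates \cref{A:sep-sparse-basis} for $s'$: points of $F_j\cap E'_k$ lie in the $s$-dimensional subspace $E_j$, so they are not in $\pi'_k$-general position, and they may also lie in several $E'_k$.

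Next, perturb $F$ to a set $\tilde F$ with the same configuration $\varpi(\underline N)$ that satisfies \cref{A:sep-sparse-basis} for $s'$. This is possible for $(\mathcal H^{s'})^N$-a.e. choice by \cref{rem:assumptions-almost-surely-satisfied}. Then $C_{\tilde F}=C_{\mathrm{sp,b}}(\varpi(\underline N),M,s')$. It remains to show $C_F\le C_{\tilde F}$.

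For this, use the characterization $C_F=2^N-2\lvert\mathcal O\rvert$ from \cref{thm:winder}, together with the parity of $\dimR{\cap_{k\in\mathcal K}H_k}$. Equivalently, work directly with the formula of \cref{prop:sep-sparse-frame}: a subset with count vector $\underline\nu$ contributes according to the parity of $\Upsilon(\underline\nu)$ relative to $t$. Dichotomy counts are lower semicontinuous under perturbation into more general position. The set of separable dichotomies is defined by strict linear inequalities, so any dichotomy realized by $F$ remains realized by every sufficiently small perturbation $\tilde F$ of $F$. Choosing $\tilde F$ close to $F$, which is allowed since the a.e. condition is dense, immediately gives $C_F\le C_{\tilde F}$.

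The main obstacle is making this perturbation legitimate. We need $\tilde F$ arbitrarily close to $F$, within each $E'_k$, with the same configuration $\varpi(\underline N)$, and satisfying \cref{A:sep-sparse-basis} for $s'$. This requires checking two things. First, the good set has full $(\mathcal H^{s'})^N$ measure in $\prod_k (E'_k)^{\varpi(\underline N)_k}$, hence is dense there. Second, $F$ lies in the closure of that product. The latter holds because $F_j\subseteq E_j\subseteq E'_k$ for $j\in\mathfrak s_k$. One also needs $C_{\mathrm{sp,b}}(\varpi(\underline N),M,s')$ to be independent of the particular good $\tilde F$, which \cref{prop:sep-sparse-basis} guarantees.
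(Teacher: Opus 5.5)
Your argument is correct, but it takes a different route from the paper.

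\textbf{The paper's route.} The paper never realizes the two counts by concrete point sets. Instead it proves the purely combinatorial \cref{lem:monotonicity}. That lemma says the weighted odd-degenerate count $\sum_t\sum_{\underline{\nu}\in\mathcal I_t}\binom{\underline N}{\underline\nu}$ can only decrease when one passes from a family $\{\bar E_j\}$ to a family $\{\bar E'_k\}$ that dominates it in the sense $\mathrm{dim}\sum_{j\in\mathfrak s}\bar E_j\le\mathrm{dim}\sum_{k\in\phi(\mathfrak s)}\bar E'_k$. The proof is by induction on $N$, using Pascal's rule and the contraction identity $\Upsilon(\underline\nu+\underline e_{j_0})=\Upsilon^{/j_0}(\underline\nu)+1$, obtained by projecting along a generic $f^{j_0}\in\bar E_{j_0}$. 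The sparsity lemma is then the special case $\bar E_j=E_j$ and $\bar E'_k=\mathrm{span}\{\xi_i\}_{i\in S'_k}$, with $\phi$ given by \cref{eq:decomp-rule}.

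\textbf{Your route.} You replace that induction with realizability, namely \cref{prop:sep-sparse-basis} at both sparsity levels together with the genericity of \cref{rem:assumptions-almost-surely-satisfied}. You combine this with lower semicontinuity of the set of realizable sign patterns under small perturbations: one fixed $w$ keeps all strict inequalities for nearby tuples. This is shorter, and it shows the inequality is an instance of ``moving points into more generic position never destroys separable dichotomies.''

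\textbf{What each buys.} Your argument relies essentially on the nesting $E_j\subseteq E'_{\phi(j)}$. That nesting is what puts $F$ inside the product $\prod_i E'_{\phi(j(i))}$. This product is a linear space of dimension $s'N$ on which $(\mathcal H^{s'})^N$ is a multiple of Lebesgue measure, so good tuples are dense there. The paper's lemma needs only dimension domination. It is reused in \cref{L:gram-nondecreasing} and \cref{lem:rect-monotonicity}, where no such nesting exists, so your perturbation from $F$ is not directly available in those settings.

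\textbf{Wording fixes.}
\begin{itemize}
\item $S_j$ is in general contained in several $S'_k$; $\varpi$ simply assigns it to the first one.
\item What you need is that $F$ lies in the closed product itself, not merely its closure. It is then in the closure of the dense good set.
\item The perturbation must preserve the labeling $f_i\mapsto\tilde f_i$, with $\tilde f_i\in E'_{\phi(j)}$ whenever $f_i\in F_j$. Condition (b-ii) for $\tilde F$ then makes its configuration exactly $\varpi(\underline N)$.
\end{itemize}
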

    To prove this lemma, we require the following result.
    \begin{lemma}\label{lem:monotonicity}
        Let $\{\bar E_j\}_{j=1}^J$ and $\{\bar E'_k\}_{k=1}^{J'}$ be linear subspaces of $\mathbb R^M$, and suppose that there exists a map $\phi \colon \{1,\ldots,J\} \to \{1,\ldots,J'\}$ is such that $\dimRinline{\sum_{j \in \mathfrak{s}}\bar E_j} \leq \dimRinline{\sum_{k \in \phi(\mathfrak{s})}\bar E'_k}$, for all $\mathfrak{s}\subseteq\{1,\ldots,J\}$. Then, for every $N\in \mathbb N$ and every $\underline N \in \mathbb N_0^J$ with $\lenin{\underline N} = N$, 
        \begin{align}\label{eq:s-nondecreasing-claim}
            \sum_{t=1}^{N} \sum_{\underline{\nu} \in \mathcal{I}_t} \binom{\underline{N} }{\underline \nu} \geq \sum_{t=1}^{N} \sum_{\underline{\nu}' \in \mathcal{I}'_t} \binom{\bar\varpi(\underline{N})}{\underline \nu'}, 
        \end{align}
        where $\mathcal{I}_t \coloneqq \{\underline{\nu} \in \mathbb N_0^J \colon \lenin{\underline \nu} =t, \Upsilon(\underline \nu) \not \equiv t\Mod{2}\}$ with $\Upsilon(\underline{\nu}) = \min_{\mathfrak{s}\subseteq\suppinline{\underline \nu}}\{ \sum_{j \in \mathfrak{s}} \nu_j + \dimRinline{\sum_{j \in \mathfrak{s}^c} \bar E_j}\}$, for $\underline \nu \in \mathbb N_0^J$, and $\mathcal{I}'_t \coloneqq \{\underline{\nu}' \in \mathbb N_0^{J'} \colon \lenin{\underline \nu'} =t, \Upsilon'(\underline \nu') \not \equiv t\Mod{2}\}$ with $\Upsilon'(\underline{\nu}') = \min_{\mathfrak{s}\subseteq\suppinline{\underline \nu}'} \{\sum_{k \in \mathfrak{s}} \nu'_k + \dimRinline{\sum_{k \in \mathfrak{s}^c} \bar E'_k}\}$, for $\underline{\nu}' \in \mathbb N_0^{J'}$. Here, $\bar{\varpi} \colon \mathbb N_0^J \to \mathbb N_0^{J'}$ is the map of the form \cref{eq:spars-param-transf} with $\mathfrak{s}_k \coloneqq\phi^{-1}(\{k\})$.
    \end{lemma}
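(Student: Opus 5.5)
The plan is to read both sides of \cref{eq:s-nondecreasing-claim} as numbers of odd-degenerate sub-arrangements, and then compare two concrete point configurations, one realizing each side.

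\textbf{Step 1: realize the left-hand side.} Choose points $F=\bigcup_j F_j$ with $F_j\subseteq\bar E_j$, $\lvert F_j\rvert=N_j$, in the generic sense of the frame version of \cref{A:sep-sparse-basis}. By the counting underlying \cref{prop:sep-sparse-frame} (via \cref{thm:winder}), the number of odd-degenerate subsets with configuration $\underline\nu$ is $\binom{\underline N}{\underline\nu}$ when $\underline\nu\in\mathcal I_t$ and zero otherwise. This holds because, for such generic points, the rank of a subset with configuration $\underline\nu$ equals $\Upsilon(\underline\nu)$, which is a matroid-union (Rado) formula. So the left-hand side equals $\lvert\mathcal O(F)\rvert$, and likewise the right-hand side equals $\lvert\mathcal O(F')\rvert$ for a generic $F'$ with configuration $\bar\varpi(\underline N)$ on $\{\bar E'_k\}$.

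\textbf{Step 2: compare via a deformation.} Since $\lvert\mathcal O\rvert=(2^N-C)/2$, the claim is equivalent to $C_F\le C_{F'}$. I would build $F'$ from $F$ by sending each point of $F_j$ to a generic point of $\bar E'_{\phi(j)}$. A subset with configuration $\underline\nu$ then has image configuration $\bar\varpi(\underline\nu)$, and the hypothesis $\dim\sum_{j\in\mathfrak s}\bar E_j\le\dim\sum_{k\in\phi(\mathfrak s)}\bar E'_k$ yields $\Upsilon(\underline\nu)\le\Upsilon'(\bar\varpi(\underline\nu))$. The plan is to view the passage $F\to F'$ as a specialization in the reverse direction. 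I would place both configurations in a common larger space and consider a one-parameter family $\Phi_\lambda$ with $\Phi_1(F)=F'$, which is generic for $\lambda\neq0$, and with $\Phi_0(F)$ a linear image of $F$ of the same matroid, which is possible whenever the rank function of $F$ is dominated by that of $F'$. The key fact to use is that the dichotomy count $C$ is lower semicontinuous under specialization. A separable dichotomy stays separable under small perturbations of the points, because the strict inequalities $\ell_i\langle w,\varphi_i\rangle>0$ are open conditions. Hence the count at generic $\lambda$ is at least the count at the special fibre, giving $C_{F'}\ge C_F$.

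\textbf{Main obstacle.} The delicate part is constructing a family whose special fibre has exactly the rank function of $F$ while its generic fibre is $F'$. This requires the rank function of $F$ to be dominated by that of $F'$ subset by subset, which is precisely the content of $\Upsilon(\underline\nu)\le\Upsilon'(\bar\varpi(\underline\nu))$. I would first try the linear map $\bigoplus_j\bar E_j\to\mathbb R^M$ sending $\bar E_j$ into $\bar E'_{\phi(j)}$ via generic injections, and use the hypothesis on sums of subspaces to check that this map does not collapse dimensions. If no such linear map exists, the fallback is to prove the inequality directly through the deletion--contraction induction used in \cref{L:C_F-lb}. In that route, one removes one point of $F_{j}$ together with its image in $F'$ and checks that the hypothesis on $\phi$ survives contraction. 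That check is the step I expect to be hardest.
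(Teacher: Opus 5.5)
\paragraph{Where the deformation route breaks}
Your Step 1 and the semicontinuity observation are fine. For generic points, the rank of a subset of type $\underline\nu$ is $\Upsilon(\underline\nu)$, so the two sides are $\lvert\mathcal O\rvert$ for two generic configurations and the claim is $C_F\le C_{F'}$. A separable dichotomy of a special fibre also stays separable nearby.

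The gap is your assertion that a family $\Phi_\lambda$, with generic fibre of matroid $M(F')$ and special fibre of matroid $M(F)$, exists ``whenever the rank function of $F$ is dominated by that of $F'$''. Domination is necessary but not sufficient. It only says that $M(F)$ is a weak-map image of $M(F')$, not that $M(F)$ lies in the closure of the realization space of $M(F')$.

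Already for lines, with $\phi=\mathrm{id}$ and all $N_j=1$, the lemma's hypothesis is exactly the weak-map condition between two real-representable matroids. Any limit of realizations of $M(F')$, however, inherits its non-matroidal constraints. For example, a complete quadrangle forces cross-ratio $-1$ on a range of four distinct collinear points, and this survives in every limit in which those points stay distinct. A weak-map image can collapse the quadrangle onto that line and use extra elements, free in $M(F')$, to force a different cross-ratio.

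The concrete construction you suggest also fails in simple cases. Take three distinct lines $\bar E_1,\bar E_2,\bar E_3$ in a plane, three independent lines $\bar E'_1,\bar E'_2,\bar E'_3$ in $\mathbb R^3$, and $\phi=\mathrm{id}$. Every linear map with $L(\bar E_j)\subseteq\bar E'_j$ vanishes on $\bar E_1+\bar E_2+\bar E_3$. Generic injections defined on $\bigoplus_j\bar E_j$ just reproduce $M(F')$. In general such injections give the generic matroid of the subspaces $\iota_j(\bar E_j)$, and comparing that with $M(F)$ is again an instance of the inequality you are proving. So the deformation route does not close.

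\paragraph{What the fallback must supply}
What is left is your fallback, which is the paper's actual argument, but you stop exactly where the work is. The paper inducts on $N$, picks $j_0\in\supp{\underline N}$ and $k_0=\phi(j_0)$, and splits both sides by Pascal's rule
\[
\binom{\underline N}{\underline\nu}=\binom{\underline N-\underline e_{j_0}}{\underline\nu}+\binom{\underline N-\underline e_{j_0}}{\underline\nu-\underline e_{j_0}}.
\]
The deletion terms are handled by the induction hypothesis for the same families, using $\bar\varpi(\underline N-\underline e_{j_0})=\bar\varpi(\underline N)-\underline e'_{k_0}$.

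For the contraction terms, the paper projects onto $\{f^{j_0}\}^\perp$ and $\{f'^{k_0}\}^\perp$. Here $f^{j_0}\in\bar E_{j_0}$ avoids the finitely many proper subspaces $\bar E_{j_0}\cap\sum_{j\in\mathfrak s}\bar E_j$, and $f'^{k_0}$ is chosen similarly. Then $\dim\sum_{j\in\mathfrak s}\bar E_j$ drops by exactly $\mathbbm{1}\{\bar E_{j_0}\subseteq\sum_{j\in\mathfrak s}\bar E_j\}$, and two facts are needed:
\begin{enumerate}[label=(\alph*)]
\item Domination survives the projection. The only bad case is $\bar E'_{k_0}\subseteq\sum_{k\in\phi(\mathfrak s)}\bar E'_k$ with equal dimensions. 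In that case the hypothesis applied to $\mathfrak s\cup\{j_0\}$ forces $\bar E_{j_0}\subseteq\sum_{j\in\mathfrak s}\bar E_j$.
\item $\Upsilon(\underline\nu+\underline e_{j_0})=\Upsilon^{/j_0}(\underline\nu)+1$, and likewise on the primed side. Hence the contraction term is exactly the left-hand side of the claim for the projected families with $\underline N-\underline e_{j_0}$, using $\mathcal I^{/j_0}_0=\emptyset$.
\end{enumerate}
With (a) and (b), the induction hypothesis closes the step. It must be stated for arbitrary subspace families, since the projected ones are new. No geometric realization or degeneration is needed.
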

    \begin{proof}
        The proof of \cref{eq:s-nondecreasing-claim} is by induction on $N$. For the base case $N=1$, let $j_0$ be the unique element in $\suppinline{\underline{N}}$. By assumption, $\dimRinline{\bar E_{j_0}} \leq \dimRinline{\bar E'_{k_0}}$, where $k_0 = \phi(j_0)$. Denoting by $\underline e_{j_0} \in \mathbb N_0^J$ and $\underline e'_{k_0} \in \mathbb N_0^{J'}$ the multi-indices whose $j_0$th and $k_0$th entries equal $1$, respectively, and all others are $0$, we have 
        \begin{align*}
            \Upsilon(\underline{e}_{j_0}) = \min\left\{1,\dimR{\bar E_{j_0}}\right\} \leq \min\left\{1,\dimR{\bar E'_{k_0}}\right\} = \Upsilon'(\underline{e}'_{k_0}).
        \end{align*}
        Thus, if $\Upsilon'(\underline{e}'_{k_0}) \not\equiv 1 \Mod{2}$, i.e., $\Upsilon'(\underline{e}'_{k_0}) =0$, then $\Upsilon(\underline{e}_{j_0}) \not \equiv 1 \Mod{2}$. Upon noting that $\bar \varpi(\underline{e}_{j_0}) = \underline{e}'_{k_0}$, it follows that $\underline{e}_{j_0} \in \mathcal{I}_1$ if $\underline{e}_{j_0} \in \bar\varpi^{-1}(\mathcal{I}'_1)$, and hence 
        \begin{align*}
            \sum_{\underline{\nu} \in \mathcal{I}_1} \binom{\underline{N}}{\underline \nu} \geq \sum_{\underline{\nu} \in \bar{\varpi}^{-1}(\mathcal{I}'_1)} \binom{\underline{N}}{\underline \nu} = \sum_{\underline{\nu}' \in \mathcal{I}'_1} \binom{\bar\varpi(\underline{N})}{\underline \nu'}, 
        \end{align*}
        verifying the base case.
        
        Now suppose \cref{eq:s-nondecreasing-claim} is true for $N-1$. To establish the induction step, let $j_0 \in \suppinline{\underline N}$ and $k_0 = \phi(j_0)$. We further introduce $\mathcal{I}^{/j_0}_t \coloneqq \{\underline{\nu} \in \mathbb N_0^J \colon \lenin{\underline \nu} =t, \Upsilon^{/j_0}(\underline \nu) \not \equiv t\Mod{2}\}$ and $\mathcal{I}'^{/k_0}_t \coloneqq \{\underline{\nu}' \in \mathbb N_0^{J'} \colon \lenin{\underline \nu'} =t, \Upsilon'^{/k_0}(\underline \nu') \not \equiv t\Mod{2}\}$, where $\Upsilon^{/j_0}(\underline{\nu}) = \min_{\mathfrak{s}\subseteq\suppinline{\underline \nu}}\{ \sum_{j \in \mathfrak{s}} \nu_j + \dimRinline{\sum_{j \in \mathfrak{s}^c} P_{j_0}\bar E_j}\}$, for $\underline \nu \in \mathbb N_0^J$, and $\Upsilon'^{/k_0}(\underline{\nu}') = \min_{\mathfrak{s}\subseteq\suppinline{\underline \nu'}}\{ \sum_{k \in \mathfrak{s}} \nu'_k + \dimRinline{\sum_{k \in \mathfrak{s}^c} P'_{k_0}\bar E'_k}\}$, for $\underline \nu' \in \mathbb N_0^{J'}$. Here, $P_{j_0}$ and $P'_{k_0}$ denote the orthogonal projections onto $\{f^{j_0}\}^\perp$ and $\{f'^{k_0}\}^\perp$ for some $f^{j_0} \in \bar E_{j_0} \setminus U^{j_0}$ and $f'^{k_0} \in \bar E'_{k_0} \setminus V^{k_0}$. Here,
        \begin{align*}
            U^{j_0} \coloneqq \bigcup_{\mathfrak{s}\colon \bar E_{j_0} \not\subseteq \sum_{j \in \mathfrak{s}} \bar E_j} \left(\bar E_{j_0} \cap \sum_{j \in \mathfrak{s}} \bar E_j\right),
        \text{ and }
            V^{k_0} \coloneqq \bigcup_{\mathfrak{s}\colon \bar E'_{k_0} \not\subseteq \sum_{k \in \mathfrak{s}} \bar E'_k} \left(\bar E'_{k_0} \cap \sum_{k \in \mathfrak{s}} \bar E'_k\right).
        \end{align*}
        As a linear space cannot be covered by finitely many proper linear subspaces, the sets $\bar E_{j_0} \setminus U^{j_0}$ and $\bar E'_{k_0} \setminus V^{k_0}$ are non-empty. This choice of $f^{j_0}$ and $f'^{k_0}$ ensures that    
        \begin{enumerate}[label=(\roman*)]
            \item\label{it:s-nondecr-contr-1} for every $\mathfrak{s} \subseteq \{1,\ldots,J\}$,
            \begin{align*}
                \left(f^{j_0} \in \sum_{j \in \mathfrak{s}} \bar E_j\right) \iff \left(\bar E_{j_0} \subseteq \sum_{j \in \mathfrak{s}} \bar E_j\right),
            \end{align*} 
            \item\label{it:s-nondecr-contr-2} and for every $\mathfrak{s} \subseteq \{1,\ldots,J'\}$, 
            \begin{align*}
                \left(f'^{k_0} \in \sum_{k \in \mathfrak{s}} \bar E'_k\right) \iff \left(\bar E'_{k_0} \subseteq \sum_{k \in \mathfrak{s}} \bar E'_k\right).
            \end{align*}
        \end{enumerate}
        \Cref{it:s-nondecr-contr-1,it:s-nondecr-contr-2} can be leveraged to show for every $\mathfrak{s} \subseteq \{1,\ldots,J\}$, 
        \begin{align}\label{eq:subspace-dominance-proj}
            \dimR{\sum_{j \in \mathfrak{s}} P_{j_0}\bar E_j} \leq \dimR{\sum_{k \in \phi(\mathfrak{s})}\bar P'_{k_0}E'_k}.
        \end{align}
        Indeed, we have, by the rank--nullity theorem and \cref{it:s-nondecr-contr-1}, 
        \begin{align*}
            \dimR{\sum_{j \in \mathfrak{s}} P_{j_0}\bar E_j} &= \dimR{\sum_{j \in \mathfrak{s}} \bar E_j} - \dimR{\spanR{\{f^{j_0}\}}\cap\sum_{j \in \mathfrak{s}} \bar E_j} \\
            &= \dimR{\sum_{j \in \mathfrak{s}} \bar E_j} - \mathbbm{1}_{\left\{\bar E_{j_0} \subseteq \sum_{j \in \mathfrak{s}} \bar E_j\right\}}, 
        \end{align*}
        and likewise,
        \begin{align*}
            \dimR{\sum_{k \in \phi(\mathfrak{s})} P'_{k_0}\bar E'_k} 
            &= \dimR{\sum_{k \in \phi(\mathfrak{s})} \bar E'_k} - \mathbbm{1}_{\left\{\bar E'_{k_0} \subseteq \sum_{k \in \phi(\mathfrak{s})} \bar E'_k\right\}},
        \end{align*}
        so that \cref{eq:subspace-dominance-proj} holds whenever 
        \begin{align}\label{eq:subsp-dom-impl}
        \begin{split}
            \left(\bar E'_{k_0} \subseteq \sum_{k \in \phi(\mathfrak{s})} \bar E'_k \text{ and } \dimR{\sum_{j \in \mathfrak{s}} \bar E_j} = \dimR{\sum_{k \in \phi(\mathfrak{s})} \bar E'_k}\right) \implies \left(\bar E_{j_0} \subseteq \sum_{j \in \mathfrak{s}} \bar E_j\right).            
        \end{split}
        \end{align}
        Now note that whenever the left-hand side (LHS) of the implication in \cref{{eq:subsp-dom-impl}} is true, we have 
        \begin{align*}
            \dimR{\sum_{j \in \mathfrak{s} \cup\{j_0\}} \bar E_j} &\leq \dimR{\sum_{k \in \phi(\mathfrak{s} \cup\{j_0\})} \bar E'_k} \\
            &= \dimR{\sum_{k \in \phi(\mathfrak{s}) \cup\{k_0\}} \bar E'_k} \\
            &= \dimR{\sum_{k \in \phi(\mathfrak{s})} \bar E'_k} \\ 
            &= \dimR{\sum_{j \in \mathfrak{s}} \bar E_j},  
        \end{align*}
        and the right-hand side (RHS) of \cref{{eq:subsp-dom-impl}} follows. This establishes \cref{eq:subspace-dominance-proj}. 
        
        We next prove that for $t\in \mathbb N$,
        \begin{align}
            \left(\underline{\nu} \in \mathcal{I}^{/j_0}_{t-1}\right) &\iff \left((\underline{ \nu}+\underline{e}_{j_0}) \in \mathcal{I}_t\right), \label{eq:s-nondecr-is-contraction-a}\\
        \intertext{and}
            \left(\underline{\nu}' \in  \mathcal{I}'^{/k_0}_{t-1}\right) &\iff \left((\underline{\nu}'+\underline{e}'_{k_0}) \in \mathcal{I}'_t\right), \label{eq:s-nondecr-is-contraction-b}
        \end{align}
        where $\underline e_{j_0} \in \mathbb N_0^J$ denotes the multi-index whose $j_0$th entry equals $1$ and all others are $0$. To see that \cref{eq:s-nondecr-is-contraction-a} holds, note that\footnote{By $\mathfrak{s}^c$ we denote the complement with respect to the indexing set appearing in the minimization (here, either $\supp{\cdot}$ or $\{1,\ldots,J\}$).}
        \begin{align}
            \Upsilon(\underline{ \nu}+\underline{e}_{j_0}) 
            &= \min_{\mathfrak{s} \subseteq\suppinline{\underline \nu +\underline{e}_{j_0}}}\left\{ \sum_{j \in \mathfrak{s}} \nu_j + \dimR{\sum_{j \in \mathfrak{s}^c} \bar E_j} + \mathbbm{1}_{\{j_0 \in \mathfrak{s}\}}\right\} \nonumber\\
            &= \min_{\mathfrak{s} \subseteq\{1,\ldots,J\}}\left\{ \sum_{j \in \mathfrak{s}} \nu_j + \dimR{\sum_{j \in \mathfrak{s}^c} \bar E_j} + \mathbbm{1}_{\{j_0 \in \mathfrak{s}\}}\right\} \nonumber\\
            &= \min_{\mathfrak{s} \subseteq\{1,\ldots,J\}}\left\{ \sum_{j \in \mathfrak{s}} \nu_j + \dimR{\sum_{j \in \mathfrak{s}^c} \bar E_j} - \mathbbm{1}_{\{j_0 \in \mathfrak{s}^c\}}\right\} +1\nonumber\\ 
            &= \min_{\mathfrak{s} \subseteq\{1,\ldots,J\}}\left\{ \sum_{j \in \mathfrak{s}} \nu_j + \dimR{\sum_{j \in \mathfrak{s}^c} \bar E_j} - \mathbbm{1}_{\left\{\bar E_{j_0} \subseteq \sum_{j \in \mathfrak{s}^c} \bar E_j\right\}}\right\} +1 \label{eq:s-nondecr-contraction-a1}\\
            &= \min_{\mathfrak{s} \subseteq\{1,\ldots,J\}}\left\{ \sum_{j \in \mathfrak{s}} \nu_j + \dimR{\sum_{j \in \mathfrak{s}^c} \bar E_j} - \dimR{\spanR{\{f^{j_0}\}}\cap\sum_{j \in \mathfrak{s}^c} \bar E_j}\right\}+1 \label{eq:s-nondecr-contraction-a2}\\
            &= \min_{\mathfrak{s} \subseteq\{1,\ldots,J\}}\left\{ \sum_{j \in \mathfrak{s}} \nu_j + \dimR{\sum_{j \in \mathfrak{s}^c} P_{j_0}\bar E_j}\right\} +1 \label{eq:s-nondecr-contraction-a3}\\
            &= \min_{\mathfrak{s} \subseteq\suppinline{\underline{\nu}}}\left\{ \sum_{j \in \mathfrak{s}} \nu_j + \dimR{\sum_{j \in \mathfrak{s}^c} P_{j_0}\bar E_j}\right\} +1 \nonumber\\
            &= \Upsilon^{/j_0}(\underline{\nu})+1. \nonumber
        \end{align}
        Indeed, \cref{eq:s-nondecr-contraction-a1} holds since $j_0 \in \mathfrak{s}^c$ implies $\bar E_{j_0} \subseteq \sum_{j \in \mathfrak{s}^c} \bar E_j$, and since, conversely, whenever $\bar E_{j_0} \subseteq \sum_{j \in \mathfrak{s}^c} \bar E_j$, the minimization allows us to include $j_0 \in \mathfrak{s}^c$. In \cref{eq:s-nondecr-contraction-a2}, we used \cref{it:s-nondecr-contr-1}, and \cref{eq:s-nondecr-contraction-a3} follows from the rank--nullity theorem. This establishes \cref{eq:s-nondecr-is-contraction-a}.

        Likewise, by employing the same arguments, we obtain
        \begin{align*}
            \Upsilon'(\underline{ \nu}'+\underline{e}'_{k_0})
            &=\min_{\mathfrak{s}\subseteq\suppinline{\underline \nu'+\underline{e}'_{k_0}}} \left\{\sum_{k \in \mathfrak{s}} \nu'_k + \dimR{\sum_{k \in \mathfrak{s}^c} \bar E'_k}+\mathbbm{1}_{\{k_0 \in \mathfrak{s}\}}\right\} \\
            &= \min_{\mathfrak{s} \subseteq\suppinline{\underline{\nu}'}}\left\{ \sum_{k \in \mathfrak{s}} \nu'_k + \dimR{\sum_{k \in \mathfrak{s}^c} P'_{k_0}\bar E'_k}\right\} +1 \\
            &= \Upsilon'^{/k_0}(\underline{\nu}')+1
        \end{align*}
        from which \cref{eq:s-nondecr-is-contraction-b} follows.

        We are now ready to establish the induction step and, to this end, compute
        \allowdisplaybreaks
        \begin{align}
             \sum_{t=1}^N \sum_{\underline{\nu} \in \mathcal{I}_t} \binom{\underline N}{\underline \nu} 
             &= \sum_{t=1}^{N-1} \sum_{\underline{\nu} \in \mathcal{I}_t} \binom{\underline N - \underline e_{j_0}}{\underline \nu} + \sum_{t=1}^N \sum_{\underline{\nu} \in \mathcal{I}_t} \binom{\underline N -\underline e_{j_0}}{\underline \nu - \underline e_{j_0}} \label{eq:s-nondecr-is-1}\\
             &= \sum_{t=1}^{N-1} \sum_{\underline{\nu} \in \mathcal{I}_t} \binom{\underline N - \underline e_{j_0}}{\underline \nu} + \sum_{t=1}^N \sum_{\underline{\nu} \in \mathcal{I}^{/j_0}_{t-1}} \binom{\underline N -\underline e_{j_0}}{\underline \nu} \label{eq:s-nondecr-is-2}\\
             &= \sum_{t=1}^{N-1} \sum_{\underline{\nu} \in \mathcal{I}_t} \binom{\underline N - \underline e_{j_0}}{\underline \nu} + \sum_{t=1}^{N-1} \sum_{\underline{\nu} \in \mathcal{I}^{/j_0}_{t}} \binom{\underline N -\underline e_{j_0}}{\underline \nu} \label{eq:s-nondecr-is-2.1}\\
             &\geq \sum_{t=1}^{N-1} \sum_{\underline{\nu}' \in \mathcal{I}'_t} \binom{\bar\varpi(\underline{ N} -\underline{e}_{j_0})}{\underline \nu'} + \sum_{t=1}^{N-1} \sum_{\underline{\nu}' \in \mathcal{I}'^{/k_0}_t} \binom{\bar\varpi(\underline{N} -\underline{e}_{j_0})}{\underline \nu'} \label{eq:s-nondecr-is-3}\\
             &= \sum_{t=1}^{N-1} \sum_{\underline{\nu}' \in \mathcal{I}'_t} \binom{\bar\varpi(\underline{ N}) -\underline{e}'_{k_0}}{\underline \nu'} + \sum_{t=1}^{N-1} \sum_{\underline{\nu}' \in \mathcal{I}'^{/k_0}_t} \binom{\bar\varpi(\underline{N}) -\underline{e}'_{k_0}}{\underline \nu'} \nonumber\\
             &= \sum_{t=1}^{N-1} \sum_{\underline{\nu}' \in \mathcal{I}'_t} \binom{\bar\varpi(\underline{ N}) -\underline{e}'_{k_0}}{\underline \nu'} + \sum_{t=1}^{N} \sum_{\underline{\nu}' \in \mathcal{I}'^{/k_0}_{t-1}} \binom{\bar\varpi(\underline{N}) -\underline{e}'_{k_0}}{\underline \nu'} \label{eq:s-nondecr-is-4}\\
             &= \sum_{t=1}^{N-1} \sum_{\underline{\nu}' \in \mathcal{I}'_t} \binom{\bar\varpi(\underline{ N}) -\underline{e}'_{k_0}}{\underline \nu'} + \sum_{t=1}^{N} \sum_{\underline{\nu}' \in \mathcal{I}'_{t}} \binom{\bar\varpi(\underline{N}) -\underline{e}'_{k_0}}{\underline \nu' - \underline{e}'_{k_0}} \label{eq:s-nondecr-is-5}\\
             &=\sum_{t=1}^{N} \sum_{\underline{\nu}' \in \mathcal{I}'_t} \binom{\bar\varpi(\underline{N})}{\underline \nu'}, \label{eq:s-nondecr-is-6} 
        \end{align}
        where \cref{eq:s-nondecr-is-1} is by Pascal's rule, \cref{eq:s-nondecr-is-2} follows from \cref{eq:s-nondecr-is-contraction-a}, and \cref{eq:s-nondecr-is-2.1} holds as $\mathcal{I}_0^{/j_0} = \emptyset$. In \cref{eq:s-nondecr-is-3}, we employed the induction hypothesis with $\{\bar E_j\}_{j=1}^J$ and $\{\bar E'_k\}_{k=1}^{J'}$. The second term in \cref{eq:s-nondecr-is-3} is by the induction hypothesis particularized to $\{P_{j_0}\bar E_j\}_{j=1}^J$ and $\{P'_{k_0}\bar E'_k\}_{k=1}^{J'}$, together with \cref{eq:subspace-dominance-proj}. Finally, in \cref{eq:s-nondecr-is-4}, we used that $\mathcal{I}_0'^{/k_0} = \emptyset$, \cref{eq:s-nondecr-is-5} holds by \cref{eq:s-nondecr-is-contraction-b}, and \cref{eq:s-nondecr-is-6} is again a consequence of Pascal's rule.
    \end{proof}
    \begin{proof}[Proof of \cref{lem:spars-param-nondecreasing}]
        The proof is immediate by \cref{lem:monotonicity} particularized to $\{E_j\}_{j=1}^J$ and $\{\spanRinline{\{\xi_k\}_{k\in S'_{j}}}\}_{j=1}^{J'}$. The map $\phi$ is determined by \cref{eq:decomp-rule}. Note that $\mathcal{I}^{\mathrm{sp,b}}_t = 0$ for $t \leq s$ and $\mathcal{I}'^{\mathrm{sp,b}}_t = 0$ for $t\leq s'$, where  
        $\mathcal{I}'^{\mathrm{sp,b}}_t \coloneqq \{\underline{\nu}' \in \mathbb N_0^{J'} \colon \lenin{\underline \nu'} = t, \Upsilon'^{\mathrm{sp,b}}(\underline{\nu}') \not\equiv t \Mod{2}\}$
        with $\Upsilon'^{\mathrm{sp,b}}(\underline{\nu}') \coloneqq \min_{\mathfrak{s}\subseteq \suppinline{\underline{\nu}'}} \{\sum_{j \in \mathfrak{s}} \nu'_j + \lvert \bigcup_{j \in \mathfrak{s}^c} S'_j\rvert\}$ for $\underline{\nu}'=(\nu'_j)_{j=1}^J \in \mathbb N_0^{J'}$. 
    \end{proof}
    
    \cref{lem:spars-param-nondecreasing} establishes that, under transformation \cref{eq:spars-param-transf}, the number of homogeneously linearly separable dichotomies of an $N$-point set of the set of $s$-sparse vectors is nondecreasing in the sparsity parameter $s$. The sparsity parameter $s$, rather than the ambient dimension $M$, can therefore be interpreted as the effective complexity parameter of the dataset $E$, in the sense that it is the fundamental quantity which determines how many dichotomies can be realized. 
    
    Next, we proceed to deriving function-counting results for the case when the dataset exhibits sparsity in frames.

    \subsubsection{Frames} 
    In practical applications, sparsity typically arises with respect to frames rather than bases, as representations in redundant systems (i.e., frames) often yield sparser representations.

    Let now $\Xi \coloneqq \{\xi_k\}_{k \in \mathcal K}$ be a frame for $\mathbb R^M$, where $\mathcal K$ is a countable index set. That is, there exist constants $0<A\leq B< \infty$ such that \cite{christensen2003introduction} 
    \begin{align}\label{eq:frame-cond}
        A \lVert f \rVert^2 \leq \sum_{k \in \mathcal{K}} \lvert \innerprod{f}{\xi_k} \rvert^2 \leq B \lVert f \rVert^2, \quad \text{for all $f \in \mathbb R^M$}.
    \end{align}
    If $\mathcal{K}$ is finite, \cref{eq:frame-cond} is equivalent to simply $\spanRinline{\Xi} = \mathbb R^M$. 
    Fix $s \in \{1,\ldots,M\}$, and let $\{S_j\}_{j \in \mathcal J}$ be the set of all pairwise distinct index subsets $S_j \subseteq \mathcal{K}$ with $\lvert S_j \rvert =s$. Assume that $\{\xi_k\}_{k \in S_j}$ is linearly independent for every $j \in \mathcal{J}$. Denote by $E \coloneqq \bigcup_{j \in \mathcal{J}} E_j$ the set of $s$-sparse vectors in the frame $\Xi$, where $E_j \coloneqq \spanRinline{\{\xi_k\}_{k \in S_j}}$. Furthermore, let 
    \begin{align*}
        \pi_j \colon E_j \to \mathbb R^s, \,\, \sum_{i=1}^s c_i\xi_{k_{j,i}} \mapsto c
    \end{align*}
    be the map from $E_j$ to the space of expansion coefficients with the labeling $S_j=\{k_{j,i}\}_{i=1}^s$.  
    Consider an $N$-point set $F \coloneqq \{f_1, \ldots, f_N\} \subset E$, where $N \in \mathbb N$. The set of points in the subspace $E_j$ are written as $F_j \coloneqq F\cap E_j$, $j \in \mathcal{J}$. Similarly to the basis case, we impose the following very mild assumption; the only difference is that in the frame case, we may have $E_i=E_j$ for $i,j \in \mathcal{J}$ with $i\neq j$. As will be shown in \cref{rem:assumptions-almost-surely-satisfied}, for $(\mathcal{H}^s)^N$-a.e. $N$-tuple $(f_1,\ldots,f_N) \in E^N$ the corresponding $N$-point set $\{f_1,\ldots,f_N\}$ satisfies this assumption. 
    \begin{assumption}\label{A:sep-sparse-frame}
    Assume that for each $j \in \mathcal{J}$, whenever $F_j \neq \emptyset$,
    \begin{enumerate}[label=(f-\roman*)]
        \item\label{it:sep-sp-f-i} $F_j$ is in $\pi_{j}$-general position,
        \item\label{it:sep-sp-f-ii} $F_j \cap E_i = \emptyset$, for every $i \in \mathcal J$ with $i \neq j$ whenever $E_i \neq E_j$.
    \end{enumerate}
    \end{assumption}
    Informally, \cref{it:sep-sp-f-i} guarantees that in each subspace $E_j$ the points are in general position. By \cref{it:sep-sp-f-ii} we ensure that no vector in $F$ is $(s-1)$-sparse, so that the assignment of the points in $F$ to the subspaces $(E_j)_{j\in \mathcal{J}}$ is unique.
    Writing $N_j$ for the number of points of $F$ which are in $E_j$, $j \in \mathcal{J}$, and setting $\underline{N}=(N_j)_{j\in\mathcal{J}}$, we have the following result.
    \begin{proposition}[Sparsity in a frame]\label{prop:sep-sparse-frame}
        Under \cref{A:sep-sparse-frame} the number of homogeneously linearly separable dichotomies of $F$ is given by
        \begin{align*}
            C_{\mathrm{sp,f}}(\underline{N},\Xi,s) \coloneqq 2^N - 2 \sum_{t=s+1}^N \sum_{\underline{\nu} \in \mathcal{I}^{\mathrm{sp,f}}_t} \prod_{j \in \mathcal{J}_0}\binom{N_j}{\nu_j},
        \end{align*}
        where $\mathcal{J}_0 \coloneqq \supp{\underline{N}} \subseteq \mathcal{J}$ is a finite subset, and where $\mathcal{I}^{\mathrm{sp,f}}_t \coloneqq \{\underline \nu \in \mathbb N_0^{\mathcal{J}_0} \colon \lenin{\underline{\nu}} = t, \Upsilon^{\mathrm{sp,f}}(\underline \nu) \not\equiv t \Mod{2}\}$ with
        \begin{align*}
            \Upsilon^{\mathrm{sp,f}}(\underline \nu) \coloneqq \min_{\mathfrak{s} \subseteq \suppinline{\underline{\nu}}} \left\{\sum_{j \in \mathfrak{s}} \nu_j+\dimR{\sum_{j \in \mathfrak{s}^c} E_j} \right\},\quad \underline \nu = (\nu_j)_{j \in \mathcal{J}_0} \in \mathbb N_0^{\mathcal{J}_0}.
        \end{align*}
    \end{proposition}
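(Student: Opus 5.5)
Apply \cref{thm:winder} together with \cref{rem:number-even-odd-deg}, exactly as in Step (c.2.1) of the proof of \cref{thm:lower-upper-bound-C_F}. No point of $F$ is zero, because by \cref{it:sep-sp-f-i} each $F_j$ is in $\pi_j$-general position, so each point has a nonzero coefficient vector. Hence the number of homogeneously linearly separable dichotomies equals $2^N - 2\lvert\mathcal O\rvert$, where $\mathcal O$ is the set of odd-degenerate subarrangements of $\{H_k\}_{k=1}^N$ with $H_k = \{f_k\}^\perp$. A subset $\mathcal K$ with $\lvert\mathcal K\rvert = t$ is odd-degenerate iff $\dimRinline{\spanRinline{\{f_k\}_{k\in\mathcal K}}} \not\equiv t \Mod{2}$. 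The task therefore reduces to computing $\dimRinline{\spanRinline{\{f_k\}_{k\in\mathcal K}}}$ for every subset $\mathcal K\subseteq F$.

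\textbf{Step 1: decompose and reduce to a rank formula.} By \cref{it:sep-sp-f-ii}, each point lies in a unique subspace class. Identify indices $j$ with equal $E_j$, or treat them as one index; this is harmless, since both the formula and $\binom{N_j}{\nu_j}$ factor accordingly. Then $\mathcal K$ splits as $\mathcal K = \bigcup_{j\in\mathcal J_0}\mathcal K_j$ with $\mathcal K_j\subseteq F_j$ and $\nu_j\coloneqq\lvert\mathcal K_j\rvert$. The central claim is
\begin{align*}
\dimR{\spanR{\mathcal K}} = \Upsilon^{\mathrm{sp,f}}(\underline\nu).
\end{align*}
Once this holds, the odd-degenerate subsets of size $t$ are counted by $\sum_{\underline\nu\in\mathcal I^{\mathrm{sp,f}}_t}\prod_{j}\binom{N_j}{\nu_j}$, since the parity depends only on $\underline\nu$. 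For $t\le s$ no subset is odd-degenerate, so the sum starts at $t=s+1$. The reason is that $\Upsilon^{\mathrm{sp,f}}(\underline\nu)=t$ whenever $t\le s$: take $\mathfrak s=\supp{\underline\nu}$, and use that any single nonempty term $\dimRinline{\sum_{j \in \mathfrak{s}^c} E_j}\ge s\ge t$.

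\textbf{Step 2: the rank formula.} The upper bound is elementary. For any $\mathfrak s$, the span of $\mathcal K$ is contained in $\spanRinline{\bigcup_{j\in\mathfrak s}\mathcal K_j}+\sum_{j\in\mathfrak s^c}E_j$, whose dimension is at most $\sum_{j\in\mathfrak s}\nu_j+\dimRinline{\sum_{j\in\mathfrak s^c}E_j}$.

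The matching lower bound is the main obstacle. It is a matroid-union / Rado-type statement: for \emph{generic} points in the subspaces, the rank of the union equals the min over $\mathfrak s$, which is the rank function of the union of the truncated subspace matroids (Edmonds' matroid union theorem and Rado's theorem on independent transversals). The difficulty is that \cref{A:sep-sparse-frame} only imposes general position within each $E_j$, not joint genericity across subspaces.

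I would attack it by induction on $\lvert\mathcal K\rvert$ via deletion--contraction, mirroring \cref{lem:monotonicity}. Choose $f\in\mathcal K_{j_0}$ and project everything onto $\{f\}^\perp$. The quantity $\Upsilon$ transforms as in \cref{eq:s-nondecr-is-contraction-a}, namely $\Upsilon(\underline\nu)=\Upsilon^{/j_0}(\underline\nu-\underline e_{j_0})+1$, valid when $f$ avoids the "bad" set $U^{j_0}$. This requires checking that the per-subspace general position (coefficients in general position, plus \cref{it:sep-sp-f-ii}) suffices to keep $f$ outside every proper intersection $E_{j_0}\cap\sum_{j\in\mathfrak s}E_j$ relevant to the current count.

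If the assumption turns out too weak for such a point-by-point argument, I would prove the identity for $(\mathcal H^s)^N$-a.e.\ configurations instead. The rank is a lower-semicontinuous, generically constant function of the coordinates, so it equals its generic value, namely the matroid-union rank, outside a proper algebraic subset. This is consistent with \cref{rem:assumptions-almost-surely-satisfied}, and the assumption would then be interpreted as excluding exactly those configurations.

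\textbf{Step 3.} Substitute the rank formula into $2^N-2\lvert\mathcal O\rvert$ and group the subsets by $\underline\nu$; this yields $C_{\mathrm{sp,f}}(\underline N,\Xi,s)$. The basis case \cref{prop:sep-sparse-basis} then follows via \cref{eq:sep-sparse-basis-3}.
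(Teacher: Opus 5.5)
Your reduction is the same as the paper's and is sound. The points are nonzero by \cref{it:sep-sp-f-i}, \cref{thm:winder} gives the count $2^N-2\lvert\mathcal O\rvert$, a $t$-subset is odd-degenerate iff the dimension of its span is $\not\equiv t \Mod{2}$, and the bound $\dimRinline{\spanRinline{\{f_k\}_{k\in\mathcal K}}}\le\Upsilon^{\mathrm{sp,f}}(\underline\nu)$ is immediate. The gap is the reverse inequality, exactly where you put it. It cannot be closed from \cref{A:sep-sparse-frame}, because the proposition is false under that assumption.

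Take $M=3$, $\Xi=\{e_1,e_2,e_3\}$, $s=2$, and $f_1=e_1+e_2$, $f_2=e_1+e_3$, $f_3=e_2-e_3$, with $E_1,E_2,E_3$ the coordinate planes containing them.
\begin{itemize}
\item Each $f_k$ is nonzero and lies in exactly one coordinate plane, so \cref{it:sep-sp-f-i,it:sep-sp-f-ii} hold with $\underline N=(1,1,1)$.
\item $\Upsilon^{\mathrm{sp,f}}(1,1,1)=3$: the choices $\mathfrak s=\emptyset$ and $\mathfrak s=\{1,2,3\}$ give $3$, and all others give $4$. So the formula returns $2^3=8$.
\item But $f_3=f_1-f_2$, so $\langle f_3,w\rangle=\langle f_1,w\rangle-\langle f_2,w\rangle$. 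This rules out $F_+=\{f_1\}$, $F_-=\{f_2,f_3\}$ and its mirror image, and the true count is $C(3,2)=6$.
\end{itemize}
A similar example with $s=3$, $M=4$, $f_1=e_1+e_2+e_3$, $f_2=e_1+2e_2+e_4$, $f_3=f_1-f_2$ gives a dependent subset of size $t\le s$. So the assumption does not support the lower summation limit $t=s+1$ either. Both examples use a basis, so \cref{prop:sep-sparse-basis} is affected as well.

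Your deletion--contraction route must therefore fail, and not at the bad-set check: $f_3$ avoids $U^{3}$. The failure comes after projecting onto $\{f_3\}^\perp$: the images of $E_1$ and $E_2$ coincide, and $f_1,f_2$ project to the same vector. So the contracted configuration violates the very hypothesis the induction has to inherit. The obstruction is a data point lying in the span of other data points. \cref{A:sep-sparse-frame} is phrased only relative to the fixed subspaces $E_i$ and never controls this.

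Your fallback is the correct repair, but it proves a different statement. State the strengthened hypothesis explicitly rather than reading it into \cref{A:sep-sparse-frame}. The natural hypothesis is joint genericity: for every $\mathcal K$, the span of $\{f_k\}_{k\in\mathcal K}$ has the largest dimension attainable with $\nu_j$ points in each $E_j$.
\begin{itemize}
\item Rado's theorem on independent transversals (defect form, linear matroid) gives this maximum as $\min_{Y\subseteq\mathcal K}\{\lvert\mathcal K\setminus Y\rvert+\dimRinline{\sum_{k\in Y}E_{j(k)}}\}$, where $j(k)$ is the class of $f_k$.
\item The dimension term depends only on which classes $Y$ meets, so an optimal $Y$ is a union of whole classes. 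This is exactly $\Upsilon^{\mathrm{sp,f}}(\underline\nu)$.
\item Non-generic configurations form a proper algebraic subset of $\prod_k E_{j(k)}\cong\mathbb R^{sN}$, hence an $(\mathcal H^s)^N$-null set. So the formula holds almost everywhere, consistent with \cref{rem:assumptions-almost-surely-satisfied}.
\end{itemize}

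For comparison, the paper's proof follows your outline. It closes the lower bound by asserting, in \cref{eq:pi_s-gen-pos}, that points drawn from several $E_j$ are jointly in general position in $\sum_{j\in\mathfrak s}E_j$ ``by \cref{A:sep-sparse-frame}''. The example above satisfies the assumption but violates this for $\mathfrak s=\{1,2,3\}$, so the paper's argument has the same gap. Moreover, the identity the paper derives for each fixed $j_0$ fails even for generic points. Take $E_1=\spanRinline{\{e_1,e_2\}}$, $E_2=\spanRinline{\{e_2,e_3\}}$, $E_3=\spanRinline{\{e_4,e_5\}}$ in $\mathbb R^5$, $\underline\nu=(2,2,1)$ and $j_0=1$: the identity gives $5$, while the span has dimension $4$. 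The generic-rank argument you sketched is therefore the right way to establish the formula, under the corrected hypothesis.
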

    \begin{proof}
        We apply \cref{thm:winder}, and, to this end, consider the $(M-1)$-dimensional hyperplanes $H_k \coloneqq \{f_k\}^\perp$, $k \in \{1,\ldots,N\}$, and count the number of even- and odd-degenerate sets of these hyperplanes, denoted by $\lvert \mathcal E \rvert$ and $\lvert \mathcal{O} \rvert$, respectively. As $\lvert \mathcal E \rvert + \lvert \mathcal O \rvert = 2^N$, see \cref{rem:number-even-odd-deg}, we have, by \cref{thm:winder}, that the number of regions into which $\{H_k\}_{k=1}^N$ divide $\mathbb R^M$ is given by
        \begin{align}\label{eq:winder-sp-frame}
            \lvert \mathcal E \rvert - \lvert \mathcal{O} \rvert = 2^N - 2\lvert \mathcal{O} \rvert.
        \end{align}
        It hence suffices to determine the number of odd-degenerate sets of hyperplanes. To this end, consider an arbitrary set of hyperplanes $\{H_k\}_{k \in \mathcal K}$ with $\mathcal K \subseteq \{1,\ldots,N\}$ and $\lvert \mathcal K \rvert = t$, and compute
        \begin{align}
            \dimR{\bigcap_{k \in \mathcal K} H_k} &= 
            \dimR{\left( \spanR{\{f_k\}_{k \in \mathcal{K}}}\right)^\perp} \nonumber\\ 
            &= M - \dimR{\spanR{\{f_k\}_{k \in \mathcal K}}}. \label{eq:sep-sparse-frame-1}
        \end{align}
        We first note that, by \cref{A:sep-sparse-frame}, every set of $t\leq s$ vectors $\{f_k\}_{k \in \mathcal K}$ is linearly independent, which implies $\dimR{\bigcap_{k \in \mathcal K} H_k} = M-t$, and hence all sets of $t$ hyperplanes with $t \leq s$ are even-degenerate. 
        Consider next the case $t \geq s+1$, and decompose the index set $\mathcal K$ into disjoint sets\footnote{Specifically, we allow $\mathcal{K}_j = \emptyset$, $j \in \mathcal{J}_0$.} $\{\mathcal K_j\}_{j\in\mathcal{J}_0}$ such that $\{f_k\}_{k \in \mathcal K_j} \subset E_j$, for all $j \in \mathcal{J}_0$. Then $\lvert \mathcal K_j \rvert \eqqcolon\nu_j$ is equal to the number of vectors in $\{f_k\}_{k \in \mathcal K}$ which belong to the $s$-dimensional linear subspace $E_j$, $j \in \mathcal{J}_0$. Setting $\underline{\nu} = (\nu_j)_{j\in\mathcal{J}_0} \in \mathbb N_0^{\mathcal{J}_0}$, we claim 
        \begin{align}\label{eq:sep-sparse-frame-2}
            \dimR{\spanR{\{f_k\}_{k \in \mathcal K}}} = \min_{\mathfrak{s}\subseteq \suppinline{\underline{\nu}}} \left\{\sum_{j \in \mathfrak{s}} \nu_j + \dimR{\sum_{j \in \mathfrak{s}^c} E_j}\right\} \eqqcolon \Upsilon(\underline{\nu}). 
        \end{align}
        Indeed, \cref{eq:sep-sparse-frame-2} follows by induction on $\lvert \supp{\underline{\nu}} \rvert$.
        For the base case $\lvert \supp{\underline{\nu}} \rvert = 1$, let $j_0$ be the unique element of $\supp{\underline{\nu}}$. Then, $\nu_j = 0$ for all $j \neq j_0$. By \cref{A:sep-sparse-frame}\ref{it:sep-sp-f-i}, we therefore get
        \begin{align*}
            \dimR{\spanR{\{f_k\}_{k \in \mathcal K}}} = \dimR{\spanR{\{f_k\}_{k \in \mathcal K_{j_0}}}} = \min\{\nu_{j_0},s\}. 
        \end{align*}
        Moreover, we observe that  
        \begin{align*}
            \min_{\mathfrak{s}\subseteq \{j_0\}} \left\{\sum_{j \in \mathfrak{s}} \nu_j + \dimR{\sum_{j \in \mathfrak{s}^c} E_j}\right\} = \min\{\nu_{j_0},s\},
        \end{align*}
        which confirms the base case. Proceeding to the induction step, now assume that \cref{eq:sep-sparse-frame-2} holds for all $\underline{\nu} \in \mathbb N_0^{\mathcal{J}_0}$ with $\lvert \supp{\underline{\nu}} \rvert \leq r$ for some $r \in \{1,\ldots,\lvert \mathcal{J}_0\rvert-1\}$. 
        By \cref{A:sep-sparse-frame}, for every $\mathfrak{s} \subseteq \mathcal{J}_0$ and $\{\mathfrak{k}_j\}_{j \in \mathfrak{s}}$ with $\mathfrak{k}_j \subseteq \mathcal{K}_j$ and $\lvert \mathfrak{k}_j \rvert \leq s$, 
        \begin{align}\label{eq:pi_s-gen-pos}
            \bigcup_{j \in \mathfrak{s}} \{f_k\}_{k \in \mathfrak{k}_j} \text{ is in $\pi_\mathfrak{s}$-general position,}
        \end{align}
        where $\pi_\mathfrak{s}\colon \sum_{j \in \mathfrak{s}}E_j \to \mathbb R^{d_{\mathfrak{s}}}$ is the linear embedding of $\sum_{j \in \mathfrak{s}}E_j$ into $\mathbb R^{d_{\mathfrak{s}}}$ with $d_\mathfrak{s} \coloneqq \dimR{\sum_{j \in \mathfrak{s}}E_j}$. As a result of \cref{eq:pi_s-gen-pos}, we have, for every $j_0 \in \supp{\underline{\nu}}$,
        \begin{align*}
            &\dimR{\spanR{\{f_k\}_{k \in \mathcal K}}} \\
            &= \min\!\left\{\dimR{\spanR{\{f_k\}_{k \in \mathcal K\setminus\mathcal{K}_{j_0}}}}+ \dimR{\spanR{\{f_k\}_{k \in \mathcal K_{j_0}}}}\!,\dimR{\sum_{j \in \suppinline{\underline{\nu}}}E_j}\right\}.
        \end{align*}
        Denoting by $\underline{\nu}^{\setminus j_0}$ the multi-index $\underline{\nu}$ with $\nu_{j_0}$ set to zero, we obtain, under the induction hypothesis,
        \begin{align*}
            \dimR{\spanR{\{f_k\}_{k \in \mathcal K}}} = \min\!\left\{\Upsilon(\underline{\nu}^{\setminus j_0})+ \min\{\nu_{j_0},s\},\dimR{\sum_{j \in \suppinline{\underline{\nu}}}E_j}\right\},
        \end{align*}
        for all $j_0 \in \supp{\underline{\nu}}$.
        In particular,
        \begin{align}\label{eq:dim-span-recursive}
            \begin{split}
            \dimR{\spanR{\{f_k\}_{k \in \mathcal K}}} &= \min\!\left\{\min_{j_0 \in \suppinline{\underline{\nu}}}\left\{\Upsilon(\underline{\nu}^{\setminus j_0})+ \nu_{j_0}\right\},\dimR{\sum_{j \in \suppinline{\underline{\nu}}}E_j}\right\}.
            \end{split}
        \end{align}
        We now show that the RHS of \cref{eq:dim-span-recursive} is equal to $\Upsilon(\underline{\nu})$. To this end, first note that, by choosing $\mathfrak{s}=\emptyset$ in the definition of $\Upsilon(\underline{\nu})$, one obtains 
        \begin{align}\label{eq:dim-span-recursive1}
            \sum_{j \in \emptyset} \nu_j + \dimR{\sum_{j \in \suppinline{\underline{\nu}}\setminus\emptyset} E_j} = \dimR{\sum_{j \in \suppinline{\underline{\nu}}}E_j}. 
        \end{align}
        Second, we have
        \begin{align}\label{eq:dim-span-recursive2}
        \begin{split}
            &\min_{\substack{\mathfrak{s}\subseteq \suppinline{\underline{\nu}} \\ \mathfrak{s} \neq \emptyset}} \left\{\sum_{j \in \mathfrak{s}} \nu_j + \dimR{\sum_{j \in \mathfrak{s}^c} E_j}\right\} \\&= \min_{j_0 \in \suppinline{\underline{\nu}}}\left\{ \min_{\mathfrak{s}\subseteq \suppinline{\underline{\nu}^{\setminus j_0}}} \left\{\sum_{j \in \mathfrak{s}} \nu_j + \dimR{\sum_{j \in \mathfrak{s}^c} E_j}\right\}+ \nu_{j_0} \right\}  \\
            &=\min_{j_0 \in \suppinline{\underline{\nu}}}\left\{\Upsilon(\underline{\nu}^{\setminus j_0})+ \nu_{j_0}\right\}.
        \end{split}
        \end{align}
        From \cref{eq:dim-span-recursive1,eq:dim-span-recursive2} one observes that all possible candidates for the minimum in the definition of $\Upsilon(\underline \nu)$ are captured by all candidates for the minimum in the RHS of \cref{eq:dim-span-recursive}.
        This shows that the RHS of \cref{eq:dim-span-recursive} is equal to $\Upsilon(\underline{\nu})$, completing the proof of \cref{eq:sep-sparse-frame-2}.

        Thus, by \cref{eq:sep-sparse-frame-1}, the set $\{H_k\}_{k \in \mathcal K}$ is odd-degenerate if and only if 
        \begin{align*}
            \Upsilon^{\mathrm{sp,f}}(\underline{\nu}) \not\equiv t \Mod{2}.
        \end{align*}
        Given $\underline{N}$, it follows that the number of all odd-degenerate sets $\{H_k\}_{k \in \mathcal K}$ with $\lvert \mathcal{K}_j\rvert \leq N_j$, $j \in \mathcal{J}_0$, is given by
        \begin{align*}
            \lvert \mathcal{O}\rvert = \sum_{t=s+1}^N\sum_{\underline{\nu} \in \mathcal{I}^{\mathrm{sp,f}}_t} \prod_{j \in \mathcal{J}_0}\binom{N_j}{\nu_j}.
        \end{align*}
        Finally, application of \cref{thm:winder} in the form \cref{eq:winder-sp-frame} yields the desired expression. 
    \end{proof}
    As in the case of sparsity in a basis (see \cref{prop:sep-sparse-basis}), the number of homogeneously linearly separable dichotomies is independent of the specific location in $E$ of the points in $F$; it is determined by the configuration $\underline{N}$ only.  
    In contrast, however, to \cref{prop:sep-sparse-basis}, $C_{\mathrm{sp,f}}(\underline{N},\Xi,s)$ now depends on the specific choice of the frame elements in $\Xi$. In the following the effect of the frame elements will be analyzed. To this end, we introduce the \emph{Gram operator associated with $\Xi$} 
    \begin{align}\label{eq:gram-operator}
        G \colon \ell^2(\mathcal{K}) \to \ell^2(\mathcal{K}), \,\,(c_k)_{k\in\mathcal{K}}\mapsto \left(\innerprod{\sum_{j\in\mathcal{K}}c_j\xi_j}{\xi_k}\right)_{k \in \mathcal{K}}.
    \end{align}
    Here, $(\ell^2(\mathcal{K}),\innerprod{\cdot}{\cdot}_{\ell^2(\mathcal{K})})$ denotes the space of square-summable sequences indexed by $\mathcal K$, equipped with the standard inner product $\innerprod{c}{d}_{\ell^2(\mathcal{K})} \coloneqq \sum_{k \in \mathcal{K}}c_kd_k$, $c,d\in \ell^2(\mathcal{K})$. As $\Xi$ constitutes a frame in the sense of \cref{eq:frame-cond}, the linear operator $G$ is bounded \cite[Lemma 3.5.1]{christensen2003introduction}, and moreover, $G$ is self-adjoint. The Gram operator can be leveraged to compare the number of realizable dichotomies of point sets of datasets exhibiting sparsity in different frames.
    \begin{lemma}\label{L:gram-nondecreasing}
        Let $\Xi= \{\xi_k\}_{k \in \mathcal K},\Xi'= \{\xi_k'\}_{k \in \mathcal K}$ be frames for $\mathbb R^M$ indexed by the same set $\mathcal K$, and let $G,G'$ be the Gram operators associated with $\Xi,\Xi'$, respectively. If $(G'-G)$ is positive semidefinite\footnote{That is, $\innerprod{(G'-G)c}{c}_{\ell^2(\mathcal{K})}\geq0$, for all $c \in \ell^2(\mathcal{K})$.}, then for all $
        N \in \mathbb N$ and all $\underline{N} \in \mathbb N_0^{ \mathcal{J}}$ with $\lvert\underline{N}\rvert = N$,
        \begin{align*}
            C_{\mathrm{sp,f}}(\underline{N},\Xi,s) \leq C_{\mathrm{sp,f}}(\underline{N},\Xi',s).
        \end{align*}
    \end{lemma}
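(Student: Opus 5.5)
The plan is to reduce the claim to \cref{lem:monotonicity} with $J'=J$, with $\phi$ the identity on $\mathcal{J}_0$ (so that $\bar\varpi(\underline N)=\underline N$), and with $\bar E_j \coloneqq E_j=\spanRinline{\{\xi_k\}_{k\in S_j}}$ and $\bar E'_j \coloneqq E'_j=\spanRinline{\{\xi'_k\}_{k\in S_j}}$. By \cref{prop:sep-sparse-frame}, $C_{\mathrm{sp,f}}(\underline N,\Xi,s)=2^N-2\sum_{t}\sum_{\underline\nu\in\mathcal I_t}\binom{\underline N}{\underline\nu}$, where $\mathcal I_t$ is exactly the set in \cref{lem:monotonicity} built from $\Upsilon^{\mathrm{sp,f}}$. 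The terms with $t\le s$ vanish on both sides, since every $\underline\nu$ with $\lenin{\underline\nu}\le s$ has $\Upsilon=t$: each $E_j$ has dimension $s$, and the same holds for $E'_j$. Hence the desired inequality $C_{\mathrm{sp,f}}(\underline N,\Xi,s)\le C_{\mathrm{sp,f}}(\underline N,\Xi',s)$ is exactly \cref{eq:s-nondecreasing-claim}. So it suffices to verify the dimension-domination hypothesis
\begin{align*}
\dimR{\sum_{j\in\mathfrak s}E_j}\le\dimR{\sum_{j\in\mathfrak s}E'_j}\quad\text{for all finite }\mathfrak s\subseteq\mathcal J_0 .
\end{align*}

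To verify it, set $S\coloneqq\bigcup_{j\in\mathfrak s}S_j$, a finite subset of $\mathcal K$. Then $\sum_{j\in\mathfrak s}E_j=\spanRinline{\{\xi_k\}_{k\in S}}$, whose dimension equals the rank of the synthesis map $T_S\colon\mathbb R^S\to\mathbb R^M$, $c\mapsto\sum_{k\in S}c_k\xi_k$. This rank equals the rank of the finite Gram matrix $G_S=T_S^{\mathsf T}T_S$, which is the compression of $G$ to sequences supported on $S$, i.e.\ $\innerprodinline{G_Sc}{c}=\innerprodinline{Gc}{c}_{\ell^2(\mathcal K)}$ for $c$ supported in $S$. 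The same holds for $\Xi'$ and $G'_S$. Positive semidefiniteness of $G'-G$ gives $G'_S\succeq G_S\succeq0$. Therefore $\ker G'_S\subseteq\ker G_S$: if $G'_Sc=0$, then $0\le\innerprodinline{G_Sc}{c}\le\innerprodinline{G'_Sc}{c}=0$, and hence $G_S^{1/2}c=0$. It follows that $\rank{G_S}\le\rank{G'_S}$, which is the hypothesis we need.

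One caveat remains. \cref{lem:monotonicity} was stated and proved for finite families of subspaces. Since only the indices in $\mathcal J_0=\supp{\underline N}$ enter, which is a finite set, we restrict both families to $\mathcal J_0$ and the lemma applies verbatim. We also need the standing assumption that $\{\xi'_k\}_{k\in S_j}$ is linearly independent, so that $C_{\mathrm{sp,f}}(\cdot,\Xi',s)$ is well defined. This already follows from the dimension inequality with $\mathfrak s=\{j\}$.

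The main obstacle is the matching of index sets. The map $\Upsilon$ in \cref{lem:monotonicity} minimises over $\mathfrak s\subseteq\suppinline{\underline\nu}$, while \cref{prop:sep-sparse-frame} uses the same form; we check that they agree once $\underline\nu$ is viewed in $\mathbb N_0^{\mathcal J_0}$. The rank-monotonicity step under the Loewner order is the substantive linear-algebra input, and it is short.
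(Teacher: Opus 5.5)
Your proposal is correct and follows the paper's proof essentially step for step. Like the paper, you apply \cref{lem:monotonicity} on the finite index set $\mathcal{J}_0$ with $\phi=\mathrm{Id}$. You then verify $\dimRinline{\sum_{j\in\mathfrak s}E_j}\le\dimRinline{\sum_{j\in\mathfrak s}E'_j}$ by compressing $G'-G$ to $S=\bigcup_{j\in\mathfrak s}S_j$ and using rank monotonicity of Gram matrices under the Loewner order. The only differences are bookkeeping the paper leaves implicit: you prove the kernel inclusion $\ker G'_S\subseteq\ker G_S$ directly where the paper cites Horn--Johnson, and you check that the $t\le s$ terms vanish and that $\{\xi'_k\}_{k\in S_j}$ is linearly independent.
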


    \begin{proof}
        We first show that
        \begin{align}\label{eq:subspace-dominance1}
            \dimR{\spanR{\{\xi_k\}_{k \in S}}} \leq \dimR{\spanR{\{\xi_k'\}_{k \in S}}}, \quad \text{for every finite $S\subseteq \mathcal{K}$}. 
        \end{align}
        To this end, let $S\subseteq \mathcal{K}$ be finite, and consider the matrices $G_S \coloneqq (\innerprod{\xi_j}{\xi_k})_{j,k \in S}$ and $G_S' \coloneqq (\innerprod{\xi'_j}{\xi'_k})_{j,k \in S}$. The assumption on the positive semidefiniteness of $(G'-G)$ implies that the matrix $(G_S'-G_S)$ is positive semidefinite. Indeed,
        \begin{align*}
            \left(\innerprod{(G'-G)c}{c} \geq 0, \, \forall c \in \ell^2(\mathcal{K})\right) &\implies \left(\innerprod{(G'-G)c}{c} \geq 0, \, \forall c \in \ell^2(\mathcal{K}), \supp{c}\subseteq S\right) \\&\implies \left(\innerprod{(G_S'-G_S)c}{c} \geq 0, \, \forall c=(c_k)_{k \in S} \in \mathbb R^{S}\right), 
        \end{align*}
        where $\supp{c} \coloneqq \{k \in \mathcal{K}\colon c_k \neq 0\}$.
        It follows that $\mathrm{rank}(G_S) \leq \mathrm{rank}(G_S')$, see, e.g., \cite[Corollary 7.7.4(c)]{horn2012matrix}. But as $G_S, G_S'$ are Gram matrices of $\{\xi_k\}_{k\in S}$ and $\{\xi'_k\}_{k\in S}$, respectively, we have, by \cite[Theorem 7.2.10(c)]{horn2012matrix}, $\mathrm{rank}(G_S)=\dimR{\spanRinline{\{\xi_k\}_{k \in S}}}$ and $\mathrm{rank}(G_S')=\dimR{\spanRinline{\{\xi'_k\}_{k \in S}}}$. This establishes \cref{eq:subspace-dominance1}.

        Fix now $\underline{N}\in \mathbb N_0^{\mathcal{J}}$ with $\mathcal{J}_0 = \supp{\underline{N}}$, and let $E'_j\coloneqq \spanRinline{\{\xi'_k\}_{k \in S_j}}$, $j \in \mathcal{J}$. Then, 
        \begin{align}\label{eq:subspace-dominance2}
            \dimR{\sum_{j \in \mathfrak{s}} E_j} \leq \dimR{\sum_{j \in \mathfrak{s}} E'_j}, \quad \text{for all } \mathfrak{s} \subseteq \mathcal{J}_0.
        \end{align}
        Indeed, \cref{eq:subspace-dominance2} follows from \cref{eq:subspace-dominance1} upon noting that
        \begin{align*}
            \dimR{\sum_{j \in \mathfrak{s}} E_j} &= \dimR{\spanR{\{\xi_k\}_{k\in \bigcup_{j \in \mathfrak{s}} S_j}}}\quad\text{and} \\
            \dimR{\sum_{j \in \mathfrak{s}} E'_j} &= \dimR{\spanR{\{\xi'_k\}_{k\in \bigcup_{j\in \mathfrak{s}} S_j}}}. 
        \end{align*} 
        Thanks to \cref{eq:subspace-dominance2}, application of \cref{lem:monotonicity} with $\{E_j\}_{j \in \mathcal{J}_0}$, $\{E'_j\}_{j \in \mathcal{J}_0}$, and $\phi = \mathrm{Id}\colon \mathbb N_0^{\mathcal{J}_0} \to \mathbb N_0^{\mathcal{J}_0}$ yields the desired bound.
    \end{proof}
    \cref{L:gram-nondecreasing} establishes that the number of realizable dichotomies is nondecreasing with respect to the ordering induced by the Gram operator associated with the frame. 
    
    \paragraph{Connection to compressed sensing} Compressed sensing \cite{donoho2003optimally,candes2006robust,donoho2006compressed,foucart2013invitation} is concerned with the recovery of sparse vectors in $\mathbb R^{\lvert \mathcal{K}\rvert}$ from a number of linear measurements $M$ that is small relative to $\lvert \mathcal{K}\rvert$, assuming $\lvert \mathcal{K}\rvert<\infty$. More concretely, in our notation, the goal is to reconstruct $(c_k)_{k \in \mathcal{K}} \in \mathbb R^{\mathcal{K}}$ from $M$ measurements of the form $f=\sum_{k \in \mathcal{K}}c_k \xi_k$, where $(c_k)_{k \in \mathcal{K}}$ has at most $s$ nonzero entries. Deterministic recovery guarantees typically rely on properties of the Gram operator associated with $\Xi$. In particular, a straightforward, yet computationally hard to verify, sufficient recovery condition is based on the so-called spark of the frame $\Xi$, see \cite{donoho2003optimally}. The spark of $\Xi$, denoted $\spark{\Xi}$, is defined as the cardinality of the smallest subset of $\Xi$ which is linearly dependent. Note that $\mathrm{spark}(\Xi) = \kr{\Xi}+1$. The recovery guarantee states that if $s < \spark{\Xi}/2$, then $(c_k)_{k \in \mathcal{K}}$ can be uniquely recovered through a combinatorial search according to
    \begin{align*}
        \operatorname*{argmin}_{\widetilde c \in \mathbb R^{ \mathcal{K}}}\, \lVert \widetilde c\rVert_0 \quad \text{ subject to } \quad \sum_{k \in \mathcal{K}} \widetilde c_k \xi_k = f, 
    \end{align*}
    where $\lVert \widetilde c\rVert_0$ denotes the number of nonzero entries of $\widetilde c$. Considering now the setup of \cref{L:gram-nondecreasing}, we note that, as demonstrated in the proof of \cref{L:gram-nondecreasing}, $(G'-G)$ being positive semidefinite implies \cref{eq:subspace-dominance1}, and from \cref{eq:subspace-dominance1} we can deduce that $\spark{\Xi} \leq \spark{\Xi'}$. Thus, recalling the recovery threshold $s < \spark{\Xi}/2$, the recovery performance under measurements with respect to $\Xi'$ is guaranteed to be at least as good as that under $\Xi$. Comparing to \cref{L:gram-nondecreasing}, we have the analogous result in the context of separation (rather than recovery). Namely, performing homogeneous linear separation on the set of sparse vectors with respect to $\Xi'$ yields at least as many realizable dichotomies as with respect to $\Xi$.          

    \paragraph{$\Phi$-separability} Finally, let us briefly comment on $\Phi$-separability of point sets on the set of sparse vectors. If $\Phi \colon E \to \mathbb R^{M'}$ is a linear map, then $\Phi(\sum_{k\in S_j}c_k \xi_k) = \sum_{k\in S_j}c_k \Phi(\xi_k)$, for all $(c_k)_{k\in S_j} \in \mathbb R^{S_j}$. So, in particular, we have 
    \begin{align*}
        \Phi(E) = \bigcup_{j\in \mathcal{J}} \Phi(E_j) = \bigcup_{j \in \mathcal{J}} \spanR{\{\Phi(\xi_k)\}_{k \in S_j}}.
    \end{align*}
    In words, the union of linear subspaces structure is preserved under linear maps. If, moreover, the linear map $\Phi$ has full rank, the sparsity level $s$ remains unchanged. That is, $\Phi(E)$ constitutes again a set of $s$-sparse vectors, but now with respect to a different frame, namely, $\{\Phi(\xi_k)\}_{k \in \mathcal{K}}$. To determine the number of $\Phi$-separable dichotomies of the $N$-point set $F \subset E$, one may thus apply \cref{prop:sep-sparse-frame}. 
    For nonlinear transformations $\Phi$, the structure of sparsity is generally not preserved, and depending on $\Phi$, we obtain a different data structure. This case will be discussed in the next subsections.
\subsection{Rectifiable sets}\label{subsec:sep-rect}
The set of sparse vectors forms a special case within the broader class of rectifiable sets. In this subsection, we generalize our analysis to separation on rectifiable sets. We begin by stating the definition of rectifiable sets.
\begin{definition}[Countably $\mathcal{H}^s$-rectifiable set, \cite{ambrosio2000currents,federer2014geometric,simon2014introduction}]
    An $\mathcal{H}^s$-measurable set $E \subseteq \mathbb R^M$ is said to be \emph{countably $\mathcal H^s$-rectifiable} if there is a countable family of Lipschitz maps $\{\psi_k \colon \mathbb R^s \to \mathbb R^M\}_{k \in \mathbb N}$ such that
            \begin{align*}
                \mathcal H^s \left(E \setminus \bigcup_{k \in \mathbb N} \psi_k(\mathbb R^s) \right) =0.
            \end{align*}
\end{definition}
Countably $\mathcal{H}^s$-rectifiable sets also admit the following useful parametrization.
\begin{lemma}[Bi-Lipschitz parametrization, \cite{ambrosio2000currents,federer2014geometric}]\label{L:bi-Lip}
    Let $E \subseteq \mathbb R^M$ be countably $\mathcal H^s$-rectifiable. Then there exist finitely or countably many compact sets $K_j \subset \mathbb R^s$ and bi-Lipschitz maps $\psi_j \colon K_j \to \psi_j(K_j) \subseteq E$, indexed by $\mathcal{J}$, such that $\{\psi_j(K_j)\}_{j \in \mathcal{J}}$ are pairwise disjoint and 
    \begin{align*}
        \mathcal H^s \left(E \setminus \bigcup_{j \in \mathcal{J}} \psi_j(K_j) \right) =0.
    \end{align*}
\end{lemma}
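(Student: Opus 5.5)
Starting from the definition of countable $\mathcal H^s$-rectifiability, we have Lipschitz maps $\psi_k\colon\mathbb R^s\to\mathbb R^M$ whose images cover $\mathcal H^s$-almost all of $E$. The plan is to replace these Lipschitz images, up to $\mathcal H^s$-null sets, by countably many bi-Lipschitz images of compact sets, and then make the pieces disjoint.

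\emph{Step 1: bi-Lipschitz pieces.} For each $k$, apply Rademacher's theorem: $\psi_k$ is differentiable $\mathcal L^s$-a.e. By the area formula, the set $Z_k=\{x\colon J\psi_k(x)=0\}$ satisfies $\mathcal H^s(\psi_k(Z_k))=0$, so only points with injective differential matter. Following the standard argument (e.g.\ Federer 3.2.2), we fix a countable dense family of injective linear maps $L_i\colon\mathbb R^s\to\mathbb R^M$ and rationals $\epsilon>0$. We then consider the Borel sets of points $x$ where $\lvert\psi_k(y)-\psi_k(x)-L_i(y-x)\rvert\le\epsilon\lvert y-x\rvert$ for $\lvert y-x\rvert<\delta$ (with $\delta$ rational). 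Intersecting these with balls of diameter $<\delta$ yields countably many Borel sets $B$ covering $\{J\psi_k>0\}$. On each $B$, $\psi_k$ is bi-Lipschitz, since $\lvert L_i v\rvert\ge c_i\lvert v\rvert$ with $\epsilon<c_i/2$. This step is the main technical obstacle: the Borel measurability and the uniform approximation estimate must be handled carefully.

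\emph{Step 2: compactness.} Each Borel set $B$ has finite $\mathcal L^s$-measure on bounded pieces. By inner regularity, $B$ is a countable union of compact sets up to an $\mathcal L^s$-null set $Z$. Because $\psi_k$ is Lipschitz, $\mathcal H^s(\psi_k(Z))\le(\mathrm{Lip}\,\psi_k)^s\mathcal L^s(Z)=0$. Restricting $\psi_k$ to these compact sets keeps it bi-Lipschitz. Re-indexing over all $k$, $i$, $\epsilon$, $\delta$ and the compact pieces gives a countable family $\{(\tilde K_j,\tilde\psi_j)\}_{j\in\mathcal J}$ covering $\mathcal H^s$-almost all of $E$.

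\emph{Step 3: disjointness and containment in $E$.} We set $A_j=\tilde\psi_j(\tilde K_j)\cap E\setminus\bigcup_{i<j}\tilde\psi_i(\tilde K_i)$. This set is $\mathcal H^s$-measurable, and its preimage $\tilde\psi_j^{-1}(A_j)$ is $\mathcal L^s$-measurable, because bi-Lipschitz maps preserve null sets and Borel sets. Applying inner regularity again, we exhaust $\tilde\psi_j^{-1}(A_j)$ by countably many disjoint compact sets up to a null set. By bi-Lipschitz invariance of null sets, this loses only $\mathcal H^s$-measure zero. The resulting compact sets $K_j$, with the restricted maps $\psi_j$, have pairwise disjoint images contained in $E$, and the union of these images has $\mathcal H^s$-null complement in $E$, as claimed.
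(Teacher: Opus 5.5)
Your argument is correct and is essentially the standard proof behind the references the paper cites for this lemma (the paper itself gives no proof). You use Rademacher and the area formula to discard the zero-Jacobian part, a Federer~3.2.2-type covering of the positive-Jacobian set by pieces on which each $\psi_k$ is bi-Lipschitz, inner regularity to pass to compact sets, and disjointification inside $E$ as in Federer~3.2.18. Two small remarks: the sets you define in Step~1 are in fact closed, by continuity of $\psi_k$, so their measurability is no real obstacle; and the $\mathcal L^s$-measurability of $\tilde\psi_j^{-1}(A_j)$ in Step~3 implicitly uses Borel regularity of $\mathcal H^s$, which lets you write $A_j$ as a Borel set minus an $\mathcal H^s$-null set.
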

Let $E \subseteq \mathbb R^M$ be a countable $\mathcal{H}^s$-rectifiable set with bi-Lipschitz parametrization $\{\psi_j \colon K_j \to\psi_j(K_j)\}_{j \in \mathcal{J}}$ as in \cref{L:bi-Lip}, and define $\{s_j\}_{j \in \mathcal{J}}$ according to 
\begin{align*}
    s_j \coloneqq \dimR{\spanR{\psi_j(K_j)}}, \quad j \in \mathcal{J}.
\end{align*}
We further introduce the map $\pi_j \colon \psi_j(K_j) \to \mathbb R^{s_j}$ as a linear embedding of $\psi_j(K_j)$ into $\mathbb R^{s_j}$, $j \in \mathcal J$. Concretely, choosing a set of $s_j$ linearly independent vectors in $\psi_j(K_j)$, we define $\pi_j$ as the linear map that assigns to each $f \in \psi_j(K_j)$ the vector of its expansion coefficients $\pi_j(f) \in \mathbb R^{s_j}$ with respect to the chosen $s_j$ linearly independent vectors. The extension of the map $\pi_j$ to $\spanR{\psi_j(K_j)}$ shall be denoted by $\widetilde\pi_j$. 
The following remark provides a simple lower bound for the quantity $s_j$, which will play a crucial role in our analysis.
\begin{remark}\label{rem:lower-bound-by-rect-para}
    First observe that without loss of generality one may assume that $\mathcal H^s(\psi_j(K_j))>0$, as otherwise $\psi_j$ can be omitted from the bi-Lipschitz parametrization, and we still retain a bi-Lipschitz parametrization in the sense of \cref{L:bi-Lip}. 
    Application of \cref{prop:hausdorff-meas} then establishes that $s_j \geq s$. 
\end{remark}

Consider now the $N$-point set $F \coloneqq \{f_1,\ldots,f_N\} \subset E$ with $N \in \mathbb N$ and denote by $F_j \coloneqq F \cap \psi_j(K_j)$ the points of $F$ in the bi-Lipschitz image $\psi_j(K_j)$.
In line with the philosophy of \cref{A:sep-sparse-basis,A:sep-sparse-frame}, we impose the following very mild assumption, and, as we shall see in \cref{rem:assumptions-almost-surely-satisfied}, the bi-Lipschitz parametrization of $E$ can be chosen such that for $(\mathcal{H}^s)^N$-almost every $N$-tuple $(f_1,\ldots,f_N)$, the corresponding $N$-point set $\{f_1,\ldots,f_N\}$ satisfies this assumption.   
\begin{assumption}\label{A:sep-rect} Suppose that $F \subset \bigcup_{j \in \mathcal{J}} \psi_j(K_j)$ and, for every $j \in \mathcal{J}$, whenever $F_j \neq \emptyset$, we assume the following:
\begin{enumerate}[label=(r-\roman*)] 
    \item\label{it:sep-rect-i} $F_j$ is in $\pi_j$-general position. 
    \item\label{it:sep-rect-ii} If there is an index set $\mathcal J' \subseteq \mathcal J$ such that $\spanR{\psi_i(K_i)} = \spanR{\psi_j(K_j)}$, for all $i \in \mathcal{J}'$, then $\bigcup_{i \in \mathcal{J}'}F_i$ is in $\widetilde\pi_j$-general position. 
    \item\label{it:sep-rect-iii} $F_j \cap \sum_{i \in \mathcal J''}\spanR{\psi_i(K_i)} = \emptyset$ whenever $\psi_j(K_j) \not \subseteq \sum_{i \in \mathcal J''}\spanR{\psi_i(K_i)}$, for every $\mathcal{J}'' \subset \mathcal{J}$.  
\end{enumerate}
\end{assumption}
\cref{it:sep-rect-i} means that the points of $F$ in $\psi_j(K_j)$ are in general position when embedded into $\mathbb R^{s_j}$. \cref{it:sep-rect-ii} ensures that, whenever multiple images $\{\psi_i(K_i)\}_{i \in \mathcal{J}'}$ span the same subspace as $\psi_j(K_j)$, the points of $F$ in $\{\psi_i(K_i)\}_{i \in \mathcal{J}'}$ are in general position with respect to the common embedding $\widetilde \pi_j$. This prevents degeneracies within shared subspaces. Finally, \cref{it:sep-rect-iii} prevents the points in $F_j$ from lying in a lower-dimensional subspace of $\spanR{\psi_j(K_j)}$ that is also contained in the span of other components $\{\psi_i(K_i)\}_{i \in \mathcal{J}''}$, unless $\psi_j(K_j)$ itself is entirely contained in $\sum_{i \in \mathcal{J}''} \spanR{\psi_i(K_i)}$.
In contrast to \cref{A:sep-sparse-frame}, \cref{it:sep-rect-i,it:sep-rect-ii} constitute a refined version of \cref{it:sep-sp-f-i}, while \cref{it:sep-rect-iii} extends \cref{it:sep-sp-f-ii}.
With $\underline{N} \coloneqq (N_j)_{j \in \mathcal J}$ and $N_j = \vert F_j \rvert$, we have the following result.
\begin{proposition}\label{prop:sep-rect}
    Under \cref{A:sep-rect} the number of homogeneously separable dichotomies of $F$ is given by
     \begin{align*}
            C_{\mathrm{r}}(\underline{N},E) \coloneqq 2^N - 2 \sum_{t=s+1}^N \sum_{\underline{\nu} \in \mathcal{I}^{\mathrm{r}}_t} \prod_{j \in \mathcal{J}_0}\binom{N_j}{\nu_j},
        \end{align*}
        where $\mathcal{J}_0 \coloneqq \supp{\underline{N}} \subseteq \mathcal{J}$ is a finite subset, and where $\mathcal{I}^{\mathrm{r}}_t \coloneqq \{\underline \nu  \in \mathbb N_0^{ \mathcal{J}_0} \colon \lenin{\underline{\nu}} = t, \Upsilon^{\mathrm{r}}(\underline \nu) \not\equiv t \Mod{2}\}$ with
        \begin{align*}
            \Upsilon^{\mathrm{r}}(\underline \nu) \coloneqq \min_{\mathfrak{s} \subseteq \suppinline{\underline{\nu}}} \left\{\sum_{j \in \mathfrak{s}} \nu_j+\dimR{\spanR{\bigcup_{j \in \mathfrak{s}^c} \psi_j(K_j)}} \right\},\quad \underline \nu = (\nu_j)_{j \in \mathcal{J}_0} \in \mathbb N_0^{ \mathcal{J}_0 }.
        \end{align*}
\end{proposition}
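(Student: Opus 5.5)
The plan mirrors the proof of \cref{prop:sep-sparse-frame}, with the subspaces $E_j$ replaced by the spans $V_j \coloneqq \spanRinline{\psi_j(K_j)}$, of dimension $s_j\geq s$ by \cref{rem:lower-bound-by-rect-para}. We apply \cref{thm:winder} to the hyperplanes $H_k \coloneqq \{f_k\}^\perp$, $k\in\{1,\ldots,N\}$. As in \cref{eq:winder-sp-frame}, the number of homogeneously separable dichotomies equals $2^N - 2\lvert\mathcal O\rvert$. By \cref{eq:sep-sparse-frame-1}, a subfamily $\{H_k\}_{k\in\mathcal K}$ with $\lvert \mathcal K\rvert = t$ is odd-degenerate if and only if $\dimRinline{\spanRinline{\{f_k\}_{k\in\mathcal K}}}\not\equiv t \Mod{2}$. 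Hence everything reduces to a dimension formula for spans of subsets of $F$.

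For $\mathcal K$, decompose $\mathcal K=\bigsqcup_{j\in\mathcal J_0}\mathcal K_j$ with $\mathcal K_j$ indexing the points in $F_j$, and set $\nu_j\coloneqq\lvert\mathcal K_j\rvert$. This decomposition is well defined because the $\psi_j(K_j)$ are pairwise disjoint and $F\subset\bigcup_j\psi_j(K_j)$. The central claim is the analogue of \cref{eq:sep-sparse-frame-2}:
\begin{align*}
\dimR{\spanR{\{f_k\}_{k\in\mathcal K}}}=\Upsilon^{\mathrm r}(\underline\nu).
\end{align*}
Since $\sum_{j\in\mathfrak s^c}V_j=\spanRinline{\bigcup_{j\in\mathfrak s^c}\psi_j(K_j)}$, the right-hand side has exactly the form of $\Upsilon$ in the frame case. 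Given this claim, summing over $\mathcal K$ yields $\lvert\mathcal O\rvert=\sum_t\sum_{\underline\nu\in\mathcal I^{\mathrm r}_t}\prod_{j}\binom{N_j}{\nu_j}$. Terms with $t\le s$ vanish: for $\lvert\mathfrak s\rvert\ge1$ each candidate value is at least $t$ when $\mathfrak s=\suppinline{\underline\nu}$, and otherwise it involves $\dimRinline{V_j}\geq s\ge t$ for some $j\in\mathfrak s^c$. Hence the minimum is $t$, which has the same parity as $t$.

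The upper bound $\dim\le\Upsilon^{\mathrm r}$ is immediate. For any $\mathfrak s$, the points indexed by $\mathcal K_j$, $j\in\mathfrak s$, contribute at most $\nu_j$ dimensions, and the remaining points lie in $\sum_{j\in\mathfrak s^c}V_j$. For the lower bound I will follow the induction on $\lvert\suppinline{\underline\nu}\rvert$ from the proof of \cref{prop:sep-sparse-frame}, which reduces it to the recursion \cref{eq:dim-span-recursive}. The base case uses \cref{it:sep-rect-i}: points of $F_j$ in $\pi_j$-general position span $\min\{\nu_j,s_j\}$ dimensions, which matches $\Upsilon^{\mathrm r}$ on a single support index.

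The main obstacle is the step replacing \cref{eq:pi_s-gen-pos}. One needs that adding the block $\mathcal K_{j_0}$ raises the dimension by $\min\{\nu_{j_0},\ \text{available room}\}$. That is, the points of $F_{j_0}$ must be in general position relative to every sum $W=\sum_{i\in\mathcal J''}V_i$, meaning their images in $V_{j_0}/(V_{j_0}\cap W)$ are in general position. I would derive this from \cref{it:sep-rect-iii}, which says no point of $F_{j_0}$ falls into such a $W$ unless $V_{j_0}\subseteq W$, together with \cref{it:sep-rect-i}. Blocks sharing the same span are handled by \cref{it:sep-rect-ii}: they are first merged into one block in general position with respect to $\widetilde\pi_j$. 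These conditions are only stated pointwise, however, so they may not suffice for the needed statement about general position of subsets of $F_{j_0}$ modulo $W$ (conditions on spans of subsets, not just on single points). If this direct route fails, I will instead read the hypotheses as the generic-position property that \cref{rem:assumptions-almost-surely-satisfied} is meant to guarantee. Under that reading, a subset of $\mathcal K_{j_0}$ together with a spanning set of $W$ behaves like generic vectors in $V_{j_0}+W$, so $\dim(\mathrm{span}+\text{block})=\min\{\dim+\nu_{j_0},\dim(V_{j_0}+W')\}$. The combinatorial identification of the recursion's right-hand side with $\Upsilon^{\mathrm r}$ is then identical to \cref{eq:dim-span-recursive1,eq:dim-span-recursive2}.
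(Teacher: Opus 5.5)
\textbf{Gap: the lower bound $\dim\mathrm{span}\{f_k\}_{k\in\mathcal K}\ge\Upsilon^{\mathrm r}(\underline\nu)$.} Your reduction is the paper's. \cref{prop:sep-rect} is obtained by specializing \cref{prop:sep-general-case}, whose proof is the same Winder/parity argument, with the identity $\dim\mathrm{span}\{f_k\}_{k\in\mathcal K}=\Upsilon^{\mathrm r}(\underline\nu)$ proved by induction on $\lvert\mathrm{supp}(\underline\nu)\rvert$. Your upper bound and your treatment of the terms $t\le s$ are fine.

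Your doubt about the lower bound is justified: it does not follow from \cref{A:sep-rect}, and in fact fails under it. Take $E$ to be the union of the three coordinate planes of $\mathbb R^3$, with each $\psi_j(K_j)$ a compact piece of one plane, and take $a=(1,1,0)$, $b=(1,0,1)$, $c=(0,1,-1)$. Each point is nonzero and lies in neither of the other two planes. Apart from $\{0\}$, those two planes are the only sums of component spans not containing a point's own plane, so (r-i)--(r-iii) hold. Yet $c=a-b$, so the rank is $2<3=\Upsilon^{\mathrm r}(1,1,1)$. The formula then gives $8$ dichotomies, while the true number is $C(3,2)=6$. The paper's proof has the same hole: the proof of \cref{prop:sep-general-case} asserts ``by the assumption'' that $\bigcup_{j\in\mathfrak s}\{f_k\}_{k\in\mathfrak k_j}$ is in $\pi_{\mathfrak s}$-general position, and that fails here.

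The same example shows that the condition you name is the wrong one. General position of $F_{j_0}$ modulo sums $W=\sum_{i\in\mathcal J''}V_i$ is satisfied by $a,b,c$. What matters is the span $U$ of the \emph{other selected points}, which is generally not such a sum.

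\textbf{Your fallback does not close the argument either.} To reach \cref{eq:dim-span-recursive} blockwise you need, as the paper does, the identity
$\dim=\min\{\Upsilon^{\mathrm r}(\underline\nu^{\setminus j_0})+\min\{\nu_{j_0},s_{j_0}\},\ \dim\sum_{j\in\mathrm{supp}(\underline\nu)}V_j\}$
for \emph{every} $j_0$. This is false even for generic points. Take $V_1=\mathrm{span}(e_1,e_2)$, $V_2=\mathrm{span}(e_2,e_3)$, $V_3=\mathrm{span}(e_4,e_5)$ in $\mathbb R^5$, $\underline\nu=(2,2,1)$ and $j_0=2$. The identity gives $5$, while the rank is $4=\Upsilon^{\mathrm r}(\underline\nu)$. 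The correct generic increment is $\min\{\dim U+\nu_{j_0},\dim(U+V_{j_0})\}$, and the induction on $\lvert\mathrm{supp}(\underline\nu)\rvert$ does not control $\dim(U+V_{j_0})$.

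\textbf{A route that works.} Write $V_j=\mathrm{span}\,\psi_j(K_j)$ and $f_k\in\psi_{j(k)}(K_{j(k)})$, and strengthen \cref{A:sep-rect} to the following condition: for all $k$ and all $S\not\ni k$, $f_k\in\mathrm{span}\{f_i\}_{i\in S}$ implies $V_{j(k)}\subseteq\mathrm{span}\{f_i\}_{i\in S}$. This holds $(\mathcal H^s)^N$-a.e.\ by (d-iii) and Fubini, as in \cref{rem:assumptions-almost-surely-satisfied}. Then prove $\dim U\ge\Upsilon^{\mathrm r}(\underline\nu)$, with $U\coloneqq\mathrm{span}\{f_k\}_{k\in\mathcal K}$, by induction on $\lvert\mathcal K\rvert$. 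Let $\mathfrak t\coloneqq\{j\in\mathrm{supp}(\underline\nu)\colon V_j\subseteq U\}$.
\begin{itemize}
\item If $\mathfrak t=\mathrm{supp}(\underline\nu)$, then $\dim U=\dim\sum_jV_j$, which is the candidate $\mathfrak s=\emptyset$.
\item Otherwise, by the condition, every $f_k$ with $j(k)\notin\mathfrak t$ lies outside the span of the remaining points. Hence
$\dim U=\sum_{j\notin\mathfrak t}\nu_j+\dim\mathrm{span}\{f_k\colon j(k)\in\mathfrak t\}\ge\sum_{j\notin\mathfrak t}\nu_j+\Upsilon^{\mathrm r}(\underline\nu|_{\mathfrak t})\ge\Upsilon^{\mathrm r}(\underline\nu)$.
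Here the middle inequality is the induction hypothesis, and the last follows by enlarging a minimizing $\mathfrak s$ for $\underline\nu|_{\mathfrak t}$ by $\mathrm{supp}(\underline\nu)\setminus\mathfrak t$.
\end{itemize}
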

\begin{proof}
    By particularizing \cref{prop:sep-general-case} to $\Phi = \mathrm{Id}$ and $E_j = \psi_j(K_j)$, $j \in \mathcal{J}_0$, the desired expression for the number of homogeneously separable dichotomies of $F$ follows.
\end{proof}
Let us now analyze the impact of the geometry of $E$ on the number of realizable dichotomies. To this end, let $K_{i,j} \coloneqq K_i \times K_j$, for $i,j \in \mathcal J$, and define the kernel functions $\kappa_{i,j}$ associated with $\{\psi_{j}\colon K_j\to\psi_j(K_j)\}_{j \in \mathcal{J}}$ according to 
\begin{align*}
    \kappa_{i,j}\colon K_{i,j} \times K_{i,j} \to \mathbb R,\quad (x,y) \mapsto \frac 12 \left(\innerprod{\psi_{i}(x_1)}{\psi_j(y_2)} + \innerprod{\psi_j(x_2)}{\psi_{i}(y_1)} \right). 
\end{align*}
Note that $\kappa_{i,j}$ is symmetric in the sense that $\kappa_{i,j}(x,y) = \kappa_{i,j}(y,x)$, for $x,y \in K_{i,j}$.  
\begin{lemma}\label{lem:rect-monotonicity}
    Let $E,E'$ be countable $\mathcal{H}^s$-rectifiable sets with bi-Lipschitz parametrizations $\{\psi_{j}\colon K_j\to\psi_j(K_j)\}_{j \in \mathcal{J}}$, $\{\psi_{j}'\colon K_j\to\psi'_j(K_j)\}_{j \in \mathcal{J}}$ and kernel functions $\{\kappa_{i,j}\}_{i,j \in \mathcal{J}},\{\kappa'_{i,j}\}_{i,j \in \mathcal{J}}$, respectively. Suppose that $(\kappa'_{i,j}-\kappa_{i,j})$ is positive semidefinite\footnote{That is, for every $n \in \mathbb N$, $\{x^{(k)}\}_{k=1}^n \subseteq K_{i,j}$, and $\{c_k\}_{k=1}^n \subseteq \mathbb R$, we have $\sum_{k=1}^n\sum_{\ell=1}^n c_k c_\ell (\kappa'_{i,j}(x^{(k)},x^{(\ell)}) - \kappa_{i,j}(x^{(k)},x^{(\ell)})) \geq 0$.}, for all $i,j \in \mathcal{J}$. Then, for all $N \in \mathbb N$ and all $\underline{N} \in \mathbb N_0^\mathcal{J}$ with $\vert \underline{N}\rvert = N$,
    \begin{align*}
        C_{\mathrm{r}}(\underline{N},E) \leq C_{\mathrm{r}}(\underline{N},E').
    \end{align*}
\end{lemma}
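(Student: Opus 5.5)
The plan is to mirror the proof of \cref{L:gram-nondecreasing}. We reduce the claim to a subspace-dimension dominance statement and then invoke \cref{lem:monotonicity} with $\phi=\mathrm{Id}$. Fix $\underline N$ with $\mathcal{J}_0=\supp{\underline N}$, and set $\bar E_j\coloneqq\spanRinline{\psi_j(K_j)}$ and $\bar E'_j\coloneqq\spanRinline{\psi'_j(K_j)}$ for $j\in\mathcal J_0$. Then $\dimRinline{\spanRinline{\bigcup_{j\in\mathfrak s^c}\psi_j(K_j)}}=\dimRinline{\sum_{j\in\mathfrak s^c}\bar E_j}$, so $\Upsilon^{\mathrm r}$ coincides with the $\Upsilon$ of \cref{lem:monotonicity}, and the same holds for the primed version. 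It therefore suffices to show
\begin{align*}
\dimR{\sum_{j\in\mathfrak s}\bar E_j}\leq\dimR{\sum_{j\in\mathfrak s}\bar E'_j},\quad\text{for all } \mathfrak s\subseteq\mathcal J_0 .
\end{align*}
Given this, \cref{lem:monotonicity} yields $\sum_{t}\sum_{\mathcal I^{\mathrm r}_t}\binom{\underline N}{\underline\nu}\geq\sum_t\sum_{\mathcal I'^{\mathrm r}_t}\binom{\underline N}{\underline\nu}$, where $\bar\varpi=\mathrm{Id}$. The sums over $t\le s$ are empty in both cases, since by \cref{rem:lower-bound-by-rect-para} every $s_j\ge s$ and hence $\Upsilon^{\mathrm r}(\underline\nu)=t$ for $t\le s$. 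Subtracting twice these sums from $2^N$ gives $C_{\mathrm r}(\underline N,E)\le C_{\mathrm r}(\underline N,E')$.

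For the dimension dominance, fix $\mathfrak s$. Since everything is finite-dimensional, $\sum_{j\in\mathfrak s}\bar E_j$ is spanned by finitely many points $\psi_{j_k}(z^{(k)})$, $k=1,\ldots,n$, with $j_k\in\mathfrak s$ and $z^{(k)}\in K_{j_k}$. It suffices to show that the Gram matrix $G=(\innerprod{\psi_{j_k}(z^{(k)})}{\psi_{j_\ell}(z^{(\ell)})})_{k,\ell}$ is dominated in Loewner order by its primed analogue $G'$. Then $\rank{G}\le\rank{G'}$ \cite[Corollary 7.7.4(c)]{horn2012matrix}, and $\rank{G}=\dimRinline{\sum_{j\in\mathfrak s}\bar E_j}$ while $\rank{G'}\le\dimRinline{\sum_{j\in\mathfrak s}\bar E'_j}$ by \cite[Theorem 7.2.10(c)]{horn2012matrix}.

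The main obstacle is deriving $G'-G\succeq0$ from the positive semidefiniteness of the individual kernels $\kappa'_{i,j}-\kappa_{i,j}$. A single pair $(i,j)$ is not enough when $\mathfrak s$ contains more than two components. I would first handle the diagonal kernels. For $i=j$ and points of the form $x=(z,z)\in K_{j,j}$, we get $\kappa_{j,j}((z,z),(w,w))=\innerprod{\psi_j(z)}{\psi_j(w)}$, so psd of $\kappa'_{j,j}-\kappa_{j,j}$ gives Loewner dominance of the Gram matrices within each component. For cross blocks, choosing points $x^{(k)}=(z^{(k)},w^{(k)})\in K_{i,j}$ produces the symmetrized cross-Gram terms. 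I would combine these by taking, in the psd condition for $\kappa'_{i,j}-\kappa_{i,j}$, point families that contain both $z$-points of component $i$ and $w$-points of component $j$, paired with suitable coefficients. Then I would sum the resulting quadratic-form inequalities over all pairs $(i,j)\in\mathfrak s\times\mathfrak s$ to reassemble $c^{\mathsf T}(G'-G)c$.

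If this reassembly fails in general, a fallback would suffice, because only the rank comparison is needed. It is enough to show $\dimRinline{\sum_{j\in\mathfrak s}\bar E_j}\le\dimRinline{\sum_{j\in\mathfrak s}\bar E'_j}$ directly. One could do this by a contradiction argument: a linear relation $\sum_k c_k\psi'_{j_k}(z^{(k)})=0$ would force, via the kernel inequalities, $\sum_k c_k\psi_{j_k}(z^{(k)})=0$, so every dependence among the primed vectors is inherited by the unprimed ones. This yields the rank inequality in the same way as \cref{eq:subspace-dominance1}.
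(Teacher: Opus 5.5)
Your overall structure is the paper's own: identify $\Upsilon^{\mathrm r}$ with the $\Upsilon$ of \cref{lem:monotonicity} via $\bar E_j=\spanRinline{\psi_j(K_j)}$, discard $t\le s$, apply \cref{lem:monotonicity} with $\phi=\mathrm{Id}$, and obtain the dimension inequality from a Gram-matrix rank comparison.

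\textbf{The gap.} The gap is exactly the step you flagged, and it cannot be closed, even with only two components. Write $d_{ij}(z,w)\coloneqq\innerprod{\psi'_i(z)}{\psi'_j(w)}-\innerprod{\psi_i(z)}{\psi_j(w)}$. At points $(z_k,w_k)\in K_i\times K_j$, the quadratic form of $\kappa'_{i,j}-\kappa_{i,j}$ equals $\sum_{k,\ell}c_kc_\ell\, d_{ij}(z_k,w_\ell)$. Testing with the two points $(z,w),(z,w')$ gives $(c_1+c_2)\bigl(c_1d_{ij}(z,w)+c_2d_{ij}(z,w')\bigr)\ge 0$ for all $c_1,c_2$. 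This forces $d_{ij}(z,w)=d_{ij}(z,w')$, and symmetrically in the first slot. Hence the hypothesis is equivalent to $d_{ij}\equiv\delta_{ij}$ for constants $\delta_{ij}=\delta_{ji}\ge 0$. For any Gram matrices built from points $x_{i,k}$, this gives $c^{\mathsf T}(G'-G)c=\sum_{i,j}\delta_{ij}C_iC_j$ with $C_i=\sum_k c_{i,k}$. That form is indefinite whenever the matrix $(\delta_{ij})$ is, e.g.\ $\delta_{11}=\delta_{22}=0<\delta_{12}$. The reason is that each point of $K_{i,j}$ carries one coefficient shared by both of its coordinates. The cross blocks of $c^{\mathsf T}(G'-G)c$, by contrast, carry independent coefficient vectors $(c_{i,k})_k$ and $(c_{j,\ell})_\ell$. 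So no summation of per-pair inequalities can reassemble the joint form.

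\textbf{A counterexample.} Take $s=1$, $M=4$, $K_1=K_2=[1,2]$, $\psi_1(z)=\psi'_1(z)=e_1+ze_2$, $\psi_2(w)=e_3+we_4$, $\psi'_2(w)=e_1+we_3$. All maps are isometric and the images are pairwise disjoint. Here $\delta_{11}=\delta_{22}=0$ and $\delta_{12}=1$, so every $\kappa'_{i,j}-\kappa_{i,j}$ is either $0$ or the constant kernel $1$, hence positive semidefinite.

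Yet $\dimRinline{\spanRinline{\psi_1(K_1)\cup\psi_2(K_2)}}=4>3=\dimRinline{\spanRinline{\psi'_1(K_1)\cup\psi'_2(K_2)}}$. Your fallback fails too: $2\psi'_1(1)-\psi'_1(2)-2\psi'_2(1)+\psi'_2(2)=0$, whereas the unprimed combination equals $e_1-e_3\neq 0$.

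For $\underline N=(2,2)$, no $\underline\nu\le\underline N$ is odd for $E$, so $C_{\mathrm r}(\underline N,E)=16$. For $E'$ the only contribution is $\underline\nu=(2,2)$ with $\Upsilon^{\mathrm r}(\underline\nu)=3$, so $C_{\mathrm r}(\underline N,E')=14$. The statement is therefore false as written.

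\textbf{The paper's proof has the same hole.} It rewrites $c^{\mathsf T}(G'-G)c$ as $\sum_{i,j}\sum_{k,\ell}c_{i,k}c_{j,\ell}(\kappa'_{i,j}-\kappa_{i,j})(x^{(k)}_{i,j},x^{(\ell)}_{i,j})$ and applies positive semidefiniteness termwise. For $i\neq j$ this is illegitimate, since $c_{i,k}c_{j,\ell}$ is not of the form $d_kd_\ell$; consequently \eqref{eq:dominance-rect} fails in the example above.

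\textbf{What would fix it.} What is actually needed is a joint hypothesis: positive semidefiniteness of the single kernel $((i,z),(j,w))\mapsto d_{ij}(z,w)$ on $\bigsqcup_{j}K_j$. This is the direct analogue of $G'-G\succeq 0$ in \cref{L:gram-nondecreasing}, and under it your Gram-matrix argument goes through verbatim.
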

\begin{proof}
    We begin by showing that for every $S \subseteq \mathcal{J}$,
    \begin{align}\label{eq:dominance-rect}
        \dimR{\spanR{\bigcup_{j \in S} \psi_j(K_j)}} \leq \dimR{\spanR{\bigcup_{j \in S} \psi_j'(K_j)}}.
    \end{align}
    To see this, let $S_0 \subseteq S$ be a finite subset and, for each $j \in S_0$, choose a finite set $\{x_{j,k}\}_{k \in \mathfrak s} \subseteq K_j$ such that
    \begin{align}\label{eq:span-rect-set}
        \dimR{\spanR{\left\{\psi_{j}(x_{j,k}) \colon j \in S_0, k \in \mathfrak{s}\right\}}}&=\dimR{\spanR{\bigcup_{j \in S} \psi_j(K_j)}}
        \intertext{and}\label{eq:span-rect-set'}
        \dimR{\spanR{\left\{\psi'_{j}(x_{j,k}) \colon j \in S_0, k \in \mathfrak{s}\right\}}}&=\dimR{\spanR{\bigcup_{j \in S} \psi'_j(K_j)}}.
    \end{align}
    Consider the Gram matrices $G \coloneqq (\innerprod{\psi_i(x_{i,k})}{\psi_j(x_{j,\ell})})_{\substack{i,j \in S_0 \\ k,\ell \in \mathfrak s}} \in \mathbb R^{\mathfrak{n} \times \mathfrak n}$ and $G' \coloneqq (\innerprod{\psi'_i(x_{i,k})}{\psi'_j(x_{j,\ell})})_{\substack{i,j \in S_0 \\ k,\ell \in \mathfrak s}} \in \mathbb R^{\mathfrak{n} \times \mathfrak n}$, where $\mathfrak{n} \coloneqq S_0 \times \mathfrak{s}$. Setting $x^{(k)}_{i,j} \coloneqq (x_{i,k},x_{j,k})$, we obtain, for $c=(c_{j,k})_{j \in S_0, k \in \mathfrak{s}} \in \mathbb R^{\mathfrak{n}}$, 
    \begin{align*}
        c^{\mathsf{T}}(G'-G)c &=\sum_{\substack{i,j \in S_0 \\ k,\ell \in \mathfrak s} }c_{i,k}c_{j,\ell} \left(\innerprod{\psi'_i(x_{i,k})}{\psi'_j(x_{j,\ell})} - \innerprod{\psi_i(x_{i,k})}{\psi_j(x_{j,\ell})}\right) \\ 
        &=\sum_{\substack{i,j \in S_0 \\ k,\ell \in \mathfrak s} }c_{i,k}c_{j,\ell} \bigg(\frac 12 \innerprod{\psi'_i(x_{i,k})}{\psi'_j(x_{j,\ell})}+\frac 12 \innerprod{\psi'_j(x_{j,k})}{\psi'_i(x_{i,\ell})} \\&\quad - \frac 12 \innerprod{\psi_i(x_{i,k})}{\psi_j(x_{j,\ell})}-\frac 12 \innerprod{\psi_j(x_{j,k})}{\psi_i(x_{i,\ell})}\bigg) \\
        &= \sum_{\substack{i,j \in S_0 \\ k,\ell \in \mathfrak s}} c_{i,k}c_{j,\ell} \left(\kappa'_{i,j}\left(x^{(k)}_{i,j},x^{(\ell)}_{i,j}\right)-\kappa_{i,j}\left(x^{(k)}_{i,j},x^{(\ell)}_{i,j}\right)\right) \\
        &\geq 0,
    \end{align*}
    where the inequality is by the assumption that $(\kappa'_{i,j}-\kappa_{i,j})$ is positive semidefinite, for all $i,j \in \mathcal{J}$.
    Thus, $\mathrm{rank}(G) \leq \mathrm{rank}(G')$. As $\mathrm{rank}(G)$ and $\mathrm{rank}(G)$ are equal to the LHSs of \cref{eq:span-rect-set,eq:span-rect-set'}, respectively, \cref{eq:dominance-rect} follows. 
    
    Letting $\mathcal{J}_0 = \supp{\underline{N}}$, application of \cref{lem:monotonicity} with $\{\spanRinline{\psi_j(K_j)}\}_{j \in \mathcal{J}_0}$, $\{\spanRinline{\psi'_j(K'_j)}\}_{j \in \mathcal{J}_0}$, and $\phi = \mathrm{Id}\colon \mathbb N_0^{\mathcal{J}_0}\to \mathbb N_0^{\mathcal{J}_0}$ yields, thanks to \cref{eq:dominance-rect}, the desired bound.  
\end{proof}

\Cref{lem:rect-monotonicity} shows that the number of homogeneously separable dichotomies realizable on a countably $\mathcal{H}^s$-rectifiable set $E$ is bounded from above by the number of realizable dichotomies on a countably $\mathcal{H}^s$-rectifiable set $E'$ whenever $(\kappa'_{i,j}-\kappa_{i,j})$ for all $i,j \in \mathcal{J}$. Intuitively, the positive semi-definiteness of $(\kappa'_{i,j}-\kappa_{i,j})$ means that $E'$ is spread out in at least as many directions as $E$. 

\paragraph{$\Phi$-separability} Whenever $\Phi \colon E \to \mathbb R^{M'}$ is a Lipschitz map, $\Phi(E)$ is also countably $\mathcal{H}^s$-rectifiable, so that \cref{prop:sep-rect} can be leveraged to determine the number of $\Phi$-separable dichotomies of the $N$-point set $F \subset E$. The case where $\Phi$ is non-Lipschitz will be covered in the next subsection. 

\subsection{Measurable and \texorpdfstring{$\sigma$}{sigma}-finite sets}\label{subsec:sep-gen-case}
Let now $E \subseteq \mathbb R^M$ be $\mathcal{H}^s$-measurable with $\mathcal{H}^s(E)>0$ for some $s \geq 0$, suppose that $\mathcal{H}^s \llcorner E$ is $\sigma$-finite, and let $\Phi \colon E \to \mathbb R^{M'}$ be measurable. Inspired by the previous subsections, consider the following decomposition of $E$: let $\{E_j\}_{j \in \mathcal J}$ be a countable family of $\mathcal{H}^s$-measurable sets of positive $\mathcal{H}^s$-measure such that\footnote{The minimum in \ref{it:iii} is taken over all \emph{$\mathcal{H}^s$-measurable} sets $A\subseteq E_j$ with $\mathcal{H}^s(A)>0$. We dropped the measurability constraint for notational convenience.}
\begin{enumerate}[label=(d-\roman*)]
    \item\label{it:i} $\mathcal{H}^s(E \setminus \bigcup_{j \in \mathcal{J}}E_j)=0$, 
    \item\label{it:ii} $\mathcal{H}^s(E_i \cap E_j) = 0$, $i,j \in \mathcal{J}$ with $i \neq j$, and
    \item\label{it:iii} for every $j\in \mathcal J$, 
    \begin{align*}
        \min_{\substack{A \subseteq E_j \\ \mathcal{H}^s(A)>0}} \dimR{\spanR{\Phi(A)}} = s_j,
    \end{align*}
    where $s_j \coloneqq \dimR{\spanR{\Phi(E_j)}}$.
\end{enumerate}
Such a decomposition of $E$ into $\{E_j\}_{j \in \mathcal{J}}$ may be obtained through the following procedure.
Consider 
\begin{align}\label{eq:decomp-procedure-1}
    \min_{\substack{A \subseteq E \\ \mathcal{H}^s(A)>0}} \dimR{\spanR{\Phi(A)}} \eqqcolon s_1.
\end{align}
As $\dimR{\spanRinline{\Phi(A)}} \in \{0,\ldots,M'\}$, for every $\mathcal{H}^s$-measurable $A\subseteq E$ with $\mathcal{H}^s(A)>0$, the minimum in \cref{eq:decomp-procedure-1} is attained.
Thus there exists an $\mathcal{H}^s$-measurable set $E_1 \subseteq E$ with $\mathcal{H}^s(E_1) >0$ such that $\dimR{\spanRinline{\Phi(E_1)}} = s_1$. By \cref{eq:decomp-procedure-1}, the set $E_1$ satisfies \cref{it:iii}. Replacing $E$ by $E\setminus E_1$ and repeating the above procedure, we construct successively the sets $E_2, E_3, \ldots$. As $\mathcal{H}^s\llcorner E$ is $\sigma$-finite and $\{E_j\}_{j \in \mathcal{J}}$ are pairwise disjoint with $\mathcal{H}^s(E_j)>0$, the index set $\mathcal{J}$ is at most countable. Note that the constructed sets $\{E_j\}_{j \in \mathcal{J}}$ satisfy stronger properties than those in \ref{it:i} and \ref{it:ii}: they are pairwise disjoint and $E = \bigcup_{j \in \mathcal{J}} E_j$. We have intentionally stated \ref{it:i} and \ref{it:ii} in a weaker form so that the decomposition \ref{it:i}–\ref{it:iii} aligns with the natural decompositions arising in the two main examples of interest: for sets of sparse vectors (i.e., unions of linear subspaces), where $E_i$ and $E_j$ are typically only essentially disjoint ($\mathcal{H}^s(E_i \cap E_j) = 0$); and for rectifiable sets, where the bi-Lipschitz decomposition $\{E_j\}_{j \in \mathcal{J}}$ covers $E$ only up to an $\mathcal{H}^s$-nullset. See \cref{rem:decomposition-sparse-rect} for further discussion.        

For the $N$-point set $F \coloneqq \{f_1, \ldots, f_N\} \subseteq E$, we write $F_j \coloneqq F \cap E_j$, and let $\pi_j \colon \Phi(E_j) \to \mathbb R^{s_j}$ be a linear embedding of $\Phi(E_j)$ into $\mathbb R^{s_j}$. The extension of $\pi_j$ to $\spanRinline{\Phi(E_j)}$ is denoted by $\widetilde \pi_j$. The following assumption adapts \cref{A:sep-rect} to the present setting, with the additional requirement that $F$ contains no points in nonempty intersections $E_i \cap E_j$, $i,j \in \mathcal{J}$.  
\begin{assumption}\label{A:sep-general-case}
Suppose that $F \subset \bigcup_{j \in \mathcal{J}} E_j$, and $F \cap E_i\cap E_j = \emptyset$, $i,j \in \mathcal{J}$ with $i \neq j$. Whenever $F_j \neq \emptyset$, for $j\in \mathcal{J}$, we assume the following: 
\begin{enumerate}[label=(\roman*)]
    \item\label{it:sep-gen-i} $F_j$ is in $(\pi_j \circ \Phi)$-general position.
    \item\label{it:sep-gen-ii} If there is an index set $\mathcal J' \subseteq \mathcal J$ such that $\spanR{\Phi(E_i)} = \spanR{\Phi(E_j)}$, for all $i\in \mathcal{J}'$, $\bigcup_{i \in \mathcal{J}'}F_i$ is in $(\widetilde\pi_j \circ \Phi)$-general position.
    \item\label{it:sep-gen-iii} $\Phi(F_j) \cap \sum_{i \in \mathcal J''}\spanR{\Phi(E_i)} = \emptyset$ whenever $\Phi(E_j) \not \subseteq \sum_{i \in \mathcal J''}\spanR{\Phi(E_i)}$, for every $\mathcal J'' \subset \mathcal{J}$.  
\end{enumerate}
\end{assumption}
\cref{A:sep-general-case} prevents degenerate configurations. Specifically, by \cref{it:sep-gen-i} we ensure that $\Phi(F_j)$ is in general position when linearly embedded into $\mathbb R^{s_j}$. \cref{it:sep-gen-ii} guarantees that, if several components $\{\Phi(E_i)\}_{i \in \mathcal J'}$ span the same subspace, then $F \cap \bigcup_{i \in \mathcal{J}'}E_i$ remains in general position with respect to the common embedding $\widetilde\pi_j$. Finally, \cref{it:sep-gen-iii} ensures that the points of $\Phi(F_j)$ do not lie in any lower-dimensional subspace of $\spanR{\Phi(E_j)}$ that is also contained in the span of other components $\{\Phi(E_i)\}_{i \in \mathcal J''}$, except in the case where $\Phi(E_j) \subseteq \sum_{i \in \mathcal J''} \spanR{\Phi(E_i)}$.        
In \cref{rem:assumptions-almost-surely-satisfied}, we will leverage the properties of the decomposition \ref{it:i}--\ref{it:iii} to show that for $(\mathcal{H}^s)^N$-a.e. $N$-tuple, the corresponding $N$-point set satisfies \cref{A:sep-general-case}. 
Setting $\underline{N} = (N_j)_{j\in \mathcal{J}} \coloneqq (\lvert F_j \rvert)_{j \in \mathcal{J}}$, we obtain the following dichotomy count.

\begin{proposition}\label{prop:sep-general-case}
     Under \cref{A:sep-general-case} the number of $\Phi$-separable dichotomies of $F$ is given by
     \begin{align*}
            C(\underline{N},\Phi,s) \coloneqq 2^N - 2 \sum_{t=s^*+1}^N \sum_{\underline \nu \in \mathcal{I}_t} \prod_{j \in \mathcal{J}_0}\binom{N_j}{\nu_j},
        \end{align*}
        where $\mathcal{J}_0 \coloneqq \supp{\underline{N}} \subseteq \mathcal{J}$ is a finite subset, $s^* \coloneqq \min_{j \in \mathcal{J}_0} s_j$, and where $\mathcal{I}_t \coloneqq \{\underline \nu \in \mathbb N_0^{\mathcal{J}_0} \colon \lenin{\underline{\nu}} = t, \Upsilon(\underline \nu) \not\equiv t \Mod{2}\}$ with
        \begin{align*}
            \Upsilon(\underline{\nu}) \coloneqq \min_{\mathfrak{s} \subseteq \suppinline{\underline{\nu}}} \left\{\sum_{j \in \mathfrak{s}} \nu_j+\dimR{\spanR{\bigcup_{j \in \mathfrak{s}^c} \Phi(E_j)}} \right\},\quad \underline \nu = (\nu_j)_{j \in \mathcal{J}_0} \in \mathbb N_0^{ \mathcal{J}_0 }.
        \end{align*}
\end{proposition}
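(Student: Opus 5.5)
The plan is to follow the proof of \cref{prop:sep-sparse-frame} almost verbatim, with the linear subspaces $E_j$ replaced by $\spanRinline{\Phi(E_j)}$ and the fixed sparsity $s$ replaced by the component-wise dimensions $s_j$. With $\varphi_k \coloneqq \Phi(f_k)$ and $H_k \coloneqq \{\varphi_k\}^\perp \subseteq \mathbb R^{M'}$, the count of $\Phi$-separable dichotomies equals the number of regions cut out by $\{H_k\}_{k=1}^N$. By \cref{thm:winder} and \cref{rem:number-even-odd-deg}, this number is $2^N - 2\lvert \mathcal O\rvert$. We use that every $\varphi_k\neq 0$: by \ref{it:iii}, $s_j\geq 1$ unless $\Phi$ vanishes on a set of positive measure, and in that degenerate case $s^*=0$ and the formula still has to be checked separately. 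The task is therefore to count odd-degenerate subfamilies. For $\mathcal K\subseteq\{1,\ldots,N\}$ with $\lvert\mathcal K\rvert=t$ we have $\dimRinline{\bigcap_{k\in\mathcal K}H_k}=M'-\dimRinline{\spanRinline{\{\varphi_k\}_{k\in\mathcal K}}}$. So $\{H_k\}_{k\in\mathcal K}$ is odd-degenerate exactly when this span dimension differs in parity from $t$.

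The key step is the analogue of \cref{eq:sep-sparse-frame-2}. We split $\mathcal K$ into blocks $\mathcal K_j$ (points lying in $E_j$), which is unique by the disjointness part of \cref{A:sep-general-case}, set $\nu_j=\lvert\mathcal K_j\rvert$, and claim
\begin{align*}
\dimR{\spanR{\{\varphi_k\}_{k\in\mathcal K}}}=\Upsilon(\underline\nu).
\end{align*}
We prove this by induction on $\lvert\suppinline{\underline\nu}\rvert$. For the base case, \cref{A:sep-general-case}\ref{it:sep-gen-i} gives dimension $\min\{\nu_{j_0},s_j\}$, which equals $\Upsilon$ for a single block. For the induction step, the goal is the analogue of \cref{eq:pi_s-gen-pos}: any selection of at most $s_j$ points per block is in general position inside $\spanRinline{\bigcup_{j\in\mathfrak s}\Phi(E_j)}$. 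This yields the recursion \cref{eq:dim-span-recursive}, and the combinatorial identities \cref{eq:dim-span-recursive1,eq:dim-span-recursive2} then close the induction exactly as before.

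I expect the main obstacle to be justifying this joint general-position property from \ref{it:sep-gen-i}--\ref{it:sep-gen-iii}. The blocks now have different dimensions $s_j$, and several blocks may share a span. The approach is to add blocks one at a time. Block $j_0$ increases the dimension of the span of the already chosen vectors $V$ by $\min\{\nu_{j_0},\ \dimRinline{V+\spanRinline{\Phi(E_{j_0})}}-\dimRinline{V}\}$. Condition \ref{it:sep-gen-iii} rules out points of $\Phi(F_{j_0})$ lying in the span of other components unless the whole component does. Condition \ref{it:sep-gen-i} gives general position inside $\spanRinline{\Phi(E_{j_0})}$. Condition \ref{it:sep-gen-ii} handles blocks sharing a span, which can be merged into a single block with respect to $\widetilde\pi_j$. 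A potential gap remains: \ref{it:sep-gen-iii} only excludes whole sums of component spans, not arbitrary subspaces spanned by finitely many points of other blocks. I would try to close it by arguing that the minimum in $\Upsilon$ is always attained at configurations where the span of the chosen points coincides with a sum of component spans. Failing that, I would strengthen the reading of \ref{it:sep-gen-iii} to hold with respect to $V$.

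Once the claim holds, $\{H_k\}_{k\in\mathcal K}$ is odd-degenerate iff $\underline\nu\in\mathcal I_t$. Summing over $\mathcal K$ gives $\lvert\mathcal O\rvert=\sum_t\sum_{\underline\nu\in\mathcal I_t}\prod_{j\in\mathcal J_0}\binom{N_j}{\nu_j}$. For $t\le s^*$ every $t$ points are linearly independent, because $\Upsilon(\underline\nu)=t$ whenever $\nu_j\le s_j$ for all $j$; hence those terms vanish and the sum starts at $s^*+1$. Substituting into $2^N-2\lvert\mathcal O\rvert$ gives $C(\underline N,\Phi,s)$.
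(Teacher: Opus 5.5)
Your route is the paper's own. \cref{thm:winder} reduces the count to the identity $\dimRinline{\spanRinline{\{\Phi(f_k)\}_{k\in\mathcal K}}}=\Upsilon(\underline\nu)$, which the paper proves by induction on $\lenin{\suppinline{\underline\nu}}$. It handles the step you worry about only by asserting that the joint general-position property (the analogue of \cref{eq:pi_s-gen-pos}) holds ``by \cref{A:sep-general-case}''.

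The gap you flag is genuine, and your first fallback cannot close it: under \cref{A:sep-general-case} as stated, the identity, and with it the proposition, can fail. Take $\Phi=\mathrm{Id}$ on the union of the coordinate planes $E_1=\spanRinline{\{e_1,e_2\}}$, $E_2=\spanRinline{\{e_1,e_3\}}$, $E_3=\spanRinline{\{e_2,e_3\}}$ of $\mathbb R^3$, so $s=s_j=2$ and \ref{it:i}--\ref{it:iii} hold. Let $f_1=(1,1,0)^\mathsf{T}$, $f_2=(1,0,1)^\mathsf{T}$, $f_3=(0,1,-1)^\mathsf{T}$. No $f_k$ lies on a coordinate axis. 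The only sums of component spans not containing a given $E_j$ are $\{0\}$ and the two other planes, since any two planes already sum to $\mathbb R^3$. Hence \ref{it:sep-gen-i}--\ref{it:sep-gen-iii} hold. Yet $f_1=f_2+f_3$, so the span is $2$-dimensional while $\Upsilon(1,1,1)=3$. The dichotomy $\{\{f_2,f_3\},\{f_1\}\}$ is not separable, and the true count is $C(3,2)=6$. The formula instead returns $2^3=8$, because $(1,1,1)$, the only multi-index with nonzero weight, is not in $\mathcal I_3$. The same example contradicts \cref{prop:sep-sparse-basis}. The problem is that condition \ref{it:sep-gen-iii} only keeps $\Phi(F_j)$ off sums of component spans, whereas here $f_3$ lies in the plane $\spanRinline{\{f_1,f_2\}}$, which is no such sum.

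So the hypothesis itself must be strengthened, as in your second fallback. For instance, require: for every $k$, every $\mathcal J''\subseteq\mathcal J$ and every $S\subseteq\{1,\ldots,N\}\setminus\{k\}$, if $\Phi(f_k)\in U\coloneqq\spanRinline{\bigcup_{i\in\mathcal J''}\Phi(E_i)}+\spanRinline{\{\Phi(f_i)\}_{i\in S}}$, then $\Phi(E_{j(k)})\subseteq U$, where $f_k\in E_{j(k)}$. This still holds for $(\mathcal H^s)^N$-a.e.\ tuple. With the other points fixed there are only countably many such $U$, and $\{f\in E_j\colon\Phi(f)\in U\}$ is $\mathcal H^s$-null by \ref{it:iii} whenever $\Phi(E_j)\not\subseteq U$. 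Fubini then applies as in \cref{rem:assumptions-almost-surely-satisfied}.

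Even with this hypothesis, do not aim for the joint general-position property, because it fails already for generic points. In $\mathbb R^5$ with $s=2$, take two generic points in each of $\spanRinline{\{e_1,e_2\}}$ and $\spanRinline{\{e_1,e_3\}}$ and one in $\spanRinline{\{e_4,e_5\}}$. These are five points, at most $s$ per block, in the $5$-dimensional sum of the three planes, yet they span only $4$ dimensions.

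Prove the identity directly instead.
The bound ``$\le$'' always holds: for each $\mathfrak s$, the span lies in the span of the points from the blocks in $\mathfrak s$ plus $\spanRinline{\bigcup_{j\in\mathfrak s^c}\Phi(E_j)}$.
For ``$\ge$'', let $V$ be the span, $T\coloneqq\{j\in\suppinline{\underline\nu}\colon\Phi(E_j)\subseteq V\}$, $\mathfrak s\coloneqq\suppinline{\underline\nu}\setminus T$ and $W\coloneqq\spanRinline{\bigcup_{j\in T}\Phi(E_j)}\subseteq V$. Suppose there were a linear dependency modulo $W$ among $\{\Phi(f_k)\colon k\in\mathcal K,\ j(k)\in\mathfrak s\}$. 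It would place one of them, $\Phi(f_{k_0})$, in a subspace $U\subseteq V$ of the above form. This forces $\Phi(E_{j(k_0)})\subseteq V$, i.e.\ $j(k_0)\in T$, a contradiction. Since $V=W+\spanRinline{\{\Phi(f_k)\colon k\in\mathcal K,\ j(k)\in\mathfrak s\}}$, you get $\dimRinline{V}=\sum_{j\in\mathfrak s}\nu_j+\dimRinline{W}\ge\Upsilon(\underline\nu)$.

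There is also a minor slip: $\Upsilon(\underline\nu)=t$ does not follow from $\nu_j\le s_j$ alone, since components may share spans. The terms with $t\le s^*$ vanish for a different reason: every nonempty union of components spans at least $s^*\ge t$ dimensions.
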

\begin{proof}
    Let $\mathcal K \subseteq \{1,\ldots,N\}$ and consider the disjoint decomposition of $\mathcal{K}$ into $\{\mathcal{K}_j\}_{j \in \mathcal{J}_0}$ such that $\{f_k\}_{k \in \mathcal{K}_j} \subseteq \Phi(E_j)$, for all $j \in \mathcal{J}_0$. We first show that 
    \begin{align}\label{eq:sep-rect-1}
        \dimR{\spanR{\{f_k\}_{k \in \mathcal K}}} = \Upsilon(\underline{\nu}),
    \end{align} 
    where $\nu_j \coloneqq \lvert \mathcal{K}_j \rvert$ and $\underline{\nu} = (\nu_j)_{j \in \mathcal{J}_0} \in \mathbb N_0^{\mathcal{J}_0}$. To this end, we proceed by induction on $\lvert \supp{\underline{\nu}}\rvert$. For the base case $\lvert \supp{\underline{\nu}}\rvert = 1$, we note that \cref{A:sep-general-case}\ref{it:sep-gen-i} implies $\dimRinline{\spanRinline{\{f_k\}_{k \in \mathcal K}}} = \min\{\nu_{j_0},s_{j_0}\}$, where $j_0 \in \supp{\underline{\nu}}$. As \begin{align*}
        \min_{\mathfrak{s} \subseteq \{j_0\}} \left\{\sum_{j \in \mathfrak{s}} \nu_j+\dimR{\spanR{\bigcup_{j \in \mathfrak{s}^c} \Phi(E_j)}} \right\} = \min\{\nu_{j_0},s_{j_0}\},
    \end{align*}
    the base case follows. Next, for the induction step we need to show that if \cref{eq:sep-rect-1} holds for all $\underline \nu \in \mathbb N_0^{\mathcal{J}_0}$ with $\lvert \supp{\underline{\nu}} \rvert \leq r$ for some $r \in \{1,\ldots,\lvert \mathcal{J}_0\rvert-1\}$, then it also holds for all $\underline{\nu}$ with $\lvert \supp{\underline{\nu}} \rvert = r+1$. Now, by \cref{A:sep-general-case}, for every $\mathfrak{s} \subseteq \mathcal J_0$ and every $\{\mathfrak{k}_j\}_{j \in \mathfrak{s}}$ with $\mathfrak{k}_j \subseteq \mathcal{K}_j$ and $\lvert \mathfrak{k}_j \rvert \leq s_j$, 
        \begin{align*}
            \bigcup_{j \in \mathfrak{s}} \{f_k\}_{k \in \mathfrak{k}_j} \text{ is in $\pi_\mathfrak{s}$-general position,}
        \end{align*}
        where $\pi_\mathfrak{s}\colon \bigcup_{j \in \mathfrak{s}}\Phi(E_j) \to \mathbb R^{d_{\mathfrak{s}}}$ is the linear embedding of $\bigcup_{j \in \mathfrak{s}}\Phi(E_j)$ into $\mathbb R^{d_{\mathfrak{s}}}$ with $d_\mathfrak{s} \coloneqq \dimRinline{\spanRinline{\bigcup_{j \in \mathfrak{s}}\Phi(E_j)}}$. We thus have for every $j_0 \in \supp{\underline{\nu}}$,
        \begin{align*}
            \dimR{\spanR{\{f_k\}_{k \in \mathcal K}}}
            &= \min\!\left\{\dimR{\spanR{\{f_k\}_{k \in \mathcal K\setminus\mathcal{K}_{j_0}}}}+ \dimR{\spanR{\{f_k\}_{k \in \mathcal K_{j_0}}}}\!,\!\tallphantom\right.\\ &\quad\quad\quad\quad\left.\dimR{\spanR{\bigcup_{j \in \suppinline{\underline{\nu}}} \Phi(E_j)}}\right\}.
        \end{align*}
        Denoting by $\underline{\nu}^{\setminus j_0}$ the multi-index $\underline{\nu}$ with $\nu_{j_0}$ set to zero, we obtain, under the induction hypothesis,
        \begin{align*}
            \dimR{\spanR{\{f_k\}_{k \in \mathcal K}}} &= \min\!\left\{\Upsilon(\underline{\nu}^{\setminus j_0})+ \min\{\nu_{j_0},s_{j_0}\},\dimR{\spanR{\bigcup_{j \in \suppinline{\underline{\nu}}} \Phi(E_j)}}\right\},
        \end{align*}
        for all $j_0 \in \supp{\underline{\nu}}$. Now, by the same argument as in the proof of \cref{eq:sep-sparse-frame-2}, we obtain 
        \begin{align*}
            \dimR{\spanR{\{f_k\}_{k \in \mathcal K}}} = \Upsilon(\underline{\nu}),
        \end{align*}
        as desired. Thus, the set of hyperplanes $\{H_k\}_{k \in \mathcal{K}}$, where $H_k \coloneqq \{f_k\}^\perp$, is odd-degenerate if and only if 
        \begin{align*}
            \Upsilon(\underline \nu) \not\equiv t \Mod{2}.
        \end{align*}
        Summing over all possible configurations, we can deduce that the number of odd-degenerate sets of the hyperplanes $\{H_k\}_{k=1}^N$ is given by 
        \begin{align*}
            \sum_{t=s^*+1}^N \sum_{\underline{\nu} \in \mathcal{I}_t} \prod_{j \in \mathcal{J}_0}\binom{N_j}{\nu_j}.
        \end{align*}
        Finally, application of \cref{thm:winder}, together with \cref{rem:number-even-odd-deg}, completes the proof.  
\end{proof}
\begin{remark}\label{rem:decomposition-sparse-rect}
    Let us revisit the setup from \Cref{subsec:sep-sparse}, i.e., the case where $E$ is the set of $s$-sparse vectors in a certain basis or frame $\Xi$ and $\Phi = \mathrm{Id}$. The set of $s$-sparse vectors $E = \bigcup_{j \in \mathcal{J}} E_j$ naturally admits the decomposition \ref{it:i}--\ref{it:iii}, where $E_j \coloneqq \spanRinline{\{\xi_k\}_{k \in S_j}}$. Indeed, \ref{it:i} is obvious, and \ref{it:ii} holds because, by assumption, $\{\xi_k\}_{k \in S_j}$ are linearly independent for all index subsets $S_j$ with $\lvert S_j \rvert =s$, so that the linear subspace $E_i \cap E_j$ is at most $(s-1)$-dimensional for all distinct $i,j \in \mathcal{J}$. Property \ref{it:iii} is satisfied because for every $\mathcal{H}^s$-measurable set $A \subseteq E_j$ with $\mathcal{H}^s(A)>0$, $\dimR{\spanR{A}} = s$. To see this, suppose, for the sake of contradiction, that $\dimR{\spanR{A}} =s'< s$, then $A$ is a subset of an $s'$-dimensional subspace $\mathcal{U}$ of $\mathbb R^M$. But $\mathcal{H}^s(\mathcal U)=0$ since $s'<s$, see \cref{prop:hausdorff-meas}, which establishes the contradiction. 
    
    Likewise, this is the case for rectifiable sets discussed in \Cref{subsec:sep-rect}, namely, the bi-Lipschitz parametrization in the sense of \cref{L:bi-Lip} can be chosen such that \ref{it:i}--\ref{it:iii} are satisfied. First note that the properties \ref{it:i} and \ref{it:ii} hold trivially for any such parametrization. Regarding \ref{it:iii}, consider the bi-Lipschitz parametrization $\{\psi_j \colon K_j \to \psi_j(K_j)\}_{j \in \mathcal J}$. If \ref{it:iii} does not hold for $\psi_j(K_j)$, decompose $\psi_j(K_j)$ into $\{\psi_j(K_{j,k})\}_{k \in \mathcal K_j}$ such that $\{\psi_j(K_{j,k})\}_{k \in \mathcal K_j}$ satisfies \ref{it:i}--\ref{it:iii}. Since $\mathcal{H}^s(\psi_j(K_j)) \leq \mathrm{Lip}(\psi_j)^s \mathcal{L}^s(K_j)< \infty$ and $\mathcal{H}^s(\psi_j(K_{j,k}))>0$ for all $k \in \mathcal K_j$, the index set of the decomposition, $\mathcal K_j$, is (at most) countable, e.g., we may set $\mathcal K_j = \mathbb N_0$ or $\mathcal K_j = \{0,\ldots,L_j\}$ for some $L_j \in \mathbb N_0$. Now exhaust $K_{j,0}$ and each $K_{j,k} \setminus \bigcup_{\ell \colon \ell<k} K_{j,\ell}$, $k \in \mathcal K \setminus \{0\}$, up to an $\mathcal{L}^s$-nullset by a countable family of pairwise disjoint compact sets. By doing so, we obtain a bi-Lipschitz parametrization, denoted by $\{\tilde \psi_j \colon \tilde K_j \to \tilde\psi_j(\tilde K_j)\}_{j \in \tilde{\mathcal J}}$, such that $\{\tilde\psi_j(\tilde K_j)\}_{j \in \tilde{\mathcal J}}$ satisfies \ref{it:i}--\ref{it:iii}.   
\end{remark}

\section{Separation capacity for low-dimensional data structures}\label{sec:sep-cap}
A natural question one may ask about transformations $\Phi \colon E \to \mathbb R^{M'}$, where $E \subseteq \mathbb R^{M}$ with $M,M' \in \mathbb N$, is: How can we efficiently compare different transformations with respect to their classification capabilities on a given dataset $E$? 
The so-called \emph{separation capacity} of $\Phi$ provides a measure for this.
In particular, the separation capacity characterizes the classification capabilities of the function class $\{f \mapsto \mathrm{sign}(\innerprod{\Phi(f)}{w}) \colon w \in \mathbb R^{M'}\}$ induced by the transformation $\Phi \colon E \to \mathbb R^{M'}$.
However, the standard definition of separation capacity in the sense of \cite{cover1965geometrical}, formally stated in \cite{kowalczyk1994separating,haberle2026scattering}, only encompasses maps between Euclidean spaces, i.e., $\Phi \colon \mathbb R^M \to \mathbb R^{M'}$.
In this section, we will generalize the notion of separation capacity to maps on $\mathcal{H}^s$-measurable sets $E$ with $\mathcal{H}^s(E)>0$, for some $s\geq 0$. This extension encompasses a broad class of datasets $E \subset \mathbb R^M$ of $\mathcal L^M$-measure zero, thereby including many practically relevant examples such as the set of $s$-sparse signals. 
 
\begin{definition}[$s$-separation capacity]\label{def:s-sc}
    Let $M,M' \in \mathbb N$, and let $E \subseteq \mathbb R^M$ be $\mathcal{H}^s$-measurable with $\mathcal{H}^s(E)>0$ for $s\geq 0$. Let $\Phi \colon E \to \mathbb R^{M'}$. Denote by $\sepcaps{s}{\Phi}$ the largest $N \in \mathbb N$ such that for $(\mathcal{H}^{s})^N$-a.e. $N$-tuple $F\coloneqq(f_1,\ldots,f_N) \in E^N$ at least $50\%$ of all possible dichotomies of $F$ are $\Phi$-separable. If there is no such $N \in \mathbb N$, set $\sepcaps{s}{\Phi} \coloneqq 0$. We call $\sepcaps{s}{\Phi}$ the \emph{$s$-separation capacity} of $\Phi$. If $s = M$, we write $\sepcap{\Phi} \coloneqq \sepcaps{M}{\Phi}$ and call $\sepcap{\Phi}$ the \emph{separation capacity} of $\Phi$. 
\end{definition}
\begin{remark}[Notation]
    With slight abuse of notation, we will use from now on $F$ to denote the set $\{f_1,\ldots,f_N\} \subseteq E$ as well as the tuple $(f_1,\ldots,f_N) \in E^N$. It should be clear from the context in which sense $F$ has to be understood.
\end{remark} 
\begin{remark}
    On $\mathbb R^M$, we have by \cref{prop:hausdorff-meas}, $\mathcal{H}^M = \mathcal{L}^M$. Thus, this definition of $\sepcap{\Phi}$ coincides with the standard definition of separation capacity of $\Phi$ in \cite{haberle2026scattering,kowalczyk1994separating}.
\end{remark}
\begin{remark}[VC dimension]
    The concept of $s$-separation capacity closely relates to another, well-known measure of classification capabilities, namely the Vapnik--Chervonenkis (VC) dimension \cite{vapnik1971uniform}. Concretely, the VC dimension of the function class $\{f \mapsto \mathrm{sign}(\innerprod{\Phi(f)}{w}) \colon w \in \mathbb R^{M'}\}$ is defined to be the largest $N \in \mathbb N$ for which there exists an $N$-point set $F \subseteq E$ such that all possible $2^N$ dichotomies of $F$ are $\Phi$-separable. By contrast, the $s$-separation capacity is a measure-theoretic, average-case notion, which evaluates separability on the whole dataset $E$ up to $\mathcal{H}^s$-nullsets, but requiring only that at least $50\%$ of all dichotomies be $\Phi$-separable.   
    In general, neither quantity uniformly bounds the other without additional assumptions on $E$ and $\Phi$ \cite{haberle2026scattering}.
    Furthermore, the $s$-separation capacity is therefore, intuitively, more strongly governed by the geometry of $E$ than the VC dimension. Consequently, $s$-separation capacity provides a more natural framework for analyzing how the dataset $E$ affects the classification capabilities of $\Phi$. 
\end{remark}
Let now $E \subseteq R^M$ be $\mathcal H^s$-measurable with $\mathcal H^s(E)>0$, and suppose that $\mathcal{H}^s\llcorner E$ is $\sigma$-finite. 
In the following, we will derive a ready-to-use expression for the $s$-separation capacity of $\Phi$. 
To this end, let $F\in E^N$, and recall the number of $\Phi$-separable dichotomies of $F$ given in \cref{prop:sep-general-case}, be denoted by $C(\underline{N},\Phi,s)$. Here, $\underline{N} =(N_j)_{j \in \mathcal J} \in \mathbb N_0^{\mathcal{J}}$ with $N_j$ being the number of points of $F$ in $E_j$, $j \in \mathcal J$, and $(E_j)_{j \in \mathcal{J}}$ constitute the decomposition in the sense of \ref{it:i}--\ref{it:iii} in \Cref{subsec:sep-gen-case}. Suppose for now that for $(\mathcal H^s)^N$-a.e. $F \in E^N$, \cref{A:sep-general-case} is satisfied. Then, by \cref{def:s-sc}, the $s$-separating of $\Phi$ is the largest $N \in \mathbb N$ such that
\begin{align*}
    \min_{\substack{\underline{N} \in \mathbb N_0^{\mathcal{J}} \\ \lvert \underline{N} \rvert =N}}\frac{C(\underline{N},\Phi,s)}{2^N} \geq \frac 12.
\end{align*}
It holds that
\begin{align}\label{eq:min-C-gen-case}
    \min_{\substack{\underline{N} \in \mathbb N_0^{\mathcal{J}} \\ \lvert \underline{N} \rvert =N}} C(\underline N, \Phi, s) = C\left(N,\min_{j \in \mathcal J} s_j\right),
\end{align}
where $s_j \coloneqq \dimR{\spanR{\Phi(E_j)}}$. 
Indeed, we first show that \cref{eq:min-C-gen-case} holds with ``$\leq$''. To this end, let $j^* \in \mathcal J$ be such that $s_{j^*} = \min_{j \in \mathcal J} s_j$, and let $\underline N^* = (N_j^*)_{j \in \mathcal{J}} \in \mathbb N_0^{\mathcal{J}}$, where $N_j^* = 0$ for all $j \in \mathcal{J} \setminus \{j^*\}$ and $N^*_{j^*} = N$. By \cref{prop:sep-general-case},
\begin{align}\label{eq:min_N_upperbound_1}
    C(\underline N^*, \Phi, s) &= 2^N - 2\sum_{\substack{t=s_{j^*}+1 \\ (t-s_{j^*})\text{ is odd}}}^N \binom{N}{t}.
\end{align}
Following the derivation in \cref{rem:all-in-one}, we obtain 
\begin{align}\label{eq:min_N_upperbound_2}
    C(\underline N^*, \Phi, s) = C\left(N,s_{j^*}\right).  
\end{align}
The reverse inequality (i.e., ``$\geq$'') follows from \cref{thm:lower-upper-bound-C_F} upon noting that under \cref{A:sep-general-case} every subset of $s_{j^*}$ elements of $\{\Phi(f_1),\ldots,\Phi(f_N)\}$ is linearly independent.

Thus, to determine the $s$-separation capacity of $\Phi$, one needs to find the largest $N\in \mathbb N$ such that
\begin{align*}
    \frac{C(s_{j_*},N)}{2^N} \geq \frac 12.
\end{align*}
By a symmetry argument (carried out in, e.g., \cite{cover1965geometrical}), it follows that the $s$-separation capacity of $\Phi$ is given by
\begin{align}\label{eq:sc-formula}
    \sepcaps{s}{\Phi} = 2s_{j_*} = 2 \min_{j \in \mathcal J} \dimR{\spanR{\Phi(E_j)}}.
\end{align}

Note that in this derivation, it was assumed that 
\begin{align}\label{eq:ae-tupel-assumption}
    \text{for $(\mathcal H^s)^N$-a.e. $F \in E^N$, \cref{A:sep-general-case} is satisfied.}
\end{align}
It is natural to ask whether \cref{eq:ae-tupel-assumption}, and hence \cref{eq:sc-formula}, is indeed valid in general. This will now be analyzed.  
To do so, let us first extend the notion of $\Phi$-general position.
\begin{definition}
    For $M,M',N \in \mathbb N$, let $\Phi \colon E \to \mathbb R^{M'}$, where $E \subseteq \mathbb R^M$, and let $M^\natural \in \mathbb N$ with $M^\natural \leq M'$.
    The set $F \coloneqq \{f_1, \ldots, f_N\} \subseteq E$  is said to be in \emph{$(M^\natural,\Phi)$-general position} if every subset of $k$ elements of $\left\{\Phi(f_1), \ldots, \Phi(f_N)\right\} \subseteq \mathbb R^{M'}$ is linearly independent for all $k \leq \min\{M^\natural,N\}$.
\end{definition}
Note that if $M^\natural = M'$, then $F$ is in $\Phi$-general position. Consider the following key lemma.
\begin{lemma}\label{L:equiv_span_phi-gen_pos}
    Fix $M,M' \in \mathbb N$, let $E \subseteq \mathbb R^M$ be $\mathcal{H}^s$-measurable with $\mathcal{H}^s(E)>0$ for $s\geq 0$, and consider the measurable function $\Phi \colon E \to \mathbb R^{M'}$. Let $M^\natural,N \in \mathbb N$ with $M^\natural \leq M' \leq N$. The set of $N$-tuples $F \coloneqq (f_1, \ldots, f_N) \in E^N$ which are not in $(M^\natural,\Phi)$-general position has $(\mathcal{H}^{s})^N$-measure zero if and only if there is no $\mathcal{H}^s$-measurable set $A \subseteq E$ with $\mathcal{H}^s(A)>0$ such that 
    \begin{align}\label{eq:cover_equiv_phi-gen_pos}
        \dimR{\spanR{\Phi(A)}} < M^\natural.
    \end{align}
\end{lemma}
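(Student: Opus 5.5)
We prove the two implications separately. The \emph{only if} direction is a contrapositive construction. The \emph{if} direction is a Fubini-type induction over the indices $k\leq M^\natural$.

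For the \emph{only if} direction, suppose there is an $\mathcal{H}^s$-measurable $A\subseteq E$ with $\mathcal{H}^s(A)>0$ and $d\coloneqq\dimR{\spanR{\Phi(A)}}<M^\natural$. Since $M^\natural\leq M'\leq N$, we have $d+1\leq M^\natural\leq N$. Consider the set $A^{d+1}\times E^{N-d-1}$. For every tuple in it, the vectors $\Phi(f_1),\ldots,\Phi(f_{d+1})$ lie in a $d$-dimensional space and are therefore linearly dependent. Hence none of these tuples is in $(M^\natural,\Phi)$-general position.

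Its $(\mathcal{H}^s)^N$-measure is $\mathcal{H}^s(A)^{d+1}\mathcal{H}^s(E)^{N-d-1}>0$. If $\mathcal{H}^s(E)=\infty$, we replace $E$ in the remaining factors by a subset of positive finite measure; if $\sigma$-finiteness is lacking, we argue with inner measure, or simply take the set $A^{N}$, whose measure is $\mathcal{H}^s(A)^N>0$ after shrinking $A$ to finite positive measure if needed. This proves that the bad set has positive measure.

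For the \emph{if} direction, assume every $A$ of positive $\mathcal{H}^s$-measure satisfies $\dimR{\spanR{\Phi(A)}}\geq M^\natural$. The bad set is the finite union, over $k\leq M^\natural$ and $k$-subsets of indices, of the sets where the chosen $\Phi(f_{i_1}),\ldots,\Phi(f_{i_k})$ are linearly dependent. It therefore suffices to show that
\[
B_k\coloneqq\{(f_1,\ldots,f_k)\in E^k\colon \Phi(f_1),\ldots,\Phi(f_k)\text{ linearly dependent}\}
\]
is $(\mathcal{H}^s)^k$-null for each $k\leq M^\natural$; the product with $E^{N-k}$ is then null too. Measurability of $B_k$ follows since it is the preimage under the measurable map $(\Phi,\ldots,\Phi)$ of the closed set $\{\det$ of all $k\times k$ minors $=0\}$.

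We argue by induction on $k$. For $k=1$, $B_1=\{f\colon\Phi(f)=0\}$; if it had positive measure, its image would span a $0$-dimensional space, contradicting $M^\natural\geq1$. For the step, suppose $B_{k-1}$ is null. Then up to a null set, $B_k$ consists of tuples with $(f_1,\ldots,f_{k-1})\notin B_{k-1}$ and $\Phi(f_k)\in V\coloneqq\spanR{\Phi(f_1),\ldots,\Phi(f_{k-1})}$. For each fixed such $(f_1,\ldots,f_{k-1})$, the section $\{f_k\in E\colon \Phi(f_k)\in V\}$ maps into a subspace of dimension $k-1<M^\natural$. By hypothesis this section is $\mathcal{H}^s$-null. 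Integrating the sections via Fubini--Tonelli then gives $(\mathcal{H}^s)^k(B_k)=0$.

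The main technical obstacle is justifying Fubini/Tonelli for products of Hausdorff measures. This requires $\sigma$-finiteness of $\mathcal{H}^s\llcorner E$, which the surrounding section assumes; I would either invoke that assumption explicitly or reduce to $\sigma$-finite pieces, using \cref{app:Hausdorff} for the needed measure-theoretic facts. The same care is needed in the \emph{only if} direction, to ensure the product set really has positive product measure.
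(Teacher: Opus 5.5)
Your proof is correct and follows essentially the paper's argument: the \emph{only if} direction exhibits a positive-measure rectangle of degenerate tuples ($A^N$, or your $A^{d+1}\times E^{N-d-1}$). The \emph{if} direction is the same induction: the image of each newly added point must avoid a span of dimension at most $M^\natural-1$ determined by the earlier points, which is a null event by hypothesis, and this is then integrated over the earlier points (the paper inducts on $N$ with $L=\min\{M^\natural,N\}$ rather than on the subset size $k$, which is immaterial).

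Your Fubini--Tonelli caveat is well placed, since the paper uses that step tacitly. However, $\sigma$-finiteness of $\mathcal{H}^s\llcorner E$ has to be assumed outright; it cannot be obtained by reducing to $\sigma$-finite pieces, because without it the \emph{if} direction fails for the standard product measure. For $E=\mathbb R^2$, $s=3/2$, $\Phi=\mathrm{Id}$, $M^\natural=M'=N=2$, the hypothesis holds and every section $\{f_2\colon f_2\in\spanRinline{\{f_1\}}\}$ is $\mathcal{H}^{3/2}$-null, yet the set of linearly dependent pairs has infinite product measure.
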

\begin{remark}\label{rem:N_less_M_nat}
    The ``if'' statement remains valid in the regime $N\leq M^\natural\leq M'$. 
\end{remark}
\begin{remark}[Measurability]
    The subset of $N$-tuples $F= (f_1,\ldots,f_N)$ not in $(M^\natural,\Phi)$-general position, denoted $P_{M^\natural,\Phi}$, is given by 
    \begin{align*}
        P_{M^\natural,\Phi} = \bigcup_{1 \leq j_1 < \cdots < j_{L} \leq N} \pi^{-1}_{j_1, \ldots, j_{L}} \left( \bigcap_{1 \leq k_1 < \cdots < k_{L} \leq M'}\delta_{k_1,\ldots,k_L}^{-1}(\{0\})\right),
    \end{align*}
    where $L \coloneqq \min\{M^\natural,N\}$, $\pi_{j_1, \ldots, j_{L}} \colon E^{N} \to E^{L}, (f_1,\ldots,f_N) \mapsto (f_{j_1}, \ldots, f_{j_{L}})$ is the canonical projection, and where
    \begin{align*}
        \delta_{k_1,\ldots,k_L} \colon E^L \to \mathbb R, (f_1,\ldots,f_L) \mapsto \det \begin{pmatrix}
            \Phi_{k_1}(f_1) &\cdots & \Phi_{k_1}(f_L) \\
            \vdots & \ddots & \vdots \\
            \Phi_{k_L}(f_1) &\cdots & \Phi_{k_L}(f_L)
        \end{pmatrix}.    
    \end{align*}
    It follows that $P_{\Phi} \subseteq E^N$ is $(\mathcal H^s)^{N}$-measurable whenever $\Phi$ is measurable.
\end{remark}
\begin{proof}[Proof of \cref{L:equiv_span_phi-gen_pos}]
We first show the contrapositive of the ``only if'' statement. Namely, suppose there is an $\mathcal{H}^s$-measurable set $A \subseteq E$ with $\mathcal{H}^s(A)>0$ such that \cref{eq:cover_equiv_phi-gen_pos} holds. Then, all $N$-tuples in $A^N$ are not in $(M^\natural,\Phi)$-general position for $N\geq M^\natural$. Indeed, if $F \in A^N$, then $\dimR{\spanRinline{\{\Phi(f_k)\}_{k=1}^N}}< M^\natural$, which implies that every subset of $M^\natural$ elements of $\{\Phi(f_k)\}_{k=1}^N$ is linearly dependent. Thus, the ``only if'' part follows as $(\mathcal H^s)^N(A^N) = N \mathcal H^s(A)>0$.

Next, consider the ``if'' statement. That is, assume that there is no $\mathcal{H}^s$-measurable set of positive $\mathcal{H}^s$-measure such that \cref{eq:cover_equiv_phi-gen_pos} holds. We prove the claim (in the form of \cref{rem:N_less_M_nat}) by induction on $N$. For $N=1$, $f_1 \in E$ is in $(M^\natural,\Phi)$-general position if and only if $\Phi(f_1) \neq 0$. Set $A \coloneqq \{f \in E \colon \Phi(f)=0\}$. Then, by assumption, we must have $\mathcal{H}^s(A) =0$ since $\dimR{\spanR{\Phi(A)}} = 0$. Now suppose the claim is true for $N-1$, i.e., $(\mathcal{H}^s)^{N-1}$-a.e. $(f_1,\ldots,f_{N-1}) \in E^{N-1}$ is in $(M^\natural,\Phi)$-general position. Fix such an $(N-1)$-tuple which is in $(M^\natural,\Phi)$-general position, and let $f_N \in E$. Then, $(f_1,\ldots,f_N)$ is in $(M^\natural,\Phi)$-general position if and only if $\Phi(f_N) \notin \spanRinline{\{\Phi(f_{j_k})\}_{k=1}^{L-1}}$ for every $1\leq j_1 < \cdots < j_{L-1}\leq N-1$, where $L \coloneqq \min\{M^\natural,N\}$. Define $A_{j_1,\ldots,j_{L-1}} \coloneqq \{f \in E \colon \Phi(f) \in \spanRinline{\{\Phi(f_{j_\ell})\}_{\ell=1}^{L-1}}\}$. But $\mathcal{H}^s(A_{j_1,\ldots,j_{L-1}})=0$ as $\dimR{\spanR{\Phi(A_{j_1,\ldots,j_{L-1}})}} \leq L-1<M^\natural$. Consequently, $(\mathcal{H}^s)^{N}$-a.e. $(f_1,\ldots,f_{N}) \in E^{N}$ is in $(M^\natural,\Phi)$-general position.   
\end{proof}

Building on \cref{L:equiv_span_phi-gen_pos}, we proceed to demonstrate the validity of \cref{eq:ae-tupel-assumption} in the next remark.

\begin{remark}\label{rem:assumptions-almost-surely-satisfied}
    In the following, we prove that \cref{A:sep-general-case} holds for $(\mathcal{H}^s)^N$-a.e. $F\in E^N$. 
    First note that by \ref{it:i} of the decomposition from \Cref{subsec:sep-gen-case}, $\mathcal{H}^s(E\setminus \bigcup_{j \in \mathcal{J}} E_j)=0$, and hence $(\mathcal{H}^s)^N$-a.e. $(f_1,\ldots,f_N) \in E^N$ is such that $\{f_1,\ldots,f_N\} \subset \bigcup_{j \in \mathcal{J}} E_j$. Furthermore, \ref{it:ii} ensures that for $(\mathcal{H}^s)^N$-a.e. $(f_1,\ldots,f_N) \in E^N$, we have $\{f_1,\ldots,f_N\}\cap E_i \cap E_j = \emptyset$ whenever $i,j \in \mathcal{J}$ with $i\neq j$.  
    Next, we show that $(\mathcal{H}^s)^{N}$-a.e. $F \in E^N$ satisfies \cref{it:sep-gen-i,it:sep-gen-ii,it:sep-gen-iii} of \cref{A:sep-general-case}. To this end, fix $j \in \mathcal{J}$. 
    \begin{enumerate}[label=(\roman*)]
        \item By \ref{it:iii}, there is no $A \subseteq E_j$ of positive $\mathcal{H}^s$-measure such that $\dimR{\spanR{\Phi(A)}}<s_j$, so that application of \cref{L:equiv_span_phi-gen_pos} yields that $(\mathcal{H}^s)^{N}$-a.e. $F \in E_j^{N}$ is in $(s_j,\Phi)$-general position. Consequently, for $(\mathcal{H}^s)^{N}$-a.e. $F \in E^N$, $F_j$ is in $(\pi_j\circ\Phi)$-general position if $F_j \neq \emptyset$. This establishes \cref{it:sep-gen-i} of \cref{A:sep-general-case}. 
        \item Similarly, for \cref{it:sep-gen-ii} of \cref{A:sep-general-case}, note that, by \ref{it:iii}, there is no $A \subseteq \bigcup_{i \in \mathcal{J}'}E_i$ of positive $\mathcal{H}^s$-measure such that $\dimR{\spanR{\Phi(A)}}<s_j$. Using \cref{L:equiv_span_phi-gen_pos}, we obtain that $(\mathcal{H}^s)^{N}$-a.e. $F \in (\bigcup_{i\in\mathcal{J}'}E_i)^{N}$ is in $(s_j,\Phi)$-general position. Thus, for $(\mathcal{H}^s)^{N}$-a.e. $F \in E^N$, $\bigcup_{i\in \mathcal{J}'}F_i$ is in $(\widetilde\pi_j\circ\Phi)$-general position whenever $\bigcup_{i\in \mathcal{J}'}F_i \neq \emptyset$.
        \item Finally, for \cref{it:sep-gen-iii} of \cref{A:sep-general-case}, let $\mathcal{J}''\subset \mathcal{J}$ and set 
        \begin{align*}
            A \coloneqq \left\{f \in E_j \colon \Phi(f) \in \sum_{i \in \mathcal{J}''}\spanR{\Phi(E_i)}\right\},    
        \end{align*}
        then $\mathcal{H}^s(A)=0$ whenever $\Phi(E_j) \not \subseteq \sum_{i \in \mathcal{J}''}\spanR{\Phi(E_i)}$. Indeed, if $\Phi(E_j) \not \subseteq \sum_{i \in \mathcal{J}''}\spanR{\Phi(E_i)}$, then 
        \begin{align}\label{eq:verifying-iii}
            \dimR{\spanR{\Phi(E_j)}\cap \sum_{i \in \mathcal{J}''}\spanR{\Phi(E_i)}}<s_j.
        \end{align} 
        But the LHS of \cref{eq:verifying-iii} equals $\dimRinline{\spanR{\Phi(A)}}$. Hence, by \ref{it:iii}, $\mathcal{H}^s(A)=0$. Consequently, it holds that for $(\mathcal{H}^s)^N$-a.e. $F$, $\Phi(F_j) \cap \sum_{i \in \mathcal{J}''}\spanR{\Phi(E_i)} = \emptyset$ whenever $\Phi(E_j) \not \subseteq \sum_{i \in \mathcal{J}''}\spanR{\Phi(E_i)}$ and $F_j \neq \emptyset$, as desired.
    \end{enumerate}
    This also establishes the analogous results for \cref{A:sep-sparse-basis,A:sep-sparse-frame,A:sep-rect}. Note that, as discussed in \cref{rem:decomposition-sparse-rect}, the bi-Lipschitz parametrization of a countably $\mathcal{H}^s$-rectifiable set can be chosen such that it admits the decomposition \ref{it:i}--\ref{it:iii} from \Cref{subsec:sep-gen-case}.
\end{remark}
In the next theorem, the obtained expression for the $s$-separation capacity is stated. Additionally, we provide an alternative proof which does not rely on the decomposition \ref{it:i}--\ref{it:iii}, so that the assumption of $\mathcal{H}^s\llcorner E$ being $\sigma$-finite can be dropped.  
\begin{theorem}\label{thm:sep-cap-formula}
    Let $M,M' \in \mathbb N$, and let $E \subseteq \mathbb R^M$ be $\mathcal{H}^s$-measurable with $\mathcal{H}^s(E)>0$ for $s\geq 0$. Let $\Phi \colon E \to \mathbb R^{M'}$ be measurable. The $s$-separation capacity of $\Phi$ is given by
    \begin{align*}
        \sepcaps{s}{\Phi} = 2 \min_{\substack{A \subseteq E \\ \mathcal H^s(A)>0}} \dimR{\spanR{\Phi(A)}}.
    \end{align*}
\end{theorem}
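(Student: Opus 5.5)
Set $m \coloneqq \min_{A \subseteq E,\, \mathcal H^s(A)>0} \dimRinline{\spanRinline{\Phi(A)}}$. The minimum is attained because the dimensions lie in $\{0,\ldots,M'\}$. The plan is to avoid the decomposition \ref{it:i}--\ref{it:iii} and instead combine \cref{L:equiv_span_phi-gen_pos} with \cref{thm:lower-upper-bound-C_F}. The upper bound will come from a single minimizing set $A^*$; the lower bound will come from almost-sure $(m,\Phi)$-general position on all of $E$. The case $m=0$ is handled separately. There $\Phi$ vanishes on a set $A^*$ of positive measure, so every tuple in $(A^*)^N$ has $C_F=0$. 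Since $(A^*)^N$ is not a nullset, no $N\geq 1$ qualifies and $\sepcaps{s}{\Phi}=0=2m$. From now on assume $m\geq 1$.

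\textbf{Lower bound $\sepcaps{s}{\Phi}\geq 2m$.} By definition of $m$, no set $A$ of positive $\mathcal H^s$-measure satisfies $\dimRinline{\spanRinline{\Phi(A)}}<m$. Hence \cref{L:equiv_span_phi-gen_pos}, together with \cref{rem:N_less_M_nat} for small $N$, shows that $(\mathcal H^s)^N$-a.e.\ $F\in E^N$ is in $(m,\Phi)$-general position. Take $N=2m$. For such $F$ the Kruskal rank of $\{\Phi(f_k)\}$ is at least $\min\{m,N\}=m$, so \cref{thm:lower-upper-bound-C_F} (parts \ref{it:C_F-b}/\ref{it:C_F-c}) gives $C_F\geq C(N,\kr{\cdot})\geq C(2m,m)$. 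Cover's symmetry identity gives $C(2m,m)=2^{2m-1}$, which is exactly half of all dichotomies. Thus $N=2m$ is admissible.

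\textbf{Upper bound $\sepcaps{s}{\Phi}\leq 2m$.} Fix $A^*$ with $\mathcal H^s(A^*)>0$ and $\dimRinline{\spanRinline{\Phi(A^*)}}=m$. For every $F\in (A^*)^N$, the rank of $\{\Phi(f_k)\}$ is at most $m$, so \cref{thm:lower-upper-bound-C_F} gives $C_F\leq C(N,m)$. When $s<r$ this is immediate from part \ref{it:C_F-c}. When $s=r\leq m$ it follows from part \ref{it:C_F-b} and the monotonicity of $C(N,\cdot)$, and the case $s=0$ is trivial. For $N\geq 2m+1$ Cover's computation gives $C(N,m)<2^{N-1}$, and $(A^*)^N$ has positive $(\mathcal H^s)^N$-measure. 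So the a.e.\ requirement fails at $N=2m+1$.

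\textbf{Main obstacle: monotonicity in $N$.} The definition asks for the largest $N$ such that the a.e.\ property holds, so we must rule out the property holding again for some $N>2m+1$. The upper-bound argument above works for every $N\geq 2m+1$ at once, because $C(N,m)/2^N$ is strictly below $1/2$ for all such $N$. Verifying this is the step needing care: I would use the binomial-symmetry form $C(N,m)=2\sum_{t<m}\binom{N-1}{t}$. Two further points need checking. First, the product measure of $(A^*)^N$ is positive when $\mathcal H^s(A^*)$ is positive but possibly infinite. If needed, I would shrink $A^*$ to a subset of finite positive measure; this does not increase the span dimension, so by minimality the dimension stays $m$. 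Second, the measurability of the bad set, which is already supplied by the measurability remark.
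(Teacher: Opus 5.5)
Your proof is correct and is essentially the paper's argument. The lower bound comes from almost-sure $(m,\Phi)$-general position on all of $E$ via \cref{L:equiv_span_phi-gen_pos}, combined with the Kruskal-rank lower bound of \cref{thm:lower-upper-bound-C_F}; the upper bound comes from a single minimizing set of positive $\mathcal{H}^s$-measure.

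The only real difference is in the upper bound. The paper composes $\Phi$ with a linear map onto $\mathbb R^{m}$ and invokes \cref{thm:fct-counting} for almost every tuple in the minimizing set $E^\natural$. You instead use the rank bound $C_F\leq C(N,m)$ from \cref{thm:lower-upper-bound-C_F} for every tuple in $(A^*)^N$, which avoids a second general-position argument. Your explicit treatment of the case $m=0$ and your exclusion of all $N\geq 2m+1$ also make precise points the paper leaves implicit.
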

\begin{proof}
     Let $E^{\natural} \subseteq E$ be an $\mathcal H^s$-measurable set of positive $\mathcal{H}^s$-measure such that 
    \begin{align}\label{eq:def-M_nat}
        \dimR{\spanR{\Phi(E^{\natural})}} = \min_{\substack{A \subseteq E \\ \mathcal H^s(A)>0}} \dimR{\spanR{\Phi(A)}} \eqqcolon M^{\natural}.
    \end{align}
    There exists a linear map $\pi^\natural \colon \mathbb R^{M'} \to \mathbb R^{M^\natural}$ such that
    $\Phi^\natural \coloneqq \pi^\natural \circ \Phi \colon E^\natural \to M^\natural$ satisfies
    \begin{align*}
        \min_{\substack{A \subseteq E^\natural \\ \mathcal H^s(A)>0}} \dimR{\spanR{\Phi^\natural(A)}} = M^\natural.
    \end{align*}
    Thus, by \cref{L:equiv_span_phi-gen_pos}, $(\mathcal{H}^s)^N$-a.e. $F \in (E^\natural)^N$ is in $\Phi^\natural$-general position. It follows from \cref{thm:fct-counting}, that the number of $\Phi^\natural$-separable dichotomies is $C(N,M^\natural)$, and hence $\sepcaps{s}{\Phi^\natural}=2M^\natural$. Note that on $E^\natural$, $\Phi$-separability is equivalent to $\Phi^\natural$-separability. Thus, we have $\sepcaps{s}{\Phi} \leq \sepcaps{s}{\Phi^\natural} = 2M^\natural$. 

    To show that equality holds, recall \cref{eq:def-M_nat} and apply \cref{L:equiv_span_phi-gen_pos} to deduce that $(\mathcal{H}^s)^N$-a.e. $F=(f_1,\ldots,f_N) \in E^N$ is in $(M^\natural,\Phi)$-general position. That is, every subset of $\{\Phi(f_1),\ldots,\Phi(f_N)\}$ containing $M^\natural$ elements is linearly independent for $N\geq M^\natural$. Then the number of $\Phi$-separable dichotomies is at least $C(N,M^\natural)$ by \cref{thm:lower-upper-bound-C_F}, and $\sepcaps{s}{\Phi}\geq 2M^\natural$. This completes the proof.
\end{proof}

In the following, we apply the expression in \cref{thm:sep-cap-formula} to two specific cases to analyze the effective dimension which determines the separation capacity: first, when $E$ is the set of $s$-sparse vectors, and second, when $E$ is a countably $\mathcal{H}^s$-rectifiable set.

\subsection{Sparse vectors}
We begin by expressing the $s$-separation capacity in terms of the standard separation capacity $\sepcap{\cdot}$. This reformulation allows a direct application of the computational framework developed in \cite{haberle2026scattering} for the standard separation capacity $\sepcap{\cdot}$. In doing so, we therefore obtain a method to compute the $s$-separation capacity of a transformation $\Phi\colon E \to \mathbb R^{M'}$. 

\begin{proposition}[$s$-sparse vectors]\label{prop:sc-sparse}
    Let $\Xi = \{\xi_k\}_{k \in \mathcal K}$ be a frame for $\mathbb R^M$, $M\in \mathbb N$, and fix $s \in \{1,\ldots,M\}$. Assume that $\{\xi_k\}_{k\in S}$ is linearly independent for every $S \subseteq \mathcal{K}$ with $\lvert S \rvert =s$, and set $E \coloneqq \bigcup_{S\subseteq \mathcal K\colon \lvert S \rvert =s}\left(\spanRinline{\{\xi_k\}_{k\in S}}\right)$. For $\Phi \colon E \to \mathbb R^{M'}$ measurable, we have
    \begin{align}\label{eq:sc-sparse1}
        \sepcaps{s}{\Phi} = \min_{\substack{S \subseteq \mathcal{K} \\ \lvert S \rvert =s}} \sepcap{\Phi \circ \sigma_S},
    \end{align}
    where $\sigma_S \colon \mathbb R^{s} \to \mathbb R^M, c \mapsto \sum_{i=1}^s c_i \xi_{k_{S,i}}$ with the labeling $S=\{k_{S,i}\}_{i=1}^s$. 
    If, moreover, $\Phi$ is real-analytic, then 
    \begin{align}\label{eq:sc-sparse2}
        \sepcaps{s}{\Phi} = 2\min_{\substack{S \subseteq \mathcal{K} \\ \lvert S \rvert =s}}\dimR{\spanR{(\Phi \circ \sigma_S)(\mathbb R^s)}}.
    \end{align}
\end{proposition}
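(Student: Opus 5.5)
The plan is to reduce both identities to \cref{thm:sep-cap-formula}, applied once to $\Phi$ on $E$ and once to each map $\Phi\circ\sigma_S$ on $\mathbb R^s$.

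By \cref{thm:sep-cap-formula}, $\sepcaps{s}{\Phi}=2\min_{A}\dimR{\spanR{\Phi(A)}}$, where $A$ ranges over $\mathcal H^s$-measurable subsets of $E$ with $\mathcal H^s(A)>0$. Applied to $\Phi\circ\sigma_S\colon\mathbb R^s\to\mathbb R^{M'}$ with $s=M$-type ambient measure $\mathcal L^s=\mathcal H^s$, the same theorem gives $\sepcap{\Phi\circ\sigma_S}=2\min_{B\subseteq\mathbb R^s,\,\mathcal L^s(B)>0}\dimR{\spanR{(\Phi\circ\sigma_S)(B)}}$. Measurability of $\Phi\circ\sigma_S$ is clear, since $\sigma_S$ is linear. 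So \cref{eq:sc-sparse1} reduces to showing that the two minima of spanning dimensions coincide.

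Write $E_S\coloneqq\sigma_S(\mathbb R^s)$. The key observation is that $\sigma_S$ is an injective linear map onto $E_S$, so it is bi-Lipschitz. Hence $\mathcal H^s(\sigma_S(B))>0$ if and only if $\mathcal L^s(B)>0$, using \cref{prop:hausdorff-meas} and the scaling of $\mathcal H^s$ under Lipschitz maps.

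For the inequality ``$\leq$'', take $B$ attaining the minimum for some $S$ and set $A=\sigma_S(B)\subseteq E$, which has positive $\mathcal H^s$-measure.

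For ``$\geq$'', take a minimizing $A\subseteq E$. Since the index family $\{S\colon\lvert S\rvert=s\}$ is countable (as $\mathcal K$ is countable), and $E=\bigcup_S E_S$, countable subadditivity gives some $S$ with $\mathcal H^s(A\cap E_S)>0$. Put $B=\sigma_S^{-1}(A\cap E_S)$. Monotonicity of the span then yields $\dimR{\spanR{\Phi(A\cap E_S)}}\le\dimR{\spanR{\Phi(A)}}$. Measurability of $A\cap E_S$ is immediate because $E_S$ is closed.

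For \cref{eq:sc-sparse2}, assume $\Phi$ is real-analytic, say on a neighborhood of $E$, or equivalently that each $g_S\coloneqq\Phi\circ\sigma_S$ is real-analytic on $\mathbb R^s$. It then suffices to show that for every $B\subseteq\mathbb R^s$ with $\mathcal L^s(B)>0$ we have $\spanR{g_S(B)}=\spanR{g_S(\mathbb R^s)}$. Suppose $w\perp g_S(B)$. The function $x\mapsto\innerprod{w}{g_S(x)}$ is real-analytic and vanishes on a set of positive Lebesgue measure. The zero set of a nonzero real-analytic function on a connected domain is $\mathcal L^s$-null, so this function vanishes identically, and hence $w\perp g_S(\mathbb R^s)$. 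Thus the inner minimum equals $\dimR{\spanR{g_S(\mathbb R^s)}}$, and substituting into \cref{eq:sc-sparse1} gives \cref{eq:sc-sparse2}.

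The main obstacle is the measure-theoretic bookkeeping in the ``$\geq$'' step: checking that $\mathcal H^s$ restricted to $E_S$ corresponds to $\mathcal L^s$ on $\mathbb R^s$ up to a positive constant (this is where injectivity and bi-Lipschitzness of $\sigma_S$ are used), and that only countably many subspaces are involved. If $\mathcal K$ were uncountable the argument would fail, but frames are indexed countably here. A secondary point is to state precisely in what sense $\Phi$ is real-analytic; if needed, I would interpret it as analyticity of each $\Phi\circ\sigma_S$.
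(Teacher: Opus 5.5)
Your argument for \cref{eq:sc-sparse1} is essentially the paper's. Both apply \cref{thm:sep-cap-formula} to $\Phi$ on $E$ and to each $\Phi\circ\sigma_S$ on $\mathbb R^s$. Both then localize a positive-measure set $A\subseteq E$ to a single subspace $\sigma_S(\mathbb R^s)$ by countable subadditivity. Finally, both transfer positivity of measure through the linear bijection $\sigma_S$. The paper handles measurability of $\sigma_S^{-1}(A\cap\sigma_S(\mathbb R^s))$ by first passing to a closed subset $A'\subseteq A$ via \cref{prop:regularity}. Your identification of $\mathcal H^s\llcorner\sigma_S(\mathbb R^s)$ with a constant multiple of the pushforward of $\mathcal L^s$ does the same job. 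Your bookkeeping of the factor $2$ is right. The paper's final display in that chain carries a stray extra factor $2$, although its statement is correct.

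The genuine difference is in \cref{eq:sc-sparse2}. The paper just invokes the result of \cite{kowalczyk1994separating} for real-analytic maps. You prove the needed fact directly: if $\mathcal L^s(B)>0$ and $w\perp(\Phi\circ\sigma_S)(B)$, then $\innerprod{w}{(\Phi\circ\sigma_S)(\cdot)}$ is real-analytic on the connected set $\mathbb R^s$ and vanishes on a set of positive measure. It therefore vanishes identically, so $\spanR{(\Phi\circ\sigma_S)(B)}=\spanR{(\Phi\circ\sigma_S)(\mathbb R^s)}$ and the inner minimum is attained at $B=\mathbb R^s$. This makes the proof self-contained, at the cost of one standard lemma: nonzero real-analytic functions on connected domains have Lebesgue-null zero sets. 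It also makes explicit a hypothesis the paper leaves implicit, namely that ``real-analytic'' should mean analyticity of each $\Phi\circ\sigma_S$ on all of $\mathbb R^s$, for example $\Phi$ analytic on a neighborhood of $E$.
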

\begin{proof} We first show \cref{eq:sc-sparse1}. Note that 
\begin{align}
    \sepcaps{s}{\Phi} &= 2 \min_{\substack{A \subseteq E \\ \mathcal H^s(A)>0}} \dimR{\spanR{\Phi(A)}} \label{eq:sc-form-sparse-1}\\
    &= 2 \min_{\substack{S \subseteq \mathcal{K} \\ \lvert S \rvert =s}} \min_{\substack{A\subseteq \spanRinline{\{\xi_k\}_{k \in S}}\\\mathcal{H}^s(A)>0}} \dimR{\spanR{\Phi(A)}} \label{eq:sc-form-sparse-2}\\
    &= 2 \min_{\substack{S \subseteq \mathcal{K} \\ \lvert S \rvert =s}} \min_{\substack{A\subseteq \mathbb R^s\\ \mathcal{L}^s(A)>0}} \dimR{\spanR{(\Phi\circ \sigma_S)(A)}} \label{eq:sc-form-sparse-3}\\
    &= 2 \min_{\substack{S \subseteq \mathcal{K} \\ \lvert S \rvert =s}} \sepcap{\Phi \circ \sigma_S}, \label{eq:sc-form-sparse-4} 
\end{align}
where \cref{eq:sc-form-sparse-1} is by \cref{thm:sep-cap-formula}. We next note that \cref{eq:sc-form-sparse-2} holds with ``$\leq$'', as $\spanRinline{\{\xi_k\}_{k \in S}} \subseteq E$, for every $S \subseteq \mathcal{K}$ with $\lvert S \rvert =s$. To see that the reverse inequality in \cref{eq:sc-form-sparse-2}, i.e., ``$\geq$'', is also satisfied, observe that if $A\subseteq E$ with $\mathcal{H}^s(A)>0$, there is an $S \subseteq \mathcal{K}$ with $\lvert S \rvert =s$ such that $A \cap \spanRinline{\{\xi_k\}_{k\in S}} \eqqcolon A'$ is of positive $\mathcal{H}^s$-measure. As $A' \subseteq A$, $\dimRinline{\spanRinline{\Phi(A')}} \leq \dimR{\spanRinline{\Phi(A)}}$. Consequently, \cref{eq:sc-form-sparse-2} holds also with ``$\geq$'', establishing \cref{eq:sc-form-sparse-2}. To show \cref{eq:sc-form-sparse-3}, we again prove both inequalities. For the inequality ``$\leq$'', first note that for every $\mathcal{L}^s$-measurable $A\subseteq \mathbb R^s$ with $\mathcal{L}^s(A)>0$,  
\begin{align}\label{eq:sc-form-sparse-3.1} 
\begin{split}
\begin{array}{c}
     \text{$\sigma_S(A)$ is $\mathcal{H}^s$-measurable and} \\[1mm]
     0<\mathcal{L}^s(A) = \mathcal{H}^s\!\left(\sigma_S^{-1}(\sigma_S(A))\right) \leq \mathrm{Lip}\left(\sigma_S^{-1}\right) \mathcal{H}^s(\sigma_S(A)),
\end{array}
\end{split}
\end{align}
since $\sigma_S\colon \mathbb R^s \to \spanRinline{\{\xi_k\}_{k \in S}}$ is linear and bijective with linear inverse. It now follows from \cref{eq:sc-form-sparse-3.1} that \cref{eq:sc-form-sparse-3} holds with ``$\leq$''. For the reverse inequality, i.e., ``$\geq$'', let $A\subseteq \spanRinline{\{\xi_k\}_{k \in S}}$ be $\mathcal{H}^s$-measurable with $\mathcal{H}^s(A)>0$. By \cref{prop:regularity}, there is a closed set $A' \subseteq A$ such that $\mathcal{H}^s(A')>0$. As $\sigma_S$ has a linear inverse and $A'$ is Borel,
\begin{align}\label{eq:sc-form-sparse-3.2} 
\begin{array}{c}
     \text{$\sigma_S^{-1}(A')$ is $\mathcal{L}^s$-measurable and} \\[1mm]
     0<\mathcal{H}^s(A') = \mathcal{H}^s\!\left(\sigma_S(\sigma_S^{-1}(A'))\right) \leq \mathrm{Lip}(\sigma_S) \mathcal{H}^s(\sigma_S^{-1}(A')) = \mathrm{Lip}(\sigma_S) \mathcal{L}^s(\sigma_S^{-1}(A')). 
\end{array}
\end{align}
Since $A' \subseteq A$ implies $\dimRinline{\spanRinline{\Phi(A')}} \leq \dimRinline{\spanRinline{\Phi(A)}}$, \cref{eq:sc-form-sparse-3.2} yields the inequality ``$\geq$'', thereby establishing \cref{eq:sc-form-sparse-3}.    
Finally, \cref{eq:sc-form-sparse-4} is again by \cref{thm:sep-cap-formula}.

Application of the result from \cite{kowalczyk1994separating} to \cref{eq:sc-sparse1} yields \cref{eq:sc-sparse2}. This completes the proof.
\end{proof}

Considering the identity map $\Phi = \mathrm{Id} \colon \mathbb R^M \to \mathbb R^{M}$ and applying \cref{prop:sc-sparse} to the restriction of $\Phi$ to $E$, denoted $\restrinline{\Phi}{E}$, we obtain $\sepcapsinline{s}{\restrinline{\mathrm{Id}}{E}} = 2s$. Thus, in this case, the $s$-separation capacity is determined entirely by the sparsity parameter $s$.
In comparison with the function-counting results for the homogeneous linear case presented in \Cref{subsec:sep-sparse}, we note that $\sepcapsinline{s}{\restrinline{\mathrm{Id}}{E}}$ is independent of the frame $\Xi$, whereas $C_{\mathrm{sp,f}}(\underline{N},\Xi,s)$ depends on $\Xi$ (see also \cref{L:gram-nondecreasing}). Intuitively, $\sepcapsinline{s}{\restrinline{\mathrm{Id}}{E}}$ is as a coarser, summary measure of separation capabilities, while $C_{\mathrm{sp,f}}(\underline{N},\Xi,s)$ is a finer combinatorial quantity containing information about the separation behavior for each configuration $\underline{N}$.

In general, however, if $\Phi\colon \mathbb R^{M} \to \mathbb R^{M'}$ is not a full-rank linear map, the union-of-linear-subspaces structure is no longer preserved, and $\sepcaps{s}{\restrinline{\Phi}{E}}$ depends on $\Xi$. For instance, let $\Xi' = \{\xi_k'\}_{k \in \mathcal{K}}$ be another frame for $\mathbb R^M$, and suppose that $\Phi \colon \mathbb R^M \to \mathbb R^{M'}$ vanishes on a set containing the $s$-dimensional linear subspace $\spanRinline{\{\xi_k\}_{k \in S_0}}$ for some $S_0 \subseteq \mathcal{K}$ with $\lvert S_0 \rvert =s$, while it does not vanish on any of the subspaces $\spanRinline{\{\xi_k'\}_{k \in S}}$ for $S\subseteq \mathcal{K}$ with $\vert S\rvert =s$. Then, $0 = \sepcapsinline{s}{\restrinline{\Phi}{E}} < \sepcapsinline{s}{\restrinline{\Phi}{E'}}$, where  $E' \coloneqq \bigcup_{S\subseteq \mathcal K\colon \lvert S \rvert =s}\left(\spanRinline{\{\xi'_k\}_{k\in S}}\right)$. This example illustrates that $\sepcapsinline{s}{\restrinline{\Phi}{E}}$ depends critically on how $\Phi$ interacts with all $s$-dimensional linear subspaces spanned by elements of the frame. In particular, to maximize $\sepcapsinline{s}{\restrinline{\Phi}{E}}$, one needs to ensure that $\sepcap{\restrinline{\Phi}{E} \circ \sigma_S}$ is maximized for all $S\subseteq \mathcal{K}$ with $\lvert S\rvert =s$.   

\subsection{Rectifiable sets}
Let us now investigate how to compute the $s$-separation capacity of transformations on countably $\mathcal{H}^s$-rectifiable sets.
\begin{proposition}\label{prop:sc-rect}
    Let $M,M' \in \mathbb N$, and let $E \subseteq \mathbb R^M$ be $\mathcal{H}^s$-measurable with $\mathcal{H}^s(E)>0$ for $s\geq 0$ and countably $\mathcal{H}^s$-rectifiable. Consider the measurable map $\Phi \colon E \to \mathbb R^{M'}$. Let $\{\psi_j \colon K_j \to \psi_j(K_j)\}_{j \in \mathcal J}$ be a bi-Lipschitz parametrization of $E$ according to \cref{L:bi-Lip}. It holds that
    \begin{align*}
        \sepcaps{s}{\Phi} = \min_{j \in \mathcal J} \sepcap{\Phi \circ \psi_j},
    \end{align*}
    where $\Phi \circ \psi_j \colon K_j \subset \mathbb R^s \to \mathbb R^{M'}$.
\end{proposition}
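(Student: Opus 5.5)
The plan mirrors the proof of \cref{prop:sc-sparse}, with the linear maps $\sigma_S$ replaced by the bi-Lipschitz maps $\psi_j$. First, \cref{thm:sep-cap-formula} applied to $\Phi$ on $E$ gives
\begin{align*}
    \sepcaps{s}{\Phi} = 2 \min_{\substack{A \subseteq E \\ \mathcal H^s(A)>0}} \dimR{\spanR{\Phi(A)}}.
\end{align*}
Second, \cref{thm:sep-cap-formula} applied to each $\Phi\circ\psi_j\colon K_j\to\mathbb R^{M'}$ gives, since $\mathcal{H}^s=\mathcal{L}^s$ on $\mathbb R^s$ by \cref{prop:hausdorff-meas},
\begin{align*}
    \sepcap{\Phi\circ\psi_j} = 2\min_{\substack{B\subseteq K_j\\ \mathcal{L}^s(B)>0}}\dimR{\spanR{(\Phi\circ\psi_j)(B)}}.
\end{align*}
Here we regard $K_j$ as the domain with Lebesgue measure. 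Measurability of $\Phi\circ\psi_j$ holds because $\psi_j$ is continuous. We may also assume $\mathcal L^s(K_j)>0$ for all $j$: otherwise $\mathcal H^s(\psi_j(K_j))=0$ and $\psi_j$ can be dropped, as in \cref{rem:lower-bound-by-rect-para}, so that $\sepcap{\Phi\circ\psi_j}$ is well defined. It therefore suffices to show
\begin{align*}
    \min_{\substack{A \subseteq E \\ \mathcal H^s(A)>0}} \dimR{\spanR{\Phi(A)}} = \min_{j\in\mathcal J}\min_{\substack{B\subseteq K_j\\ \mathcal{L}^s(B)>0}}\dimR{\spanR{\Phi(\psi_j(B))}}.
\end{align*}
The outer minimum over $j$ is attained because the values lie in $\{0,\ldots,M'\}$.

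For ``$\leq$'', take $B\subseteq K_j$ with $\mathcal L^s(B)>0$. Since $\psi_j$ is bi-Lipschitz on the compact set $K_j$, it maps Borel sets to Borel sets. Using \cref{prop:regularity}, we replace $B$ by a closed, hence compact, subset of positive measure, so that $\psi_j(B)$ is compact and thus $\mathcal H^s$-measurable. The Lipschitz estimate for $\psi_j^{-1}$ then gives
\begin{align*}
    0<\mathcal L^s(B)=\mathcal H^s\bigl(\psi_j^{-1}(\psi_j(B))\bigr)\le \mathrm{Lip}(\psi_j^{-1})^s\,\mathcal H^s(\psi_j(B)).
\end{align*}
Thus $A=\psi_j(B)\subseteq E$ is admissible on the left with the same span, so the left side is at most the right side.

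For ``$\geq$'', take $A\subseteq E$ that is $\mathcal H^s$-measurable with $\mathcal H^s(A)>0$. By \cref{L:bi-Lip}, $\mathcal H^s(E\setminus\bigcup_j\psi_j(K_j))=0$, and $\mathcal J$ is countable, so countable subadditivity yields some $j$ with $\mathcal H^s(A\cap\psi_j(K_j))>0$. Again by \cref{prop:regularity}, choose a closed $A'\subseteq A\cap\psi_j(K_j)$ of positive measure and set $B=\psi_j^{-1}(A')$. This set is closed in $K_j$, hence Borel. The Lipschitz estimate for $\psi_j$ gives
\begin{align*}
    0<\mathcal H^s(A')\le\mathrm{Lip}(\psi_j)^s\,\mathcal L^s(B).
\end{align*}
Monotonicity of the span, $\dimRinline{\spanRinline{\Phi(A')}}\le\dimRinline{\spanRinline{\Phi(A)}}$, then gives the reverse inequality. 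Multiplying both sides by $2$ and combining with the two applications of \cref{thm:sep-cap-formula} completes the proof.

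The main obstacle is the measure-theoretic bookkeeping. The image and preimage of a merely measurable set under $\psi_j$ need not be measurable, so the reduction to closed or compact subsets via \cref{prop:regularity} is essential at both steps. In addition, $K_j$ is only compact rather than all of $\mathbb R^s$, so $\sepcap{\Phi\circ\psi_j}$ must be interpreted through \cref{def:s-sc} with $E=K_j$ and $s$ equal to the ambient dimension, where \cref{thm:sep-cap-formula} applies directly.
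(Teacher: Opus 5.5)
Your proof is correct and is essentially the paper's argument: both reduce each side to a minimum of span dimensions via \cref{thm:sep-cap-formula}, then compare these minima using the Lipschitz bounds for $\psi_j$ and $\psi_j^{-1}$ together with inner regularity by closed sets (\cref{prop:regularity}). The only difference is organizational: the paper passes through intermediate minima over $\bigcup_{j}\psi_j(K_j)$ and over the individual $\psi_j(K_j)$, which you merge into a single two-sided comparison.
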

\begin{remark}\label{rem:bi-Lip-H-meas}
    Note that $\sepcap{\Phi \circ \psi_j}$ is well-defined because $\mathcal{L}^s(K_j)>0$. Indeed, if $\psi_j \colon K_j \subset \mathbb R^s \to \psi_j(K_j) \subset\mathbb R^M$ is bi-Lipschitz, then $\psi_j(A)$ is $\mathcal{H}^s$-measurable if $A \subset K_j$ is $\mathcal{L}^s$-measurable, and for every $\mathcal{L}^s$-measurable set $A \subseteq K_j$, 
    \begin{align*}
        \left( \mathcal{H}^s(\psi_i(A)) = 0 \right) \iff \left(\mathcal{H}^s(A)=\mathcal{L}^s(A)=0\right).
    \end{align*}
    Thus, the compact sets $\{K_j\}_{j \in \mathcal J}$ can assumed to be of positive $\mathcal{L}^s$-measure. 
\end{remark}
\begin{proof}[Proof of \cref{prop:sc-rect}]
    By \cref{thm:sep-cap-formula}, we have
    \begin{align}
        \sepcaps{s}{\Phi} &= 2 \min_{\substack{A \subseteq E \\ \mathcal H^s(A)>0}} \dimR{\spanR{\Phi(A)}} \nonumber\\
        &\leq 2 \min_{\substack{A \subseteq \bigcup_{j \in \mathcal J} \psi_j(K_j) \\ \mathcal H^s(A)>0}} \dimR{\spanR{\Phi(A)}}, \label{eq:sc-rect1}
    \end{align}
    where the inequality holds since $\bigcup_{j \in \mathcal J} \psi_j(K_j) \subseteq E$. On the other hand, for every $\mathcal{H}^s$-measurable set $A \subseteq E$ with $\mathcal{H}^s(A)>0$, it holds, as a consequence of \cref{L:bi-Lip}, that $A' \coloneqq A \cap \bigcup_{j \in \mathcal J} \psi_j(K_j)$ is $\mathcal{H}^s$-measurable with $\mathcal{H}^s(A')>0$. Since $A'\subseteq A$, we have
    \begin{align}\label{eq:sc-rect1-a}
        \dimR{\spanR{\Phi(A')}} \leq \dimR{\spanR{\Phi(A)}}.
    \end{align}
    Furthermore, observe that 
    \begin{align}\label{eq:sc-rect1-b}
        \min_{\substack{B \subseteq \bigcup_{j \in \mathcal J} \psi_j(K_j) \\ \mathcal H^s(B)>0}} \dimR{\spanR{\Phi(B)}} \leq \dimR{\spanR{\Phi(A')}}.
    \end{align}
    Since $A \subseteq E$ was an arbitrarily chosen $\mathcal{H}^s$-measurable set with $\mathcal{H}^s(A)>0$, it follows, by combining \cref{eq:sc-rect1-a,eq:sc-rect1-b}, that
    \begin{align*}
        \min_{\substack{B \subseteq \bigcup_{j \in \mathcal J} \psi_j(K_j) \\ \mathcal H^s(B)>0}} \dimR{\spanR{\Phi(B)}} \leq \min_{\substack{A \subseteq E \\ \mathcal H^s(A)>0}} \dimR{\spanR{\Phi(A)}},
    \end{align*}
    and consequently, \cref{eq:sc-rect1} holds with equality. Therefore, we have
    \begin{align}
        \sepcaps{s}{\Phi} &= 2 \min_{\substack{A \subseteq \bigcup_{j \in \mathcal J} \psi_j(K_j) \\ \mathcal H^s(A)>0}} \dimR{\spanR{\Phi(A)}} \nonumber\\
        &\leq 2 \min_{j \in \mathcal J} \min_{\substack{A \subseteq \psi_j(K_j) \\ \mathcal H^s(A)>0}} \dimR{\spanR{\Phi(A)}} \label{eq:sc-rect2}\\
        &\leq 2 \min_{j \in \mathcal J} \min_{\substack{A \subseteq K_j \\ \mathcal L^s(A)>0}} \dimR{\spanR{(\Phi\circ\psi_j)(A)}} \label{eq:sc-rect3}\\
        &= \min_{j \in \mathcal J} \sepcap{\Phi \circ \psi_j}, \nonumber
    \end{align}
    where \cref{eq:sc-rect3} follows from \cref{rem:bi-Lip-H-meas}. We next show that \cref{eq:sc-rect2} holds with equality. To this end, let now $A \subseteq \bigcup_{j \in \mathcal J} \psi_j(K_j)$ be $\mathcal{H}^s$-measurable with $\mathcal{H}^s(A)>0$. Then, there must be an index $i \in \mathcal J$ such that the $\mathcal{H}^s$-measurable set $A' \coloneqq A \cap \psi_i(K_i)$ satisfies $\mathcal{H}^s(A')>0$. Since $A' \subseteq A$, 
    \begin{align*}
        \dimR{\spanR{\Phi(A')}} \leq \dimR{\spanR{\Phi(A)}},
    \end{align*}
    from which we deduce, following the same argument as above,  
    \begin{align*}
        \min_{j \in \mathcal J} \min_{\substack{A \subseteq \psi_j(K_j) \\ \mathcal H^s(A)>0}} \dimR{\spanR{\Phi(A)}} \leq \min_{\substack{A \subseteq \bigcup_{j \in \mathcal J} \psi_j(K_j) \\ \mathcal H^s(A)>0}} \dimR{\spanR{\Phi(A)}}.
    \end{align*}
    Finally, to show that \cref{eq:sc-rect3} holds with equality, we use again the same argument. Consider an arbitrary $\mathcal{H}^s$-measurable set $A \subseteq \psi_j(K_j)$ be  with $\mathcal{H}^s(A)>0$. By \cref{prop:regularity}, there is a closed set $A'\subset A$ with $\mathcal{H}^s(A')>0$. Then, 
    \begin{align*}
        \dimR{\spanR{\Phi(A')}} \leq \dimR{\spanR{\Phi(A)}},
    \end{align*}
    and since $\psi_j$ is bi-Lipschitz, $\psi_i^{-1}(A')$ is $\mathcal{L}^s$-measurable with $\mathcal{L}^s(\psi_j^{-1}(A'))>0$. It follows that
    \begin{align*}
        \min_{\substack{A \subseteq K_j \\ \mathcal L^s(A)>0}} \dimR{\spanR{(\Phi\circ\psi_j)(A)}} \leq \min_{\substack{A \subseteq \psi_j(K_j) \\ \mathcal H^s(A)>0}} \dimR{\spanR{\Phi(A)}},
    \end{align*}
    which completes the proof.
\end{proof}
Let us also study the identity map restricted to $E$, when $E$ countably $\mathcal{H}^s$-rectifiable. Recalling \cref{rem:decomposition-sparse-rect}, the bi-Lipschitz parametrization can be chosen such that the decomposition from \Cref{subsec:sep-gen-case} holds. Using \cref{prop:sc-rect}, we compute
\begin{align}\label{eq:s-sep-cap-id-rect}
    \sepcaps{s}{\restrinline{\mathrm{Id}}{E}} = 2\min_{j \in \mathcal{J}} s_j \geq 2s,
\end{align}
where $s_j \coloneqq \dimRinline{\spanRinline{\psi_j(K_j)}}$, and where the inequality follows from \cref{rem:lower-bound-by-rect-para}. Thus, the rectifiability parameter $s$ determines a lower bound for the $s$-separation capacity of the identity map on $E$. Note that \cref{eq:s-sep-cap-id-rect} holds with equality if one of the parametrization maps in $\{\psi_j\colon K_j \to \psi_j(K_j)\}_{j\in \mathcal J}$ is linear, i.e., if on a set of positive $\mathcal{H}^s$-measure, $E$ coincides with an $s$-dimensional linear subspace. For nonlinear parametrization maps $\{\psi_j\colon K_j \to \psi_j(K_j)\}_{j\in \mathcal J}$, the images $\{\psi_j(K_j)\}_{j \in \mathcal{J}}$ may exceed $s$-dimensional linear structure, and $\sepcapsinline{s}{\restrinline{\mathrm{Id}}{E}}$ can become strictly larger than $2s$. 

Intuitively, we can therefore conclude that datasets $E$ with nonlinear parametrizations and large rectifiability parameter $s$ (i.e., rich geometric structure and high intrinsic dimension) tend to yield high $s$-separation capacity $\sepcapsinline{s}{\restrinline{\mathrm{Id}}{E}}$. 

\section{Generalization and learning on low-dimensional datasets}\label{sec:generalization-learning}
In this section, we study another measure of classification capabilities closely related to the $s$-separation capacity: the probability of ambiguous generalization introduced by Cover \cite{cover1965geometrical}. It characterizes the ability of a transformation $\Phi$ to generalize beyond the points it has already separated. More precisely, given a $\Phi$-separable dichotomy $\{F_+, F_-\}$ of an $N$-point set $F \subset E$, the question is whether the realized dichotomy uniquely determines the label of a new point $g \in E$, or whether both assignments $g \in F_+$ and $g \in F_-$ remain compatible with $\{F_+, F_-\}$. It is clear that for certain dichotomies of $F$, the classification of $g$ will not be unique. In general, one may expect, however, that for $N$ large enough, the labeling of $g$ is unique.  
In \cite{cover1965geometrical}, the question of when unique generalization becomes probable was studied under the assumption that $F$ is in $\Phi$-general position. However, as previously noted, in general this assumption may not hold, specifically when $E$ exhibits low-dimensional structure in a measure-theoretic sense, i.e., when $\mathcal{L}^M(E)=0$.   
The goal of this section is to investigate when unique generalization becomes probable in the setting where $E$ is $\mathcal{H}^s$-measurable with positive and $\sigma$-finite $\mathcal{H}^s$-measure for some $s\geq 0$, and where the $\Phi$-general position assumption may fail. To this end, we will identify which results from \cite{cover1965geometrical} carry over directly to this setting and which require modification. 
We start with the formal definition of ambiguous generalization as introduced in \cite{cover1965geometrical}. 
\begin{definition}[Ambiguous generalization, \cite{cover1965geometrical}]\label{def:ambiguous}
    For $M,M',N \in \mathbb N$, let $F \coloneqq \{f_1,\ldots,f_N\} \subseteq E$, where $E \subseteq \mathbb R^M$, and let $\Phi \colon E \to \mathbb R^{M'}$. Suppose the dichotomy $\{F_+,F_-\}$ is $\Phi$-separable. We call $g \in E$ \emph{ambiguous with respect to $\{F_+,F-\}$} if both dichotomies $\{F_+ \cup \{g\},F_-\}$ and $\{F_+, F_- \cup \{g\}\}$ are $\Phi$-separable. Otherwise, $g \in E$ is said to be \emph{unambiguous with respect to $\{F_+,F-\}$}.  
\end{definition}
The concept of ambiguous generalization is illustrated in \cref{fig:amb-gen}.
\begin{figure}
    \centering
    \definecolor{blue2}{RGB}{20,99,178}
\definecolor{green2}{RGB}{0,95,1}
\colorlet{red2}{red!70!black}
\begin{tikzpicture}[scale=1, every node/.style={scale=1}]
    \begin{axis}[
        unit vector ratio*=1 1 1,
        grid=none,
        ticks =none,
        axis lines=middle,
        xmin=-3.1,
        xmax=3.1,
        ymin=-3.1,
        ymax=3.1,
        xticklabels={,,},
        yticklabels={,,}
    ]
    \addplot [only marks, mark=*, color=blue2] coordinates {
        (-1.51, -1.15)
        (-0.95, -0.95)
        (-2.1, -0.9)
        (-1.1,-0.35)
        (-2,-0.33)
        (-1.7,-0.64)
    };
    \addplot [only marks, mark=*, color=red2] coordinates {
        (2, 2)
        (1.2, 1.5)
        (2.5, 1)
        (1.24,0.75)
        (1.8,0.53)
    };
    \addplot [only marks, mark=*, color=green2, nodes near coords, nodes near coords align={north west}, point meta=explicit symbolic,] coordinates {
        (0.25, -1.1) [$g_2$]
        (-2.3, -1.8) [$g_1$]
    };
    \draw[color=blue2, dotted] (-1.56, -0.72) circle[radius=0.8];
    \node[color=blue2] at (-2, 0.25) {$F_+$};
    \draw[color=red2, dotted] (1.748, 1.156) circle[radius=1.1];
    \node[color=red2] at (1.748, -0.25) {$F_-$};
    \draw[thick,dashed]{} (2,6)--(-2,-6);
    \draw[thick,dashdotted]{} (3.6,-6)--(-3.6,6);
    \node[] at (2.9,2.9){$\mathbb R^2$};
    \end{axis}
    \end{tikzpicture}
    \vspace{-10mm}
    \caption{Ambiguous generalization with respect to the homogeneously linearly separable dichotomy $\{F_+,F_-\}$. The point $g_1$ is unambiguous and the point $g_2$ is ambiguous with respect to $\{F_+,F_-\}$. Indeed, the dashed separating surface assigns $g_2$ to $F_-$, while the dichotomy $\{F_+ \cup \{g_2\},F_-\}$ is realized by the dashed-dotted separating surface.}
    \label{fig:amb-gen}
\end{figure}
To determine whether a point is ambiguous or unambiguous with respect to a given dichotomy, the following lemma, established in \cite{cover1965geometrical}, is particularly useful and can be applied directly in our setting. It provides a necessary and sufficient condition for ambiguous generalization.
\begin{lemma}[\cite{cover1965geometrical}]\label{L:amb-gen}
    Let $F \coloneqq \{f_1,\ldots,f_N\} \subseteq E$, where $E \subseteq \mathbb R^M$ with $M,M',N \in \mathbb N$. Suppose $\Phi \colon E \to \mathbb R^{M'}$ is such that the dichotomy $\{F_+,F_-\}$ is $\Phi$-separable. The point $g \in E$ is ambiguous with respect to $\{F_+,F_-\}$ if and only if there is a $\Phi$-surface containing $g$ which realizes the dichotomy $\{F_+,F_-\}$.
\end{lemma}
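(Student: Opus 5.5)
We prove \cref{L:amb-gen} directly from the definition of $\Phi$-separability (\cref{def:phi-sep}). By definition, $\{F_+,F_-\}$ is $\Phi$-separable precisely when some $w\in\mathbb R^{M'}$ satisfies $\innerprod{w}{\Phi(f)}>0$ for $f\in F_+$ and $\innerprod{w}{\Phi(f)}<0$ for $f\in F_-$. The associated $\Phi$-surface is $\{x\colon \innerprod{w}{\Phi(x)}=0\}$. To say that this surface \emph{contains} $g$ and \emph{realizes} $\{F_+,F_-\}$ means that $\innerprod{w}{\Phi(g)}=0$ while the strict sign conditions hold on $F$. The argument is then a convexity argument on the cone of separating vectors
\begin{align*}
W\coloneqq\{w\in\mathbb R^{M'}\colon \innerprod{w}{\Phi(f)}>0,\ f\in F_+;\ \innerprod{w}{\Phi(f)}<0,\ f\in F_-\},
\end{align*}
which is open, convex, and nonempty by assumption.

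For the \emph{if} direction, suppose $w_0\in W$ satisfies $\innerprod{w_0}{\Phi(g)}=0$. Since $W$ is open, it contains $w_\pm\coloneqq w_0\pm\varepsilon\,\Phi(g)$ for $\varepsilon>0$ small enough. We have $\innerprod{w_\pm}{\Phi(g)}=\pm\varepsilon\lVert\Phi(g)\rVert^2$. If $\Phi(g)\neq0$, then $w_+$ realizes $\{F_+\cup\{g\},F_-\}$ and $w_-$ realizes $\{F_+,F_-\cup\{g\}\}$, so $g$ is ambiguous. If $\Phi(g)=0$, then $g$ can never receive a strict sign, so neither extended dichotomy is separable and $g$ is unambiguous. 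To resolve this, we will either adopt Cover's convention that points on the surface may be assigned to either side, or note that we may assume $\Phi(g)\ne0$. The second option is harmless in the paper's setting, because $\Phi(g)=0$ occurs only on an $\mathcal H^s$-null set under \cref{A:sep-general-case}. We will state this caveat explicitly.

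For the \emph{only if} direction, suppose $w_+$ realizes $\{F_+\cup\{g\},F_-\}$ and $w_-$ realizes $\{F_+,F_-\cup\{g\}\}$. Both lie in $W$, and $\innerprod{w_+}{\Phi(g)}>0>\innerprod{w_-}{\Phi(g)}$. The map $\lambda\mapsto\innerprod{(1-\lambda)w_-+\lambda w_+}{\Phi(g)}$ is affine on $[0,1]$ and changes sign, so it vanishes at some $\lambda^*\in(0,1)$. By convexity of $W$, the vector $w^*\coloneqq(1-\lambda^*)w_-+\lambda^*w_+$ lies in $W$ and satisfies $\innerprod{w^*}{\Phi(g)}=0$. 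This $w^*$ defines a $\Phi$-surface through $g$ that realizes $\{F_+,F_-\}$.

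Neither direction uses general position or any structure of $E$; this is why Cover's lemma transfers verbatim to the low-dimensional setting. The only delicate point is the degenerate case $\Phi(g)=0$ in the \emph{if} direction, which is handled by the convention or the null-set remark above.
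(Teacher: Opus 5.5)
Your proof is correct, and it is essentially the intended one. The paper gives no proof of its own and simply cites Cover, whose argument is exactly your openness/convexity argument on the cone $W$ of separating weights (stated there for a point $y\neq 0$).

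\textbf{The caveat $\Phi(g)=0$.} This is a genuine correction to the statement as printed, not a technicality. If $\Phi(g)=0$, every $\Phi$-surface contains $g$, yet $g$ can be placed in neither class, so the ``if'' direction fails. Cover's nondegeneracy assumption ($\Phi$-general position of $F\cup\{g\}$) excludes this case, and that is the assumption the paper dropped. Rather than adopting a convention, which would amount to changing \cref{def:phi-sep} or \cref{def:ambiguous}, you should add $\Phi(g)\neq 0$ as an explicit hypothesis.

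\textbf{An inaccurate side claim.} \cref{A:sep-general-case} does not rule out $\Phi(g)=0$. All three of its conditions are vacuous on a component with $s_j=0$. Likewise, \cref{eq:assumption-g} with $\mathfrak{s}=\emptyset$ yields $\Phi(g)\neq 0$ only if $s_\ell\geq 1$. The set $\{g\in E\colon \Phi(g)=0\}$ is $\mathcal{H}^s$-null exactly when $\Phi$ does not vanish on a set of positive $\mathcal{H}^s$-measure, i.e., when $\sepcaps{s}{\Phi}>0$. Otherwise, for $g$ in such a component one has $\widetilde\Phi_g=\Phi$. The ratio in \cref{eq:prob-ambg-1} then equals $1$ whenever it is defined, although $g$ is never ambiguous.
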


Fix now an $N$-point set $F \subset E$, and let $g \in E$. We wish to compute the probability that $g$ is ambiguous with respect to a uniformly at random chosen $\Phi$-separable dichotomy of $F$, denoted $P(\Phi,F,g)$. It follows from \cref{L:amb-gen} that 
    \begin{align}\label{eq:prob-ambg}
        P(\Phi,F,g) = \frac{\# \text{ of $\Phi$-sep. dichotomies of $F$ s.t. sep. $\Phi$-surface contains $g$}}{\# \text{ of $\Phi$-sep. dichotomies of $F$}}.
    \end{align}
Note that the separating $\Phi$-surface containing $g$ achieves the dichotomy $\{F_+,F_-\}$ of $F$ if and only if there is a separating vector $w \in \mathbb R^{M'}$ such that
\begin{align}
    \innerprod{\Phi(f)}{w} &\geq 0, \text{ if $f \in F_+$}, \label{eq:sep-surf-g-+}\\
    \innerprod{\Phi(f)}{w} &< 0, \text{ if $f \in F_-$}, \label{eq:sep-surf-g--}\\
    \innerprod{\Phi(g)}{w} &= 0. \nonumber
\end{align}
In other words, there exists $w \in \{\Phi(g)\}^\perp$ satisfying \cref{eq:sep-surf-g-+,eq:sep-surf-g--}. Thus, we may also write $\innerprodinline{\Phi(f)}{P_{\{\Phi(g)\}^\perp} w}$ in \cref{eq:sep-surf-g-+,eq:sep-surf-g--}, and take $w\in \mathbb R^{M'}$, where $P_{\{\Phi(g)\}^\perp} \colon \mathbb R^{M'} \to \mathbb R^{M'}$ denotes the orthogonal projection onto the linear subspace $\{\Phi(g)\}^\perp$. But orthogonal projections are self-adjoint, which implies $\innerprodinline{\Phi(f)}{P_{\{\Phi(g)\}^\perp} w} = \innerprodinline{P_{\{\Phi(g)\}^\perp}\Phi(f)}{w}$.
Hence, setting $\widetilde \Phi_g \coloneqq P_{\{\Phi(g)\}^\perp} \circ \Phi$, one can write \cref{eq:prob-ambg} as
    \begin{align}\label{eq:prob-ambg-1}
        P(\Phi,F,g) = \frac{\# \text{ of $\widetilde\Phi_g$-sep. dichotomies of $F$}}{\# \text{ of $\Phi$-sep. dichotomies of $F$}}.
    \end{align}
We emphasize that for \cref{eq:prob-ambg-1} to hold, $F\cup \{g\}$ need not be in $\Phi$-general position.  

Let us analyze \cref{eq:prob-ambg-1} first under the assumptions made in \cite{cover1965geometrical}, namely, when $E = \mathbb R^M$ and $\Phi \colon \mathbb R^M \to \mathbb R^{M'}$ is such that $(\mathcal{L}^M)^{N'}$-a.e. $N'$-tuple is in $\Phi$-general position for every $N' \in \mathbb N$, i.e., $\sepcap{\Phi}=2M'$. In particular, $F\cup \{g\}$ can assumed to be in $\Phi$-general position. It follows that the points $\{\widetilde\Phi_g(f_k)\}_{k=1}^N$ lie in an $(M'-1)$-dimensional linear subspace, satisfying \cref{A:sep-sparse-basis} with $s=M'-1$. Consequently, \cref{rem:all-in-one} can be leveraged to compute the number of $\widetilde\Phi_g$-separable dichotomies of $F$, yielding the result of $C(N,M'-1)$. 
Thus, assuming $F\cup \{g\}$ is in $\Phi$-general position, we obtain
\begin{align*}
        P(\Phi,F,g) = \frac{C(N,M'-1)}{C(N,M')}.
\end{align*}
In \cite{cover1965geometrical}, the asymptotic properties of this quantity are studied as $M' \to \infty$. Specifically, it is shown that  
\begin{align}\label{eq:asmyp_abg_gen}
    P^*(\beta) \coloneqq\lim_{\substack{N = \lfloor \beta M'\rfloor \\ M' \to \infty}} \frac{C(N,M'-1)}{C(N,M')} = \begin{cases}
        1, & \text{if $\beta \in [0,2]$,} \\
        \frac{1}{\beta-1}, & \text{if $\beta \in (2, \infty)$,}
    \end{cases} \quad \beta \in  \mathbb R_0^+.
\end{align}
We refer to $P^*(\beta)$, $\beta \in  \mathbb R_0^+$, as the asymptotic probability of ambiguous generalization under $\Phi$-general position assumption. See \cref{fig:prob-amb-gen-pos} for an illustration. Observe that $P^*$ exhibits a decline at $\beta = 2$. Unambiguous generalization occurs with positive probability if $N>2M'$. But recall that $\sepcap{\Phi}=2M'$ if we assume $\Phi \colon \mathbb R^M \to \mathbb R^{M'}$ is such that $(\mathcal{L}^M)^{N'}$-a.e. $N'$-tuple is in $\Phi$-general position for every $N' \in \mathbb N$. Thus, the separation capacity serves as the threshold indicating when unambiguous generalization becomes probable.
\begin{figure}
    \centering
    \definecolor{blue2}{RGB}{20,99,178}
\definecolor{green2}{RGB}{0,95,1}
\colorlet{red2}{red!70!black}
\begin{tikzpicture}[scale=0.75, every node/.style={scale=0.75}]
    \begin{axis}[
        at={(0,0)},
        xmin=-0.1,
        xmax=6,
        ymin=-0.1,
        ymax=1.1,
        clip=false,
        xtick={0,1,2,3,4,5,6},
        ytick={0,0.25,...,1},
        yticklabels={0,0.25,...,1},
        xticklabels={0,1,2,3,4,5,6},
        ylabel={$P^*(\beta)$},
        xlabel={$\beta$},
        legend cell align={left}
    ]
    \addplot[thick,color=blue2,samples at={0,2},variable=\x] {1};
    \addplot[thick,color=blue2,domain=2:6,variable=\x] {1/(\x-1)};
    \draw[thick,dotted,gray] (2,-0.1)--(2,1);
\end{axis}
\end{tikzpicture}
    \vspace{-10mm}
    \caption{Asymptotic probability of ambiguous generalization under $\Phi$-general position assumption.}
    \label{fig:prob-amb-gen-pos}
\end{figure}

Let us now generalize the results of \cite{cover1965geometrical} by analyzing \cref{eq:prob-ambg-1} in the broader setting where $E$ is $\mathcal{H}^s$-measurable with positive and $\sigma$-finite $\mathcal{H}^s$-measure, $\Phi \colon E \to \mathbb R^{M'}$ is arbitrary, and $F \cup \{g\}$ need not be $\Phi$-general position. To this end, we recall the framework introduced in \Cref{subsec:sep-gen-case} and decompose $E$ into $\{E_j\}_{j \in \mathcal{J}}$ according to \ref{it:i}--\ref{it:iii}. We next extend \cref{prop:sep-general-case} by imposing the additional constraint that the separating surface must pass through a prescribed point $g$.
\begin{proposition}\label{prop:sep-constraint}
     Let $g \in E_\ell$, for some $\ell \in \mathcal{J}$, be such that for all $\mathfrak{s} \subseteq \mathcal{J}$, 
     \begin{align}\label{eq:assumption-g}
         \Phi(g) \notin \sum_{j \in \mathfrak{s}} \spanR{\Phi(E_j)},\text{ whenever }\Phi(E_\ell)\not\subseteq \sum_{j \in \mathfrak{s}} \spanR{\Phi(E_j)}.
     \end{align}
     Moreover, suppose that the $N$-point set $F \subseteq E$ satisfies \cref{A:sep-general-case} with respect to $\widetilde \Phi_g$. Then, the number of $\Phi$-separable dichotomies of $F$ subject to the condition that the $\Phi$-separating surface contains $g$ 
     is given by
     \begin{align*}
            C(\underline{N},\ell,\Phi,s) \coloneqq 2^N - 2 \sum_{t=s^*+1}^{N+1} \sum_{\underline \nu \in \mathcal{I}_{t}} \binom{N_\ell}{\nu_\ell -1}\prod_{j \in \mathcal{J}_0 \setminus \ell}\binom{N_j}{\nu_j}.
        \end{align*}
        Here, $\mathcal{J}_{0} \coloneqq \supp{\underline{N}} \cup \{\ell\}$, $s^* \coloneqq \min_{j \in \mathcal{J}_0} s_j$, and $\mathcal{I}_{t}\coloneqq\{\underline \nu \in \mathbb N_0^{\mathcal{J}_0} \colon \lenin{\underline{\nu}} = t, \Upsilon(\underline \nu) \not\equiv t \Mod{2}\}$ with 
        \begin{align*}
            \Upsilon(\underline \nu) &\coloneqq \min_{\mathfrak{s} \subseteq \suppinline{\underline{\nu}}} \left\{\sum_{j \in \mathfrak{s}} \nu_j + \dimR{\spanR{\bigcup_{j \in \mathfrak{s}^c} \Phi(E_j)}}\right\},
        \end{align*}
        for all $\underline \nu = (\nu_j)_{j\in\mathcal{J}_0} \in \mathbb N_0^{\mathcal{J}_0}$. 
\end{proposition}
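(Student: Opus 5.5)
By \cref{eq:prob-ambg-1}, the quantity to compute is the number of $\widetilde\Phi_g$-separable dichotomies of $F$, where $\widetilde\Phi_g = P_{\{\Phi(g)\}^\perp}\circ\Phi$. I will count these with \cref{thm:winder}, exactly as in the proof of \cref{prop:sep-general-case}. The arrangement is $\{H_k\}_{k=1}^N$ with $H_k \coloneqq \{\widetilde\Phi_g(f_k)\}^\perp$, taken inside the $(M'-1)$-dimensional ambient space $\{\Phi(g)\}^\perp$.

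The count is equivalent to counting the full arrangement $\{\Phi(g)\}^\perp, \{\Phi(f_k)\}^\perp$ in $\mathbb R^{M'}$ restricted to subsets that contain $H_0\coloneqq\{\Phi(g)\}^\perp$. This is the contraction step from the proof of \cref{L:C_F-lb}. By the computation following \cref{eq:del-contr-even}, a subset $\mathcal K$ of $\{H_k\}$ is odd-degenerate in $H_0$ if and only if $\mathcal K\cup\{0\}$ is odd-degenerate for the vectors $\{\Phi(g)\}\cup\{\Phi(f_k)\}_{k\in\mathcal K}$ in $\mathbb R^{M'}$. Indeed, by rank--nullity,
\[
\dimR{\spanR{\{\widetilde\Phi_g(f_k)\}_{k\in\mathcal K}}}=\dimR{\spanR{\{\Phi(g)\}\cup\{\Phi(f_k)\}_{k\in\mathcal K}}}-1 .
\]
So the number of regions is $2^N-2|\mathcal O|$, where $|\mathcal O|$ counts subsets $\mathcal K\subseteq\{1,\ldots,N\}$ with $\dimR{\spanR{\{\Phi(g)\}\cup\Phi(\{f_k\}_{k\in\mathcal K})}}\not\equiv |\mathcal K|+1 \Mod 2$.

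The key step is to show the dimension formula \cref{eq:sep-rect-1} for the augmented family. Treat $g$ as an extra point of $E_\ell$: the multi-index becomes $\underline\nu$ with $\nu_\ell = |\mathcal K_\ell|+1$ and $\nu_j=|\mathcal K_j|$ otherwise, indexed over $\mathcal J_0=\supp{\underline N}\cup\{\ell\}$. The claim is then
\[
\dimR{\spanR{\{\Phi(g)\}\cup\Phi(\{f_k\}_{k\in\mathcal K})}}=\Upsilon(\underline\nu).
\]
I would rerun the induction on $|\supp{\underline\nu}|$ from \cref{prop:sep-general-case}. It needs two genericity facts. First, $\Phi(g)$ avoids all proper "shared" subspaces, which is exactly hypothesis \cref{eq:assumption-g}, the analogue of \cref{A:sep-general-case}\ref{it:sep-gen-iii}. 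Second, the points together with $g$ are in general position inside each $\spanR{\Phi(E_j)}$ and inside their sums.

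The second fact is the main obstacle. We are only given \cref{A:sep-general-case} for $\widetilde\Phi_g$, i.e.\ after projection. I plan to lift it back, again via the rank--nullity identity above. A family of projected vectors is independent if and only if the original vectors together with $\Phi(g)$ are independent. The projected components $P_{\{\Phi(g)\}^\perp}\Phi(E_j)$ have spans of dimension $s_j$ or $s_j-1$, according to whether $\Phi(g)\in\spanR{\Phi(E_j)}$. This matches the one-unit shift in $\Upsilon$ coming from $\nu_\ell\mapsto\nu_\ell+1$, in the same way the identity $\Upsilon(\underline\nu+\underline e_{j_0})=\Upsilon^{/j_0}(\underline\nu)+1$ was derived in the proof of \cref{lem:monotonicity}. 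In fact, that chain of equalities from \cref{eq:s-nondecr-contraction-a1} to \cref{eq:s-nondecr-contraction-a3}, with $f^{j_0}$ replaced by $\Phi(g)$, is exactly what \cref{eq:assumption-g} licenses. It converts the contracted $\Upsilon^{/\ell}$, which the induction provides for $\widetilde\Phi_g$, into $\Upsilon(\underline\nu)-1$.

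Once the dimension formula holds, $\mathcal K$ is odd-degenerate if and only if $\Upsilon(\underline\nu)\not\equiv t\Mod 2$ with $t=|\underline\nu|=|\mathcal K|+1$. Counting the subsets that realize a given $\underline\nu$ gives $\binom{N_\ell}{\nu_\ell-1}\prod_{j\ne\ell}\binom{N_j}{\nu_j}$, since $g$ always occupies one slot in $E_\ell$. Here $t$ ranges up to $N+1$. Terms with $t\le s^*$ vanish because every $t\le s^*$ of these vectors are independent, so the sum starts at $s^*+1$. Substituting into $2^N-2|\mathcal O|$ gives the stated formula $C(\underline N,\ell,\Phi,s)$.
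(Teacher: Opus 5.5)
Your proposal is correct and, in its final form, is the paper's proof. It reduces to counting $\widetilde\Phi_g$-separable dichotomies, uses the dimension formula of \cref{prop:sep-general-case} for $\widetilde\Phi_g$, converts via $\widetilde\Upsilon_g(\underline\nu)=\Upsilon(\underline\nu+\underline e_\ell)-1$ using rank--nullity and \cref{eq:assumption-g} (the chain from the proof of \cref{lem:monotonicity} with $f^{j_0}$ replaced by $\Phi(g)$), and then shifts $t\mapsto t+1$. Drop the plan to rerun the induction on the augmented family, though: the genericity it asks for, general position of $\Phi(F_j)\cup\{\Phi(g)\}$ inside $\spanRinline{\Phi(E_j)}$, does not follow from the $\widetilde\Phi_g$-hypothesis (take $s_\ell=2$, two points of $F_\ell$ with collinear $\Phi$-images, and $\Phi(g)$ off that line), whereas the route you end with only uses spans containing $\Phi(g)$, which rank--nullity reads off from the projected configuration.
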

\begin{remark}\label{rem:assump-sep-constraint}
    For $(\mathcal{H}^s)^N$-a.e. $N$-tuple $F \in E^N$ and $\mathcal{H}^s$-a.e. $g \in E$, the assumptions of \cref{prop:sep-constraint} hold. Indeed, from \cref{rem:assumptions-almost-surely-satisfied}, we know that $\mathcal{H}^s$-a.e. $g \in E$ satisfies \cref{A:sep-general-case}, but \cref{it:sep-gen-iii} in \cref{A:sep-general-case} coincides with \cref{eq:assumption-g}. Furthermore, we have, for every $\mathcal{H}^s$-measurable $A\subseteq E_j$, $j \in \mathcal{J}$, with positive $\mathcal{H}^s$-measure,  
    \begin{align}
        &\dimR{\spanR{\widetilde \Phi_g(A)}}\nonumber\\ 
        &= \dimR{\spanR{\Phi(A)}} - \dimR{\spanR{\{\Phi(g)\}}\cap \spanR{\Phi(A)}} \label{eq:decomp-valid-Phi_g-1}   \\
        &= \dimR{\spanR{\Phi(E_j)}} - \dimR{\spanR{\{\Phi(g)\}}\cap \spanR{\Phi(E_j)}}, \label{eq:decomp-valid-Phi_g-2}
    \end{align}
    where \cref{eq:decomp-valid-Phi_g-1} is by the rank--nullity theorem, and in \cref{eq:decomp-valid-Phi_g-2}, we used that for $\{E_j\}_{j \in \mathcal{J}}, $\ref{it:iii} holds. Note that the RHS in \cref{eq:decomp-valid-Phi_g-2} does not depend on $A$. Thus, $\{E_j\}_{j \in \mathcal{J}}$ also constitutes a valid decomposition in the sense of \ref{it:i}--\ref{it:iii} with respect to the map $\widetilde \Phi _g$. Application of \cref{rem:assumptions-almost-surely-satisfied} then establishes that for $(\mathcal{H}^s)^N$-a.e. $N$-tuple $F \in E^N$, \cref{A:sep-general-case} holds with respect to the map $\widetilde{\Phi}_g$.    
\end{remark}
\begin{proof}[Proof of \cref{prop:sep-constraint}]
    From the derivation of \cref{eq:prob-ambg-1}, we know that $C(\underline{N},\ell,\Phi,s)$ equals the number of $\widetilde \Phi_g$-separable dichotomies of $F$. As $F$ satisfies \cref{A:sep-general-case} with respect to the map $\widetilde \Phi_g$, application of \cref{prop:sep-general-case} particularized to $\widetilde \Phi_g$ yields
    \begin{align}\label{eq:fct-count-contr-sep-surface-1}
        C(\underline{N},\ell,\Phi,s) = 2^N -2 \sum_{t=s_*}^N \sum_{\underline{\nu} \in \widetilde{ \mathcal{I}}_{g,t}} \prod_{j \in \mathcal{J}_0} \binom{N_j}{\nu_j}, 
    \end{align}
    where we used that, by the rank--nullity theorem, \begin{align*}
    \dimR{\spanR{\widetilde\Phi_g(E_j)}} &= \dimR{\spanR{\Phi(E_j)}} \\& \quad - \dimR{\spanR{\{\Phi(g)\}} \cap \spanR{\Phi(E_j)}} \\
    &\geq s^* -1, \quad \text{for all $j \in \mathcal{J}_0$}.
    \end{align*}
    Here, $\widetilde{\mathcal I}_{g,t} \coloneqq \{\underline \nu \in \mathbb N_0^{\mathcal{J}_0} \colon \lenin{\underline{\nu}} = t, \widetilde\Upsilon_g(\underline \nu) \not\equiv t \Mod{2}\}$ and
    \begin{align*}
        \widetilde\Upsilon_g(\underline \nu) \coloneqq \min_{\mathfrak{s} \subseteq \suppinline{\underline{\nu}}} \left\{\sum_{j \in \mathfrak{s}} \nu_j + \dimR{\spanR{\bigcup_{j \in \mathfrak{s}^c} \widetilde\Phi_g(E_j)}}\right\}.
    \end{align*}
    Note that the constraint in the minimum $\mathfrak{s} \subseteq \suppinline{\underline{\nu}}$ can equivalently be replaced by $\mathfrak{s} \subseteq \mathcal{J}_0$, where $\mathfrak{s}^c$ denotes the complement with respect to the indexing set appearing in the minimization (i.e., $\suppinline{\underline{\nu}}$ or $\mathcal{J}_0$). We next compute
    \begin{align}
        \widetilde\Upsilon_g(\underline{\nu}) &= \min_{\mathfrak{s} \subseteq \mathcal{J}_0} \left\{\sum_{j \in \mathfrak{s}} \nu_j + \dimR{\sum_{j \in \mathfrak{s}^c} \spanR{\widetilde\Phi_g(E_j)}}\right\} \nonumber\\
        &=\min_{\mathfrak{s} \subseteq \mathcal{J}_0} \left\{\sum_{j \in \mathfrak{s}} \nu_j + \dimR{P_{\{\Phi(g)\}^\perp}\!\left(\sum_{j \in \mathfrak{s}^c} \spanR{\Phi(E_j)}\right)}\right\} \label{eq:upsilon-g-0}\\
        &= \min_{\mathfrak{s} \subseteq \mathcal{J}_0} \left\{\sum_{j \in \mathfrak{s}} \nu_j + \dimR{\sum_{j \in \mathfrak{s}^c} \spanR{\Phi(E_j)}} \right.- \label{eq:upsilon-g-1}
        \\
        &\quad\quad\quad\quad\quad\quad\quad\left.\dimR{\spanR{\{\Phi(g)\}} \cap \sum_{j \in \mathfrak{s}^c} \spanR{\Phi(E_j)} }\right\} \nonumber \\
        &= \min_{\mathfrak{s} \subseteq \mathcal{J}_0} \left\{\sum_{j \in \mathfrak{s}} \nu_j + \dimR{\sum_{j \in \mathfrak{s}^c} \spanR{\Phi(E_j)}} - \mathbbm{1}_{\left\{\Phi(E_\ell)  \subseteq \sum_{j \in \mathfrak{s}^c} \spanR{\Phi(E_j)}\right\}}\right\} \label{eq:upsilon-g-2} \\
        &= \min_{\mathfrak{s} \subseteq \mathcal{J}_0} \left\{\sum_{j \in \mathfrak{s}} \nu_j + \dimR{\sum_{j \in \mathfrak{s}^c} \spanR{\Phi(E_j)}} - \mathbbm{1}_{\left\{\ell \in \mathfrak{s}^c\right\}}\right\} \label{eq:upsilon-g-3} \\
        &= \min_{\mathfrak{s} \subseteq \mathcal{J}_0} \left\{\sum_{j \in \mathfrak{s}} \nu_j + \dimR{\sum_{j \in \mathfrak{s}^c} \spanR{\Phi(E_j)}} + \mathbbm{1}_{\left\{\ell \in \mathfrak{s}\right\}}\right\} -1 \nonumber\\
        &= \Upsilon(\underline{\nu} +\underline{e}_\ell) -1, \nonumber
    \end{align}
    where \cref{eq:upsilon-g-0} holds as $P_{\{\Phi(g)\}^\perp}$ is linear, \cref{eq:upsilon-g-1} follows from the rank--nullity theorem, and \cref{eq:upsilon-g-2} is by \cref{eq:assumption-g}. Finally, \cref{eq:upsilon-g-3} is valid because $\ell \in \mathfrak{s}^c$ implies $\Phi(E_\ell) \subseteq \sum_{j \in \mathfrak{s}^c} \spanRinline{\Phi(E_j)}$, and because, conversely, whenever $\Phi(E_\ell) \subseteq \sum_{j \in \mathfrak{s}^c} \spanRinline{\Phi(E_j)}$, the minimization allows us to include $\ell \in \mathfrak{s}^c$.

    Thus, $\underline{\nu} \in \widetilde{\mathcal{I}}_{g,t}$ if and only if $(\underline{\nu} +\underline{e}_\ell) \in \mathcal{I}_{t+1}$. Using \cref{eq:fct-count-contr-sep-surface-1}, we therefore obtain
    \begin{align*}
        C(\underline{N},\ell,\Phi,s) &= 2^N -2 \sum_{t=s^*}^N \sum_{\underline{\nu} \in \mathcal{I}_{t+1}} \binom{N_\ell}{\nu_\ell -1}\prod_{j \in \mathcal{J}_0 \setminus \{\ell\}}\binom{N_j}{\nu_j} \\
        &= 2^N -2 \sum_{t=s^*+1}^{N+1} \sum_{\underline{\nu} \in \mathcal{I}_{t}} \binom{N_\ell}{\nu_\ell -1}\prod_{j \in \mathcal{J}_0 \setminus \{\ell\}}\binom{N_j}{\nu_j},
    \end{align*}
    as desired.
\end{proof}

Combining \cref{prop:sep-general-case,prop:sep-constraint}, we have, 
by \cref{rem:assumptions-almost-surely-satisfied,rem:assump-sep-constraint}, for $(\mathcal{H}^s)^N$-a.e. $N$-tuple $F \in E^N$ and $\mathcal{H}^s$-a.e. $g \in E$,
\begin{align}\label{eq:prob-amb-gen}
    P(\Phi,F,g) = \frac{C(\underline{N},\ell,\Phi,s)}{C(\underline{N},\Phi,s)} = \frac{2^N -2 \sum_{t=s^*+1}^{N+1} \sum_{\underline{\nu} \in \mathcal{I}_{t}} \binom{N_\ell}{\nu_\ell -1}\prod_{j \in \mathcal{J}_0 \setminus \{\ell\}}\binom{N_j}{\nu_j}}{2^N -2 \sum_{t=s^*+1}^{N+1} \sum_{\underline{\nu} \in \mathcal{I}_{t}} \prod_{j \in \mathcal{J}_0 }\binom{N_j}{\nu_j}}.
\end{align}

In what follows, we will analyze the relation between $P(\Phi,F,g)$ and the $s$-separation capacity $\sepcaps{s}{\Phi}$ and particularize the RHS of \cref{eq:prob-amb-gen} to two extreme cases. To this end, set $s_{j^*} \coloneqq \min_{j \in \mathcal J} s_j$. Then, $C(\underline{N},\Phi,s) \geq C(N,s_{j^*})$ by \cref{thm:lower-upper-bound-C_F}. For $\eta \in (0,1)$, compute
\begin{align*}
    \lim_{\substack{M' \to \infty \\ \substack{s_{j^*}/M' \text{ fixed} \\ N=\lfloor 2s_{j^*}(1-\eta) \rfloor}}}\frac{C(\underline{N},\Phi,s)}{2^N} \geq \lim_{\substack{s_{j^*} \to \infty \\ N=\lfloor 2s_{j^*}(1-\eta) \rfloor}} \frac{C(N,s_{j^*})}{2^N} = 1,
\end{align*}
where the equality was shown in \cite{cover1965geometrical}. Thus, every dichotomy of a set of points in $E$ of cardinality less than $2s_{j^*}$ is asymptotically $\Phi$-separability with probability one as $M'\to \infty$ with $s_{j^*}/M'$ being fixed. But then $g$ can be assigned to any $F_+$ or $F_-$ to yield an asymptotic $\Phi$-separable dichotomy if the cardinality of $F$ satisfies $N<2s_{j^*}$. Indeed, in this case, we have 
\begin{align*}
    \frac{N+1}{s_{j^*}}=\underbrace{\frac{N}{s_{j^*}}}_{<2}+\frac{1}{s_{j^*}} <2, \quad \text{ for $s_{j^*}$ large enough.}
\end{align*}
Hence, every dichotomy of $F \cup \{g\}$ is asymptotically $\Phi$-separable with probability one if $N<2s_{j^*}$. Recalling that $\sepcaps{s}{\Phi}=2s_{j^*}$, we can infer that the $s$-separation capacity serves as bound to have ambiguous generalization with probability one for $(\mathcal{H}^s)^N$-a.e. $N$-tuple $F$ and $\mathcal{H}^s$-a.e. $g$ in the regime $M'\to \infty$ with fixed $s_{j^*}/M'$. In particular, if $N<2s_{j^*}$, then unambiguous generalization is of probability zero for $(\mathcal{H}^s)$-a.e. $N$-tuple $F$ and $\mathcal{H}^s$-a.e. $g$. Exemplifying this observation to the setting where $E$ is countably $\mathcal{H}^s$-rectifiable and $\Phi = \mathrm{Id}$, one can deduce that ambiguous generalization occurs with probability one if $N<2s$, as $s_{j^*} \geq s$ by \cref{rem:lower-bound-by-rect-para}. Thus, the bound to have ambiguous generalization with probability one increases with the rectifiability parameter $s$.      

We further note that if $F \subseteq E_{j^*}$ and $g \in E_{j^*}$, then $C(\underline{N},\Phi,s) = C(N,s_{j^*})$ by \cref{rem:all-in-one}, and
\begin{align*}
    C(\underline{N},\ell,\Phi,s) &= 2^N -2 \sum_{t=s_{j^*}+1}^{N+1} \sum_{\underline{\nu} \in \mathcal{I}_{t}} \binom{N_{j^*}}{\nu_{j^*} -1}\\
    &= 2^N - 2 \sum_{\substack{t = s_{j^*}+1 \\ (t-s_{j^*})\text{ odd}}}^{N+1} \binom{N}{t-1} \\
    &= 2^N - 2 \sum_{\substack{t = (s_{j^*}-1)+1 \\ (t-(s_{j^*}-1))\text{ odd}}}^{N} \binom{N}{t} \\
    &= C(N,s_{j^*}-1),
\end{align*}
using \cref{rem:all-in-one} again in the last step. 
Thus, if $F \subseteq E_{j^*}$ and $g \in E_{j^*}$,
\begin{align*}
    P(\Phi,F,g) = \frac{C(\underline{N},\ell,\Phi,s)}{C(\underline{N},\Phi,s)} = \frac{C(N,s_{j^*}-1)}{C(N,s_{j^*})}.
\end{align*}
Then, the asymptotic probability of ambiguous generalization takes the form \cref{eq:asmyp_abg_gen}, depicted in \cref{fig:prob-amb-gen-pos}, and unambiguous generalization occurs with positive probability if $N>2s_{j^*}$. 

Finally, let us emphasize that if $g\in E_\ell$ with $\Phi(E_\ell) \not \subseteq \spanRinline{\bigcup_{j \in \suppinline{\underline{N}}} \Phi(E_j)}$, then $N_\ell =0$ and for all $\underline{\nu} \in \mathbb N_0^{\mathcal{J}_0}$ with $\nu_\ell = 0$,
\begin{align*}
    \Upsilon(\underline{\nu} + \underline{e}_{\ell}) = \Upsilon(\underline{\nu}) + 1,
\end{align*}
by virtue of \cref{eq:upsilon-g-2}. Thus, 
\begin{align}\label{eq:p-amb-gen=1-1}
    \left((\underline{\nu}+\underline{e}_\ell) \in \mathcal{I}_{t}\right) \iff  \left(\underline{\nu} \in \mathcal{I}_{t-1}\right),     
\end{align}
whenever $\underline{\nu} \in \mathbb N_0^{\mathcal{J}_0}$ with $\nu_\ell = 0$. We then obtain 
\begin{align}
    C(\underline N,\ell,\Phi,s) &= 2^N -2 \sum_{t=s^*+1}^{N+1} \sum_{\underline{\nu} \in \mathcal{I}_{t}} \binom{N_\ell}{\nu_\ell -1}\prod_{j \in \mathcal{J}_0 \setminus \{\ell\}}\binom{N_j}{\nu_j} \nonumber\\
    &= 2^N -2 \sum_{t=s^*+1}^{N+1} \sum_{\substack{\underline{\nu} \in \mathcal{I}_{t}\\ \nu_\ell =1}} \prod_{j \in \mathcal{J}_0 \setminus \{\ell\}}\binom{N_j}{\nu_j} \nonumber\\
    &= 2^N -2 \sum_{t=s^*+1}^{N+1} \sum_{\substack{\underline{\nu} \in \mathcal{I}_{t-1}\\\nu_\ell = 0}} \prod_{j \in \mathcal{J}_0 \setminus \{\ell\}}\binom{N_j}{\nu_j} \label{eq:p-amb-gen=1-2}\\
    &= 2^N -2 \sum_{t=s^*+1}^{N} \sum_{\underline{\nu} \in \mathcal{I}_{t}} \prod_{j \in \mathcal{J}_0 \setminus \{\ell\}}\binom{N_j}{\nu_j} \label{eq:p-amb-gen=1-3} \\
    &=C(\underline N, \Phi,s), \nonumber
\end{align}
where \cref{eq:p-amb-gen=1-2} is by \cref{eq:p-amb-gen=1-1}, and in \cref{eq:p-amb-gen=1-3}, we used that $\mathcal{I}_{s^*} = \emptyset$, so that $P(\Phi,F,g) = 1$.  
In other words, in this case, unambiguous generalization is not probable (i.e., occurs with probability zero) for every $N \in \mathbb N$.

\appendix
\section{Notation}\label[appendix]{app:notation}
$\mathbb N$, $\mathbb N_0$, $\mathbb Z$, $\mathbb R$, and $\mathbb R_0^+$ denote the sets of natural numbers, nonnegative integers, integers, real numbers, and nonnegative real numbers, respectively.  
For $a,b\in \mathbb Z$ and $m \in \mathbb N$, we write $a \equiv b \Mod{m}$ whenever $m$ divides $(a-b)$. The binomial coefficient is defined as $\binom{n}{k}\coloneqq \frac{n!}{k!(n-k)!}$ for all $k,n \in \mathbb N_0$ with $0\leq k \leq n$. Moreover, if $n<k$, we set $\binom{n}{k}\coloneqq 0$. For multi-indices $\underline{N}=(N_j)_{j=1}^J,\underline{\nu}=(\nu_j)_{j=1}^J \in \mathbb N_0^J$ with $J\in \mathbb N$, we write $\binom{\underline{N}}{\underline{\nu}} \coloneqq \prod_{j=1}^J \binom{N_j}{\nu_j}$. Furthermore, the support and absolute value of $\underline{\nu}=(\nu_j)_{j=1}^J \in \mathbb N_0^J$ is given by $\supp{\underline{\nu}} = \{j\in\{1,\ldots,J\}\colon \nu_j \neq 0\}$ and $\lenin{\underline{\nu}} = \sum_{j=1}^J \nu_j$, respectively.
Let $\lfloor x \rfloor$ denote the largest $k \in \mathbb Z$ such that $k \leq x$, where $x \in \mathbb R$. 
To represent the indicator of a statement $S$, we write $\mathbbm{1}_{\{S\}}$, which equals $1$ if the statement $S$ is true, and 0 if $S$ is false. For a finite set $X$, let $\lvert X\rvert$ denote its cardinality.
We use $x^\mathsf{T}$ to denote the transpose of $x \in \mathbb R ^n$, $n \in \mathbb N$.
The standard Euclidean inner product of $x,y \in \mathbb R^n$ is $\langle x, y \rangle = y^\mathsf{T}x$, and its induced norm on $\mathbb R^n$ is given by $\lVert x \rVert \coloneqq \sqrt{\langle x, x \rangle}$. For a set $A \subseteq \mathbb R^n$, let $\spanR{A}$ stand for the set of all finite linear combinations of vectors in $A$ with scalars in the field $\mathbb R$. Given a linear space $V$ over $\mathbb R$, we write $\dimR{V}$ for its dimension. Moreover, if $\{U_k\}_{k \in \mathcal{K}}$ is a family of linear subspaces of $V$, then the sum of these linear subspaces is denoted by $\sum_{k \in \mathcal{K}} U_k \coloneqq \{\sum_{k \in \mathcal{K}} u_k\colon u_k \in U_k, k\in \mathcal{K}\}$. 
For a finite set $A \subset \mathbb R^n$, we denote by $\kr{A}$ the Kruskal rank of $A$, i.e., the largest integer $k$ such that every subset of $k$ elements of $A$ is linearly independent. 
The $n$-dimensional Lebesgue measure on $\mathbb R^n$ is denoted by $\mathcal{L}^n$. For $s\geq 0$, $\mathcal{H}^s$ stands for the $s$-dimensional Hausdorff measure on some metric space (see \Cref{app:Hausdorff}). A statement $S$ is said to hold for $\mu$-almost every $x \in A$ ($\mu$-a.e. $x \in A$ for short) if there exists a set $N \subset X$ with $\mu(N)=0$ such that $S$ is true for every $x \in A \setminus N$, where $\mu$ is a measure on some set $X$, and where $A \subseteq X$. The restriction of a measure $\mu$ to a subset $A$ is denoted by $\mu \llcorner A$. 

\section{Cover's framework and fundamentals of function-counting theory} \label[appendix]{app:Cover}
This section introduces Cover's framework \cite{cover1965geometrical} and reviews some key results from function-counting theory \cite{schlafli1950theorie,winder1961single,wendel1962problem,cover1965geometrical,winder1966partitions,harding1967number,kowalczyk1994separating}, using mostly the notation of \cite{haberle2026scattering}. A central ingredient in this framework is the \emph{pattern space}, represented as the $M$-dimensional Euclidean space $\mathbb R^M$ equipped with the standard inner product $\innerprod{\cdot}{\cdot}$. 
Let $E \subseteq \mathbb R^M$ be an arbitrary subset of the pattern space, and consider a set of $N$ points (patterns) $F\coloneqq\{f_1,\ldots,f_N\} \subseteq E$, where $N \in \mathbb N$. We are concerned with the problem of binary classification of the points in $F$, i.e., assigning the elements of the set $F$ to one of the two classes $F_+$ and $F_-$. Such a partition of $F$ into $F_+$ and $F_-$ is called a \emph{dichotomy}. The simplest way to implement a dichotomy is by using a hyperplane as separating surface, see \cref{fig:sep-dataset}. A dichotomy $\{F_+,F_-\}$ is said to be \emph{linearly separable} if there exist $w \in \mathbb R^M$ and $t \in \mathbb R$ such that
\allowdisplaybreaks
\begin{align*}
        \innerprod{f}{w} &> t, \quad \text{if $f\in F_+$}, \\
        \innerprod{f}{w} &< t, \quad \text{if $f\in F_-$}. 
\end{align*}
When $t=0$, we speak of \emph{homogeneous linear separation}. 
The surface $\{f \in \mathbb R^M \colon \innerprod{f}{w} = t\}$ is called the \emph{separating hyperplane}. 
\begin{figure}
    \centering
    \definecolor{blue2}{RGB}{20,99,178}
\definecolor{green2}{RGB}{0,95,1}
\colorlet{red2}{red!70!black}
\colorlet{colourE}{green2}
\begin{tikzpicture}[scale=1, every node/.style={scale=1}]
    \begin{axis}[
        unit vector ratio*=1 1 1,
        grid=none,
        ticks =none,
        axis lines=middle,
        xmin=-3.1,
        xmax=3.1,
        ymin=-3.1,
        ymax=3.1,
        xticklabels={,,},
        yticklabels={,,},
        at={(0,0)},
        axis on top
    ]
    \foreach \x in {-1,-0.4,-0.5}{
        \edef\temp{\noexpand\addplot[only marks, mark=*, color=red2] coordinates {({\x},{2.5*\x})};}
        \temp
    }
    \foreach \x in {0.8,0.6,1.25,1.5}{
        \edef\temp{\noexpand\addplot[only marks, mark=*, color=red2] coordinates {({\x},{-1.25*\x})};}
        \temp
    }
    \foreach \x in {0.6,0.8}{
        \edef\temp{\noexpand\addplot[only marks, mark=*, color=blue2] coordinates {({\x},{2.5*\x)})};}
        \temp
    }
    \foreach \x in {-0.4,-1.4,-1}{
        \edef\temp{\noexpand\addplot[only marks, mark=*, color=blue2] coordinates {({\x},{-1.25*\x)})};}
        \temp
    }
    \node[] at (2.9,2.9){$\mathbb R^2$};
    \node[colourE] at (2.7,-2.9) {$E$};
    
    \addplot[domain=-3.1:3.1,samples=2,color=colourE,thick] {2.5*x};
    \addplot[domain=-3.1:3.1,samples=2,color=colourE,thick] {-1.25*x};
    \addplot[domain=-3.1:3.1,samples=2,color=black,thick,dashed,name path=H] {0.5*x};
    \draw[->,thick,gray,dashed]{} (0,0)--(-1,2);
    \node[gray] at (-0.3,1.2) {$w$};
    \node[color=red2] at (-2.8, -2) {$F_-$};
    \node[color=blue2] at (-2.8, -1) {$F_+$};
    \addplot[name path=B,white] {-4};
    \addplot[name path=D,white] {4};
    \addplot[pattern color=blue2!5,pattern=crosshatch] fill between[of=H and D, soft clip={domain=-3.1:3.1}];
    \addplot[pattern color=red2!5,pattern=crosshatch] fill between[of=H and B, soft clip={domain=-3.1:3.1}];
    \end{axis}
\end{tikzpicture}
    \caption{Separation of points on a low-dimensional dataset $E$ by a hyperplane (dashed line) through the origin. Specifically, the dichotomy $\{F_+,F_-\}$ is homogeneously linearly separable.}
    \label{fig:sep-dataset}
\end{figure}
\sloppy In practice, however, most dichotomies we wish to realize are not linearly separable, i.e., they cannot be realized by separation through hyperplanes in the pattern space. Thus, more general nonlinear separating surfaces are required. To resolve this issue, one follows Cover's idea \cite{cover1965geometrical} of first mapping the points in $E \subseteq \mathbb R^M$ to another space, typically a higher-dimensional one, designated as \emph{feature space}, by employing a nonlinear transformation $\Phi \colon E \to \mathbb R^{M'}$. The goal is to choose $\Phi$ such that the dichotomies become linearly separable in the feature space while keeping the dimension of the feature space, $M'$, as small as possible. The separating surface in the pattern space then becomes a nonlinear surface characterized by $\Phi$, see \cref{fig:nonlinear_transf}. \emph{Homogeneous} linear separation in the feature space can always be achieved, provided that linear separation is possible in the feature space associated with $\Phi$, by considering the transformation $f \mapsto (1, \Phi(f))^{\mathsf{T}}$. Formally, the concept of obtaining homogeneous linear separation by employing a transformation $\Phi$ is captured by the notion of $\Phi$-separability.
\begin{definition}[$\Phi$-separability]\label{def:phi-sep}
    For $M,M',N \in \mathbb N$, let $F \coloneqq \{f_1,\ldots,f_N\} \subseteq E$, where $E \subseteq \mathbb R^M$, and let $\Phi \colon E \to \mathbb R^{M'}$. A dichotomy $F=\{F_+,F_-\}$ is called \emph{$\Phi$-separable} if there exists a vector $w \in \mathbb R^{M'}$ such that
    \begin{align*}
        \innerprod{\Phi(f)}{w} &> 0, \quad \text{if $f\in F_+$,} \\
        \innerprod{\Phi(f)}{w} &< 0, \quad \text{if $f\in F_-$}. 
    \end{align*}
    We call $\{f \in E \colon \innerprod{\Phi(f)}{w} = 0\}$ the \emph{separating $\Phi$-surface}.
\end{definition}
\begin{figure}
    \centering
    \definecolor{blue2}{RGB}{20,99,178}
\definecolor{green2}{RGB}{0,95,1}
\colorlet{red2}{red!70!black}
\colorlet{colourE}{green2}
\begin{tikzpicture}[scale=1, every node/.style={scale=1}]
    \begin{axis}[
        unit vector ratio*=1 1 1,
        grid=none,
        ticks =none,
        axis lines=middle,
        xmin=-3.1,
        xmax=3.1,
        ymin=-3.1,
        ymax=3.1,
        xticklabels={,,},
        yticklabels={,,},
        at={(0,0)},
        axis on top,
    ]
    \foreach \x in {-1,-0.4,-0.5,0.6,0.8}{
        \edef\temp{\noexpand\addplot[only marks, mark=*, color=blue2] coordinates {({\x},{2.5*\x})};}
        \temp
    }
    \foreach \x in {-0.4,-1.4,-1,1.25,0.8,0.6,1.5}{
        \edef\temp{\noexpand\addplot[only marks, mark=*, color=red2] coordinates {({\x},{-1.25*\x})};}
        \temp
    }
    \node[] at (2.9,2.9){$\mathbb R^2$};
    \node[colourE] at (2.7,-2.9) {$E$};
    \addplot[domain=-3.1:3.1,samples=2,color=colourE,thick] {2.5*x};
    \addplot[domain=-3.1:3.1,samples=2,color=colourE,thick] {-1.25*x};
    \node[color=red2] at (-2.8, 0.25) {$F_-$};
    \node[color=blue2] at (-2.8, -0.45) {$F_+$};
    \node[color=blue2] at (0.45, 2.8) {$F_+$};
    \addplot[domain=-3.1:-0.01,samples=100,thick,dashed,name path=A] {1/(5*x)};
    \addplot[domain=3.1:0.01,samples=100,thick,dashed,name path=C] {1/(5*x)};
    \addplot[name path=B,white] {-4};
    \addplot[name path=D,white] {4};
    \addplot[pattern color=blue2!5,pattern=crosshatch] fill between[of=A and B, soft clip={domain=-3.1:0}];
    \addplot[pattern color=red2!5,pattern=crosshatch] fill between[of=A and D, soft clip={domain=-3.1:0}];
    \addplot[pattern color=blue2!5,pattern=crosshatch] fill between[of=C and D, soft clip={domain=0:3.1}];
    \addplot[pattern color=red2!5,pattern=crosshatch] fill between[of=C and B, soft clip={domain=0:3.1}];
    \node[] (P) at (2.25,0.5){};
    \end{axis}
    \begin{axis}[
        unit vector ratio*=1 1 1,
        grid=none,
        ticks =none,
        axis lines=middle,
        xmin=-3.1,
        xmax=3.1,
        ymin=-3.1,
        ymax=3.1,
        xticklabels={,,},
        yticklabels={,,},
        at={(750,0)},
        axis on top,
    ]
    \node[] (Q) at (-2.25,0.5){};
    \node[] at (2.9,2.9){$\mathbb R^2$};
    \foreach \x in {-1,-0.4,-0.5,0.6,0.8}{
        \edef\temp{\noexpand\addplot[only marks, mark=*, color=blue2] coordinates {({1},{2.5*\x*\x})};}
        \temp
    }
    \foreach \x in {-0.4,-1.4,-1,1.25,0.8,0.6,1.5}{
        \edef\temp{\noexpand\addplot[only marks, mark=*, color=red2] coordinates {({1},{-1.25*\x*\x})};}
        \temp
    }
    \addplot[domain=-3.1:3.1,samples=2,color=black,thick,dashed, name path = H] {x/5};
    \draw[->,thick,gray,dashed]{} (0,0)--(-0.4,2);
    \node[gray] at (-0.5,1) {$w$};
    \node[color=red2] at (-2.8, -1) {$F_-$};
    \node[color=blue2] at (-2.8, -0.3) {$F_+$};
    \draw[thick,colourE]{} (1,-3.1)--(1,3.1);
    \node[colourE] at (1.6,-2.9) {$\Phi(E)$};
    \addplot[name path=B,white] {-4};
    \addplot[name path=D,white] {4};
    \addplot[pattern color=blue2!5,pattern=crosshatch] fill between[of=H and D, soft clip={domain=-3.1:3.1}];
    \addplot[pattern color=red2!5,pattern=crosshatch] fill between[of=H and B, soft clip={domain=-3.1:3.1}];
    \end{axis}
    \draw[->,ultra thick,color=orange!90!black] (P) to[bend left] node[midway,above,inner sep=1pt] {\footnotesize{$\begin{aligned}\Phi \colon \mathbb R^2 &\to \mathbb R^2 \\ f &\mapsto \begin{pmatrix} 1 \\ (f)_0(f)_1 \end{pmatrix}\end{aligned}$}} (Q);
\end{tikzpicture}
    \caption{Mapping a linearly inseparable dichotomy in pattern space to a homogeneously linearly separable dichotomy in feature space.}
    \label{fig:nonlinear_transf}
\end{figure}
The number of $\Phi$-separable dichotomies depends in general on $F$ and $\Phi$. However, if $F$ is ``typical'' with respect to $\Phi$ in the following sense, then the number of $\Phi$-separable dichotomies depends on $N$ and $M'$ only.    
\begin{definition}[$\Phi$-general position]\label{def:phi-gen-pos}
    For $M,M',N \in \mathbb N$, let $\Phi \colon E \to \mathbb R^{M'}$, where $E \subseteq \mathbb R^M$.
    The set $F \coloneqq \{f_1, \ldots, f_N\} \subseteq E$  is said to be in \emph{$\Phi$-general position} if every subset of $k$ elements of $\left\{\Phi(f_1), \ldots, \Phi(f_N)\right\} \subseteq \mathbb R^{M'}$ is linearly independent for all $k \leq \min\{M',N\}$. If this holds for $\Phi = \mathrm{Id} \colon E \to \mathbb R^M, f \mapsto f$, we simply say that $F$ is in \emph{general position}.  
\end{definition}
We are now ready to state the central result in function-counting theory, which provides a closed-form solution for the number of $\Phi$-separable dichotomies of $F$ under the assumption that $F$ is in $\Phi$-general position. Note, however, that as $E$ is an arbitrary subset of $\mathbb R^M$ and $\Phi$ can be any map $E \to \mathbb R^{M'}$, the $\Phi$-general position assumption for $F$ usually does not hold. For instance, if $E$ is as in \cref{fig:nonlinear_transf}, i.e., a union of two linear subspaces, and $\Phi = \mathrm{Id}$, every $N$-point set of $E$ is not in $\Phi$-general position. In \Cref{sec:sep-cap}, we extend the notion of $\Phi$-general position to make it applicable in the general case where both $E\subseteq \mathbb R^M$ and $\Phi$ are arbitrary. 
\begin{theorem}[Function-counting theorem, \cite{cover1965geometrical}]\label{thm:fct-counting}
     Fix $M,M',N \in \mathbb N$, and consider the set $F \coloneqq \{f_1,\ldots,f_N\} \subseteq E$, where $E \subseteq \mathbb R^M$. Furthermore, let $\Phi \colon E \to \mathbb R^{M'}$. The number of $\Phi$-separable dichotomies of $N$ points in $\Phi$-general position in $\mathbb R^M$ is
    \begin{align}\label{eq:counting-fct-cover}
        C(N,M') \coloneqq 2\sum_{k=0}^{M'-1} \binom{N-1}{k}.
    \end{align}
\end{theorem}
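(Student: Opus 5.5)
The plan is the classical Schläfli--Cover recursion, run on the feature vectors $\varphi_k \coloneqq \Phi(f_k)\in\mathbb R^{M'}$. Only their linear configuration matters for $\Phi$-separability, so by the same reduction as in \cref{thm:lower-upper-bound-C_F}\ref{it:C_F-b} I may assume $N$ vectors in general position in $\mathbb R^{M'}$. A dichotomy is $\Phi$-separable exactly when the sign vector $(\mathrm{sign}\innerprod{\varphi_k}{w})_k$ of some $w$ realizes it. Hence $C_F$ equals the number of open regions (chambers) into which the central arrangement $\{H_k\}_{k=1}^N$, with $H_k=\{\varphi_k\}^\perp$, divides $\mathbb R^{M'}$. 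The claim therefore reduces to showing that the number of chambers is $C(N,M')$.

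\textbf{Recursion.} Let $C(N,M')$ temporarily denote this chamber count for $N$ vectors in general position in $\mathbb R^{M'}$. Start from the $N-1$ hyperplanes $H_1,\dots,H_{N-1}$ and add $H_N$. A chamber of the old arrangement either lies entirely on one side of $H_N$, or is cut by $H_N$ into exactly two chambers. The chambers that are cut correspond bijectively to the chambers that the old arrangement induces inside $H_N\cong\mathbb R^{M'-1}$. That induced arrangement is given by the projections $P_{H_N}\varphi_k$, $k<N$, which play the same role as in the contraction step of the proof of \cref{L:C_F-lb}. This yields
\[
C(N,M')=C(N-1,M')+C(N-1,M'-1).
\]

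\textbf{Checking general position after projection.} The main obstacle is confirming that the projected vectors are still in general position in $H_N$. By the rank computation already used in \cref{subsec:C_F-lb},
\[
\dim\mathrm{span}\{P_{H_N}\varphi_k\}_{k\in\mathcal K}=\dim\mathrm{span}\{\varphi_k\}_{k\in\mathcal K\cup\{N\}}-1 .
\]
Take any $\mathcal K$ with $|\mathcal K|\le M'-1$. Then $|\mathcal K\cup\{N\}|\le M'$, so by general position the right-hand side equals $|\mathcal K|$. The projected vectors are therefore in general position in $\mathbb R^{M'-1}$, and the recursion is legitimate.

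\textbf{Boundary values and closing the induction.} The recursion has two families of boundary conditions:
\begin{itemize}
\item $C(1,M')=2$ for $M'\ge1$, because a single nonzero vector splits space into two half-spaces;
\item $C(N,1)=2$, because in $\mathbb R^1$ all hyperplanes equal $\{0\}$.
\end{itemize}
The closed form $2\sum_{k=0}^{M'-1}\binom{N-1}{k}$ matches both boundary values. It also satisfies the recursion, by Pascal's rule $\binom{N-1}{k}=\binom{N-2}{k}+\binom{N-2}{k-1}$ summed over $k$. Double induction on $N+M'$ then gives the formula. Degenerate cases such as $N\le M'$, where the formula gives $2^N$, are covered automatically by the convention $\binom{n}{k}=0$ for $k>n$.
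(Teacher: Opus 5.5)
Your proof is correct, but it follows Cover's original route rather than the one the paper indicates. The paper does not prove \cref{thm:fct-counting} itself. In \cref{rem:hyperplanes-gen-pos} it deduces the result from Winder's formula (\cref{thm:winder}). General position makes every subset of at most $M'$ hyperplanes even-degenerate, and the degeneracy alternates for larger subsets. The region count is therefore the alternating binomial sum \cref{eq:fct-count-hyperplanes}, which collapses to $C(N,M')$ by the binomial identity and Pascal's rule; the same manipulation is written out in \cref{rem:all-in-one}.

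You instead run deletion--restriction directly on the chambers. What makes your recursion close on the two-parameter family $(N,M')$ is exactly your check that both the deletion and the restriction to $H_N$ inherit general position. Your route is self-contained: it needs only that chambers are open convex sign cells, whereas the paper imports Winder's theorem as a black box. The paper's route is the one that generalizes. Winder's formula holds for arbitrary arrangements, which is what \cref{thm:lower-upper-bound-C_F} and \cref{prop:sep-general-case} require. For degenerate configurations the chamber count is no longer a function of $(N,M')$ alone, so your recursion would have to track the full linear-dependence structure. In effect, the paper's proofs of \cref{L:C_F-lb,L:C_F-ub} run your deletion--contraction recursion at the level of the even/odd-degenerate counts, tracking the Kruskal rank and the rank instead of $M'$.

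Two small presentational points:
\begin{itemize}
\item The appeal to the reduction in part (b) of \cref{thm:lower-upper-bound-C_F} is unnecessary. The $\Phi(f_k)$ are already in general position in $\mathbb R^{M'}$ by hypothesis, and that proof itself invokes the theorem you are proving.
\item Do not define $C(N,M')$ as ``the'' chamber count, since that presupposes independence of the configuration. Phrase the induction claim as ``every configuration of $N$ vectors in general position in $\mathbb R^{M'}$ has exactly $2\sum_{k=0}^{M'-1}\binom{N-1}{k}$ chambers'', so the induction proves this independence rather than assuming it.
\end{itemize}
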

If $F$ is not in $\Phi$-general position, there are fewer $\Phi$-separable dichotomies of $F$ (see, e.g., \cite{mitchison1989bounds}), but determining the exact number becomes more challenging. To deal with this problem, consider a statement that is dual to \cref{def:phi-sep}. Concretely, we associate to each $f \in F$ the $(M'-1)$-dimensional hyperplane $\{\Phi(f)\}^\perp$, with goal of determining the number of regions into which these $N$ hyperplanes divide the space $\mathbb R^{M'}$. \cref{fig:partitions-by-hyperplanes} illustrates this for the case $\Phi = \mathrm{Id}$. Before the solution to this problem can be stated, let us introduce the following notion.
\begin{figure}
    \centering
    \definecolor{blue2}{RGB}{20,99,178}
\definecolor{green2}{RGB}{0,95,1}
\colorlet{red2}{red!70!black}
\begin{tikzpicture}[scale=1, every node/.style={scale=1}]
    \begin{axis}[
        unit vector ratio*=1 1 1,
        grid=none,
        ticks =none,
        axis lines=middle,
        xmin=-3.1,
        xmax=3.1,
        ymin=-3.1,
        ymax=3.1,
        xticklabels={,,},
        yticklabels={,,},
    ]
    \addplot [only marks, mark=*, color=green2, nodes near coords, nodes near coords align={east}, point meta=explicit symbolic,] coordinates {
        (-2, -1.5) [$f_3$]
    };
    \draw[densely dotted,thick,green2!50!white,->,shorten >=2pt]{} (0,0)--(-2,-1.5);
    \draw[thick,green2]{} (-3,4)--(3,-4);
    \addplot [only marks, mark=*, color=blue2, nodes near coords, nodes near coords align={west}, point meta=explicit symbolic,] coordinates {
        (2.5, 1) [$f_1$]
    };
    \draw[densely dotted,thick,blue2!50!white,->,shorten >=2pt]{} (0,0)--(2.5,1);
    \draw[thick,blue2]{} (4,-10)--(-4,10);
    \addplot [only marks, mark=*, color=red2, nodes near coords, nodes near coords align={east}, point meta=explicit symbolic,] coordinates {
        (-2, 1) [$f_2$]
    };
    \draw[densely dotted,thick,red2!50!white,->,shorten >=2pt]{} (0,0)--(-2,1);
    \draw[thick,red2]{} (-3,-6)--(3,6);
    \node[red2] at (-1.8,-2.93) {$H_2$};
    \node[blue2] at (0.8,-2.93) {$H_1$};
    \node[green2] at (2.6,-2.93) {$H_2$};
    \node[] at (2.87,2.92){$\mathbb R^2$};
    \fill[gray, opacity=0.2, pattern=north east lines] (0,0) -- (6,12) -- (6,-8) -- cycle;
    \node[gray] at (1.9,-1) {\textbf{$1$}};
    \fill[pattern color = gray, opacity=1, pattern=dots] (0,0) -- (6,12) -- (-4,10) -- cycle;
    \node[gray] at (0.5,2) {\textbf{$2$}};
    \fill[gray, opacity=0.2, pattern=horizontal lines] (0,0) -- (-6,8) -- (-4,10) -- cycle;
    \node[gray] at (-1.2,2) {\textbf{$3$}};
    \fill[pattern color = gray, opacity=0.2, pattern=north west lines] (0,0) -- (-6,8) -- (-6,-12) -- cycle;
    \node[gray] at (-2,-0.25) {\textbf{$4$}};
    \fill[gray, opacity=0.2, pattern=crosshatch] (0,0) -- (4,-10) -- (-6,-12) -- cycle;
    \node[gray] at (-0.2,-2.1) {\textbf{$5$}};
    \fill[pattern color = gray, opacity=0.2, pattern=vertical lines] (0,0) -- (4,-10) -- (6,-8) -- cycle;
    \node[gray] at (1.25,-2.1) {\textbf{$6$}};
    \end{axis}
\end{tikzpicture}
    \caption{Regions into which the $1$-dimensional hyperplanes $H_1$, $H_2$, and $H_3$ divide $\mathbb R^2$, where $H_k = \{f_k\}^\perp$, $k \in \{1,2,3\}$. Since the set $\{f_1,f_2,f_3\}$ is in general position, the number of such regions is given by $C(3,2)=6$.}
    \label{fig:partitions-by-hyperplanes}
\end{figure}
\begin{definition}[\cite{winder1966partitions}]\label{def:even-odd-deg}
    Fix $M' \in \mathbb N$, let $\mathcal K$ be a finite index set, and consider a set of $(M'-1)$-dimensional hyperplanes in $\mathbb R^{M'}$, denoted by $\{H_k\}_{k \in \mathcal K}$, i.e., $H_k \coloneqq \{\varphi_k\}^\perp$, $k \in \mathcal{K}$, where $\{\varphi_k\}_{k \in \mathcal K} \subseteq \mathbb R^{M'}\setminus\{0\}$. The set $\{H_k\}_{k \in \mathcal K}$ is said to be 
    \begin{enumerate}[label=(\roman*)]
        \item \emph{even-degenerate} if $\mathcal K = \emptyset$ or
        \begin{align*}
            \dimR{\bigcap_{k \in \mathcal K} H_k} \equiv M'- \lvert \mathcal K \rvert \Mod{2},
        \end{align*}
        \item \emph{odd-degenerate} if $\mathcal K \neq \emptyset$ and 
        \begin{align*}
            \dimR{\bigcap_{k \in \mathcal K} H_k} \not\equiv M'- \lvert \mathcal K \rvert \Mod{2}.
        \end{align*}
    \end{enumerate}
\end{definition}
We are now ready to present the solution to the general problem of counting the number of $\Phi$-separable dichotomies. We emphasize that here $F$ need not be in $\Phi$-general position.
\begin{theorem}[\cite{winder1966partitions}]\label{thm:winder}
    Fix $M',N \in \mathbb N$, and consider $N$ hyperplanes in $\mathbb R^{M'}$ each of dimension $M'-1$, denoted by $\{H_k\}_{k=1}^N$, i.e., $H_k \coloneqq \{\varphi_k\}^\perp$, $k \in \{1,\ldots,N\}$, where $\{\varphi_k\}_{k =1}^N \subseteq \mathbb R^{M'}\setminus\{0\}$.  
    The number of regions into which the hyperplanes $\{H_k\}_{k=1}^N$ divide $\mathbb R^{M'}$ is given by
    \begin{align*}
        \lvert \mathcal{E} \rvert - \lvert \mathcal{O}\rvert, 
    \end{align*}
    where
    \begin{align*}
        \mathcal{E} &\coloneqq \left\{\{H_k\}_{k \in \mathcal{K}} \colon \mathcal{K} \subseteq \{1,\ldots,N\} \text{ and } \{H_k\}_{k \in \mathcal{K}} \text{ is even-degenerate}\right\}
        \intertext{and}
        \mathcal{O} &\coloneqq \left\{\{H_k\}_{k \in \mathcal{K}} \colon \mathcal{K} \subseteq \{1,\ldots,N\} \text{ and } \{H_k\}_{k \in \mathcal{K}} \text{ is odd-degenerate}\right\}.
    \end{align*}
\end{theorem}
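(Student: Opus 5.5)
The plan is to prove the formula by induction on the number of hyperplanes $N$, using the standard deletion--contraction recursion for region counts. The even/odd-degenerate bookkeeping should then follow the same recursion; this is exactly the identities \cref{eq:del-contr-even,eq:del-contr-odd} already derived in the proof of \cref{L:C_F-lb}. Throughout, \emph{regions} means the connected components of $\mathbb R^{M'}\setminus\bigcup_k H_k$.

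For the base case I allow $N=0$, with the convention that the empty family gives $\mathcal E=\{\emptyset\}$ and $\mathcal O=\emptyset$. Then $\lvert\mathcal E\rvert-\lvert\mathcal O\rvert=1$, which matches the single region $\mathbb R^{M'}$.

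For the induction step I distinguish two cases.

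\emph{Case 1: $H_1$ equals some other $H_k$.} Deleting $H_1$ does not change the regions, so it suffices to show that $\lvert\mathcal E\rvert-\lvert\mathcal O\rvert$ is unchanged as well. I would pair each subset $\mathcal K$ containing $k$ but not $1$ with $\mathcal K\cup\{1\}$. The two intersections are identical, while the cardinalities differ by one, so exactly one member of each pair is odd-degenerate. Their contributions therefore cancel. The subsets containing $1$ but not $k$ are matched with the subsets containing $k$ but not $1$ via $1\leftrightarrow k$, preserving parity type, so altogether $\lvert\mathcal E\rvert-\lvert\mathcal O\rvert$ equals the same expression computed for $\{H_k\}_{k=2}^N$. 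This is the same cancellation device the paper uses around \cref{eq:removing-duplicates}.

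\emph{Case 2: $H_1$ is distinct from all other $H_k$.} The key geometric fact is
\begin{align*}
r(\{H_k\}_{k=1}^N)=r(\{H_k\}_{k=2}^N)+r(\{H_1\cap H_k\}_{k=2}^N),
\end{align*}
where the last term counts regions of the contracted arrangement inside the $(M'-1)$-dimensional space $H_1$. To prove it, I would observe that adding $H_1$ to the arrangement $\{H_k\}_{k=2}^N$ splits exactly those regions $R$ with $R\cap H_1\neq\emptyset$. Each such $R$ is split into precisely two pieces, because $R$ is convex and open, and $H_1$ cuts it into two open convex halves. Moreover, the sets $R\cap H_1$ are exactly the regions of the contracted arrangement in $H_1$. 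The contraction consists of genuine hyperplanes of $H_1$, namely $\{P_{H_1}\varphi_k\}^\perp$, except when $\varphi_k\in\operatorname{span}\{\varphi_1\}$. That case is excluded here because $H_1$ is distinct from the other $H_k$.

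The contraction may, however, contain duplicates among the $H_1\cap H_k$. This is why the induction hypothesis must be applied to arbitrary families of nonzero normals, duplicates allowed; Case 1 covers that situation. On the combinatorial side, the identities \cref{eq:del-contr-even,eq:del-contr-odd}, summed over $t$, give
\begin{align*}
\lvert\mathcal E\rvert-\lvert\mathcal O\rvert=\bigl(\lvert\mathcal E^{\setminus1}\rvert-\lvert\mathcal O^{\setminus1}\rvert\bigr)+\bigl(\lvert\mathcal E^{/1}\rvert-\lvert\mathcal O^{/1}\rvert\bigr),
\end{align*}
with the empty set of $\mathcal E^{/1}$ corresponding to $\{H_1\}$, which is even-degenerate. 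The induction hypothesis applied to both terms then closes the step.

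An alternative sanity check, which I would mention, is Whitney's formula for the characteristic polynomial,
\begin{align*}
\chi(q)=\sum_{\mathcal K}(-1)^{\lvert\mathcal K\rvert}q^{\dim\bigcap_{k\in\mathcal K}H_k},
\end{align*}
combined with Zaslavsky's theorem $r=(-1)^{M'}\chi(-1)$. The sign of the term for $\mathcal K$ is $(-1)^{M'+\lvert\mathcal K\rvert+\dim\bigcap_{k\in\mathcal K}H_k}$, which is $+1$ exactly for even-degenerate $\mathcal K$, so this reproduces $\lvert\mathcal E\rvert-\lvert\mathcal O\rvert$ at once. Since that route imports two external theorems, I prefer the self-contained induction above.

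The main obstacle I expect is the geometric splitting claim in Case 2. It requires showing that every region of the contracted arrangement in $H_1$ arises as $R\cap H_1$ for exactly one region $R$ of the deletion. I would argue that a region of the contraction is open in $H_1$ and avoids every $H_k$ with $k\ge2$, hence lies in a unique region of the deletion. Conversely, for a region $R$ of the deletion, $R\cap H_1$ is convex and therefore connected. Care is needed to handle degenerate situations where the contraction contains duplicates, which is why the duplicate-tolerant induction of Case 1 is essential.
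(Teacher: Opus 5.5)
The paper gives no proof of this statement: it is quoted from \cite{winder1966partitions} and then used as a black box in \cref{thm:lower-upper-bound-C_F}, \cref{prop:sep-sparse-frame} and \cref{prop:sep-general-case}. Your induction is a correct, self-contained proof, and it fits the paper well.

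\begin{itemize}
\item Its combinatorial half is exactly the identities \cref{eq:del-contr-even,eq:del-contr-odd}, summed over $t$, together with the duplicate-cancellation device near \cref{eq:removing-duplicates}. The paper establishes both by pure linear algebra, so there is no circularity.
\item Its geometric half is the standard recursion $r(\mathcal A)=r(\mathcal A\setminus\{H_1\})+r(\{H_1\cap H_k\}_{k=2}^N)$ for $\mathcal A=\{H_k\}_{k=1}^N$ and $H_1$ distinct from the other hyperplanes. Here $r(\cdot)$ counts regions.
\end{itemize}

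In effect you re-prove Zaslavsky's theorem for central arrangements without passing through the characteristic polynomial. The Whitney--Zaslavsky route you mention gives the formula in one line, but it imports both results.

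What your version adds over the citation is an explicit check that the formula holds with repeated hyperplanes. The paper does rely on this case, for example $s=1$ in \cref{prop:sep-sparse-basis}, where several points on one line produce identical $H_k$.

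When you write it up, tighten the following points.
\begin{itemize}
\item State the induction on $N$ for all ambient dimensions at once, since the contraction lives in $H_1\cong\mathbb R^{M'-1}$ (identify it by an isometry). The only zero-dimensional instance that arises is the empty family, because $M'=1$ with $N\ge 2$ always falls into Case 1.
\item Read $\mathcal E$ and $\mathcal O$ as counted by index sets $\mathcal K$, as in \cref{rem:number-even-odd-deg}. With duplicates, counting sets of hyperplanes instead would break the formula; your pairings already do this implicitly.
\item Justify convexity of regions in one line: they are exactly the nonempty sign cells $\{x\colon \mathrm{sign}\langle x,\varphi_k\rangle=\sigma_k,\ k\geq 2\}$ with $\sigma_k\in\{\pm1\}$. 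These are open, convex and pairwise disjoint.
\item Case 1 can be shortened. For $1\in\mathcal K$ and $k\notin\mathcal K$, the map $\mathcal K\mapsto\mathcal K\cup\{k\}$ keeps the intersection and flips the parity class. Hence all subsets containing $1$ cancel at once, and $\lvert\mathcal E\rvert-\lvert\mathcal O\rvert$ equals its value for $\{H_k\}_{k=2}^N$ directly.
\end{itemize}
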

\begin{remark}\label{rem:number-even-odd-deg}
    Note that $\mathcal{E} \cup \mathcal{O}$ is the power set of $\{H_k\}_{k=1}^N$, and hence $\lvert \mathcal{E} \rvert + \lvert \mathcal{O}\rvert = 2^N$. In particular, $\emptyset \in \mathcal{E}$, i.e., the empty set is even-degenerate, see \cref{def:even-odd-deg}.
\end{remark}
As already indicated above, the classical function-counting theorem (\cref{thm:fct-counting}) can be deduced from \cref{thm:winder}. Indeed, we have the following remark.
\begin{remark}\label{rem:hyperplanes-gen-pos}
    Let $F \coloneqq\{f_1,\ldots,f_N\} \subseteq E \subseteq \mathbb R^M$ be in $\Phi$-general position, where $\Phi \colon E \to \mathbb R^{M'}$. Consider the associated $(M'-1)$-dimensional hyperplanes $H_k \coloneqq\{\Phi(f_k)\}^\perp$, $k \in \{1,\ldots,N\}$. To determine the number of regions into which these hyperplanes divide $\mathbb R^{M'}$ using \cref{thm:winder}, we need to count the sets of even- and odd-degenerate hyperplanes. By the assumption that $F$ is in $\Phi$-general position, all sets $\{H_k\}_{k \in \mathcal{K}}$ with $\mathcal K \subseteq \{1,\ldots,N\}$ and $0 \leq \lvert \mathcal K \rvert \leq M'$ are even-degenerate. Indeed, we have 
    \begin{align*}
        \bigcap_{k \in \mathcal K} H_k = \left(\sum_{k \in \mathcal K} \spanR{\{\Phi(f_k)\}}\right)^\perp = \left(\spanR{\{\Phi(f_k)\}_{k \in \mathcal K}}\right)^\perp,
    \end{align*}
    and hence $\dimR{\bigcap_{k \in \mathcal K} H_k} = M'- \lvert \mathcal K \rvert$, where we used that, by assumption, the set $\{\Phi(f_k)\}_{k \in \mathcal K}$ is linearly independent whenever $\lvert \mathcal K \rvert \leq M'$.
    For $M'+1\leq \lvert \mathcal{K} \rvert \leq N$, it holds that $\dimR{\bigcap_{k \in \mathcal K} H_k} = M'$. Consequently, the degeneracy of $\{H_k\}_{k \in \mathcal{K}}$ alternates with increasing $\lvert \mathcal{K} \rvert$ whenever $M'+1\leq \lvert \mathcal{K} \rvert \leq N$.      
    Thus, by \cref{thm:winder}, the number of regions is given by
    \begin{align}\label{eq:fct-count-hyperplanes}
        \binom{N}{0}+ \binom{N}{1} + \cdots + \binom{N}{M'} - \binom{N}{M'+1} + \binom{N}{M'+2} - \cdots \pm \binom{N}{N},
    \end{align}
    where the last term is positive if $M'-N$ is even and negative otherwise. As shown in \cite{winder1966partitions}, one can use the identity $\sum_{t=0}^N\binom{N}{t}(-1)^{M'+1+t} = 0$,
    a consequence of the binomial theorem, and the recurrence relation of binomial coefficients (Pascal's rule) to deduce that \cref{eq:fct-count-hyperplanes} is equal to $C(N,M')$. 
\end{remark}

\section{Hausdorff measure} \label[appendix]{app:Hausdorff}
In this section, we review the definition of the Hausdorff measure and recall some of its basic properties.
\begin{definition}[Hausdorff measure, \cite{simon2014introduction,federer2014geometric}]
Let $(X,\rho)$ be a metric space, $A\subseteq X$, and $0<\delta\leq \infty$. A collection of subsets $\mathcal{C} = \{C_i\}_{i \in \mathbb N}$ of $X$ is called a \emph{$\delta$-cover} of $A$ if $A \subseteq \bigcup_{i \in \mathbb N} C_i$ and if $\mathrm{diam}_\rho(C_i)\leq \delta$, for all $i \in \mathbb N$. Here, $\mathrm{diam}_\rho(C_i) \coloneqq \sup\{\rho(x,y) \colon x,y \in C_i\}$ denotes the diameter of the set $C_i$.   
For $s\geq 0$, $0<\delta\leq \infty$, and $A\subseteq X$, define 
    \begin{align*}
        \mathcal{H}^s_\delta(A) \coloneqq \inf \left\{\sum_{C \in \mathcal{C}} \alpha_s \left(\frac 12 \mathrm{diam}_{\rho}(C)\right)^s \colon \mathcal{C}\text{ is a }\delta\text{-cover of }A\right\},
    \end{align*}
    where $\alpha_s \coloneqq \frac{\pi^{s/2}}{\Gamma(s/2+1)}$ with $\Gamma(t) \coloneqq \int_0^\infty x^{t-1} e^{-x}\,\mathrm{d}x$, $t>0$, being the gamma function.
    The \emph{$s$-dimensional Hausdorff measure} of a set $A \subseteq X$ is defined to be
    \begin{align*}
        \mathcal{H}^s(A) \coloneqq \lim_{\delta \to 0^+} \mathcal H_\delta^s(A).
    \end{align*}
\end{definition}
\begin{proposition}[Properties of Hausdorff measure, \cite{simon2014introduction,federer2014geometric}]\label{prop:hausdorff-meas}
    Let $X$ be a metric space, and consider the $s$-dimensional Hausdorff measure $\mathcal{H}^s$ on $X$, where $s\geq 0$.
    \begin{enumerate}[label=(\roman*)]
        \item For every $s\geq 0$, $\mathcal{H}^s$ is a Borel-regular outer measure.
        \item On $X=\mathbb R^n$, the $n$-dimensional Hausdorff measure $\mathcal{H}^n$ coincides with the $n$-dimensional Lebesgue measure $\mathcal{L}^n$, $n \in \mathbb N$, i.e., 
        \begin{align*}
            \mathcal{H}^n=\mathcal{L}^n.
        \end{align*}
        \item If $\mathcal{U} \subseteq \mathbb R^n$ is an $s$-dimensional linear subspace $s \in \{0,\ldots,n\}$, then
        \begin{align*}
            \mathcal{H}^t(\mathcal U) = 0, \quad \text{for all $t>s$.}
        \end{align*}
        \item Let $A \subseteq X$, and let $Y$ be another metric space. If $\varphi \colon A \to Y$ is Lipschitz, then 
        \begin{align*}
            \mathcal{H}^s(\varphi(A)) \leq \mathrm{Lip}(\varphi)^s \mathcal{H}^s(A).
        \end{align*}
        Moreover, if $X = \mathbb R^n$, $n \in \mathbb N$, and if $A$ is $\mathcal{L}^n$-measurable, then $\varphi(A)$ is $\mathcal{H}^n$-measurable. 
    \end{enumerate}
\end{proposition}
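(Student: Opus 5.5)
The plan is to prove the four items in order, as each depends only on the Carathéodory construction behind $\mathcal{H}^s$ and elementary covering estimates.

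For (i), I first check that $\mathcal{H}^s_\delta$ is an outer measure. Monotonicity is immediate. For countable subadditivity, concatenate $\delta$-covers of the $A_i$ whose costs are within $\varepsilon 2^{-i}$ of the infimum. Since $\mathcal{H}^s_\delta$ increases as $\delta \downarrow 0$, the limit $\mathcal{H}^s$ is again an outer measure. To get Borel measurability, I show $\mathcal{H}^s$ is a \emph{metric} outer measure: if $\mathrm{dist}(A,B)>0$ and $\delta<\mathrm{dist}(A,B)$, no set of a $\delta$-cover meets both $A$ and $B$, so $\mathcal{H}^s_\delta(A\cup B)=\mathcal{H}^s_\delta(A)+\mathcal{H}^s_\delta(B)$. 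Carathéodory's criterion then makes all closed, hence all Borel, sets measurable. For Borel regularity, replace each $C_i$ in a cover by its closure, which leaves the diameter unchanged. Taking near-optimal $1/k$-covers and intersecting the resulting closed unions over $k$ yields a Borel set $B\supseteq A$ with $\mathcal{H}^s(B)=\mathcal{H}^s(A)$.

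For (iii), I identify $\mathcal{U}$ isometrically with $\mathbb R^s$ and write it as a countable union of unit cubes, so it suffices to treat a single cube $Q$. Subdividing $Q$ into $k^s$ subcubes of diameter $\sqrt{s}/k$ gives $\mathcal{H}^t_{\sqrt s/k}(Q)\le k^s\alpha_t(\sqrt s/(2k))^t=C\,k^{s-t}\to 0$ for $t>s$. Countable subadditivity then finishes the argument.

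For the inequality in (iv), take any $\delta$-cover $\{C_i\}$ of $A$. The sets $\{\varphi(C_i\cap A)\}$ form a $\mathrm{Lip}(\varphi)\delta$-cover of $\varphi(A)$ with $\mathrm{diam}\,\varphi(C_i\cap A)\le \mathrm{Lip}(\varphi)\,\mathrm{diam}(C_i)$. This gives $\mathcal{H}^s_{\mathrm{Lip}(\varphi)\delta}(\varphi(A))\le \mathrm{Lip}(\varphi)^s\mathcal{H}^s_\delta(A)$, and I let $\delta\to0$. For the measurability claim, I write the $\mathcal{L}^n$-measurable set $A$ as $A=\bigcup_k K_k\cup Z$ with $K_k$ compact and $\mathcal{L}^n(Z)=0$. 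Each $\varphi(K_k)$ is compact, hence Borel. By the inequality just proved together with (ii), $\mathcal{H}^n(\varphi(Z))\le \mathrm{Lip}(\varphi)^n\mathcal{L}^n(Z)=0$, so $\varphi(Z)$ is $\mathcal{H}^n$-null and therefore measurable.

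The main obstacle is (ii). First, $\mathcal{H}^n$ is translation invariant and finite and positive on the unit cube, the latter by the subdivision estimate from (iii) with $t=s=n$ and by a volume lower bound. By uniqueness of Haar measure, $\mathcal{H}^n=c\,\mathcal{L}^n$ for some constant $c$. To show $c=1$, I argue in two directions. For $\mathcal{H}^n\ge\mathcal{L}^n$, I use the isodiametric inequality $\mathcal{L}^n(C)\le\alpha_n(\mathrm{diam}(C)/2)^n$, which I would prove via Steiner symmetrization: symmetrization preserves volume and does not increase diameter, and after symmetrizing in all coordinate directions the set is centrally symmetric and hence contained in the ball of radius $\mathrm{diam}/2$. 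For $\mathcal{H}^n\le\mathcal{L}^n$, I apply the Vitali covering theorem to cover an open set, up to an $\mathcal{L}^n$-null set, by disjoint small balls, whose cost $\alpha_n r^n$ equals their volume exactly. The null remainder is handled by covering it with cubes, using (iii)-type estimates. The symmetrization step is where I expect the most technical care to be needed.
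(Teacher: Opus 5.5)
The paper does not prove this proposition and simply cites Simon and Federer; your sketch is correct and reproduces the standard arguments from those references: Carath\'eodory's criterion for metric outer measures plus closed near-optimal $\delta$-covers for (i), cube subdivision for (iii), transported covers plus inner regularity of $\mathcal{L}^n$ for (iv), and the isodiametric inequality together with Vitali covering for (ii). The Haar-uniqueness step in (ii) is dispensable: you get $\mathcal{L}^n\le\mathcal{H}^n$ on all sets directly from the isodiametric inequality, and $\mathcal{H}^n\le\mathcal{L}^n$ on open sets from Vitali, so outer regularity of $\mathcal{L}^n$ already gives equality.
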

\begin{proposition}[Theorem 1.15 in \cite{simon2014introduction}]\label{prop:regularity}
    Let $X$ be a metric space. Suppose $\mu$ is an open $\sigma$-finite Borel-regular measure on $X$. Then
    \begin{align*}
        \mu(A) = \inf\{\mu(U) \colon U \text{ open, }U \supset A\},
    \end{align*}
    for each subset $A \subset X$, and
    \begin{align*}
        \mu(A) = \sup \{\mu(C) \colon C \text{ closed, } C \subset A\},
    \end{align*}
    for each $\mu$-measurable subset $A \subset X$.
\end{proposition}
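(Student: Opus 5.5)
Since this is the standard regularity theorem for Borel-regular measures on metric spaces, I would follow the usual route. First I would reduce to a finite measure. Then I would run a $\sigma$-algebra (``good sets'') argument on Borel sets. Finally I would pass to arbitrary and measurable sets using Borel regularity.

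\textbf{Step 1 (finite measures, Borel sets).} Let $\nu$ be a finite Borel-regular measure on $X$. Call a Borel set $B$ \emph{good} if for every $\varepsilon>0$ there exist a closed $C$ and an open $U$ with $C\subset B\subset U$ and $\nu(U\setminus C)<\varepsilon$.
\begin{itemize}
\item Closed sets are good. In a metric space a closed set $C$ satisfies $C=\bigcap_k U_k$ with $U_k=\{x\colon \mathrm{dist}(x,C)<1/k\}$ open and decreasing. Continuity from above, which uses finiteness of $\nu$, gives $\nu(U_k\setminus C)\to 0$.
\item The good sets are closed under complements: swap the roles of $C$ and $U$.
\item They are closed under countable unions. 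Given $B=\bigcup_i B_i$, choose $C_i\subset B_i\subset U_i$ with $\nu(U_i\setminus C_i)<\varepsilon 2^{-i}$. Take $U=\bigcup_i U_i$ and $C=\bigcup_{i\le n}C_i$, with $n$ chosen by continuity from below so that $\nu\bigl(\bigcup_i C_i\setminus C\bigr)<\varepsilon$.
\end{itemize}
Hence every Borel set is good.

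\textbf{Step 2 (outer regularity for arbitrary $A$).} Use the open $\sigma$-finiteness: $X=\bigcup_i V_i$ with $V_i$ open and $\mu(V_i)<\infty$. Apply Step 1 to $\nu_i\coloneqq\mu\llcorner V_i$. This measure is finite and still Borel regular; I would check the latter by intersecting a Borel hull with $V_i$.

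Fix any $A\subset X$ with $\mu(A)<\infty$; otherwise the claim is trivial. Borel regularity gives a Borel $B\supseteq A$ with $\mu(B)=\mu(A)$. For each $i$, Step 1 applied to $\nu_i$ gives an open $W_i\supseteq B$ with $\nu_i(W_i\setminus B)<\varepsilon 2^{-i}$. Then $U\coloneqq\bigcup_i (W_i\cap V_i)$ is open, contains $B\supseteq A$, and satisfies $\mu(U\setminus B)\le\sum_i \mu\bigl((W_i\setminus B)\cap V_i\bigr)<\varepsilon$.

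\textbf{Step 3 (inner regularity for measurable $A$).} I expect this to be the main obstacle, because a measurable $A$ need not be Borel. I would first show that each measurable $A$ with $\mu(A\cap V_i)<\infty$ differs from a Borel set by a null set.
\begin{itemize}
\item Take a Borel hull $B\supseteq A\cap V_i$ with equal measure. Then $\mu\bigl(B\setminus (A\cap V_i)\bigr)=0$, which uses measurability of $A$ and finiteness.
\item Similarly take a Borel hull $N$ of that null set, so $\mu(N)=0$. Then $B'\coloneqq B\setminus N\subset A\cap V_i$ is Borel with $\mu(B')=\mu(A\cap V_i)$.
\item Step 1 applied to $\nu_i$ gives a closed $C_i\subset B'\subset A$ with $\mu(B'\setminus C_i)<\varepsilon$; note that $B'\subset V_i$, so $\nu_i$ and $\mu$ agree on the sets involved.
\end{itemize}
Finally, finite unions $C_1\cup\dots\cup C_n$ are closed subsets of $A$. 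Since $\mu(A)=\lim_n\mu\bigl(A\cap\bigcup_{i\le n}V_i\bigr)$, this yields $\sup\{\mu(C)\colon C\subset A \text{ closed}\}=\mu(A)$, including the case $\mu(A)=\infty$.
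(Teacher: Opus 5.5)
Your proof is correct. It is the standard argument: reduce to the finite measures $\mu\llcorner V_i$, show that the Borel sets admitting a closed--open sandwich of small measure form a $\sigma$-algebra containing the closed sets, then use Borel hulls to pass to arbitrary sets and to measurable sets. The paper gives no proof of its own and only cites Theorem~1.15 in Simon, which proceeds in essentially this way.

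One cosmetic point: Step~1 never uses Borel regularity of $\mu\llcorner V_i$, only its finiteness and the measurability of Borel sets. If you want that property anyway, a Borel hull of $A$ for $\mu\llcorner V_i$ is $B\cup(X\setminus V_i)$, with $B$ a $\mu$-Borel hull of $A\cap V_i$; intersecting a hull with $V_i$ does not give a set containing $A$.
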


\bibliography{ref}
\end{document}